\newtheorem{theorem}{Theorem}[section]
\newtheorem{corollary}[theorem]{Corollary}
\newtheorem{lemma}[theorem]{Lemma}
\newtheorem{proposition}[theorem]{Proposition}
\newtheorem{definition}{Definition}
\newtheorem{example}{Example}
\def\thmt@innercounters{equation,algocf}
\newcommand{\E}{\mathbb{E}}
\newcommand{\1}{\mathbbm{1}}
\newcommand{\hu}{\widehat{u}}
\newcommand{\ist}{i^\star}
\newcommand\numberthis{\addtocounter{equation}{1}\tag{\theequation}}
\newcommand{\calA}{\mathcal{A}}
\newcommand{\calF}{\mathcal{F}}
\newcommand{\calI}{\mathcal{I}}
\newcommand{\calL}{\mathcal{L}}
\newcommand{\ALG}{\textsc{Alg}}
\newcommand{\BB}{\text{BB}}
\newcommand{\BBDivide}{\text{\upshape BB}_{\text{\upshape Divide}}}
\newcommand{\BBDivideAdjusted}{\text{\upshape BB}_{\text{\upshape DA}}}
\newcommand{\BBPull}{\text{\upshape BB}_{\text{\upshape Pull}}}
\newcommand{\fst}{f^\star}
\newcommand{\AAE}{\textsc{AAE}}
\newcommand{\UCB}{\textsc{UCB}}
\newcommand{\Bern}{\texttt{Bern}}
\newcommand{\calH}{\mathcal{H}}
\newcommand{\xhdr}[1]{\vspace{1mm} \noindent{\bf #1}}
\newcommand{\calO}{\mathcal{O}}
\newcommand{\Phase}{\texttt{Phase}}
\newcommand{\Var}{\texttt{Var}}
\newcommand{\term}[1]{\ensuremath{\mathtt{#1}}\xspace}
\newcommand{\FOC}{\term{FOC}}
\newcommand{\APC}{\term{APC}}
\newcommand{\tcalI}{\tilde{\calI}}
\newcommand{\bell}{\bar{\ell}}
\DeclarePairedDelimiter\ceil{\lceil}{\rceil}
\DeclarePairedDelimiter\floor{\lfloor}{\rfloor}
\title{\Large Can Probabilistic Feedback Drive User Impacts in Online Platforms?\footnote{
Authors listed in alphabetical order.}}
\author[1]{Jessica Dai}
\author[2]{Bailey Flanigan}
\author[1]{Nika Haghtalab}
\author[1]{Meena Jagadeesan}
\author[3]{Chara Podimata}
\affil[1]{\textit{University of California, Berkeley }}
\affil[2]{\textit{Carnegie Mellon University}}
\affil[3]{\textit{MIT \& Archimedes / Athena RC}}
\begin{document}

\maketitle

\thispagestyle{empty}
\begin{abstract}
A common explanation for negative user impacts of content recommender systems is misalignment between the platform's objective and user welfare. In this work, we show that misalignment in the platform's objective is not the only potential cause of unintended impacts on users: even when the platform's objective is fully aligned with user welfare, the platform's learning algorithm can induce negative downstream impacts on users.  The source of these user impacts is that different pieces of content may generate observable user reactions (feedback information) at different rates; these feedback rates may correlate with content properties, such as controversiality or demographic similarity of the creator, that affect the user experience. Since differences in feedback rates can impact how often the learning algorithm engages with different content, the learning algorithm may inadvertently promote content with certain such properties.  Using the multi-armed bandit framework with probabilistic feedback, we examine the relationship between feedback rates and a learning algorithm's engagement with individual arms for different no-regret algorithms. We prove that no-regret algorithms can exhibit a wide range of dependencies: if the feedback rate of an arm increases, some no-regret algorithms engage with the arm more, some no-regret algorithms engage with the arm less, and other no-regret algorithms engage with the arm approximately the same number of times. From a platform design perspective, our results highlight the importance of looking beyond regret when measuring an algorithm's performance, and assessing the nature of a learning algorithm's engagement with different types of content as well as their resulting downstream impacts.
\end{abstract}
\thispagestyle{empty}

\setcounter{page}{1}

\section{Introduction}\label{sec:intro}

Recommendation platforms---which facilitate our consumption of news, music, social media, and many other forms of digital content---can harm users in unintended ways, as documented by researchers~\citep{allcott2020welfare}, journalists~\citep{wsj2021}, and regulators~\citep{eureg2022}.
One prevailing explanation for these impacts 
has been \textit{misalignment} between the platform's objective (e.g., platform profit or user engagement)
and user welfare \citep{SVNAH21}.  
This raises the question: \textit{Is aligning the 
platform's objective with user utility sufficient to avoid negative impacts on users?} 

 In this work, we show that even if the platform's objective \emph{perfectly optimizes user utility}, the process by which the platform continually \emph{learns} user preferences can induce unintended impacts on users. In this learning process, the platform's learning algorithm relies on observing users' reactions to content, such as whether a user clicked on a piece of content, pressed the like button, or retweeted it. Whether users react to content in observable ways can depend on the specifics of the content---e.g., the content could be controversial, provoking users to comment, or broadly relatable, prompting users to share it---in ways which are not captured by user utilities.\footnote{We expect that user feedback rates are not intrinsically captured by user utility: for example, high-utility content may either induce a high feedback rate (e.g., if a user retweets controversial content that they agree with) or induce a low feedback rate (e.g., if the content is educational and does not provoke a response). Similarly, low-utility content may either incite response (e.g., if the user disagrees with controversial content) or be ignored (leading to low feedback rates).} As a result, the approach by which the learning algorithm accounts for these differential rates of information gain can affect how often content with such properties (e.g., controversiality) is recommended. Unfortunately, the resulting impact on recommendations may inadvertently affect the overall user experience on the platform, as we describe in Examples \ref{ex:owngroup-apc} and \ref{ex:incendiary-foc}.

We study the impact of the platform's learning algorithm within the \emph{multi-armed bandits} framework with \emph{probabilistic feedback}. In this model, each piece of content corresponds to an arm with a \textit{loss}, which quantifies 
a fixed user's utility for the corresponding content and can vary over time. The platform's objective is regret minimization, and is aligned with maximizing user utility. To capture the fact that content may generate observable user data at different rates, each arm $i$ has a fixed \textit{feedback rate} $f_i$ representing the probability of the algorithm observing a sample from that arm's loss distribution in a given round. 
The platform must then determine how to account for these differential rates of information gain in its learning algorithm---a choice which can significantly impact what content users see. 

Rather than only focusing on regret, we study how often a bandit algorithm engages with individual arms, and how this depends upon the arm's feedback rate $f_i$. To quantify engagement with an arm, we introduce two measures: the \textit{arm pull count} $\APC_i$ (how often an algorithm pulls arm $i$ in $T$ rounds), and the \textit{feedback observation count} $\FOC_i$ (how often it sees feedback from arm $i$ in $T$ rounds).\footnote{While these measures are linked through $f_i$, they can lead to different user impacts, so we consider both.} To formalize how these measures vary with $f_i$ for a given algorithm, we introduce the notions of \textit{feedback monotonicity} and \textit{balance}. 
At a high level, an algorithm is positive (negative) feedback monotonic with respect to \APC (\FOC) if, when an arm's $f_i$ increases, the algorithm weakly increases (decreases) $\APC_i$ ($\FOC_i$). An algorithm satisfies \textit{balance} when such a change in $i$'s feedback rate is guaranteed to have \textit{no} effect on $\APC_i$ ($\FOC_i$).

The following examples illustrate how these types of feedback monotonicity properties can in turn affect downstream user experience on the platform. Note that these effects transcend what is typically captured in individual utilities (how much a given user likes a given piece of content), instead constituting community-level, platform-level, and society-level impacts. 
\begin{example}[Own-group content and $\APC$] 
\label{ex:owngroup-apc}
For a given user, $f_i$ may be higher for content that appears to be produced by \textit{own-group} creators, e.g. creators who are demographically or ideologically similar to the user \citep{agan2023automating}.\footnote{The empirical study of \citet{agan2023automating} is explicitly motivated by $\APC$ in the context of recommendations.} 
\APC captures how often content is shown to users. If an algorithm induced positive monotonicity in \APC, users may see own-group content disproportionately often, contributing to problems such as polarization and echo chambers.
\end{example}

\begin{example}[Incendiary content and $\FOC$.]
\label{ex:incendiary-foc}
    Observable feedback often occurs in the form of ``retweets,'' and high $f_i$ can be associated with highly controversial or incendiary content. 
    $\FOC$ captures observable engagement metrics. If an algorithm induced positive monotonicity in $\FOC$, 
    creators may be incentivized to optimize for $\FOC$ by creating more incendiary content; this would increase the incendiariness of the overall landscape of content available on the platform. Moreover, since retweets by users about incendiary content are visible to other users, positive monotonicity in $\FOC$ may also create a toxic environment on the platform and impact the overall user experience. 
\end{example}
We defer a discussion of further examples to Appendix \ref{appendix:examples}.

\subsection{Our contributions}
We initiate the study of how a bandit algorithm's choice of arms to pull correlates with the probability of observing feedback for those arms. 
We introduce the measures $\APC$ (Def. \ref{measure:pulled}) and $\FOC$ (Def. \ref{measure:observe}), 
which capture two aspects of how a bandit algorithm treats arms that can result in downstream impacts on users; feedback monotonicity and balance are the algorithmic properties we aim to analyze. We summarize our results in Table \ref{table:summary}.

Our main technical finding is that no-regret algorithms for the probabilistic feedback setting can exhibit a range of behavior with respect to $\APC$ and $\FOC$ (Table \ref{table:summary}). We illustrate this by constructing different families of no-regret algorithms with strikingly different monotonicity properties for both $\APC$ and $\FOC$, where these differences are driven by how the algorithms respond to probabilistic feedback. 

\begin{enumerate}
    \item We present three black-box transformations $(\BBDivide, \BBPull, \BBDivideAdjusted)$ which convert a generic no-regret bandit algorithm for the \textit{deterministic} feedback setting  into a bandit algorithm for the \textit{probabilistic} feedback setting (Section \ref{sec:blackbox}); each of these transformations has different consequences for $\APC$ and $\FOC$. 
\item We analyze these black-box transformations applied to concrete algorithms (UCB and AAE), and achieve both improved regret bounds and stricter monotonicity guarantees (Sections \ref{subsec:regret} and \ref{subsec:mono}). 
\item We give an algorithm which improves known regret bounds for adversarial losses, removing the dependence on the \textit{minimum} feedback probability in \cite{esposito2022learning} 
(Section \ref{subsec:exp3}). 
\end{enumerate}

Compared to regret, $\APC$ and $\FOC$ are finer-grained measures for the behavior of a bandit algorithm, so tightly analyzing how these properties change with $f_i$ also requires finer-grained control than in typical regret analyses. 
To isolate the impact of modifying feedback probabilities, we use a coupling argument to explicitly compare the algorithm's behavior on two instances that are identical except for one $f_i$. 

\begin{table*}[t]
\small
\addtolength{\tabcolsep}{-1pt}
\centering
\makebox[0pt]{
\begin{tabular}{llllllll}
\toprule
\multicolumn{2}{l}{\textbf{Algorithm}} &  \multicolumn{2}{l}{\textbf{\APC mono.}} & \multicolumn{2}{l}{\textbf{\FOC mono.}} & \multicolumn{2}{l}{\textbf{Regret upper bound}} \\
\cmidrule(lr){1-2}\cmidrule(lr){3-4}\cmidrule(lr){5-6}\cmidrule(lr){7-8}
$\BBDivide(\ALG,\fst)$ & Alg.~\ref{algo:BB-divide}  &  $\approx$ & Thm.~\ref{thm:feedbackmonotonicitybbdivide} & $+$  & Thm.~\ref{thm:feedbackmonotonicitybbdivide} & \small{$R_\ALG\left(T \fst / \ln (T) \right) \cdot \ln (T) / \fst$} & Thm.~\ref{thm:bb1-regret} \\[0.5em]
$\BBPull(\ALG)$ & Alg.~\ref{algo:BB-pull}&  $\approx/-$ & Thm.~\ref{thm:feedbackmonotonicitybbpull}& $\approx/+$ & Thm.~\ref{thm:feedbackmonotonicitybbpull} & \small{$R_\ALG(T) \cdot 1/\min_i f_i$} & Thm.~\ref{thm:bb2-regret}\\[0.5em]
$\BBDivideAdjusted(\ALG)$ & Alg.~\ref{algo:BB-divide-adjusted}& $\approx/+$ & Thm.~\ref{thm:feedbackmonotonicitybbda} & $\approx/+$ & Thm.~\ref{thm:feedbackmonotonicitybbda} & \small{$R_\ALG(T) \cdot 4 \ln (T) / \min_j f_j$} &Thm.~\ref{thm:bb3-regret}  \\[0.3em]
\hline \\[-0.8em]  

$\BBPull$(AAE) & Alg.~\ref{algo:BB-pull-se}&$-^\diamond$ & Thm.~\ref{thm:feedbackmonotonicitybbpullaae} & $\approx^\diamond$ & Thm.~\ref{thm:feedbackmonotonicitybbpullaae} & $ O \left(\ln (T) \cdot \sum_i 1/(\Delta_i f_i) \right)$ & Thm.~\ref{thm:bbpullregretaaeucb}\\[0.5em]
$\BBDivideAdjusted$(AAE) & Alg.~\ref{algo:BB-da-se} &$+^\diamond$ & Thm.~\ref{thm:feedbackmonotonicitybbdaaae} & $+$ & Thm.~\ref{thm:feedbackmonotonicitybbdaaae} & \small{$ O \left(\ln^2 (T) \cdot\sum_i 1 / (\Delta_i \min_j f_j) \right)$}  & Thm. \ref{thm:bb3-regret}  \\[0.5em] 

\hline \\[-0.8em] 
3-Phase EXP3 & Alg.~\ref{algo:EXP3-3phase} & &&&& \small{$O \left( \sqrt{T \ln (K) \sum_{i \in [K]} 1/f_i}  \right)$} & Thm.~\ref{thm:EXP3-3phase-regret} \\
\bottomrule
\end{tabular}
}
\caption{$\ALG$ is any no-regret bandit algorithm with regret $R_{\ALG}$ in the deterministic feedback setting. $\fst$ is a tunable parameter. 
AAE is active-arm elimination; UCB is the upper confidence bounds algorithm. 
In columns \APC and \FOC, $+$, $-$ indicate \textit{strict} positive, negative feedback monotonicity. $\approx$ indicates \textit{approximate} balance, differing across arms by up to a factor of $O(1/T)$. 
$\approx/+$ (resp. $\approx/-$) means that either approximate balance or positive (resp. negative) monotonicity may be achieved, depending on the underlying algorithm and problem instance. The superscript $\diamond$ indicates that the stated property holds only for suboptimal arms.
}
\label{table:summary}
\end{table*}

\subsection{Related work}
Our work relates to research on multi-armed bandits, empirical evidence for probabilistic feedback, real-world interpretation of $\FOC$ and $\APC$, and the societal impacts of recommender systems.

\xhdr{Multi-armed bandits.} 
        Our technical results build on the vast literature on multi-armed bandits (see \cite{hazan2016introduction} for a textbook treatment). Most relevant to our work is \textit{multi-armed bandits with probabilistic feedback graphs} (e.g.~\cite{esposito2022learning}). This extends the framework of multi-armed bandits with feedback graphs \citep{ACDK15}, where at each round, when an arm is pulled, the loss of all of the neighbors of that arm is observed. In the probabilistic feedback setting, the graphs are drawn from a \textit{distribution} at each time step. Recent work has studied regret guarantees for the probabilistic feedback graph setting for adversarial (e.g. \cite{esposito2022learning,ghari2022online}) and stochastic losses (e.g. \cite{li2020stochastic,cortes2020online}). We study a special case of this framework where the graph is always (a union of) self-loops and achieve an improved regret bound for adversarial losses (Theorem~\ref{thm:EXP3-3phase-regret}).

A handful of recent works have examined how the feedback observed by the bandit learner impacts the arm pull count $\APC$. For example, \citet{haupt2022risk} study how the variance of the noise in the observations of arm rewards impacts $\APC$ for $\epsilon$-Greedy in a 2-arm setting; in contrast, we vary the feedback probability that the reward is observed and study the behavior of more general algorithms and instances. Moreover, motivated by clickbait, \citet{buening2023bandits} also study how feedback probabilities 
impact $\APC$, focusing on the $K$ arms (content creators) strategically selecting feedback probabilities to optimize for $\APC$. However, \citet{buening2023bandits} focuses on designing incentive-aware platform algorithms that optimize a utility function (that can take into account both clickthrough rates and arm rewards); in contrast, we consider no-regret platform algorithms that optimize only for arm reward, and analyze their impact in terms of monotonicity properties.

Separately, 
the measure $\APC$
has been studied in recent work
that aims to achieve \textit{fairness} across arms, with a focus on ensuring that higher mean reward arms are pulled more often than lower mean reward arms \citep{joseph2016fairness}. Another notion of stronger constraints on arm pulls is \textit{replicability} \citep{esfandiari2022replicable}, which seeks to ensure that an algorithm will pull arms in the same order across identical instances with high probability. Though related, this is distinct from our definition of $\APC$ and our goals of controlling monotonicity. Their algorithms employ a similar ``block'' approach as ours, though they give explicit algorithms rather than black-box transformations for generic algorithms.

\xhdr{Empirical evidence for probabilistic feedback.}
The idea that recommendation platforms may not observe all user ``utilities'' at all times is well-studied. While the intuition that expressed preferences may not be a full picture of their true preferences underpins an entire subfield of behavioral economics, we note several works here that study the problem applied to recommendation systems through a more algorithmic lens. In particular, probabilistic feedback often occurs for reasons that cannot be fully explained by quality of the content itself, which motivates our idea that $f_i$ should be studied separately from utilities. For example, \citet{schnabel2019shaping} show that probabilistic feedback can arise from interface design choices; \citet{joachims2005accurately} uses eye-tracking to show that clickthrough (i.e. feedback) rates depend on factors like ranking position and the set of other content that is shown, while \citet{joachims2017unbiased} applies this intuition to develop recommendation algorithms that are sensitive to the impact of ranking position on feedback rates; \citet{li2020picture} show that advertisements with images induce more user engagement than advertisements with text only, and that various attributes of images (e.g. colorfulness, professional versus amateur photography, human face, image-text fit) can also affect feedback rates; and \citet{cao2021understanding} find similar results in the context of fashion social media marketing, with both media richness and trustworthiness of marketing content as factors that affect feedback rates. 

    \xhdr{Real-world interpretations of $\FOC$ and $\APC$.}
Many (though of course not all) of the commonly-discussed harms of recommendation systems and online platforms can be formalized in terms of $\FOC$ and $\APC$. For example, the setting described in \citet{wsj2021}---harm to teen girls on Instagram---harm arises due to repeated exposure ($\APC$) to particular types of content; in the setting described in \citet{nyt2019}---radicalization on Youtube---the harm is due to ``rabbit holes'' that arise due to a combination of $\APC$ and $\FOC$. In fact, though \citet{nyt2019} is a general-audience reported case study, the more rigorous evaluation of \citet{ribeiro2020auditing} also examines both $\APC$ and $\FOC$ in the context of evaluating the role of algorithms in radicalization. 
Similarly, emotional contagion experiments (e.g. \cite{ferrara2015measuring} on Twitter, \cite{kramer2014experimental} on Facebook) often find that exposure to ($\APC$) content with emotional valence (either positive or negative) also affects the emotional valence of users' downstream posts.

Of particular note is \cite{agan2023automating}, which is the most closely-related empirical work to our knowledge. This recent work is an empirical study explicitly motivated by the harms of $\APC$ in recommendation systems, and correlations that may arise due to a learning algorithm’s treatment of information; this work motivates our \cref{ex:owngroup-apc}. In particular, they model $f_i$ as related to ``own-group'' content, e.g. demographic similarity of the creator, and are concerned about algorithmic bias in the sense of over-representing content from ``own-group'' creators. They show that under this model, standard learning algorithms do in fact induce correlations between ``own-group'' content (i.e. $f_i$) and how often it is shown (i.e. $\APC$). This work can be seen as an empirical validation that our theoretical framework may be concretely applicable.

\xhdr{Societal impacts of recommender systems.} 
This research thread has broadly investigated misalignment between recommendations and user utility. One proposed source of misalignment is potential discrepancies between metrics derived from observed behavior (e.g. engagement) and user utility (e.g. \cite{EW16, MBH21, KMR22}). Another source of misalignment that has recently been studied is how recommendations can shape user preferences over time \citep{ABCZ13, CDHR22, DBLP:conf/sigecom/DeanM22}. Furthermore, approaches for bringing human values in recommender system design have been investigated \citep{SVNAH21, S22}. 
Several other societal impacts of recommender systems have been studied including the emergence of filter bubbles \citep{FGR16}, stereotyping \citep{GKJG21}, the ability of users to reach different content \citep{DRR20}, and content creator incentives induced by the recommendation algorithm \citep{BT18, BRT20, JGS22,  HKJKD}. 

\section{Model \& Preliminaries}\label{sec:model}
We model the interaction between the platform/ learner and the user as a multi-armed bandit (MAB) that happens over $T$ rounds. Each arm corresponds to a piece of content; ``pulling'' an arm corresponds to recommending that piece of content to the user. We say that an arm \emph{``returns feedback''} if we observe its loss upon pulling it. 

An \textit{instance} of our problem is specified by $\calI = \{\calA, \calF, \calL\}$, where $\calA$, $\calF$, and $\calL$ are defined as follows. Let $\calA := [K]$ denote the set of $K$ arms.
Let $\calF := [f_1, \ldots, f_K]$ be the feedback probabilities for each arm, i.e., the value $f_i \in [0,1]$ denotes the probability with which  arm $i$ returns feedback when pulled. The probabilities $\calF$ can be chosen arbitrarily by an adversary and are unknown to the learner, but remain fixed throughout the $T$ rounds. In the \textit{deterministic feedback setting}, $f_i = 1$ for all $i$; in the \textit{probabilistic feedback setting}, $f_i$ 
can be less than $1$.
Let $\calL$ denote the process by which the losses $\ell_{i,t}$ are generated. Losses can be \emph{adversarial} or \emph{stochastic}. For adversarial losses, the sequence $\{\ell_{i,t}\}_{i \in [K], t \in [T]}$ can be chosen arbitrarily, but obliviously, i.e. before the start of the algorithm. For stochastic losses, each arm $i \in [K]$ has
a loss distribution with mean $\bell_i$ and variance $1$ from which the per-round loss $\ell_{i,t}$ is sampled. For each arm $i$, we define $\Delta_i:= \bell_i - \min_j\bell_j$ to be the difference between the mean loss of arm $i$ and the mean loss of the optimal arm $\min_j\bell_j$.

For an arm $i \in [K]$, the random variable $X_{i,t}$  corresponds to whether feedback will be observed if arm $i$ is pulled at round $t$, i.e., $X_{i,t} \sim \Bern(f_i)$. 
With some abuse of notation, we let $\ell_{i_\tau, \tau}\cdot X_{i_\tau,\tau}$
represent the observed loss at time $\tau$, where
$\ell_{i_\tau, \tau}\cdot X_{i_\tau,\tau}= \perp$ denotes lack of observation when $X_{i_\tau,\tau}=0$ and $\ell_{i_\tau, \tau} \cdot X_{i_\tau,\tau}=\ell_{i_\tau, \tau}$ denotes the observed loss when $X_{i_\tau,\tau}=1.$
Let $H_t = \{(i_\tau, \ell_{i_\tau, \tau} \cdot  X_{i_\tau,\tau}, X_{i_\tau,\tau} \})\}_{\tau \in [t-1]}$ for some round $t$ denote the \emph{history} of play until round $t$,
and $\calH_t$ denote the family of all possible history trajectories until round $t$. An algorithm $\ALG: \bigcup_{t=0}^T\calH_t \to [K]$ produces a (possibly randomized) mapping from histories of play to arms to be chosen. We sometimes overload notation and write $\ALG(t)$ to denote the mapping from $H_t$ to $i_t$.

\subsection{Measuring the behavior of an algorithm on an instance}
We capture the behavior of an algorithm by the following three quantities. The first is the standard objective function in multi-armed bandits, an algorithm's \emph{(pseudo-)regret}:\footnote{Throughout, we omit ``pseudo'' from the definition below for succinctness. } 
\begin{definition}[Regret] 
The (pseudo-)regret of an algorithm $\ALG$ playing arm $i_t \in [K]$ at round $t$ is defined as: \[R_{\ALG}(T) =  \E [\sum_{t \in [T]} \ell_{i_t, t}]  - \min_{j\in [K]} \E [\sum_{t \in [T]} \ell_{j, t}] .\]
\end{definition}

We are also interested in how much an algorithm engages with individual arms. To capture this, we define the quantities $\FOC_i$ and $\APC_i$ for arms $i \in [K]$. 

\begin{definition}[Arm Pull Count ($\APC$)]
\label{measure:pulled}
Given a problem instance $\calI$, the arm pull count ($\APC$) of an arm $i$ over a run of an algorithm $\ALG$ is equal to 
\[\APC_i(\calI; \ALG) = \E\big[ \sum_{t \in [T]} \mathds 1[i_t = i]\big].\]
\end{definition}

\begin{definition}[Feedback Observation Count ($\FOC$)]
\label{measure:observe} 
Given a problem instance $\calI$, the feedback observation count ($\FOC$) of arm $i$ over a run of an algorithm $\ALG$ is equal to \[\FOC_i(\calI; \ALG)=  \E\big[\sum_{t \in [T]} \mathds 1[i_t = i] \cdot X_{i_t,t}\big].\]
\end{definition}
\noindent In all three definitions, the expectation is taken with respect to randomness in both the algorithm and the instance (i.e.,  loss distributions and feedback observations). When the instance $\calI$ and algorithm $\ALG$ are clear from context, we write $\FOC_i$ and $\APC_i$. A simple consequence of the definitions is that $\FOC_i$ and $\APC_i$ are related by a multiplicative factor of $f_i$. 
\begin{restatable}{lemma}{lemrelationship}
\label{prop:relationship}
For any arm $i$, instance $\calI$, and algorithm $\ALG$, it holds that $\FOC_i(\calI) = f_i \cdot \APC_i(\calI). $
\end{restatable}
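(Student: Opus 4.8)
The plan is to expand $\FOC_i$ by linearity of expectation and then exploit the independence between the algorithm's choice of arm at round $t$ and the fresh feedback coin $X_{i,t}$ for that same round. First I would write
\[
\FOC_i(\calI) = \E\Big[\sum_{t \in [T]} \1[i_t = i]\cdot X_{i_t,t}\Big] = \sum_{t \in [T]} \E\big[\1[i_t = i]\cdot X_{i_t,t}\big],
\]
and observe that on the event $\{i_t = i\}$ we have $X_{i_t,t} = X_{i,t}$, so each summand equals $\E\big[\1[i_t = i]\cdot X_{i,t}\big]$.

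The key step is the independence argument. The arm $i_t$ is chosen as a (possibly randomized) function of the history $H_t$, which records only the pulls and feedback of rounds $\tau < t$; in particular, $\1[i_t = i]$ is measurable with respect to the information available strictly before round $t$ together with whatever internal randomness $\ALG$ uses to select $i_t$. By contrast, $X_{i,t} \sim \Bern(f_i)$ is the feedback coin for arm $i$ at round $t$, drawn independently of the history and of the algorithm's choice at that round. Hence $X_{i,t}$ is independent of $\1[i_t = i]$, which gives
\[
\E\big[\1[i_t = i]\cdot X_{i,t}\big] = \E\big[\1[i_t = i]\big]\cdot \E[X_{i,t}] = \Pr[i_t = i]\cdot f_i.
\]
Summing over $t$ and pulling out the constant $f_i$ then yields $\FOC_i(\calI) = f_i \sum_{t\in[T]}\Pr[i_t=i] = f_i\cdot \APC_i(\calI)$.

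The main obstacle is making the independence claim fully rigorous rather than merely intuitive: one must pin down the filtration so that $i_t$ is determined \emph{before} the round-$t$ feedback is realized. This is exactly the timing convention built into the model---the learner commits to $i_t$ based on $H_t$, and only then does $X_{i_t,t}$ reveal whether the loss is observed---so the formal step is to condition on $H_t$ (and on any randomness $\ALG$ uses to pick $i_t$) and note that $X_{i,t}$ remains $\Bern(f_i)$ under this conditioning, so the conditional expectation of $X_{i,t}$ is $f_i$ regardless of the value of $\1[i_t=i]$. Once this is established, the remainder is routine linearity of expectation.
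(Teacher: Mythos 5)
Your proposal is correct and follows essentially the same route as the paper's proof: both expand $\FOC_i$ by linearity of expectation and then condition on the history $H_t$ (plus the algorithm's internal randomness) to argue that $X_{i,t}$ remains $\Bern(f_i)$ independently of the event $\{i_t = i\}$, so the conditional expectation factors as $f_i \cdot \Pr[i_t = i]$. The only cosmetic difference is that the paper phrases the key step via the law of total expectation over $H_t$, whereas you phrase it as an independence claim between $\1[i_t=i]$ and $X_{i,t}$; these are the same argument.
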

We prove \Cref{prop:relationship} in Appendix \ref{appendix:modelproofs}; the result follows from noting that, at any time $t$, the realization of $X_{i_t,t}$ is independent of all history up to time $t$. 

\subsection{Feedback monotonicity and balance}
Using $\FOC$ and $\APC$, we formalize how an algorithm responds to feedback probabilities through \textit{feedback monotonicity} and \textit{balance}. We let $\widetilde{\calF}(i)$ denote a set of feedback probabilities in which we have modified arm $i$'s feedback rate, holding all else constant: that is, $\smash[t]{f_i \neq \tilde f_i}$, and 
$\forall j \neq i, \smash[t]{f_j = \tilde f_j}$. For an instance $\calI = \{\calA, \calF, \calL\}$, we use ${\widetilde \calI}$ to notate the instance identical to $\calI$ except for $\smash{\tilde f_i}$, that is $\smash{\widetilde \calI = \{\calA, \widetilde \calF(i), \calL\}}$. In our analysis, we let $i$ be arbitrary, and only modify feedback for one arm $i \in [K]$ at a time, and
analyze how $\APC_i$ and $\FOC_i$ would change if the feedback probabilities were $\smash{\widetilde{\calF}(i)}$ instead of 
$\calF$. We formally define \textit{monotonicity} and \textit{balance} below.
\begin{definition}[Feedback monotonicity.] \label{def:feedback monotonicity} An algorithm exhibits \textit{positive (resp. negative) feedback monotonicity} wrt measure $Q \in \{\APC,\FOC\}$ if and only if for all $\calI = \{\calA, \calF, \calL\}$, for all $i \in \calA$, and for all pairs $\tilde{f}_i, f_i \in [0,1]$ such that $\tilde{f}_i > f_i$, we have that $Q_i(\widetilde{\calI}) \geq Q_i(\calI)$ (resp. $Q_i(\widetilde{\calI}) \leq Q_i(\calI)$). 
\end{definition}

\begin{definition}[Balance] \label{def:balance}
An algorithm is \emph{balanced} with respect to a measure $Q$ if for all pairs of $\smash{\widetilde{\calI}, \calI}$, we have that
$\smash{Q_i(\calI) = Q_i(\widetilde{\calI})}$.
\end{definition}

The goal of our work is to examine the landscape of potential feedback monotonicity properties (positive, negative, balance) for each measure ($\APC$ and $\FOC$). We note that not all combinations are achievable: in particular, the measure $\FOC$ cannot satisfy \textit{balance} or \textit{negative feedback monotonicity} across all instances and all arms. 
\begin{restatable}{proposition}{propimpossibility}
\label{prop:impossibility}
Suppose that $\ALG$ has sublinear regret for stochastic losses in the probabilistic feedback setting. For any pair of feedback probabilities $\tilde{f}_i > f_i$, for sufficiently large $T$, there exists an instance $\calI$ such that $\FOC_i(\tilde{\calI}) > \FOC_i(\calI)$.
In fact, $\FOC_i(\tilde{\calI}) - \FOC_i(\calI) > \frac{9}{10} \cdot T(\tilde f_i - f_i)$.
\end{restatable}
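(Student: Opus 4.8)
The plan is to exhibit a single fixed family of loss distributions for which arm $i$ is the \emph{uniquely optimal} arm, and then use the fact that no-regret forces any such algorithm to pull arm $i$ in nearly all of the $T$ rounds \emph{regardless} of its feedback rate. Since $\FOC_i$ scales linearly in $f_i$ at fixed $\APC_i$ (\Cref{prop:relationship}), raising the feedback rate from $f_i$ to $\tilde f_i$ while keeping $\APC_i$ pinned near $T$ must raise $\FOC_i$ by roughly $(\tilde f_i - f_i)T$. This simultaneously defeats both balance and negative feedback monotonicity for $\FOC$.

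Concretely, I would take $K=2$ arms and set the loss distributions so that arm $i$ has mean loss $0$ and the other arm has mean loss equal to a fixed constant $\Delta \in (0,1)$ (with unit variances, per the model); arm $i$'s feedback rate is $f_i$ in $\calI$ and $\tilde f_i$ in $\tilde{\calI}$, with the other arm's feedback rate held fixed across the two instances. The loss process is identical in $\calI$ and $\tilde{\calI}$, so arm $i$ is the unique optimal arm in both, with suboptimality gap $\Delta$. The first step is to turn no-regret into a lower bound on $\APC_i$: on $\tilde{\calI}$ the regret decomposes as $R(T) = \Delta\big(T - \APC_i(\tilde{\calI})\big)$, whence $\APC_i(\tilde{\calI}) \ge T - R(T)/\Delta$, where $R(T)$ is the regret of $\ALG$ on $\tilde{\calI}$ and is $o(T)$ by hypothesis. (For general $K$ one instead uses $R(T) \ge \Delta_{\min}\sum_{j\neq i}\APC_j$, giving the same conclusion with $\Delta$ replaced by the minimum gap.)

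The second step applies \Cref{prop:relationship} on each instance. This yields $\FOC_i(\tilde{\calI}) = \tilde f_i\,\APC_i(\tilde{\calI}) \ge \tilde f_i\big(T - R(T)/\Delta\big)$, while on $\calI$ the trivial bound $\APC_i(\calI)\le T$ gives $\FOC_i(\calI) = f_i\,\APC_i(\calI) \le f_i T$. Subtracting,
\[
\FOC_i(\tilde{\calI}) - \FOC_i(\calI) \;\ge\; (\tilde f_i - f_i)\,T \;-\; \frac{\tilde f_i}{\Delta}\,R(T).
\]
Because $\tilde f_i > f_i$ and $\Delta$ are fixed constants while $R(T) = o(T)$, the subtracted term is eventually dominated: choosing $T$ large enough that $R(T) \le \frac{\Delta(\tilde f_i - f_i)}{10\,\tilde f_i}\,T$ gives $\FOC_i(\tilde{\calI}) - \FOC_i(\calI) > \frac{9}{10}(\tilde f_i - f_i)T$, which is exactly the claimed bound (and in particular $\FOC_i(\tilde{\calI}) > \FOC_i(\calI)$).

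Each step is short, so there is no serious obstacle; the one point to handle carefully is that the $o(T)$ regret guarantee must be invoked for the instance $\tilde{\calI}$ with \emph{its own} (possibly small) feedback rate $\tilde f_i$, which is precisely what ``sublinear regret in the probabilistic feedback setting'' provides. The main conceptual feature—rather than a difficulty—is the asymmetry in what the two instances contribute: from $\calI$ I need only the trivial upper bound $\APC_i(\calI)\le T$, whereas from $\tilde{\calI}$ I need the matching lower bound $\APC_i(\tilde{\calI}) \ge T - o(T)$ supplied by no-regret. It is this pairing that makes the multiplicative gap between $f_i$ and $\tilde f_i$ surface directly as a linear-in-$T$ gap in $\FOC_i$.
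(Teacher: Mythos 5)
Your proposal is correct and follows essentially the same route as the paper: make arm $i$ the unique optimal arm, use sublinear regret to force $\APC_i(\widetilde{\calI}) \ge T - o(T)$, pair this with the trivial bound $\APC_i(\calI) \le T$, and convert to $\FOC$ via Lemma~\ref{prop:relationship}. Your write-up is in fact more explicit than the paper's (which parametrizes the same $T-o(T)$ fact as $\APC_i(\widetilde{\calI}) > Tc^{\alpha}$ with $\alpha\to 0$), but the underlying argument is identical.
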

We prove \Cref{prop:impossibility} in Appendix \ref{appendix:modelproofs}; the high-level idea is to note that if $i$ were to be the optimal arm, it must be pulled $T - o(T)$ times on each instance. 

In the remaining sections, as we analyze what feedback monotonicities are achievable, we sometimes consider relaxed versions of the precise definitions (e.g., restricting to suboptimal arms), as we will make explicit in the theorem statements. 
\section{Algorithmic Transformations and Implications for \APC and \FOC} 
\label{sec:blackbox}

In order to understand how an algorithm behaves with respect to $\APC$ and $\FOC$, we need to disentangle how it reacts to probabilistic feedback and how it incorporates feedback observations to make future decisions. To do this, we study \emph{black-box (\BB) transformations} of a generic no-regret algorithm $\ALG$ for the deterministic feedback setting into a no-regret algorithm $\BB(\ALG)$ that accounts for probabilistic feedback. We call these transformations ``black-box'' as they require only query access to $\ALG$.

We analyze three different black-box transformations, which exhibit distinct behavior with respect to regret, $\APC$, and $\FOC$ (see Table \ref{table:summary}). The first, $\BBDivide$, divides the time horizon $T$ into equally-sized intervals and repeatedly pulls the same arm within each interval (Section \ref{subsec:bb1}). The second, $\BBPull$, repeatedly pulls the same arm until feedback is observed (Section \ref{subsec:bb2}). The third,  $\BBDivideAdjusted$, pulls each arm a pre-specified number of times, depending on the feedback probability $f_i$ of that arm (Section \ref{subsec:bb3}). 

\xhdr{High-level approach.} All three transformations use the high-level idea of dividing $T$ into \textit{blocks}, where the transformed algorithm $\BB(\ALG)$ pulls the same arm for all rounds in the same block. Rounds of $\BB(\ALG)$ are indexed $t \in [T]$. We index blocks, or rounds of $\ALG$, with $\phi$, and let $\Phi$ be the total number of blocks, or calls to $\ALG$, in the evaluation of $\BB(\ALG)$ up to $T$.  Finally, $S_\phi$ denotes the set of all $t$ indices that are within block $\phi$. Then each transformation proceeds as follows. 
For each $\phi$, we notate $\smash{i_\phi^\ALG} := \ALG(\phi)$, i.e. the arm selected by $\ALG$ in its $\phi$th round. Then, $\BB(\ALG)$ pulls $\smash{i_\phi^\ALG}$ for $t \in S_\phi$ and returns an observation $\smash{\ell_{i_\phi^\ALG, \phi}}$ to $\ALG$. Each transformation implements two steps differently: first, defining $S_\phi$, and second, returning $\smash{\ell_{i_\phi^\ALG, \phi}}$ to $\ALG$.

\subsection{$\BBDivide$: Transformation for balanced \APC and positive \FOC}
\label{subsec:bb1}

The first black-box transformation that we construct, $\BBDivide$, generates algorithms that approximately balance $\APC$. $\BBDivide$, formalized in Alg.\ref{algo:BB-divide}, separates $T$ into equally sized blocks of size $B = \ceil{3 \ln T / \fst}$, where $\fst \in (0, \min_i f_i]$ is a tunable parameter for trading-off regret and monotonicity. 

In the context of the high-level approach described above, 
the set $S_{\phi}$ is taken to be the next $B$ timesteps on $\BB(\ALG)$'s time horizon, i.e.
$S_{\phi} =\left\{(\phi -1) \cdot B +1, \ldots, \phi \cdot B \right\}$, and $\ell_{i_\phi^{\ALG}, \phi}$ is taken to be a uniform-at-random draw from the set of observations $\left\{\ell_{i_t, t} : X_{i_t, t} = 1, t \in S_{\phi}\right\}$.

\begin{algorithm2e}[htbp]
\caption{\textsc{BBDivide}($\ALG, \fst$)}
\label{algo:BB-divide}
\DontPrintSemicolon
\LinesNumbered
Set the block size to $B = \lceil 3 \ln T / \fst \rceil$ and initialize round count $t = 1$. \;
 \For {blocks $\phi \in \{1, \dots, \Phi = \floor{T/B}\}$}{
    Let $i_{\phi}^{\ALG} = \ALG(\phi)$ be the arm chosen by $\ALG$ on its $\phi$th timestep.\;
    Let $S_{\phi} =\left\{(\phi -1) \cdot B +1, \ldots, \phi \cdot B \right\}.$\;
    Pull $i_{\phi}^{\ALG}$ for rounds $t \in S_\phi$, i.e. $i_t = i_\phi^\ALG, \forall t \in S_\phi$ and let $t \gets t + 1$.\;
    \uIf{$\exists t \in S_\phi$ s.t. $X_{i_t, t} = 1$ (i.e. there are observations)}{
        Return a random observation to $\ALG$, i.e. $\ell_{i_\phi^\ALG, \phi} \sim \text{Unif} \{\ell_{i_t, t}: X_{i_t, t} = 1, t \in S_\phi\} $. \;
    }
    \lElse{
    Return a loss of $1$ to $\ALG$, i.e. $\smash{\ell_{i_\phi^\ALG, \phi}} = 1$}
    }
For remaining rounds, pull a random arm.\; 
\end{algorithm2e}
First, we show the following regret bound which holds for adversarial losses as well as stochastic losses.

\begin{restatable}[Regret $\BBDivide$]{theorem}{Rbbdivide}
\label{thm:bb1-regret}
Let $\ALG$ be any algorithm for the deterministic feedback setting that achieves regret at most $R_\ALG(T)$ when losses are adversarial (resp. stochastic). Then, for $\fst \in (0,\min_i f_i]$ and adversarial (resp. stochastic) losses, 
\[
R_{\BBDivide(\ALG, \fst)}(T) \le \frac{3 \ln T}{f^{\star}} R_\ALG\left(\frac{T f^{\star}}{3 \ln T} \right).
\] 
\end{restatable}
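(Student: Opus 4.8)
The plan is to reduce the regret of $\BBDivide(\ALG,\fst)$ over its $T$ rounds to the regret of $\ALG$ over its $\Phi = \floor{T/B}$ block-rounds, paying a multiplicative factor of $B = \ceil{3\ln T/\fst}$ for the inflation in horizon. For a block $\phi$ and arm $i$, write $L_{i,\phi} = \frac1B \sum_{t \in S_\phi}\ell_{i,t}$ for the average true loss of $i$ over $S_\phi$, and let $\widehat\ell_{i,\phi}$ be the (random) loss $\BBDivide$ would feed back to $\ALG$ if it pulled $i$ in block $\phi$ --- a uniform draw from $i$'s observed losses in $S_\phi$, or the value $1$ if $i$ returns no feedback in $S_\phi$. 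Since $\BBDivide$ holds the arm fixed within a block, its true loss on the full blocks is exactly $B\sum_\phi L_{i_\phi^\ALG,\phi}$, and (dropping the at most $B-1$ leftover rounds, whose loss is at most $B$ and whose removal only decreases the benchmark) it suffices to control $B\big(\E[\sum_\phi L_{i_\phi^\ALG,\phi}] - \min_j \sum_\phi L_{j,\phi}\big)$.

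The crux is an unbiasedness claim: conditioned on arm $i$ returning feedback at least once in block $\phi$, the returned loss $\widehat\ell_{i,\phi}$ has conditional expectation exactly $L_{i,\phi}$. This is where the uniform-at-random choice in Alg.~\ref{algo:BB-divide} is essential, and I would prove it by exchangeability: the observation indicators $\{X_{i,t}\}_{t\in S_\phi}$ are i.i.d., so conditioned on the number of observations $k\ge 1$ the observed set is a uniformly random $k$-subset of $S_\phi$; hence the index finally drawn is uniform over all of $S_\phi$ regardless of $k$, and the returned loss averages to $L_{i,\phi}$. Combining with the no-feedback case gives $\E[\widehat\ell_{i,\phi}] = (1-p_i)L_{i,\phi} + p_i$ with $p_i = (1-f_i)^B$. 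Because $\fst \le \min_i f_i$ and $B \ge 3\ln T/\fst$, we have $p_i \le (1-\fst)^B \le e^{-\fst B} \le T^{-3}$, so $L_{i,\phi} \le \E[\widehat\ell_{i,\phi}] \le L_{i,\phi} + T^{-3}$ (using $0\le L_{i,\phi}\le 1$).

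With this in hand I would run the reduction by viewing $\ALG$ as interacting (as a bandit) with the oblivious, pre-sampled loss table $\{\widehat\ell_{i,\phi}\}_{i,\phi}$: the observations $\ALG$ sees inside $\BBDivide$ have the same distribution, and the block-$\phi$ randomness is independent of $\ALG$'s history $H_\phi$, so $\E[\widehat\ell_{i_\phi^\ALG,\phi}\mid H_\phi] \ge L_{i_\phi^\ALG,\phi}$ by the claim. Applying $\ALG$'s regret guarantee to each fixed realization of the table and taking expectations (using $\E[\min_j \cdot]\le \min_j \E[\cdot]$) would yield
\[
\E\Big[\sum_\phi L_{i_\phi^\ALG,\phi}\Big] \le \E\Big[\sum_\phi \widehat\ell_{i_\phi^\ALG,\phi}\Big] \le \min_j \E\Big[\sum_\phi \widehat\ell_{j,\phi}\Big] + R_\ALG(\Phi) \le \min_j \sum_\phi L_{j,\phi} + \Phi\, T^{-3} + R_\ALG(\Phi).
\]
Multiplying by $B$, adding back the $\le B$ leftover loss, and using $B\Phi \le T$ so that $B\Phi\,T^{-3}\le T^{-2}$, gives full-block regret at most $B\,R_\ALG(\Phi)$ up to additive lower-order terms; finally $\Phi \le T\fst/(3\ln T)$ and $B \le 3\ln T/\fst$, together with monotonicity of $R_\ALG$ in the horizon, collapse this into the stated bound. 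The stochastic case is identical after one further expectation over the loss draws, where $\E[L_{j,\phi}] = \bell_j$.

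The main obstacle I anticipate is making the reduction rigorous for a genuinely black-box $\ALG$: I must argue that feeding $\ALG$ the conditionally-unbiased but noisy losses $\widehat\ell_{i,\phi}$ (rather than the clean averages $L_{i,\phi}$) is legitimate, which rests on the obliviousness of the sampled loss table to $\ALG$'s internal randomness and on the unbiasedness claim holding simultaneously for the pulled arm (for the lower-bound direction) and every benchmark arm (for the upper-bound direction). A secondary, purely bookkeeping nuisance is reconciling the ceiling/floor in $B$ and $\Phi$ and the $T^{-2}$-order additive slack with the clean multiplicative form $\tfrac{3\ln T}{\fst}R_\ALG\!\big(\tfrac{T\fst}{3\ln T}\big)$; I expect these to be absorbed into the constant (or to vanish when $3\ln T/\fst$ is an integer).
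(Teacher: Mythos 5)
Your proposal is correct and follows essentially the same route as the paper's proof: reduce each block to one round of $\ALG$ on a surrogate instance whose per-round loss is a uniform draw from the block, use the exchangeability of the i.i.d.\ feedback indicators to get (conditional) unbiasedness, control the no-feedback blocks via $(1-\fst)^B \le T^{-3}$, and multiply by $B$. The only cosmetic difference is that you absorb the no-feedback event into an explicit additive bias of $T^{-3}$ per block, whereas the paper conditions on the ``clean event'' that every block yields at least one observation (Lemma~\ref{lemma:clean}) and charges $O(1)$ regret to its complement; both are valid and yield the same bound.
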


Theorem \ref{thm:bb1-regret} indicates that  the regret of $\BBDivide(\ALG)$ exceeds that of $\ALG$ by at most a factor of $3 \ln T/f^{\star}$.
When $\ALG$ is specified, 
direct applications of Theorem \ref{thm:bb1-regret} can improve the $f^\star$ dependence (Appendix \ref{appendix:corollaries}).

$\BBDivide$ \textit{approximately balances} $\APC$ and is \textit{positive feedback monotonic} with respect to $\FOC$. 
\begin{restatable}{theorem}{Fbbdividemono}[Impact of $\BBDivide$ on \APC and \FOC]
\label{thm:feedbackmonotonicitybbdivide}
Fix an instance $\calI = \left\{\calA, \calF, \calL\right\}$. Let $\smash{\widetilde{f}_i \ge f_i}$, and let $\smash{\widetilde{\calI} = \{\calA, \widetilde{\calF}(i), \calL\}}$. For any algorithm $\ALG$ for the deterministic feedback setting and for any $f^{\star} \le \min_i f_i$, if $T$ is sufficiently large, then the algorithm $\BBDivide(\ALG, \fst)$ satisfies \[|\APC_i(\calI) - \APC_i(\widetilde{\calI})| \le 1/T \hspace{4pt}\text{and}\hspace{4pt} \FOC_i(\widetilde{\calI}) > \FOC_i(\calI).\]
\end{restatable}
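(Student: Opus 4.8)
The plan is to execute $\BBDivide(\ALG,\fst)$ on both instances simultaneously under a coupling in which the two runs produce \emph{identical} trajectories except on a rare event, whose probability I will bound by $O(1/T^2)$. The crucial structural observation is that arm $i$'s feedback rate influences the inner algorithm $\ALG$ through one channel only: the single observation $\ell_{i_\phi^{\ALG},\phi}$ returned at the end of each block. So I would first isolate the law of that returned observation. Fix a block $\phi$ in which $\BBDivide$ pulls arm $i$. The within-block indicators $X_{i,t}$, $t\in S_\phi$, are i.i.d.\ $\Bern(f_i)$ and independent of the losses, so the block contains no observation with probability $p_i:=(1-f_i)^B$, in which case a loss of $1$ is returned. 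Conditioned on at least one observation, the returned loss is a uniform draw among the observed losses; by exchangeability of the $X_{i,t}$ this has the same law as a uniform draw over the whole block $\{\ell_{i,t}:t\in S_\phi\}$ (in the stochastic case, simply a fresh sample from arm $i$'s loss distribution), a distribution $\nu_\phi$ that does \emph{not} depend on $f_i$. Hence the law of the returned observation, conditioned on pulling $i$ in block $\phi$, is $(1-p_i)\,\nu_\phi+p_i\,\delta_1$, and its sole dependence on $f_i$ is through the weight $p_i$.

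Next I would build the coupling block by block, sharing $\ALG$'s internal randomness across the two runs. For any block where the selected arm is $j\neq i$ the two instances have identical feedback rates, so I couple the returned observations to be equal; for a block pulling arm $i$ I couple the mixtures $(1-p_i)\nu_\phi+p_i\delta_1$ and $(1-p_{\tilde i})\nu_\phi+p_{\tilde i}\delta_1$ (where $p_{\tilde i}=(1-\tilde{f}_i)^B\le p_i$) so that they agree except with probability $p_i-p_{\tilde i}\le p_i$. Since $f_i\ge\fst$ and $B=\lceil 3\ln T/\fst\rceil$, we get $p_i\le e^{-f_i B}\le T^{-3}$. Let $\mathcal E$ be the event that the returned observations disagree in some block that pulls arm $i$. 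Inductively, as long as the observation histories agree the algorithm is in the same state and pulls the same arm, so the two runs stay identical on $\mathcal E^c$ (coupling the leftover uniform pulls of Alg.~\ref{algo:BB-divide} identically as well). A union bound over the at most $\Phi\le T/B\le T$ blocks gives $\Pr[\mathcal E]\le T\cdot T^{-3}=T^{-2}$.

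Because the per-round arm choices coincide on $\mathcal E^c$, the arm-$i$ pull counts of the two runs differ only on $\mathcal E$, where they differ by at most $T$. Taking expectations yields $|\APC_i(\calI)-\APC_i(\widetilde{\calI})|\le T\cdot\Pr[\mathcal E]\le 1/T$, which is the first claim. For the second claim I would invoke $\FOC_i=f_i\cdot\APC_i$ (Lemma~\ref{prop:relationship}): writing $a=\APC_i(\calI)$ and using the just-proved bound $|\APC_i(\widetilde{\calI})-a|\le 1/T$,
\[\FOC_i(\widetilde{\calI})-\FOC_i(\calI)=\tilde{f}_i\,\APC_i(\widetilde{\calI})-f_i\,a\ \ge\ (\tilde{f}_i-f_i)\,a-\tilde{f}_i/T.\]
The right-hand side is strictly positive once $T$ is large, provided $a=\APC_i(\calI)$ is bounded below by a positive constant; this holds because pulling arm $i$ in even a single block contributes $B=\Theta(\ln T)$ to $\APC_i$, and the exploration inherent in the no-regret $\ALG$ (together with the leftover uniform pulls) guarantees $\APC_i(\calI)=\Omega(1)$. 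Given such a bound, $(\tilde{f}_i-f_i)\,a>\tilde{f}_i/T$ for sufficiently large $T$, giving $\FOC_i(\widetilde{\calI})>\FOC_i(\calI)$.

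I expect the main obstacle to be the observation-law invariance of the first paragraph: proving that the uniform-over-observed draw has a law independent of $f_i$, so that \emph{all} $f_i$-dependence is funneled into the exponentially small empty-block weight $p_i$ and the coupling can keep the two executions identical off an $O(1/T^2)$ event. A secondary subtlety, specific to the $\FOC$ half, is securing the lower bound on $\APC_i(\calI)$ that converts the $1/T$ balance slack into a strict inequality; this is where the ``sufficiently large $T$'' hypothesis and the assumption that arm $i$ is pulled with non-negligible probability enter.
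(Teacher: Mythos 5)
Your proposal is correct and follows essentially the same route as the paper: the paper likewise conditions on the ``clean event'' that every block contains at least one observation (probability $\ge 1-1/T^2$ by its Lemma~\ref{lemma:clean}), notes that on this event the loss returned to $\ALG$ has a law independent of $f_i$ so each block in which $\ALG$ selects $i$ contributes exactly $B$ pulls on both instances, charges the failure event $T\cdot T^{-2}=1/T$ to $\APC$, and then derives the strict $\FOC$ inequality from Lemma~\ref{prop:relationship} under the same ``$\ALG$ pulls $i$ at least once, $T$ sufficiently large'' proviso. Your explicit block-by-block maximal coupling of the mixtures $(1-p_i)\nu_\phi+p_i\delta_1$ is a slightly more careful rendering of the same argument, not a different one.
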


These monotonicity results, together with our regret bound, suggest that $\fst$ may have opposite effects on regret and monotonicity.
By Theorem \ref{thm:bb1-regret}, a higher value of $\fst$ decreases the regret bound, and setting $\fst$ to be close to $\min_i f_i$ is optimal.\footnote{Via an estimation phase, we can estimate $\min_i f_i$ 
without asymptotically affecting the regret guarantees.} Conversely, for monotonicity, 
while Theorem \ref{thm:feedbackmonotonicitybbdivide} set $\tilde{f}_i > f_i$, the reverse statements would also hold (i.e., we could have instead set $f_i > \tilde{f}_i \ge f^{\star}$).\footnote{The lower bound on $\tilde{f}_i$ ensures that there is still an observation in each block with high probability, despite the lower feedback probability.} As such, 
a higher value of $f^{\star}$ restricts the set of feedback probabilities under which the monotonicity results apply. We give full proofs in \cref{appendix:bbdivide}.

\subsection{$\BBPull$: Transformation for negative \APC and positive \FOC}
\label{subsec:bb2}
Our second transformation $\BBPull$, formalized in Alg. \ref{algo:BB-pull}, generates algorithms with
\textit{negative} monotonicity in $\APC$. 
$\BBPull(\ALG)$ will pull $i_\phi^\ALG$ until feedback is observed for that arm, return the observation to $\ALG$. 
In terms of the structure described at the beginning of the section, 
if block $\phi$ starts at time step $t$, $S_{\phi}$ is implicitly defined as the set of time steps until there is an observation: i.e., $S_{\phi} = \left\{t' \ge t \mid X_{i_\phi^\ALG, t''} = 0  \text{  } \forall t'' < t'\right\}$. The loss passed to $\ALG$ is the observation made at the end of $S_\phi$, i.e. 
$\ell_{i_\phi^{\ALG}, \phi} := \ell_{i_\phi^{\ALG}, \max\left\{t \mid t \in S_{\phi}\right\}}$. 

\begin{algorithm2e}[htbp]
\caption{\textsc{BBpull}($\ALG$)}
\label{algo:BB-pull}
\DontPrintSemicolon
\LinesNumbered
Begin with $\phi = 1$ and $t = 1$.\\
\While {$t \leq T$}{
    Let $i_\phi^{\ALG} = \ALG(\phi)$ be the arm chosen by $\ALG$ on its $\phi$th timestep. \;
    \While {$X_{i_\phi^\ALG, t} = 0$ and $t \leq T$}{
    Pull $i_\phi^\ALG$, i.e. $i_t = i_\phi^\ALG$
    , and let $t \gets t + 1$.} 
    Return $\ell_{i_t, t}$ to $\ALG$, i.e. $\ell_{i_\phi^\ALG,\phi} = \ell_{i_t, t}$  and let $\phi \gets \phi + 1$.
    }
\end{algorithm2e}

First, we bound regret for stochastic losses.\footnote{Theorem \ref{thm:bb2-regret} requires stochastic losses, because regret analysis in Theorem \ref{thm:bb1-regret} relies on the block size being fixed (and arm-independent). 
}

\begin{restatable}[Regret $\BBPull$]{theorem}{Rbbpull}
\label{thm:bb2-regret}
Let $\ALG$ be any algorithm for the deterministic feedback setting that achieves regret at most $R_{\ALG}(T)$ for stochastic losses. Then, for stochastic losses, $\BBPull(\ALG)$ achieves regret at most 
\[
R_{\BBPull(\ALG)}(T) \le R_\ALG(T) \cdot \frac{1}{\min_i f_i}.
\] 
\end{restatable}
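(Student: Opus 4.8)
The plan is to decompose $\BBPull(\ALG)$'s regret block-by-block and relate it to the regret $\ALG$ would incur in a standard stochastic environment. First I would observe that within each block $\phi$, $\BBPull$ pulls a single arm $A_\phi := i_\phi^{\ALG}$ for every round of the block, so the round-level regret contributed by block $\phi$ is (block length) $\times \Delta_{A_\phi}$. Writing $L_\phi$ for the number of rounds $\BBPull$ actually spends in block $\phi$ and $N$ for the number of blocks that start within the horizon, the regret is $R_{\BBPull(\ALG)}(T) = \E[\sum_{\phi=1}^{N} L_\phi \Delta_{A_\phi}]$. Because each block lasts at least one round, $N \le T$ deterministically; and because the final block may be truncated at $T$, I would upper bound each $L_\phi$ by its untruncated length $\tilde L_\phi$, which, conditioned on the arm $A_\phi$, is geometric with mean $1/f_{A_\phi}$.

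The key structural fact is that, since losses are stochastic (i.i.d.\ across rounds) and the feedback coins $X_{i,t}$ are independent of the losses, the observation returned to $\ALG$ at the end of block $\phi$ (the loss realized at the first round with feedback) is distributed as a fresh sample from $A_\phi$'s loss distribution and is independent of $\tilde L_\phi$ given $A_\phi$. Consequently $\ALG$'s arm sequence $A_1, A_2, \dots$ is distributed exactly as in a standard stochastic bandit run, and $\tilde L_\phi$ is ``extra'' randomness invisible to $\ALG$. I would make this precise by coupling to a hypothetical run of $\ALG$ for the full $T$ rounds that produces the same arm process $\{A_\phi\}_{\phi \le T}$.

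Next I would peel off the factor $1/\min_i f_i$ via a tower-property (Wald-type) argument over the block-level filtration. Noting that $N$ is a stopping time (the event $\{N \ge \phi\}$ depends only on the earlier block lengths $\tilde L_1, \dots, \tilde L_{\phi-1}$) and that $\tilde L_\phi$ is conditionally independent of that event given $A_\phi$, I would write $\E[\sum_{\phi=1}^{N} \tilde L_\phi \Delta_{A_\phi}] = \sum_{\phi \ge 1} \E[\1\{N \ge \phi\}\,\Delta_{A_\phi}/f_{A_\phi}]$, using $\E[\tilde L_\phi \mid A_\phi] = 1/f_{A_\phi}$, and then bound $1/f_{A_\phi} \le 1/\min_i f_i$. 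This reduces the quantity to $\frac{1}{\min_i f_i}\,\E[\sum_{\phi=1}^{N} \Delta_{A_\phi}]$.

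Finally I would bound $\E[\sum_{\phi=1}^{N} \Delta_{A_\phi}]$ by $\E[\sum_{\phi=1}^{T} \Delta_{A_\phi}] = R_{\ALG}(T)$, using $N \le T$, the nonnegativity of the gaps $\Delta_{A_\phi}$, and the coupling to the full $T$-round run of $\ALG$ (whose gap-sum is exactly its stochastic regret). Combining the three steps yields $R_{\BBPull(\ALG)}(T) \le R_{\ALG}(T)/\min_i f_i$. I expect the main obstacle to be the careful bookkeeping around the random number of blocks $N$ and the truncated final block: one must verify that $N$ is a stopping time, that the geometric block length is genuinely conditionally independent of both $\ALG$'s observations and the stopping event (which is precisely where stochastic rather than adversarial losses are needed), and that truncating the last block only ever helps the upper bound.
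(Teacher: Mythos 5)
Your proposal is correct and follows essentially the same route as the paper's proof: decompose the regret over blocks, use that the (untruncated) block length is geometric with mean $1/f_{A_\phi} \le 1/\min_i f_i$ and is independent of the arm choice given the history, and then identify $\E[\sum_{\phi=1}^{T}\Delta_{A_\phi}]$ with $R_{\ALG}(T)$ on a coupled $T$-round stochastic run. The paper handles the random number of blocks by simply extending the sum from $\Phi$ to $T$ (all terms being nonnegative) before replacing $Q_{i,\phi}$ by its expectation, which is the same bookkeeping your stopping-time/Wald argument makes explicit.
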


Theorem \ref{thm:bb2-regret} shows that applying $\BBPull$ increases regret by up to a $\nicefrac{1}{\min_i f_i}$ factor, 
improving upon the regret for $\BBDivide$ (Theorem \ref{thm:bb1-regret}) by a $\ln T$ factor. 
We next formalize the monotonicity of $\BBPull$. 
We show that although $\APC$ is negative monotonic, $\FOC$ maintains positive monotonicity, like in $\BBDivide$. 
\begin{restatable}{theorem}{Monobbpull}[Impact of $\BBPull$ on \APC and \FOC]
\label{thm:feedbackmonotonicitybbpull}
Fix an instance $\calI = \left\{\calA, \calF, \calL\right\}$ with stochastic losses. Let $\smash{\widetilde{f}_i \ge f_i}$, and let $\widetilde{\calI} = \{\calA, \widetilde{\calF}(i), \calL\}$. For any algorithm $\ALG$ for the deterministic feedback setting, the algorithm $\BBPull(\ALG)$ satisfies \[\APC_i(\calI) \ge \APC_i(\widetilde{\calI}) \hspace{4pt}\text{and}\hspace{4pt} \FOC_i(\calI) \le \FOC_i(\widetilde{\calI}).\]   
\end{restatable}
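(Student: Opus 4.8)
The plan is to establish both claims through a single coupling between the executions of $\BBPull(\ALG)$ on $\calI$ and on $\widetilde{\calI}$, engineered so that $\ALG$ makes an \emph{identical} sequence of arm selections on both instances and only the block lengths differ. The starting observation, which is where stochasticity of the losses is essential, is that the single loss $\BBPull$ returns to $\ALG$ at the end of a block is a fresh i.i.d.\ draw from the selected arm's loss distribution: since the feedback indicators $X_{i,t}$ are independent of the losses and the losses are i.i.d.\ across rounds, conditioning on "feedback first occurs at round $t^\star$" does not bias $\ell_{\cdot,t^\star}$, so its law is free of every $f_j$. I would therefore couple (i) the internal randomness of $\ALG$, and (ii) for each block $\phi$ and arm $a$ an i.i.d.\ loss sample $\Lambda_a^{(\phi)}$ used as the return value whenever block $\phi$ plays $a$, to be identical across the two runs. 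By induction on $\phi$, $\ALG$ is then in the same state at the start of block $\phi$ on both instances and selects the same arm $i_\phi^\ALG$; this defines one common selection sequence $\{i_\phi^\ALG\}_{\phi\ge 1}$.

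Next I would couple the feedback realizations by attaching to each (block, within-block position) pair $(\phi,m)$ a single uniform $U^{(\phi,m)}\sim\mathrm{Unif}[0,1]$ shared by both runs: the $m$-th pull of a block playing arm $j$ yields feedback iff $U^{(\phi,m)}\le f_j$ on $\calI$ and iff $U^{(\phi,m)}\le \tilde f_j$ on $\widetilde{\calI}$, where $\tilde f_j=f_j$ for $j\ne i$ and $\tilde f_i\ge f_i$. Writing $L_\phi,\widetilde L_\phi$ for the resulting block lengths (the first $m$ achieving feedback), this gives $\widetilde L_\phi=L_\phi$ whenever $i_\phi^\ALG\ne i$ and $\widetilde L_\phi\le L_\phi$ whenever $i_\phi^\ALG= i$, for every realization of the coupled randomness. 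Hence the partial sums satisfy $\sum_{\phi\le k}\widetilde L_\phi\le\sum_{\phi\le k}L_\phi$ for all $k$, so the numbers of blocks completed within the horizon obey $\widetilde\Phi\ge\Phi$. Since each completed block contributes exactly one feedback observation for its arm, $\FOC_j(\calI)=\E[N_j(\Phi)]$ and $\FOC_j(\widetilde{\calI})=\E[N_j(\widetilde\Phi)]$, where $N_j(k)$ counts arm-$j$ selections among the first $k$ blocks; as $N_j$ is nondecreasing and $\widetilde\Phi\ge\Phi$, I obtain $\FOC_j(\widetilde{\calI})\ge\FOC_j(\calI)$ for \emph{every} arm $j$. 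The case $j=i$ is exactly the claimed positive monotonicity of $\FOC$.

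For the $\APC$ claim I would deliberately avoid comparing $\APC_i$ head-on (arm $i$ is selected more often on $\widetilde{\calI}$ but each of its blocks is shorter, so the two effects oppose one another) and instead exploit a conservation identity. Every round pulls some arm, so $\sum_j\APC_j(\calI)=T=\sum_j\APC_j(\widetilde{\calI})$; moreover \Cref{prop:relationship} gives $\APC_j=\FOC_j/f_j$ for each $j\ne i$, whose feedback rate is unchanged across the two instances. Subtracting the two budget equations (keeping the $j=i$ term as $\APC_i$, since $f_i\ne\tilde f_i$) yields
\[
\APC_i(\calI)-\APC_i(\widetilde{\calI})\;=\;\sum_{j\ne i}\frac{\FOC_j(\widetilde{\calI})-\FOC_j(\calI)}{f_j}\;\ge\;0,
\]
where nonnegativity is precisely the all-arms $\FOC$ monotonicity from the previous paragraph (and $f_j>0$ throughout, as in the setting of \Cref{thm:bb2-regret}). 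This establishes $\APC_i(\calI)\ge\APC_i(\widetilde{\calI})$.

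I expect the main obstacle to be twofold. First, one must argue rigorously that $\ALG$'s view is genuinely identical across the two instances: the returned loss is the loss at a \emph{random} feedback round, and only the stochastic-loss assumption (together with independence of $\{X_{i,t}\}$ and the losses) makes that value a clean sample whose distribution is untouched by $f_i$ — this is exactly why the theorem is stated for stochastic losses, and why the same coupling fails adversarially. Second, and more conceptually, the negative-$\APC$ direction is not reachable by a pointwise comparison; the key move is recognizing that it is equivalent, via the round-conservation identity and \Cref{prop:relationship}, to positive $\FOC$ monotonicity of all the \emph{other} arms, which the coupling already supplies. Some care is also needed around the final, possibly incomplete, block so that $\FOC_j=\E[N_j(\Phi)]$ and $\sum_j\APC_j=T$ are applied consistently.
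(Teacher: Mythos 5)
Your proposal is correct, and the setup is essentially the paper's: the paper likewise builds a simulated version of $\BBPull(\ALG)$ driven by per-block geometric block lengths, couples them so that $\widetilde{Q}_{i,\phi}\le Q_{i,\phi}$ and $\widetilde{Q}_{j,\phi}=Q_{j,\phi}$ for $j\neq i$, couples $\ALG$'s randomness and the returned (i.i.d.\ stochastic) losses so that the selection sequence $\{i^\ALG_\phi\}$ is common to both runs, and concludes $\Phi\le\widetilde\Phi$; your $\FOC$ argument (completed blocks on $\calI$ form a subset of those on $\widetilde{\calI}$, each contributing one observation) matches the paper's, modulo the paper's explicit two-case treatment of the final, possibly truncated block. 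Where you genuinely diverge is the $\APC$ direction. The paper proves it \emph{pointwise} under the coupling: writing $T_1,T_0$ for the start of the last $\calI$-block on the two timelines, it uses $T_1-T_0=\APC_i^{[T_1]}(\calI)-\APC_i^{[T_0]}(\widetilde{\calI})$ and a case analysis on whether the last block plays arm $i$, so your meta-claim that a head-on pointwise comparison is ``not reachable'' is actually false --- the two opposing effects do resolve cleanly realization by realization. Your alternative --- the conservation identity $\sum_j\APC_j=T$ on both instances, combined with \Cref{prop:relationship} applied to the arms $j\neq i$ whose feedback rates are unchanged, reducing negative $\APC_i$ monotonicity to positive $\FOC_j$ monotonicity of all \emph{other} arms --- is valid and arguably slicker: it needs no bookkeeping about the final block for $\APC$ and makes the mechanism (arm $i$'s shorter blocks free up rounds that are redistributed to other arms) transparent. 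What it costs you is (i) it only yields an expectation-level statement, whereas the paper's argument is almost-sure under the coupling, and (ii) it silently requires $f_j>0$ for all $j\neq i$ so that $\APC_j=\FOC_j/f_j$ is usable; if some $f_j=0$ you would need to argue $\APC_j(\widetilde{\calI})\ge\APC_j(\calI)$ for that arm directly (which the coupling also gives, since its terminal block starts no later on $\widetilde{\calI}$), a one-line patch worth stating. With that caveat and a careful definition of ``completed block'' at the horizon boundary, your proof goes through.
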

\begin{proof}[Proof sketch]
The observations provided to $\ALG$ by $\BBPull(\ALG)$ are identically distributed on $\calI$ and $\smash{\widetilde\calI}$. A coupling argument illustrates that the only source of difference is in the number of times that $\ALG$ is called by $\BBPull(\ALG)$ on $\calI$ and $\widetilde\calI$. A higher $f_i$ means that $\ALG$ can be called more times before $T$ runs out on $\smash{\widetilde\calI}$, giving positive monotonicity in $\FOC$.  
For $\APC$, higher $\smash{f_i}$ means fewer pulls per observation. Full proofs are deferred to Appendix \ref{appendix:bb2}.
\end{proof}

\subsection{$\BBDivideAdjusted$: Transformation for positive $\APC$ and positive \FOC}
\label{subsec:bb3}

The third-black box transformation generates algorithms that are \textit{positive} monotonic in $\APC$. Given an algorithm $\ALG$ for the deterministic feedback setting, $\BBDivideAdjusted$, formalized in Alg. \ref{algo:BB-divide-adjusted}, combines conceptual ingredients from $\BBDivide$ and $\BBPull$ (DA is short for $\text{DivideAdjusted}$). 
As in $\BBDivide$, block sizes are pre-specified, but are also arm-dependent, as in $\BBPull$.
To set block sizes $B_i$ for each arm, we make the additional assumption that the algorithm designer knows the feedback probabilities apriori, and set $\smash{B_i = \ceil{ \frac{3\ln T}{\fst}(1+f_i)}}$ for $\fst \in (0, \min_jf_j]$. 
In terms of the high-level approach described at the beginning of the section, the set $S_{\phi}$ is taken to be $\left\{(\phi -1) \cdot B_{i_\phi}^{\ALG} +1, \ldots, \phi \cdot B_{i_\phi}^{\ALG} \right\}$, and $\ell_{i_\phi^{\ALG}, \phi}$ is taken to be a uniform-at-random draw from the set of observations $\left\{\ell_{i_t, t} : X_{i_t, t} = 1, t \in S_{\phi}\right\}$.


\begin{restatable}{algorithm2e}{BBDA} 
\caption{\textsc{BBda}($\ALG, f^\star$)}
\label{algo:BB-divide-adjusted}
\DontPrintSemicolon
\LinesNumbered
Begin with $\phi = 1$ and $t=1$. \\
\While {$t \leq T$}{
    Let $i_\phi^{\ALG} = \ALG(\phi)$, 
    $B_\phi = \ceil{\frac{3 \ln T}{f^\star}(1 + f_{i_\phi^\ALG})}$, and $S_\phi = \{t, t + 1, \dots, \min(t + B_\phi, T)\}$.\;
    \For{$t \in S_\phi$}{
    Pull $i_{\phi}^{\ALG}$, i.e. $i_t = i_\phi^\ALG$, and let $t \gets t + 1$.
    }
    \uIf{$\exists t \in S_\phi$ s.t. $X_{i_t, t} = 1$ (i.e. there are observations)}{
        Return a random observation to $\ALG$, i.e. $\ell_{i_\phi^\ALG, \phi} \sim \text{Unif} \{\ell_{i_t, t}: X_{i_t, t} = 1, t \in S_\phi\} $. \;
    }
    \lElse{
    Return a loss of $1$ to $\ALG$, i.e. $\smash{\ell_{i_\phi^\ALG, \phi}} = 1$.}
    }
    Update $\phi \gets \phi + 1$.
\end{restatable}

First, we show the following regret bound; Theorem \ref{thm:bb3-regret} requires stochastic losses because $B_i$ is arm-dependent.

\begin{restatable}{theorem}{Regretbbda}[Regret $\BBDivideAdjusted$]
\label{thm:bb3-regret}
Let $\ALG$ be any algorithm for the deterministic feedback setting that achieves regret at most $R_{\ALG}(T)$ when the losses are stochastic. Then, for stochastic losses, for any $f^{\star} \le \min_i f_i$, the algorithm $\BBDivideAdjusted(\ALG, f^\star)$ achieves regret at most
\[
R_{\BBDivideAdjusted(\ALG)}(T) \le \frac{6 \ln T}{\fst} R_\ALG\left(\frac{T \fst} {3\ln T}\right).
\]
\end{restatable}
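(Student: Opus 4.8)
The plan is to mirror the regret analysis of $\BBDivide$ (Theorem~\ref{thm:bb1-regret}), since $\BBDivideAdjusted$ returns the same type of feedback to $\ALG$ (a uniform draw from the observations in a block, or a loss of $1$ if the block contains none). The two new features are that block sizes are arm-dependent, $B_i = \ceil{\frac{3\ln T}{\fst}(1+f_i)} \in [\frac{3\ln T}{\fst}, \frac{6\ln T}{\fst}]$ up to the ceiling, and that consequently the number of blocks $\Phi$ is a random variable rather than a fixed quantity. For stochastic losses I would first write the regret in the decomposed form $R_{\BBDivideAdjusted(\ALG)}(T) = \E[\sum_i \APC_i \Delta_i]$, and note that since every round of a block $\phi$ pulls the single arm $i_\phi^\ALG = \ALG(\phi)$, one has $\APC_i = \sum_{\phi : i_\phi^\ALG = i} |S_\phi| \le N_i B_i$, where $N_i$ is the number of $\ALG$-rounds selecting arm $i$. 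Thus
\[
R_{\BBDivideAdjusted(\ALG)}(T) \le \frac{6\ln T}{\fst}\, \E\Big[\sum_{\phi=1}^{\Phi} \Delta_{i_\phi^\ALG}\Big],
\]
using $B_i \le \frac{6\ln T}{\fst}$ (the $+1$ from the ceiling and the single truncated final block are lower-order and absorbed for large $T$, exactly as in Theorem~\ref{thm:bb1-regret}).

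Second, I would argue that the right-hand expectation is essentially the regret of $\ALG$ run on the original stochastic instance. Conditioned on a block of arm $i$ containing at least one observation, the returned loss is a uniformly random genuine sample of arm $i$'s loss distribution, hence has mean $\bell_i$; so the only deviation from feeding $\ALG$ a true stochastic instance with gaps $\Delta_i$ is the ``no observation'' event. Here the choice of $B_i$ is essential: since $\fst \le \min_j f_j \le f_i$, we have $f_i B_i \ge f_i \cdot \frac{3\ln T}{\fst}(1+f_i) \ge 3\ln T$, so a block fails to produce any observation with probability at most $(1-f_i)^{B_i} \le e^{-f_i B_i} \le T^{-3}$. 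A union bound over the at most $T$ blocks makes the probability of any failure $O(T^{-2})$, and coupling $\BBDivideAdjusted$ with an idealized process that always returns a genuine sample shows the two agree except on this negligible-probability event, whose contribution to regret is lower-order.

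Third comes the key new step: controlling the random number of blocks $\Phi$. Because each block has at least $\frac{3\ln T}{\fst}$ rounds, one has the \emph{pathwise} bound $\Phi \le \Phi_{\max} := \frac{T\fst}{3\ln T}$ (again up to a single truncated final block). I would then couple the stopped run of $\ALG$ inside $\BBDivideAdjusted$ with a run of $\ALG$ for the fixed horizon $\Phi_{\max}$ on the same instance, so that the first $\Phi$ choices coincide. Since $\Phi \le \Phi_{\max}$ and every gap $\Delta_{i_\phi^\ALG} \ge 0$, extending the sum to $\Phi_{\max}$ only adds nonnegative terms:
\[
\E\Big[\sum_{\phi=1}^{\Phi} \Delta_{i_\phi^\ALG}\Big] \le \E\Big[\sum_{\phi=1}^{\Phi_{\max}} \Delta_{i_\phi^\ALG}\Big] = R_\ALG(\Phi_{\max}) = R_\ALG\Big(\frac{T\fst}{3\ln T}\Big).
\]
Substituting into the first display yields the claimed bound.

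The main obstacle is precisely this last step: unlike in $\BBDivide$, where block sizes are fixed and $\Phi$ is deterministic, here $\Phi$ depends on which arms $\ALG$ selects, so it is an adaptive stopping time and one cannot directly invoke $R_\ALG$ at a fixed horizon. The resolution I would stress is that no delicate optional-stopping argument is needed: the deterministic upper bound $\Phi \le \Phi_{\max}$ together with nonnegativity of the gaps makes the extension-to-$\Phi_{\max}$ inequality hold pathwise. I would also flag that stochastic (rather than adversarial) losses are genuinely used here — the decomposition $R = \E[\sum_i \APC_i \Delta_i]$ and the ``each $\ALG$-mistake on arm $i$ costs $B_i$ rounds'' accounting both rely on the per-round expected loss being the time-invariant $\bell_i$, which fails for adversarial losses once block lengths vary across arms.
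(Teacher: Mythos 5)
Your proposal is correct and follows essentially the same route as the paper's proof: bound every block's length by $\max_j B_j = \frac{6\ln T}{\fst}$, condition on the clean event that each block yields at least one observation (probability $\ge 1 - 1/T^2$ via the same Chernoff/union bound), extend the sum over blocks from the random $\Phi$ to the deterministic cap $\floor{T\fst/(3\ln T)}$ using nonnegativity of the gaps, and identify the resulting inner sum with $R_\ALG(T\fst/(3\ln T))$. Your explicit remarks on why the adaptive stopping time is harmless and why stochastic losses are needed are consistent with (and slightly more careful than) the paper's treatment.
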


Since the block size explicitly increases with $f_i$, $\BBDivideAdjusted(\ALG)$ pulls an arm more frequently when its feedback probability increases. 
More formally, increasing the feedback probability of an arm (approximately) \textit{increases} the number of times it is pulled within any block where $\ALG$ selects it. 
We show $\BBDivideAdjusted$ exhibits positive monotonicity for both $\APC$ and $\FOC$. 
\begin{restatable}{theorem}{Monobbda}[Impact of $\BBDivideAdjusted$ on \APC and \FOC]
\label{thm:feedbackmonotonicitybbda}
 Fix an instance $\calI = \left\{\calA, \calF, \calL\right\}$ with stochastic losses. Let $\smash{\tilde{f}_i \ge f_i}$, and let $\smash{\widetilde{\calI} = \{\calA, \widetilde{\calF}(i), \calL\}}$. For any algorithm $\ALG$ for the deterministic feedback setting and for any $f^{\star} \le \min_i f_i$, the algorithm $\BBDivideAdjusted(\ALG, f^\star)$ satisfies  \[\APC_i(\widetilde{\calI}) \ge \APC_i(\calI) - 1/T \hspace{4pt}\text{and}\hspace{4pt}\FOC_i(\widetilde{\calI}) \ge \frac{\tilde{f}_i}{f_i} \FOC_i(\calI) - \frac{\tilde{f}_i}{T} > \FOC_i(\calI).\]   
\end{restatable}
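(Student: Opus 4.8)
The plan is to compare the two runs of $\BBDivideAdjusted(\ALG,\fst)$ on $\calI$ and $\widetilde{\calI}$ via a coupling, and to reduce the statement to a purely deterministic comparison of realized arm-$i$ pull counts that holds on a high-probability event. First I would couple the two executions so that $\ALG$ is fed an identical loss sequence: use the same internal randomness for $\ALG$, and for each block $\phi$ of the common selection sequence draw a single loss $Y_\phi$ from the loss distribution of $i_\phi^\ALG=\ALG(\phi)$, returning $Y_\phi$ to $\ALG$ whenever that block produced at least one observation. Because losses are stochastic and independent of the feedback coins $X_{\cdot,\cdot}$, a uniform draw from the observed losses in a block is distributed exactly as $Y_\phi$, regardless of the block length or of how many observations occurred; since $\calL$ is unchanged between the instances, this coupling is valid on both. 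Let $G$ be the event that every executed block on either instance contains at least one observation. Each block selecting arm $j$ has length $B_j=\lceil\tfrac{3\ln T}{\fst}(1+f_j)\rceil\ge \tfrac{3\ln T}{\fst}$ and feedback probability $f_j\ge\fst$ (using $\fst\le\min_k f_k\le\min_k\tilde f_k$), so $f_j B_j\ge 3\ln T$ and the no-observation probability per block is at most $e^{-3\ln T}=T^{-3}$; a union bound over the at most $T$ blocks per instance gives $\Pr[G^c]\le 2/T^2$.

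On $G$ the ``return loss $1$'' branch is never taken, so $\ALG$ receives exactly $Y_1,Y_2,\dots$ on both instances and produces the same selection sequence $i_1^\ALG,i_2^\ALG,\dots$; the sole difference is that each block selecting arm $i$ has length $B_i=\lceil\tfrac{3\ln T}{\fst}(1+f_i)\rceil$ on $\calI$ and the weakly larger length $\tilde B_i=\lceil\tfrac{3\ln T}{\fst}(1+\tilde f_i)\rceil$ on $\widetilde{\calI}$, while every block selecting some $j\ne i$ has the same length on both. The core step, which I expect to be the main obstacle since $\widetilde{\calI}$ fits strictly fewer blocks into the horizon $T$, is to show arm $i$ is nonetheless pulled at least as often on $\widetilde{\calI}$. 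I would argue this by computing, for each target count $a$, the minimum budget $\beta(a)$ needed to pull arm $i$ exactly $a$ times. Letting $W_m$ be the total length of the non-$i$ blocks preceding the $m$-th occurrence of arm $i$ in the common selection sequence (identical on both instances and nondecreasing in $m$), one gets $\beta^{\calI}(a)=W_k+a$ with $k=\lceil a/B_i\rceil$ and $\beta^{\widetilde{\calI}}(a)=W_{k'}+a$ with $k'=\lceil a/\tilde B_i\rceil$. Since $\tilde B_i\ge B_i$ forces $k'\le k$ and hence $W_{k'}\le W_k$, we obtain $\beta^{\widetilde{\calI}}(a)\le\beta^{\calI}(a)$ for every $a$; as the realized pull count at horizon $T$ equals $\max\{a:\beta(a)\le T\}$, this yields $\APC_i(\widetilde{\calI})\ge\APC_i(\calI)$ pointwise on $G$.

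Finally I would assemble the expectation bounds. Bounding each realized arm-$i$ pull count by $T$, the contribution of $G^c$ to $\APC_i(\calI)-\APC_i(\widetilde{\calI})$ is at most $T\cdot\Pr[G^c]\le 1/T$ (in fact far smaller, as there are only $O(T/\ln T)$ blocks), giving $\APC_i(\widetilde{\calI})\ge\APC_i(\calI)-1/T$. For $\FOC$ I would invoke \Cref{prop:relationship}: since arm $i$ has feedback probability $\tilde f_i$ on $\widetilde{\calI}$ and $f_i$ on $\calI$, we have $\FOC_i(\widetilde{\calI})=\tilde f_i\,\APC_i(\widetilde{\calI})\ge\tilde f_i\bigl(\APC_i(\calI)-1/T\bigr)=\tfrac{\tilde f_i}{f_i}\FOC_i(\calI)-\tfrac{\tilde f_i}{T}$, the claimed bound. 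The closing strict inequality $\tfrac{\tilde f_i}{f_i}\FOC_i(\calI)-\tfrac{\tilde f_i}{T}>\FOC_i(\calI)$ reduces to $\tfrac{\tilde f_i-f_i}{f_i}\FOC_i(\calI)>\tfrac{\tilde f_i}{T}$, which holds for $\tilde f_i>f_i$ and $T$ large since $\FOC_i(\calI)=f_i\APC_i(\calI)$ is bounded below (a no-regret $\ALG$ pulls arm $i$, so $\APC_i(\calI)\ge 1$).
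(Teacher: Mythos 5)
Your proof is correct, and its overall skeleton is the same as the paper's: couple the internal randomness of $\ALG$ and the per-block losses so that (on a clean event where every block yields at least one observation) the selection sequence $i_1^\ALG, i_2^\ALG, \dots$ is identical on $\calI$ and $\widetilde{\calI}$, establish the pointwise inequality $\APC_i^{[T]}(\widetilde{\calI}) \ge \APC_i^{[T]}(\calI)$ deterministically, absorb the failure probability into the $1/T$ slack, and derive the $\FOC$ bound from Lemma~\ref{prop:relationship}. Where you differ is in the key deterministic step. The paper compares the two runs at the boundary of the penultimate block of the $\widetilde{\calI}$-execution (its times $T_0 \le T_1$), shows the pull counts up to those times coincide up to $T_1 - T_0$, and case-splits on whether the final block selects arm $i$; you instead invert the counting via the budget function $\beta(a)$ (minimum horizon needed for $a$ pulls of arm $i$) and show $\beta^{\widetilde{\calI}}(a) \le \beta^{\calI}(a)$ for every $a$, from which monotonicity of the realized count is immediate. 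Your version is arguably cleaner since it avoids the case analysis and handles mid-block truncation at $T$ uniformly. Two small presentational cautions, neither fatal: (i) to make the conditioning on your event $G$ airtight you should define the per-block feedback indicators for \emph{all} pairs $(j,\phi)$ a priori (as the paper does with its $U_{j,\phi} \sim \Bern(1-(1-f_j)^{B_j})$ variables), so that the clean event does not entangle with the data-dependent arm selection; and (ii) your closing strict inequality for $\FOC$, like the paper's analogous claims, genuinely requires $\tilde f_i > f_i$ (the theorem's hypothesis allows equality, under which only the non-strict bound can hold) and $T$ large enough that $\ALG$ pulls arm $i$ at least once --- you state this correctly, so it is only worth noting that the strictness is conditional in the same way the paper's is.
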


Theorem \ref{thm:feedbackmonotonicitybbda} also follows from a coupling argument; we defer proofs to Appendix \ref{appendix:bb3}. 
\section{Finer-Grained Analyses of Monotonicity and Regret}
\label{sec:beyond-bb}

While the black-box transformations in Section \ref{sec:blackbox} provided a clean way to analyze the behavior of $\FOC$ and $\APC$, the regret bounds obtained for those transformations unfortunately scaled with the \textit{minimum} feedback probability 
$1/\min_{i \in [K]} f_i$
of any arm, and the monotonicity analysis did not differentiate between strict monotonicity and balance.
In this section, we introduce four concrete algorithms---which are variants of EXP3 \citep{auer2002nonstochastic}, UCB \citep{auer2002finite}, and AAE (Active Arm Elimination \citep{even2002pac})---that have improved monotonicity and/or regret guarantees.

In Section \ref{subsec:regret}, we show that applications of $\BBPull$ to $\AAE$ and $\UCB$ can also achieve improved regret bounds that scale with the average feedback probability $\smash{\sum_{i \in [K]} \nicefrac{1}{f_i}}$ across arms rather than 
the minimum feedback probability 
$\smash{\nicefrac{K}{(\min_{i \in [K]} f_i)}}$. Section \ref{subsec:mono} shows that more explicit analyses of $\BBPull$ and $\BBDivideAdjusted$ applied to AAE enjoy stronger monotonicity guarantees than what is implied by naive applications of Theorems \ref{thm:feedbackmonotonicitybbpull} and \ref{thm:feedbackmonotonicitybbda}. Finally, in Section \ref{subsec:exp3}, we move beyond black-box transformations and present a variant of EXP3, which also achieves regret that scales with $\smash{\sum_{i \in [K]} \nicefrac{1}{f_i}}$ in the adversarial case, but which lacks clean monotonicity properties.

\subsection{Improved regret guarantees}
\label{subsec:regret}

Consider $\BBPull$ applied to $\AAE$ and $\UCB$, formalized in Algorithms \ref{algo:BB-pull-se} and \ref{algo:BB-pull-ucb} below. First, we show that these algorithms achieve improved regret bounds compared to a naive application of Theorem \ref{thm:bb1-regret}.

\begin{algorithm2e}[htbp]
  \caption{$\BBPull(\text{AAE})$}
  \label{algo:BB-pull-se}
  \DontPrintSemicolon
  \LinesNumbered
  Maintain active set $A$; start with $A := [K]$.
  \\
  Initialize phase $s=1$ and $t = 1$. 
  \\
  \While{$t \le T$}{
      \For{arm $i \in A$}{
      Let $R_{i,s} = \emptyset$. \\
      \While{$|R_{i,s}| \le 8 \ln T \cdot 2^{2s}$ and $t \le T$}{
      \uIf{$X_{i, t} = 1$}{Append $R_{i,s} \gets R_{i,s} \cup \left\{t \right\}$.}
      Pull $i_t = i$, and increment $t \gets t + 1$.\\
      }
      Calculate the mean $\mu_s(i) := -\frac{1}{|R_{i,s}|}\sum_{t' \in R_{i,s}} \ell_{i, t'}$ of the negative of all observations.\footnotemark\\
      Set $\text{LCB}_s(i) = \mu_s(i) - 2^{-s}$ and 
      $\text{UCB}_s(i) = \mu_s(i) + 2^{-s}$.
      }
      For any arm $i \in A$ where $\exists j \in A$ such that $\text{LCB}_s(j) > \text{UCB}_s(i)$, remove $i$ from $A$.  \\
      Increment $s \gets s+1$.
  } 
  \end{algorithm2e}
  \footnotetext{The negative is introduced to convert losses into utilities.}

  \begin{algorithm2e}[htbp]
    \caption{$\BBPull(\text{UCB})$}
    \label{algo:BB-pull-ucb}
    \DontPrintSemicolon
    \LinesNumbered
    Initialize number of pulls $n_i = 0$ for all $i \in [K]$. \\
    Initialize empirical mean $\mu(i) = 0$ for all $i \in [K]$. \\
    Initialize $t = 1$.
    \\
    \While{$t \le T$}{
        \uIf{$n_i = 0$ for any arm $i \in [K]$}{Let $i_t$ be the arm with the smallest index such that $n_{i_t} = 0$.}
        \Else{
        For every arm $i \in [K]$, compute $\text{UCB}(i) = \mu(i) + \sqrt{\frac{6 \ln T}{n_i}}$. \\
        Let $i_t = \text{argmax}_{j \in [K]} \text{UCB}(j)$.}
        Pull arm $i_t$. \\
        \uIf{$X_{i, t} = 1$}{
        Update the empirical mean $\mu(i) \gets \frac{n_{i_t} \cdot \mu(i)}{n_{i_t}+1} - \frac{\ell_{i_t,t}}{n_{i_t}+1} $.\\
        Increment $n_{i_t} \gets n_{i_t} + 1$.}
        Increment $t \gets t+1$.
    } 
    \end{algorithm2e}
\begin{restatable}{theorem}{Regretbbpullaaeucb}
\label{thm:bbpullregretaaeucb}
 On any stochastic instance $\mathcal{I} = \left\{\calA, \calF, \calI\right\}$, $\BBPull(\AAE)$ (presented in Algorithm~\ref{algo:BB-pull-se}) and 
 $\BBPull(\UCB)$ (presented in Algorithm~\ref{algo:BB-pull-ucb}) have regret bound of
 $O \left(\sqrt{T \ln (T) \sum_{i \in [K]} {1}/{f_i}} \right)$ 
 and an instance-dependent regret bound of
 $O \left(\sum_{i \in [K] \mid \Delta_i > 0} \frac{\ln T }{\Delta_i f_i} \right)$.   
\end{restatable}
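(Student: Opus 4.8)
The plan is to exploit the fact that both the regret and $\APC_i$ are defined as expectations, which lets me separate two effects: how many loss observations the underlying algorithm needs from arm $i$, and how many pulls each such observation costs. The second effect is supplied for free by \Cref{prop:relationship}, which gives $\APC_i(\calI)=\FOC_i(\calI)/f_i$ on every instance. Since $\FOC_i$ is exactly the expected number of losses that $\BBPull$ feeds back to the underlying algorithm for arm $i$, it suffices to bound this observation count by $O(\ln T/\Delta_i^2)$ for each suboptimal arm $i$ (for both $\AAE$ and $\UCB$); dividing by $f_i$ then injects the desired $1/f_i$ factor, so the $1/f_i$ scaling never has to be argued through any concentration of the geometric waiting times inside a block.

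A preliminary observation is that the losses $\BBPull$ returns are genuine i.i.d.\ draws from each arm's loss distribution: since $X_{i,t}\sim\Bern(f_i)$ is independent of $\ell_{i,t}$, conditioning on ``feedback was observed'' does not bias the returned loss, so the sequence fed to $\AAE$ is distributed exactly as in the deterministic setting and the standard confidence-bound machinery applies verbatim. For $\BBPull(\AAE)$, each surviving arm contributes $8\ln T\cdot 2^{2s}$ observations in phase $s$, which is precisely enough for standard sub-Gaussian concentration to make the half-width $2^{-s}$ a valid confidence radius up to a failure probability of $\mathrm{poly}(1/T)$ per arm per phase. On the intersection of these good events I would argue, in the usual active-arm-elimination fashion, that the optimal arm is never eliminated and that a suboptimal arm $i$ survives only through the first phase $s_i$ with $2^{-s_i}\lesssim\Delta_i$, i.e.\ $s_i=O(\log(1/\Delta_i))$. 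Summing its per-phase quota gives $\FOC_i=\sum_{s\le s_i}8\ln T\cdot 2^{2s}=O(\ln T/\Delta_i^2)$ on the good event, while the complementary event has probability $\mathrm{poly}(1/T)$ and contributes at most $T\cdot\mathrm{poly}(1/T)=o(1)$ to $\E[\FOC_i]$.

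For $\BBPull(\UCB)$ I would first note that whenever $X_{i,t}=0$ the counts $n_i$ and empirical means $\mu(i)$ are left unchanged, so the UCB indices are frozen and the same arm is re-pulled until an observation occurs; hence $\UCB$ effectively operates on the observation sequence with $n_i$ equal to the number of observations of arm $i$, and since the number of blocks is at most $T$ the radius $\sqrt{6\ln T/n_i}$ remains a valid (mildly conservative) confidence term. The textbook $\UCB$ argument then bounds the expected number of observations of a suboptimal arm by $O(\ln T/\Delta_i^2)=\E[\FOC_i]$. Combining either bound with \Cref{prop:relationship} yields $\APC_i=O(\ln T/(\Delta_i^2 f_i))$, and the instance-dependent guarantee follows from $R=\sum_{i:\Delta_i>0}\Delta_i\APC_i=O(\sum_{i:\Delta_i>0}\ln T/(\Delta_i f_i))$.

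To obtain the worst-case bound I would split the arms at a threshold $\Delta_0$: arms with $\Delta_i\le\Delta_0$ contribute $\sum\Delta_i\APC_i\le\Delta_0\sum_i\APC_i\le\Delta_0 T$, while arms with $\Delta_i>\Delta_0$ contribute at most $\frac{c\ln T}{\Delta_0}\sum_i 1/f_i$ by the instance-dependent bound; choosing $\Delta_0=\sqrt{(\ln T\sum_i 1/f_i)/T}$ balances the two terms and gives $R=O(\sqrt{T\ln T\sum_{i\in[K]}1/f_i})$. The main obstacle I anticipate is the careful handling of boundary effects---a phase or a block being truncated when $t$ reaches $T$---together with cleanly justifying that the i.i.d.\ structure of the returned losses survives the random $\BBPull$ waiting times. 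Both are resolved by the independence of $X_{i,t}$ from $\ell_{i,t}$ and by absorbing the low-probability bad events into an $o(1)$ additive term, but they are exactly the points where the reduction to the deterministic analysis could silently break.
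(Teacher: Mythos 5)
Your proposal is correct, and it reaches the key per-arm bound $\APC_i = O(\ln T/(\Delta_i^2 f_i))$ by a genuinely different route than the paper. The paper works directly with the number of \emph{pulls}: it couples $\BBPull$ with a simulated version driven by geometric waiting times $Q_{i,\phi}\sim\mathrm{Geom}(f_i)$ and proves a high-probability event (Lemma~\ref{lemma:ef}, via a multiplicative Chernoff bound on the underlying Bernoulli trials) under which the total waiting time of arm $i$ in phase $s$ is at most $16\cdot 2^{2s}\ln T/f_i$; summing over the $O(\log(1/\Delta_i))$ phases gives the pull bound, so the $1/f_i$ factor is paid for by that concentration step. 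You instead bound the number of \emph{observations} $\FOC_i=O(\ln T/\Delta_i^2)$ --- which on the clean event is essentially the deterministic per-phase quota of $\AAE$ (resp.\ the standard $\UCB$ count) --- and then invoke the exact identity $\FOC_i=f_i\cdot\APC_i$ of \Cref{prop:relationship} to inject the $1/f_i$ in expectation. Since the pseudo-regret equals $\sum_{i:\Delta_i>0}\Delta_i\,\APC_i$, an expectation bound suffices, so your route avoids the geometric-concentration lemma entirely; what it does not give you is a high-probability control of the pull counts, which is what the paper's event $E_i^F$ additionally provides and reuses in the monotonicity analysis of \Cref{thm:feedbackmonotonicitybbpullaae}. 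For the worst-case bound you use the classical threshold split at $\Delta_0=\sqrt{\ln (T)\sum_i 1/f_i\,/\,T}$, whereas the paper's Lemma~\ref{lemma:regretboundslemma} obtains the same $O(\sqrt{T\ln T\sum_{i}1/f_i})$ via a Jensen step with weights proportional to $1/f_i$; the two are interchangeable. The points you flag as delicate --- that the returned losses are unbiased i.i.d.\ draws despite the random waiting times, and that truncation at $T$ only lowers the counts --- are exactly what the paper's simulated-algorithm machinery (Lemmas~\ref{lem:joint-losses-identical} and~\ref{lem:simulated-indistinguishable}) formalizes, and your justification via the independence of $X_{i,t}$ from $\ell_{i,t}$ is the correct one.
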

\begin{proof}[Proof sketch]
As in the standard analysis of AAE and UCB,
we upper bound the number of times that an arm $i$ can be pulled in terms of its reward gap $\Delta_i$. To do so, we first bound the maximum number of phases an arm $i$ is active, and then show a high probability bound on the maximum number of times $i$ can be pulled in a given phase in terms of $f_i$. We defer a full proof to \Cref{subsec:analysisbbpullaae} for $\BBPull(\AAE)$ and to \Cref{appendix:UCB} for $\BBPull(\UCB)$. 
\end{proof}

Theorem \ref{thm:bbpullregretaaeucb} converts  the dependence on the minimum feedback probability $\min_j f_j$ from Theorem \ref{thm:bb2-regret} into a finer-grained dependence on the per-arm feedback probabilities $f_j$. 
In particular, in the instance-dependent regret bounds of Theorem~\ref{thm:bbpullregretaaeucb}, the \emph{``effective''} gap $\Delta_i f_i$ can be small either if the arm is close to optimal or if the feedback probability is small. In contrast, the regret bound of $O (\sum_{i \in [K]} \frac{\ln T }{\Delta_i \min_j f_j})$ given by applying Theorem \ref{thm:bb2-regret} directly has an effective gap $\Delta_i \min_j f_j$ that can be small even if $\smash{\min_j f_j}$
is small. Similarly, the instance-independent regret bounds in Theorem~\ref{thm:bbpullregretaaeucb}, in comparison to the instant-independent regret bound of $\smash[b]{O (\sqrt{T (\ln T)\frac{K}{\min_i f_i}})}$, also replace the dependence on $K / \min_i f_i$ with $\sum_{j \in [K]} 1/f_j$. 

Interestingly, the improvement in regret bounds relies on the specifics of $\BBPull$: we do not expect it to be possible to obtain a similar improvement in regret bounds for $\BBDivide$ or $\BBDivideAdjusted$ applied to $\AAE$ or $\UCB$. Intuitively, this is because $\BBPull$ does not pull any arm more than is necessary to observe feedback, while $\BBDivide$ and $\BBDivideAdjusted$ must pull all arms (including sub-optimal ones) a prespecified number of times, by definition.

\subsection{Stricter monotonicity guarantees}
\label{subsec:mono}

When the black-box transformations $\BBPull$ and $\BBDivideAdjusted$ are applied to $\AAE$, we show stronger monotonicity properties for \textit{suboptimal} arms, i.e. any arm $i$ where $\bar{\ell}_{i} > \min_{j \in [K]} \bar{\ell}_j$.
Specifically, $\BBPull(\AAE)$ achieves \textit{strict negative} monotonicity in $\APC$ and \textit{approximate balance} in $\FOC$ (Theorem \ref{thm:feedbackmonotonicitybbpullaae}), while $\BBDivideAdjusted(\AAE)$ achieves \textit{strict positive} monotonicity in $\APC$ (Theorem \ref{thm:feedbackmonotonicitybbdaaae}).
For both of these results, we focus on suboptimal arms for technical reasons (for large $T$, $\AAE$ eventually only pulls the optimal arm, which would equalize $\APC$). Despite this restriction, we expect that the qualitative impacts of monotonicity still arise even if monotonicity holds for all arms but the optimal arm.

We start by analyzing $\BBPull(\AAE)$ (formalized in Algorithm \ref{algo:BB-pull-se}). We show that for suboptimal arms, $\BBPull(\AAE)$ is approximately balanced for $\FOC$ as long as $T$ is sufficiently large, implying (with Lemma \ref{prop:relationship}) that $\APC$ strictly decreases in $f_i$.

\begin{restatable}{theorem}{Monobbpullaae}
\label{thm:feedbackmonotonicitybbpullaae}
 Fix a stochastic instance $\calI = \left\{\calA, \calF, \calL\right\}$. Let $i$
be such that $\bar{\ell}_{i} > \min_{j \in [K]} \bar{\ell}_j$. Let $\smash{\widetilde{f}_i > f_i}$, and let $\smash{\widetilde\calI = \{\calA, \widetilde\calF(i), \calL\}}$.   For sufficiently large $T$,  $\BBPull(\AAE)$ satisfies \[|\FOC_i(\calI) - \FOC_i(\widetilde\calI)| \le \nicefrac{1}{T} \hspace{4pt}\text{and}\hspace{4pt} \APC_i(\widetilde\calI) < \APC_i(\calI).\] 
\end{restatable}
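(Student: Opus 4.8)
The plan is to run a coupling argument that reduces the behavior of $\BBPull(\AAE)$ on $\calI$ and $\tcalI$ to the behavior of the inner algorithm $\AAE$ on \emph{identical} observation sequences. The central observation is that $\FOC_i$ is determined entirely by how many phases arm $i$ survives: in phase $s$, the inner loop of Algorithm~\ref{algo:BB-pull-se} always gathers a fixed number $8\ln T\cdot 2^{2s}$ of observations for arm $i$, a quantity independent of $f_i$; the feedback probability affects only \emph{how many pulls} are needed to collect them. Since losses are i.i.d.\ and independent of the feedback coins, I would reformulate the observation process equivalently as: each pull of arm $i$ flips an independent $\Bern(f_i)$ coin, and each success reveals the next entry of a pre-drawn i.i.d.\ loss sequence $\ell_i^{(1)},\ell_i^{(2)},\dots$. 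I then couple $\calI$ and $\tcalI$ by (i) using the same pre-drawn loss sequence for every arm, so all observed \emph{values} agree, and (ii) coupling the feedback coins monotonically through shared uniforms $U_t$, setting $X_{i,t}=\1[U_t\le f_i]\le\1[U_t\le\tilde f_i]=\tilde X_{i,t}$, so the $k$-th observation of arm $i$ occurs pathwise no later on $\tcalI$. Under this coupling, $\AAE$ sees identical observed values on both instances, hence makes identical $\mu_s(\cdot)$, identical confidence bounds, and identical elimination decisions, eliminating arm $i$ in the \emph{same} phase $s_i$; meanwhile arm $i$ consumes pathwise weakly fewer rounds on $\tcalI$, while every arm $j\neq i$ consumes exactly the same rounds.

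To handle the finite horizon, I would define the good event $G$ that $\AAE$ reaches and completes arm $i$'s elimination phase before round $T$ on instance $\calI$. By the pathwise domination above, $G$ also guarantees completion on $\tcalI$, and on $G$ the number of observations of arm $i$ is identical across instances, so the entire $\FOC$ discrepancy comes from $G^c$: $|\FOC_i(\calI)-\FOC_i(\tcalI)|\le 2T\cdot\Pr[G^c]$. I would then bound $\Pr[G^c]$ by a large negative power of $T$. A standard Hoeffding argument shows that, with the phase-$s$ radius $2^{-s}$ built from $\Theta(\ln T\cdot 2^{2s})$ observations, all confidence intervals are valid except with probability $T^{-\Omega(1)}$, so arm $i$ is eliminated by phase $s_i=O(\log(1/\Delta_i))$ after $O(\ln T/\Delta_i^2)$ observations; and a negative-binomial concentration bound shows the pulls needed to collect these observations exceed $T$ only with probability $T^{-\Omega(1)}$, once $T$ is large relative to the $\Delta_i$- and $f_j$-dependent constants. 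For sufficiently large $T$ this gives $\Pr[G^c]\le 1/(2T^2)$ and hence $|\FOC_i(\calI)-\FOC_i(\tcalI)|\le 1/T$.

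Finally, I would convert approximate $\FOC$-balance into strict $\APC$-monotonicity using Lemma~\ref{prop:relationship}, which gives $\APC_i(\calI)-\APC_i(\tcalI)=\FOC_i(\calI)/f_i-\FOC_i(\tcalI)/\tilde f_i$. Substituting $\FOC_i(\tcalI)\le\FOC_i(\calI)+1/T$ lower-bounds this by $\FOC_i(\calI)\bigl(1/f_i-1/\tilde f_i\bigr)-1/(\tilde f_i T)$. Since every arm is active in phase $1$ of $\AAE$, arm $i$ collects at least $32\ln T$ observations except with probability $T^{-\Omega(1)}$, so $\FOC_i(\calI)=\Omega(\ln T)$; because $1/f_i-1/\tilde f_i=(\tilde f_i-f_i)/(f_i\tilde f_i)$ is a fixed positive constant, the first term is $\Omega(\ln T)$ while the correction is $O(1/T)$, so the difference is strictly positive for large $T$, yielding $\APC_i(\tcalI)<\APC_i(\calI)$.

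I expect the main obstacle to be making the coupling fully rigorous while controlling the horizon effect: one must verify that coupling the observed \emph{values} (rather than the raw feedback draws) leaves all of $\AAE$'s decisions invariant, and simultaneously quantify $\Pr[G^c]$ sharply enough---to $o(1/T^2)$---that multiplying by the trivial $O(T)$ bound on the $\FOC$ discrepancy still yields the required $1/T$ balance. Pinning down the interplay between the arm-dependent elimination phase $s_i$ and the negative-binomial tail for the pull counts is precisely where the ``sufficiently large $T$'' threshold is fixed.
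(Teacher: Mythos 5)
Your proposal is correct and follows essentially the same route as the paper's proof: a coupling that fixes the loss values fed to $\AAE$ (so its elimination decisions coincide on both instances), a monotone coupling of the feedback randomness so that arm $i$'s observations arrive pathwise no later on $\widetilde\calI$, a clean event under which the suboptimal arm is eliminated before round $T$ on both instances (making $\FOC_i$ identical up to the $O(1/T)$ failure mass), and conversion to strict $\APC$ monotonicity via Lemma~\ref{prop:relationship}. The one point to tighten is the indexing of your shared uniforms: coupling $X_{i,t}$ and $\widetilde X_{i,t}$ at the same raw round $t$ does not directly give pathwise domination once the two timelines diverge, so the coins should be indexed per block rather than per round (equivalently, couple per-block geometric waiting times $Q_{i,\phi}\ge\widetilde Q_{i,\phi}$, which is exactly what the paper's simulated version of $\BBPull(\AAE)$ accomplishes).
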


Theorem \ref{thm:feedbackmonotonicitybbpullaae} strengthens the monotonicity properties of $\BBPull$: $\BBPull(\AAE)$ satisfies \textit{strict} (rather than \textit{weak}) negative monotonicity in $\APC$, and \textit{approximate balance} (rather than \textit{weak positive} monotonicity) in $\FOC$.\footnote{At first glance, it would appear 
that this result contradicts Proposition \ref{prop:impossibility}, because we show balance is possible for $\FOC$ (for suboptimal arms). However, Proposition \ref{prop:impossibility} only shows that balance is not possible across \textit{all} arms (in particular, the optimal arm necessarily exhibits positive feedback monotonicity).} The proof of Theorem \ref{thm:feedbackmonotonicitybbpullaae}  leverages a modified version of the coupling argument from the proof of Theorem \ref{thm:feedbackmonotonicitybbpull} which incorporates that any suboptimal arm must be eliminated in $\AAE$ after sufficiently many rounds. We defer a full proof to \Cref{subsec:analysisbbpullaae}. 

To achieve strictly positive feedback monotonicity in $\APC$, we turn to $\BBDivideAdjusted(\AAE)$ (formalized in Algorithm \ref{algo:BB-da-se}). The monotonicity properties of $\BBDivideAdjusted(\AAE)$ are given in \cref{thm:feedbackmonotonicitybbpullaae}.
\begin{algorithm2e}[htbp]
  \caption{$\BBDivideAdjusted(\text{AAE}, \fst)$}
  \label{algo:BB-da-se}
  \DontPrintSemicolon
  \LinesNumbered
  Maintain active set of $A$; start with $A := [K]$.
  \\
  For arm $i \in [K]$, set $B_i = \ceil{(1 + f_i) \cdot \frac{3 \ln T}{\fst}}$. \\
  Initialize phase $s=1$ and $t =1$.
  \\
  \While{$t \le T$}{
      \For{arm $i \in A$}{
          Let $R_{i,s} = \emptyset$. \\
          \For{$\min(B_i, T-t)$ iterations}
          {
              \uIf{$X_{i, t} = 1$ and $|R_{i,s}| < 8 \ln T \cdot 2^{2s}$}{
                  Append $R_{i,s} \gets R_{i,s} \cup \left\{t\right\}$.
                  }
              Pull $i_t = i$, and increment $t \gets t + 1$.
          }
      Calculate the mean $\mu_s(i) := -\frac{1}{\min(|R_{i,s}|, 2 \ln T \cdot 2^{2s})} \sum_{t' \in R_{i,s}}\ell_{i,t'}$ of the negative of the first $8 \ln T \cdot 2^{2s}$ observations (if more than $8 \ln T \cdot 2^{2s}$ observations are made).\\
      Set $\text{LCB}_s(i) = \mu_s(i) - 2^{-s}$ and 
      $\text{UCB}_s(i) = \mu_s(i) + 2^{-s}$.
      }
      For any arm $i \in A$ where $\exists j \in A$ such that $\text{LCB}_s(j) > \text{UCB}_s(i)$, remove $i$ from $A$.  \\
      Increment $s \gets s+1$.
  } 
  \end{algorithm2e}

\begin{restatable}{theorem}{BBDAAAEAPC}
\label{thm:feedbackmonotonicitybbdaaae}
Fix a stochastic instance $\calI = \left\{\calA, \calF, \calL\right\}$. Let $i$
be such that $\bell_{i} > \min_{j \in [K]} \bell_j$. Let $\widetilde f_i > f_i$, and let $\widetilde{\calI} = \left\{\calA, \calF(i), \calL\right\}$. For any $f^{\star} \le \min_i f_i$ and sufficiently large $T$, $\BBDivideAdjusted(\AAE, f^\star)$ satisfies 
\[\APC_i(\widetilde\calI) > \APC_i(\calI) \hspace{4pt}\text{and}\hspace{4pt} \FOC_i(\widetilde\calI) > \FOC_i(\calI).\]
\end{restatable}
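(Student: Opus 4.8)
The plan is to upgrade the approximate positive monotonicity of the generic transformation (Theorem~\ref{thm:feedbackmonotonicitybbda}) to \emph{strict} monotonicity by exploiting the defining feature of $\AAE$: a suboptimal arm $i$ is permanently eliminated after a number of blocks that is determined by its gap $\Delta_i$ and never by $f_i$. The starting point is the decomposition
\[
\APC_i(\calI) \;=\; B_i \cdot \E[M_i(\calI)] \quad (\text{on the good event below}),
\]
where $B_i=\ceil{(1+f_i)\tfrac{3\ln T}{\fst}}$ is the deterministic block size used whenever $\AAE$ selects arm $i$, and $M_i(\calI)$ is the (random) number of blocks in which $\AAE$ selects $i$ before eliminating it. Since $B_i$ is strictly increasing in $f_i$\emdash indeed $\widetilde{B}_i - B_i \ge (\widetilde{f}_i - f_i)\tfrac{3\ln T}{\fst}-1 \ge 1$ once $T$ is large\emdash it suffices to show that $\E[M_i]$ is unchanged, up to a negligible additive term, when $f_i$ is raised to $\widetilde{f}_i$.

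For the steps, I would first couple the runs of $\BBDivideAdjusted(\AAE,\fst)$ on $\calI$ and $\widetilde{\calI}$ under common randomness, following the coupling arguments behind Theorems~\ref{thm:feedbackmonotonicitybbpull} and~\ref{thm:feedbackmonotonicitybbpullaae}. For every arm $j\neq i$ the block size $B_j$ and feedback rate $f_j$ agree across instances, so I couple their losses and feedback indicators to be identical. For arm $i$ I would use the key observation that, conditioned on a block containing at least one observation, the loss reported to $\AAE$ is a uniform draw from i.i.d.\ samples of arm $i$'s loss distribution and is therefore distributed as a single fresh sample from that distribution on \emph{both} instances; this lets me couple the sequence of losses fed to $\AAE$ to coincide on $\calI$ and $\widetilde{\calI}$. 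Second, I would define the good event $G$ that every block on either run returns at least one observation; since each block has size at least $3\ln T/\fst$ and feedback rate at least $\fst$, the per-block failure probability is at most $(1-\fst)^{3\ln T/\fst}\le T^{-3}$, and a union bound gives $\Pr[G]\ge 1-T^{-2}$. On $G$ the two runs feed $\AAE$ identical loss sequences, so $\AAE$'s confidence intervals and eliminations coincide, and hence $M_i(\calI)=M_i(\widetilde{\calI})$\emdash a quantity fixed by the gaps $\{\Delta_j\}$ alone. Third, I would verify that all $M_i$ blocks actually complete before round $T$ on both runs: because $i$ is suboptimal, the total number of pulls up to its elimination is $\mathrm{polylog}(T)\ll T$, so for large $T$ no relevant block is truncated by the horizon. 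This is exactly where suboptimality is used, as the optimal arm is active for $\Theta(T)$ rounds and the decomposition would instead depend on $f_i$ through the total block count.

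Combining these, on $G$ we obtain $\APC_i(\widetilde{\calI})-\APC_i(\calI)=M_i(\widetilde{B}_i - B_i)\ge \widetilde{B}_i-B_i=\Omega(\ln T)$ using $M_i\ge 1$, while the complementary event contributes at most $\Pr[G^c]\cdot T\le T^{-1}=o(1)$ to each expectation; since the good-event gap grows with $T$ it dominates, giving $\APC_i(\widetilde{\calI})>\APC_i(\calI)$. The $\FOC$ statement then follows immediately from Lemma~\ref{prop:relationship}: $\FOC_i(\widetilde{\calI})=\widetilde{f}_i\,\APC_i(\widetilde{\calI})>f_i\,\APC_i(\calI)=\FOC_i(\calI)$, using $\widetilde{f}_i>f_i$ and $\APC_i(\widetilde{\calI})>\APC_i(\calI)>0$. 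I expect the main obstacle to be the coupling step\emdash making $\AAE$'s elimination behavior provably identical across two instances whose block lengths, and hence whole timelines, differ. The cleanest route is the single-fresh-sample reduction above, which strips any dependence of the reported losses on $f_i$ or $B_i$; the remaining work is bookkeeping, namely controlling the $O(T^{-2})$ bad event and confirming the horizon never binds for a suboptimal arm.
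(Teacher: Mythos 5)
Your proposal is correct and follows essentially the same route as the paper's proof: condition on the clean event that every block yields an observation, couple the loss sequences fed to $\AAE$ so that the elimination schedule (and hence the number of blocks in which the suboptimal arm $i$ is selected) coincides on $\calI$ and $\widetilde{\calI}$, use suboptimality to ensure elimination occurs before the horizon binds, and then attribute the strict gap entirely to the larger deterministic block size $\widetilde{B}_i > B_i$, with $\FOC$ following from Lemma~\ref{prop:relationship}. The only cosmetic difference is that you phrase the conclusion as an additive gap $M_i(\widetilde{B}_i - B_i)$ while the paper works with the ratio $\widetilde{B}_i / B_i > 1$; both require the same implicit conditioning on correct confidence bounds to pin down the elimination phase.
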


Theorem \ref{thm:feedbackmonotonicitybbdaaae} follows from a similar argument as in the proof of Theorem \ref{thm:feedbackmonotonicitybbpullaae}: if $i$ is sub-optimal, there it must be removed at the same phase on both $\calI$ and $\widetilde\calI$.
\footnote{As for why we begin with analyzing $\APC$ instead of $\FOC$, note that in $\BBPull(\AAE)$, the number of \textit{observations} per arm per phase was predetermined; for $\BBDivideAdjusted(\AAE)$, the number of \textit{pulls} per arms per phase is predetermined.}  We defer a full proof to \Cref{appendix:bbdaaae}.

\subsection{Improving regret for adversarial losses}
\label{subsec:exp3}
Next, we consider the adversarial setting and aim for regret that scales with $\sum_{i \in [K]}\nicefrac{1}{f_i}$,
to match the stochastic result of Theorem \ref{thm:bbpullregretaaeucb}. Like in the stochastic setting, our black-box transforms fail to achieve this improved regret dependence: the regret analysis from Theorem \ref{thm:bb1-regret} for $\BBDivide(\text{EXP3})$ results in regret that unavoidably scales with $\sqrt{\nicefrac{K}{\min_i f_i}}$,\footnote{See Cor. \ref{cor:exp3bb1}. Note this is still better than scaling with $\nicefrac{K}{\min_i f_i}$, the naive implication of Theorem \ref{thm:bb1-regret}.}
because 
$\min_i f_i$
is explicitly used for determining the block size in $\BBDivide$, and our regret guarantees in \Cref{sec:blackbox} for the other two black-box transformations are restricted to stochastic losses. Moreover, directly using standard EXP3 incurs linear regret (Proposition \ref{prop:linearregret}). 

We move beyond the black-box framework and construct 3-Phase EXP3 (\Cref{algo:EXP3-3phase}), an algorithm that achieves improved regret bounds that scale with $\sum_{i \in [K]}\nicefrac{1}{f_i}$. These regret bounds for the adversarial setting match the instance-independent regret bounds that $\BBPull(\AAE)$ and $\BBPull(\UCB)$ achieve for the stochastic setting. 3-Phase EXP3 directly modifies EXP3 to account for probabilistic feedback: 
in particular, 3-Phase EXP3 obtains both \textit{unbiased} and \textit{high-probability} estimates of $\nicefrac{1}{f_i}$, then runs a version of standard EXP3 with a reward estimator and learning rate that uses those estimates. However, despite this simple structure, we empirically show that \Cref{algo:EXP3-3phase} does not seem to permit clean monotonicity properties.

\begin{algorithm2e}[htbp]
\caption{\textsc{3-Phase EXP3}}
\label{algo:EXP3-3phase}
\DontPrintSemicolon
\SetAlgoNoLine
{\bf Phase 1:}
Set $N = \ceil{8 \log (TK)}$.\;
\For{arms $i \in [K]$}{ 
    Pull arm $i$ until a reward is observed $N$ times.\;
    Set $P^{LR}_i$ to be the total number of rounds taken by the previous step divided by $N$.
}
{\bf Phase 2:} 
\For{arms $i \in [K]$}{ 
    Pull arm $i$ until a reward is observed.\;
    Set $P^E_i$ to be the number of rounds taken by the previous step. 
}
Let $t_0$ indicate the current round (after the completion of phase 1 and 2).\;
Let $\pi_{i, t_0}=1/K$ for all $i\in [K]$.\;
{\bf Phase 3:} 
Set $\eta = \sqrt{\frac{\log K}{T \sum_{i \in [K]} P^{LR}_i}}$.\;
\For{rounds $t = t_0, \dots, T$}{
    Pull an arm $i_t$  
    with probability $\pi_{i_t, t}$. \;
    Update estimator: $\widehat{\ell}_{i,t} = \frac{\ell_{i,t} \cdot X_{i, t} }{\pi_{i,t}}P^E_i, \forall i \in [K]$. \;
    Update weights: $w_{i, t+1} = w_{i,t} \cdot \exp (-\eta \widehat{\ell}_{i,t} ), \forall i \in [K]$.\;
    Update probability distribution: $\pi_{i,t+1} = 
    \frac{w_{i,t+1}}{\sum_{j \in [K]} w_{j, t+1}}$, $\forall i \in [K]$.
    }
\end{algorithm2e}

\xhdr{Regret of 3-phase EXP3.}
We prove the following regret bound for 3-Phase EXP3. 
\begin{theorem}\label{thm:EXP3-3phase-regret}
Let $\mathcal{I} = \left\{\calA, \calF, \calI\right\}$ be an adversarial instance such that $\ell_{i,t} \in [0,1]$ for all arms $i$ and all time steps $1 \le t \le T$.
For an oblivious adversary and unknown $f_i$ values, \Cref{algo:EXP3-3phase} incurs regret \[R(T) \leq O \left( \sqrt{T \ln (K) \sum_{i \in [K]} \nicefrac{1}{f_i}} \right).\]
\end{theorem}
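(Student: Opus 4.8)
The plan is to split the regret into the cost of the two estimation phases and the regret of the EXP3 run in Phase~3, then show the former is lower-order while the latter is $O(\sqrt{T\ln K\sum_i 1/f_i})$. Since all losses lie in $[0,1]$, the regret charged to Phases~1 and~2 is at most their combined length. Phase~1 pulls each arm $i$ until $N=\Theta(\log(TK))$ rewards appear, so its length is a sum of $K$ independent negative-binomial variables with means $N/f_i$; a Chernoff bound for sums of geometrics gives total length $O(\log(TK)\sum_i 1/f_i)$ with probability $1-1/\mathrm{poly}(TK)$, and Phase~2 (one reward per arm) is shorter. In the regime $\sum_i 1/f_i=O(T/\log(TK))$ (so the phases do not exhaust the horizon) this is only a lower-order term. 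The two estimates serve distinct purposes: $P^{LR}_i$, an average of $N$ geometrics, calibrates the step size and must \emph{concentrate}, while $P^E_i$, a single geometric sample, feeds the loss estimator and must be \emph{unbiased} for $1/f_i$.

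For Phase~3 I would run the textbook exponential-weights analysis \citep{auer2002nonstochastic} on the losses $\widehat\ell_{i,t}$ fed to EXP3, conditioning on Phases~1--2 so that $\eta$ is a fixed constant. Two computations drive the bound. The comparator term is clean: for any \emph{fixed} arm $j$, full-expectation unbiasedness $\E[\widehat\ell_{j,t}]=\ell_{j,t}$ holds because arm $j$ is pulled with probability $\pi_{j,t}$, returns feedback with probability $f_j$, and $\E[P^E_j]=1/f_j$ with $P^E_j$ independent of the feedback realizations; hence $\E[\sum_t\widehat\ell_{j,t}]=\sum_t\ell_{j,t}$. The second-moment term is where the target dependence appears: using $X_{i,t}^2=X_{i,t}$ and $\E[(P^E_i)^2]=(2-f_i)/f_i^2\le 2/f_i^2$, a direct calculation gives $\E[\sum_i\pi_{i,t}\widehat\ell_{i,t}^2\mid\mathcal F_{t-1}]\le\sum_i\ell_{i,t}^2\cdot O(1/f_i)=O(\sum_i 1/f_i)$, so summed over rounds it is $O(T\sum_i 1/f_i)$. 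Substituting into $\sum_t\langle\pi_t,\widehat\ell_t\rangle-\sum_t\widehat\ell_{j,t}\le \frac{\ln K}{\eta}+\frac{\eta}{2}\sum_t\sum_i\pi_{i,t}\widehat\ell_{i,t}^2$ gives a bound of the form $\frac{\ln K}{\eta}+\eta\cdot O(T\sum_i 1/f_i)$. It then remains to verify that $\eta=\sqrt{\ln K/(T\sum_i P^{LR}_i)}$ is within a constant factor of the optimizer $\eta^\star=\sqrt{\ln K/(T\sum_i 1/f_i)}$: a Chernoff bound on each $P^{LR}_i$ (an average of $\Theta(\log(TK))$ geometrics) yields $P^{LR}_i=\Theta(1/f_i)$ for all $i$ simultaneously with high probability, so $\sum_i P^{LR}_i=\Theta(\sum_i 1/f_i)$, $\eta=\Theta(\eta^\star)$, and the two terms balance to $O(\sqrt{T\ln K\sum_i 1/f_i})$.

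The hard part is handling that $1/f_i$ is \emph{estimated rather than known}, simultaneously in the importance weights and the step size, without inflating regret. The most delicate point is the algorithm's own term: EXP3 controls $\E[\sum_i\pi_{i,t}\widehat\ell_{i,t}]$, and I must relate this to the true expected loss $\E[\sum_i\pi_{i,t}\ell_{i,t}]$ even though the trajectory $\pi_{i,t}$ is correlated with the fixed sample $P^E$ that enters each $\widehat\ell_{i,t}$; controlling this coupling (using the independence of $P^E$ from the feedback draws, its unbiasedness, and the second-moment identity above) is the crux of the argument. A secondary but essential point is that the step size uses the \emph{independent} concentrated estimate $P^{LR}$, which decouples $\eta$ from the estimator so that conditioning on Phases~1--2 freezes $\eta$ for the Phase~3 analysis; this decoupling is precisely why two separate estimates are used rather than one. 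Finally, I would absorb the low-probability events on which $P^{LR}$ concentration or the phase-length bounds fail by bounding regret there trivially by $T$ and multiplying by their $1/\mathrm{poly}(TK)$ probability, contributing only $o(1)$.
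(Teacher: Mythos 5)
Your proposal is correct and follows essentially the same route as the paper's proof: the same three-phase regret decomposition, the same use of $\E[P^E_i]=1/f_i$ and $\E[(P^E_i)^2]\le 2/f_i^2$ in the first- and second-moment analysis of the loss estimator, the same Chernoff-based concentration of $P^{LR}_i$ to within a constant factor of $1/f_i$ for tuning $\eta$, and the same absorption of low-probability failure events. The only cosmetic difference is that the paper bounds the Phase 1--2 cost in expectation rather than with high probability, and it conditions only on the Phase 1 randomness (integrating over $P^E$ inside) so that the estimator's unbiasedness can be invoked --- the point you correctly flag as the crux.
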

The intuition for \Cref{thm:EXP3-3phase-regret} follows. If $f_i$'s were known, a natural way to create an unbiased loss estimator would have been $\hat{\ell}_{i,t} = (\nicefrac{\ell_{i,t}}{\pi_{i,t}}) \cdot (\nicefrac{X_{i,t}}{f_i})$. The first two phases of the algorithm adjust the algorithm to account for \textit{unknown} $f_i$.
In particular, $P_i^E$ is a low-variance unbiased estimator of $1/f_i$ and $P_i^{LR} = \Theta(1/f_i)$ with high probability. With these estimates, we adjust the second moment analysis of EXP3 while incurring only a constant overhead in the regret. We defer the full proof to \Cref{appendix:regret3phaseexp3}.

The regret bound in Theorem \ref{thm:EXP3-3phase-regret}  outperforms the $O(\sqrt{T K \log K/ \min_i f_i})$ bound achieved by $\BBDivide$ applied to EXP3 (Corollary \ref{cor:exp3bb1}). In particular, when $\min_i f_i$ is much smaller than other values of $f_i$, the regret bound in Theorem \ref{thm:EXP3-3phase-regret} can be up to a factor of $K$ better than the regret bound in Corollary \ref{cor:exp3bb1}. Theorem \ref{thm:EXP3-3phase-regret} also matches (up to $\log K$) the instance-independent regret bound achieved for stochastic losses by Algorithm \ref{algo:BB-pull-se} (Theorem \ref{thm:bbpullregretaaeucb}). 

The regret from Theorem \ref{thm:EXP3-3phase-regret} also outperforms regret bounds from existing work on multi-armed bandits with probabilistic feedback (e.g. \cite{esposito2022learning}). In particular, the feedback structure in our setting corresponds to a simple feedback graph consisting of a union of $K$ self-loops (one for each arm) with probability $f_i$ associated with the self-loop for arm $i$. \cite{esposito2022learning} show a regret bound of $\tilde{O}(\sqrt{T K / \min_i f_i})$ (with some additional optimizations when $\min_i f_i$ is very small). Their algorithm is very similar to $\BBDivide$ applied to EXP3 and uses a similar approach of splitting the time horizon into blocks. \cite{esposito2022learning} provide an algorithm that achieves a regret bound of $\widetilde{O}(\sqrt{T \cdot \sum_{i \in [K]} 1/f_i })$, but only under the additional assumption that the full feedback graph is observed at every round. This assumption is not satisfied in our setting.
In comparison, our bound achieves a much more fine-grained dependence on the feedback probabilities $f_i$. 

\xhdr{Monotonicity of 3-Phase EXP3.}
However, it seems that the improved regret for Algorithm \ref{algo:EXP3-3phase} comes at the cost of clean monotonicity properties in $\FOC$ and $\APC$. In fact, we can construct two simple instances that exhibit significantly different feedback monotonicities with respect to $\APC$, even for a simplified version of 3-Phase EXP3 where the algorithm is given the $f_i$'s as inputs rather than estimating them in the first two phases. \Cref{fig:exp3mono} shows that the algorithm exhibits strictly positive monotonicity in one instance and strictly negative monotonicity in the other instance. 
\begin{figure}[h]
\centering
\includegraphics[width=0.45\linewidth]{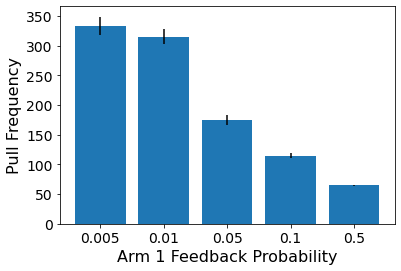} 
\includegraphics[width=0.45\linewidth]{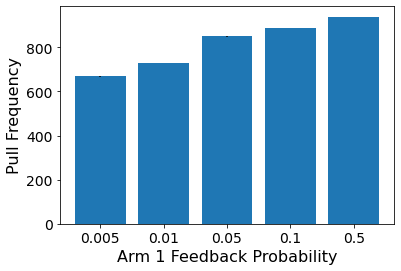}
\caption{Analysis of $\APC$ for a simplified version of 3-Phase EXP3 (Algorithm \ref{algo:EXP3-3phase}) in two instances where $K=2$ and $T=1000$. In Instance 1 (left), Arm 1  has constant loss $0.9$ and Arm 2 has constant loss $0.1$; In Instance 2 (right), Arm 1 has constant loss $0.1$ and Arm 2 has constant loss $0.9$. $\APC$ is strictly negative monotonic in Instance 1 and strictly positive in Instance 2. These differing directions of monotonicity suggest that Algorithm \ref{algo:EXP3-3phase} does not exhibit clean monotonicity guarantees. }
\label{fig:exp3mono}
\end{figure}

Further, the instances in \Cref{fig:exp3mono} differ only in their loss functions, suggesting that the monotonicity properties of 3-phase EXP3 may depend on the instance through the loss functions. Examining 3-Phase EXP3 further, we can see that such a dependency could arise due to its loss estimator directly incorporating estimates for $f_i$. In particular, unlike algorithms generated with the black-box reductions in Section \ref{sec:blackbox}, 3-phase EXP3 does not permit a clean separation between how it reacts to probabilistic feedback and how it incorporates loss observations to make future decisions. The entanglement of these decisions may make monotonicity unavoidably instance-dependent.

\section{Beyond Monotonicity: An Empirical Study of Correlations}
\label{sec:simulations}

To better understand and control the downstream impacts of the relationship between feedback and \APC/\FOC, our theoretical analysis focuses on the \textit{monotonicity} properties of these relationships, not just correlation. Monotonic dependence shifts the state of the entire system, rather than only in certain pockets of the content landscape, and so it is a stronger property to study. However, the weaker notion of \textit{correlations} may also be of concern; here, we initiate a numerical exploration of \textit{correlation} induced by bandit algorithms between $f_i$ and $\APC$/$\FOC$.

Fig. \ref{fig:correlations-combined} shows the correlation between either measure and the $f_i$'s in a \textit{single} instance. In this example, by inspection appears that $\APC_i$ is weakly negatively correlated with $f_i$ across algorithms, and $\FOC_i$ is somewhat more strongly positively correlated with $f_i$ across algorithms. 
Furthermore, these trends hold consistently across randomly generated instances.
We give experimental details below.

\begin{figure}[t!]
    \centering
    \begin{subfigure}[b]{0.32\textwidth}
      \includegraphics[width=1\textwidth]{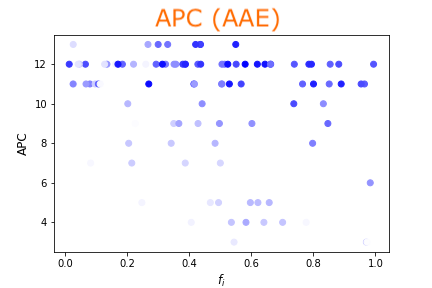}
    \end{subfigure}
    \begin{subfigure}[b]{0.32\textwidth}
      \includegraphics[width=1\textwidth]{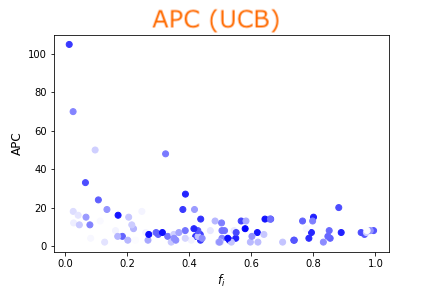}
    \end{subfigure}
      \begin{subfigure}[b]{0.32\textwidth}
      \includegraphics[width=1\textwidth]{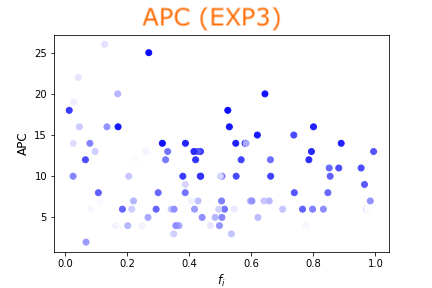}
    \end{subfigure}
    \\
      \begin{subfigure}[b]{0.32\textwidth}
      \includegraphics[width=1\textwidth]{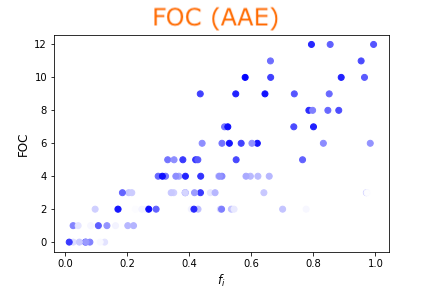}
    \end{subfigure}
    \begin{subfigure}[b]{0.32\textwidth}
      \includegraphics[width=1\textwidth]{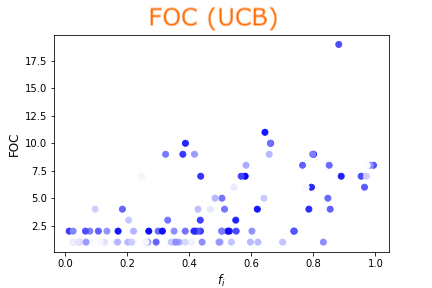}
    \end{subfigure}
      \begin{subfigure}[b]{0.32\textwidth}
      \includegraphics[width=1\textwidth]{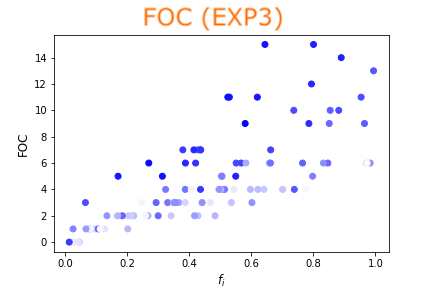}
    \end{subfigure}
    \caption{Correlations induced between $f_i$ and  $\APC_i$ (top row) as well as $\FOC_i$ (bottom row) by $\BBPull(\AAE)$ (left column), $\BBPull(\UCB)$ (middle), and 3-Phase EXP3 (right). 
    There are $K = 100$ arms and $T = 1000$ rounds. The darkness of a point indicates the corresponding arm's average utility; darker is higher.}
    \label{fig:correlations-combined}
  \end{figure}

\xhdr{Algorithms.} On each instance, we run three of the algorithms from \Cref{sec:beyond-bb}: $\BBPull(\UCB)$ (Algorithm \ref{algo:BB-pull-ucb}), $\BBPull(\AAE)$ (\Cref{algo:BB-pull-se}), and 3-Phase EXP3 (\Cref{algo:EXP3-3phase}) with the simplification that the algorithm is given the $f_i$'s as inputs (rather than estimating them in the first two phases).

\xhdr{Instance generation methods.} All of our instances have $K = 100$ arms and $T= 1000$ rounds. We first uniformly randomly generate the means of arms' utility / loss distributions (utilities for $\BBPull(\UCB)$ and $\BBPull(\AAE)$, losses for 3-Phase EXP3). These means range from 0 to 1 for $\BBPull(\UCB)$, 0 to 5 for $\BBPull(\AAE)$  \ \footnote{The larger magnitude for \AAE \ ensures that arms are actually eliminated in the time horizon $T$ we have chosen to be standard across algorithms.}, and -1 to 0 for 3-Phase EXP3. Then, for each arm, we sample realized rewards for each time step in $ [T]$ from a Gaussian distribution centered at that arm's mean with standard deviation 0.1 for $\BBPull(\UCB)$ and 3-Phase EXP3 and 0.5 for $\BBPull(\AAE)$ (commensurate with the scaled up mean). Negative utilities / positive losses are truncated to 0. Finally, we uniformly draw each arm's $f_i$ from the interval $[0,1]$.

\xhdr{Results.} For illustrative purposes, the scatter plots below show the correlation between either measure and the $f_i$'s over arms in a \textit{single} random instance. In this single example, by inspection it looks as if $\APC_i$ is weakly negatively correlated with $f_i$ across algorithms, and $\FOC_i$ is somewhat more strongly positively correlated with $f_i$ across algorithms. 

We show these trends to hold consistently across randomly generated instances: we randomly generate $100$ instances by the same procedure as above, and evaluate the Pearson correlation coefficient\footnote{The Pearson correlation coefficient is the ratio of two variables' covariance and the product of their standard deviations. It ranges from $-1$ to $1$, where positive (resp. negative) values indicate positive (resp. negative) correlation, and magnitude indicates the strength of correlation.} between $\APC_i$ and $f_i$, $\FOC_i$ and $f_i$ for each of the three algorithms on every instance. In Table \ref{table:correlations} below, we report the average Pearson correlation coefficients across these instances, which are consistent with the inspected trends in our scatter plots.
\begin{table}[h!]
\begin{center}
\begin{tabular}{ r|c c c | c c c } 
     & \multicolumn{3}{c}{APC$_i$ and $f_i$} &  \multicolumn{3}{c}{FOC$_i$ and $f_i$} \\ 
     & mean & min & max & mean & min & max\\
     \hline
    $\BBPull(\UCB)$ & $-0.33$ & $-0.51$ & $-0.16$ & $0.43$ & $0.22$ & $0.59$ \\ 
3-Phase EXP3 & $-0.23$ & $-0.41$ & $0.03$ & $0.72$  & $0.49$ & $0.86$ \\ 
$\BBPull(\AAE)$ & $-0.33$ & $-0.53$ & $-0.11$ & $0.74$ & $0.54$ & $0.88$ 
\end{tabular}
\end{center}
\caption{Correlations between $f_i$ and \APC/\FOC observed in 100 randomly generated instances.}
\label{table:correlations}
\end{table}

\section{Discussion}
\label{sec:discussion}

In this work, we illustrate how the learning algorithm can inadvertently lead to downstream impacts on users even when the objective is perfectly aligned with user welfare. In particular, we show that the ways in which the algorithm handles heterogeneous rates of user reaction across different types of content can inadvertently impact the user experience. To study this, we provide a framework to investigate how the learning algorithm's engagement with individual arms depends on the feedback rates of the arms. We analyze the monotonicity of the arm pull count $\APC$ and the feedback observation count $\FOC$ in the feedback rates across the space of no-regret algorithms. From a platform design perspective, our results highlight the importance of measuring the feedback monotonicity of a learning algorithm as well as the resulting downstream impacts on users.

To achieve some of these monotonicities, many of our algorithms require discarding information. This creates an interesting parallel to the literature on robust bandits: While it is common to discard bandit observations that are produced by adversarial or non-myopic agents (e.g.
\citet{lykouris2018stochastic, haghtalab2022learning,gupta2019better}), our work discards information not because the information is untrustworthy but because we aim to avoid undesirable downstream impact.
On the other hand, in the probabilistic feedback setting, information is already hard to come by; intuitively, we may want to do better with the information we do have access to. 
An interesting open question, therefore, is about whether it might be possible to interpolate between monotonicity properties and how efficiently information is used,
and whether such an approach can also improve regret.

Ultimately, which feedback monotonicities the platform may hope to induce  will inherently depend on which downstream effects are desirable or concerning. This may differ across areas of the content space: for example, among content that tends to generate constructive discussion, we may want \textit{positive} feedback monotonicity in \FOC, as this would elicit more beneficial user interaction.
Meanwhile, among the kinds of content described
in Examples \ref{ex:owngroup-apc} and \ref{ex:incendiary-foc}, \textit{negative} monotonicity or balance may be preferred. As the algorithms we give in this work affect monotonicity over \textit{all} content, more practical future directions may explore finer-grained control over monotonicity in different subsets of the content space.

Finally, though 
our theoretical analysis focuses on \textit{monotonicity},
in real-world settings, more general \textit{correlations} between feedback and $\APC$ or $\FOC$ may also be of concern to the platform. In \cref{sec:simulations}, we give a simulation study of correlations induced by common bandit algorithms. Combined with our rigorous monotonicity results, these simulations provide a bridge towards better understanding how probabilistic feedback can shape the impacts of a learning algorithm on users. 

\newpage
\bibliographystyle{plainnat}
\bibliography{refs}

\newpage
\appendix
\section{Additional Motivating Examples}
\label{appendix:examples}

Here, we provide additional elaboration for why varied feedback rates may cause downstream impacts on users, motivating our study of $\APC$ and $\FOC$.

\begin{example}[Clickbait and $\APC$]
\label{ex:clickbait-apc}
Observable feedback often occurs in the form of “clicks” or “likes/dislikes;” a high feedback rate is correlated with how “clickbaity” the content is. Creators may be incentivized to optimize for $\APC$, which captures objectives like view count. 
A creator can easily increase clickbaitiness (e.g. by changing content title or video thumbnail) without affecting the content's true utility.  
Thus, if the algorithm induces positive correlations between $\APC$ and feedback rate, creators may seek to make cosmetic changes without necessarily improving content quality. In the absence of positive correlation between $\APC$ and feedback rates, creators would be unable to rely on cheap strategies to generate engagement; instead, they would need to actually improve the quality of their content in order to increase the likelihood that it is shown to users.
\end{example}

More generally, even absent creator incentives, we can see that different relationships between feedback rate and $\APC$/$\FOC$ can result in qualitative differences in user experience. 

\begin{example}[Recommended topics] 
\label{example:balance}
    Certain content topics, such as political commentary or news, may naturally correspond to higher feedback rates than other topics, such as scientific or educational material. If the algorithm induces positive correlations between APC and feedback rate, then this would result in more political content shown to users; if it induces negative correlations, more educational content would be shown to users. Both have significant consequences for the overall qualitative user experience on the platform. (Of course it is possible for the platform to manually up- or down-weight content of various topics to control the content balance shown to users; however, we are interested in understanding possibly-unexpected changes that arise as a consequence of the learning algorithm itself.) Adding the possibility of creator incentives in this setting only amplifies these effects. 
\end{example}

Finally, we would like to highlight that understanding how algorithms behave with respect to APC and FOC under probabilistic feedback settings is of general interest in many applications where bandits are used to model sequential decisionmaking settings, even beyond content recommendation in online platforms. 

\begin{example}[Advertising]
In online advertising, retailers trying to place ads via a centralized platform (such as Google) can decide whether to pay the platform per-click, or per-conversion. We can think of the number of times an ad is shown as $\APC$, and the number of times an ad is clicked as $\FOC$. If the retailers choose pay-per-conversion, the resulting data provided to the platform can be viewed as having a lower feedback probability than the data that would have been provided for pay-per-click: this is because a conversion only happens a subset of the time that a click happens. Retailers may want to maximize the number of times that their ad is shown, which is captured by $\APC$. Whether an algorithm induces positive or negative correlations with $\APC$ and feedback rate could affect which of these payment models advertisers decide to select.     
\end{example}

\begin{example}[Audits and public policy]
    Because it is costly to ensure that every single person or organization complies exactly with established standards or laws, governments often instead prefer to conduct audits, where some people or organizations are selected for an audit.\footnote{See \cite{henderson2022beyond} for a more extensive discussion of bandits applied in similar contexts.} In this model, we can think of an arm as the person or organization to be audited; the pull of an arm as an audit; and feedback observation as whether the government will be able to get the ground truth “yes/no” for whether the law was violated. Why might some arms have lower or higher feedback probabilities? There may be some other reasons/features of the arm that affect feedback probability: for example, it may be harder to observe feedback for small businesses (vs bigger ones that have structured accounting departments), or non-English-speaking businesses. This reasoning also gives intuition for why it may be undesirable to pull low or high feedback arms more often (i.e. why monotonicity in $\APC$ may be problematic) --- for example, perhaps this means that in the long run, minority-owned businesses or smaller companies are audited disproportionately more than larger ones.
\end{example}
\section{Supplemental Materials for Section \ref{sec:model}}\label{appendix:modelproofs}

We prove \Cref{prop:relationship}, restated below for convenience.
\lemrelationship*
\begin{proof}[Proof of Lemma~\ref{prop:relationship}]
For an arm $i \in [K]$, recall that we have defined $X_{i,t}$ to be the random variable corresponding to whether feedback is returned at round $t$, if arm $i$ were to be pulled, i.e., $X_{i,t} \sim \texttt{Bern}(f_i)$, and $H_t$ to be the history of algorithm $\ALG$ until round $t$: $H_t = \{(i_\tau, \ell_{i_\tau, \tau}\cdot \1 \{ X_{i_\tau, \tau}\}, X_{i_\tau,\tau})\}_{\tau \in [t-1]}$. Then, for any arm $i \in [K]$, by the definition of $\FOC_i$ we have that:
\begin{align*}
  \FOC_i(\calI) &= \E_{H_t, \ell_{i_t, t}, X_{i_t,t}} \left[ \sum_{t \in [T]} \1 \left[ i_t = i\right] \cdot X_{i_t,t} \right] &\tag{\Cref{measure:observe}}\\
  &= \E_{H_t} \left[ \E_{\ell_{i_t, t}, X_{i_t,t}} \left[\sum_{t \in [T]} \1 \left[ i_t = i\right] \cdot X_{i_t,t} \Big| H_t \right] \right] &\tag{law of total expectation} \\ 
  &= f_i \cdot \E_{\ell_{i_t, t}, X_{i_t,t}} \left[ \sum_{t \in [T]} \1 \left[ i_t = i \right]\right]
  &\tag{note that $\E[X_{i,t} \mid H_t] = \E[X_{i_t,t}] = f_i$}\\
  \\
  &= f_i \cdot \APC_i(\calI)
\end{align*}
    where the third equality is due to the fact that conditioning on $H_t$, the arm drawn by $\ALG$ $i_t$ is independent of whether $X_{i_t,t}$ is $0$ or $1$.
\end{proof}

We prove \cref*{prop:impossibility}, restated below for convenience.
\propimpossibility*
\begin{proof}[Proof of Proposition \ref{prop:impossibility}]

  By  \cref{prop:relationship}, we have that 
  $\FOC_i(\widetilde\calI) - \FOC_i({\calI}) = \tilde{f}_i * \APC_i(\widetilde{\calI}) - f_i * \APC_i(\calI) $, and since $f_i < \widetilde f_i$, we can write $f_i = c \cdot \widetilde f_i$ for some $c < 1$.

  Let $i$ be the optimal arm. A no-regret algorithm will pull $i$ $T - o(T)$ times on both $\calI$ and $\widetilde{\calI}$, so that for a fixed $T$, there exists some $\alpha > 0$ where $\APC_i(\widetilde\calI) > T \cdot c^\alpha$.

  Now, we have 
  \begin{align*}
    \tilde{f}_i * \APC_i(\widetilde{\calI}) - f_i * \APC_i(\calI) &> \tilde f_i \cdot c^\alpha \cdot T - f_i \cdot \APC_i(\calI)
    \\&\geq \tilde f_i \cdot c^\alpha \cdot T - f_i \cdot T
    \\&= T \cdot \widetilde f_i \cdot (c^{\alpha} - c).
  \end{align*}
It remains to show that \begin{align*}
  T \cdot \widetilde f_i \cdot (c^{\alpha} - c) > \frac{9}{10}\cdot T (\widetilde f_i - f_i) &= \frac{9}{10} \cdot T \cdot \widetilde f_i \cdot (1-c) \\\iff
  c^{\alpha} - c &> \frac{9}{10}-\frac{9}{10}c
  \\\iff c^\alpha - \frac{c}{10} &> 9/10.
\end{align*}
Taking $T$ to be sufficiently large guarantees that $\alpha$ is sufficiently small for the above inequality to hold.

\end{proof}

\section{Supplemental Materials for Section \ref{sec:blackbox}}
\label{appendix:blackboxproofs}

In this section, we provide proofs of regret and montonicity guarantees for our black-box transformations.

\subsection{Proofs for Section \ref{subsec:bb1}: $\BBDivide$}
\label{appendix:bbdivide}

Recall that $\BBDivide$, formalized in \cref*{algo:BB-divide}, divides time horizon into equally-sized blocks of size $B = \nicefrac{3\ln T}{f^\star}$. We analyze its regret and monotonicity properties below. 

\subsubsection{Corollaries of \Cref{thm:bb3-regret}}\label{appendix:corollaries}

Finally, we present several corollaries which give the regret of $\BBDivide$ applied to standard bandit algorithms.
\begin{corollary}
\label{cor:exp3bb1}
\textcolor{black}{For fixed $\fst \in (0,\min_i f_i]$}, transformation $\BBDivide$ applied to standard EXP3 incurs the following regret: 
\[
R_{\BBDivide(\text{EXP3},\fst))}(T) \leq O\left(\sqrt{\frac{1}{\fst} \cdot TK \ln (T)\ln (K)}\right).
\]
This follows from \Cref{thm:bb1-regret}, along with the known result of~\cite{auer2002nonstochastic} that the regret for standard EXP3 is 
$
R_{\text{EXP3}}(T) \leq O(\sqrt{TK \ln K})
$.
\end{corollary}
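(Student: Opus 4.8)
The plan is to obtain this as a direct corollary of \Cref{thm:bb1-regret}, instantiating the generic algorithm $\ALG$ as standard EXP3 and then substituting EXP3's known regret guarantee. The only conceptual check is that EXP3 is a valid input to the transformation, which it is: EXP3 is a no-regret algorithm for the \emph{adversarial} deterministic-feedback setting, and \Cref{thm:bb1-regret} applies to any such $\ALG$ (its statement holds for both adversarial and stochastic losses). So the hypotheses are satisfied with $\ALG = \text{EXP3}$.

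First I would invoke \Cref{thm:bb1-regret} with $\ALG = \text{EXP3}$, which gives
\[
R_{\BBDivide(\text{EXP3}, \fst)}(T) \le \frac{3 \ln T}{\fst}\, R_{\text{EXP3}}\!\left(\frac{T \fst}{3 \ln T}\right).
\]
Next I would plug in the standard bound $R_{\text{EXP3}}(\tau) = O(\sqrt{\tau K \ln K})$ from \cite{auer2002nonstochastic}, evaluated at the argument $\tau = T\fst/(3\ln T)$, to obtain
\[
R_{\BBDivide(\text{EXP3}, \fst)}(T) \le \frac{3 \ln T}{\fst}\cdot O\!\left(\sqrt{\frac{T \fst}{3 \ln T}\cdot K \ln K}\right).
\]

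Finally I would perform the routine algebraic simplification. Pulling the prefactor $(3\ln T)/\fst$ under the square root contributes a factor $(3 \ln T)^2/(\fst)^2$, which multiplied by the term $T\fst K \ln K/(3\ln T)$ inside the root leaves $3\ln T \cdot TK\ln K/\fst$; taking the square root yields the claimed $O\big(\sqrt{TK \ln T \ln K/\fst}\big)$. There is no real obstacle here—the argument is pure composition plus arithmetic—so the main (and essentially only) thing to be careful about is the bookkeeping of the $\ln T$ and $\fst$ factors as the $(3\ln T)/\fst$ prefactor is absorbed into the square root, and confirming that EXP3's regret guarantee is being applied in the adversarial deterministic-feedback regime for which it is stated.
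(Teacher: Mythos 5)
Your proposal is correct and matches the paper's argument exactly: the paper likewise obtains this as an immediate consequence of \Cref{thm:bb1-regret} combined with the $O(\sqrt{TK\ln K})$ bound for EXP3, and your algebraic simplification of $\frac{3\ln T}{\fst}\sqrt{\frac{T\fst}{3\ln T}K\ln K}$ to $O\big(\sqrt{TK\ln T\ln K/\fst}\big)$ is right.
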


\begin{corollary}
\label{cor:ucbbb1}
\textcolor{black}{For fixed $\fst \in (0,\min_i f_i]$}, transformation $\BBDivide$ applied to standard UCB incurs the following regret: \[
R_{\BBDivide(\text{UCB},\fst))}(T) \leq O\left(\sum_{i \in [K]} \frac{\ln^2(T)}{\Delta_i
 \cdot \fst}\right).
\]
This follows from \Cref{thm:bb1-regret}, along the known result of~\cite{auer2002finite} that the instance-dependent regret for standard UCB is $R_{UCB}(T) \leq O\left(\ln T \cdot \left(
\sum_{i \in [K]} \frac{1}{\Delta_i}
\right)\right)$, where $\Delta_i = \bell_i - \min_j\bell_j$.
\end{corollary}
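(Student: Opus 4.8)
The plan is to obtain this as a mechanical instantiation of Theorem~\ref{thm:bb1-regret} with $\ALG = \text{UCB}$, feeding in the classical instance-dependent guarantee $R_{\text{UCB}}(T) = O\bigl(\ln T \cdot \sum_{i \in [K]} 1/\Delta_i\bigr)$ of~\cite{auer2002finite}. Since this instance-dependent bound is a \emph{stochastic}-losses guarantee and Theorem~\ref{thm:bb1-regret} applies to stochastic losses, the composition is valid. First I would write, directly from Theorem~\ref{thm:bb1-regret}, $R_{\BBDivide(\text{UCB},\fst)}(T) \le \frac{3\ln T}{\fst}\, R_{\text{UCB}}\bigl(\tfrac{T\fst}{3\ln T}\bigr)$, and then expand the UCB guarantee at the reduced horizon $T' := T\fst/(3\ln T)$.

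The key computation is to control the $\ln T'$ factor produced by the UCB bound evaluated at $T'$. Because $\fst \le \min_i f_i \le 1$ and $3\ln T \ge 1$ over the relevant range of $T$, we have $T' = T\fst/(3\ln T) \le T$, hence $\ln T' \le \ln T$. Substituting gives $R_{\text{UCB}}(T') = O\bigl(\ln T \cdot \sum_i 1/\Delta_i\bigr)$, and multiplying by the $\tfrac{3\ln T}{\fst}$ prefactor yields $R_{\BBDivide(\text{UCB},\fst)}(T) = O\bigl(\tfrac{\ln^2 T}{\fst}\sum_i 1/\Delta_i\bigr) = O\bigl(\sum_i \tfrac{\ln^2 T}{\Delta_i\,\fst}\bigr)$, which is exactly the claimed bound.

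There is essentially no hard step here; the only point worth verifying is that the gaps $\Delta_i$ appearing in the UCB guarantee on the transformed, reduced-horizon instance coincide with the gaps of the original instance $\calI$. This holds because the loss $\BBDivide$ returns to $\text{UCB}$ at each block is a uniform draw from the realized within-block observations $\{\ell_{i_t,t} : X_{i_t,t}=1\}$, and since the selection indicator $X_{i_t,t}$ is independent of the loss value, this draw is an unbiased sample of arm $i$'s loss distribution; thus the effective means $\bell_i$ (and hence the $\Delta_i$) seen by $\text{UCB}$ match those of $\calI$. The only caveat is the fallback where a block contains no observation and a loss of $1$ is returned, but this occurs with probability at most $(1-\fst)^{3\ln T/\fst} \le T^{-3}$ per block and its contribution is already absorbed into the regret guarantee of Theorem~\ref{thm:bb1-regret}, so it does not affect the final bound.
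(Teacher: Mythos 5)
Your proposal is correct and follows exactly the paper's route: instantiate Theorem~\ref{thm:bb1-regret} with $\ALG=\text{UCB}$, bound $\ln T' \le \ln T$ for the reduced horizon $T' = T\fst/(3\ln T)$, and multiply by the $3\ln T/\fst$ prefactor to obtain $O\bigl(\sum_i \ln^2(T)/(\Delta_i \fst)\bigr)$. Your additional check that the gaps $\Delta_i$ are preserved is already handled inside the paper's proof of Theorem~\ref{thm:bb1-regret} (via the construction of the deterministic-feedback instance $\calI'$ whose per-block losses are unbiased draws from the original arm distributions), so nothing further is needed.
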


\begin{corollary}
\label{cor:sebb1}
\textcolor{black}{For fixed $\fst \in (0,\min_i f_i]$}, transformation $\BBDivide$ applied to standard AAE incurs the following regret:
\[
R_{\BBDivide(\text{AAE},\fst))}(T) \leq O\left(\sum_{i \in [K]} \frac{\ln^2(T)}{\Delta_i
\cdot \fst}\right).
\]
This follows from \Cref{thm:bb1-regret}, along the known result by~\cite{even2002pac} that the instance-dependent regret for standard AAE is $R_{AAE}(T) \leq O\left(\ln T \cdot \left(
\sum_{i \in [K]} \frac{1}{\Delta_i}
\right)\right)$, where $\Delta_i = \bell_i - \min_j\bell_j$.
\end{corollary}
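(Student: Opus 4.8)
The plan is to obtain \Cref{cor:sebb1} as a direct instantiation of the generic regret bound in \Cref{thm:bb1-regret}, specializing the black-box algorithm $\ALG$ to standard AAE. The only two ingredients I need are (i) the transformation's regret guarantee for stochastic losses with $\fst \in (0, \min_i f_i]$,
\[
R_{\BBDivide(\ALG, \fst)}(T) \le \frac{3 \ln T}{\fst}\, R_\ALG\!\left(\frac{T \fst}{3 \ln T}\right),
\]
and (ii) the known instance-dependent regret of AAE from \cite{even2002pac}, namely $R_{\text{AAE}}(T') \le O\big(\ln(T') \cdot \sum_{i \in [K]} 1/\Delta_i\big)$ when run over a horizon of $T'$ rounds with stochastic losses. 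Both regimes match, so the specialization is legitimate.

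The one computation is to substitute the inner horizon $T' = \tfrac{T\fst}{3\ln T}$ into the AAE bound and simplify. This yields
\[
R_{\text{AAE}}\!\left(\tfrac{T\fst}{3\ln T}\right) \le O\!\left(\ln\!\left(\tfrac{T\fst}{3\ln T}\right) \cdot \sum_{i \in [K]} \tfrac{1}{\Delta_i}\right).
\]
The only place requiring a brief argument is controlling the logarithmic factor: since $\fst \le \min_i f_i \le 1$ and $3\ln T \ge 1$ for all $T \ge 2$, we have $\tfrac{T\fst}{3\ln T} \le T$, hence $\ln\!\big(\tfrac{T\fst}{3\ln T}\big) \le \ln T$. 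Plugging this back into the transformation bound gives
\[
R_{\BBDivide(\text{AAE},\fst)}(T) \le \frac{3\ln T}{\fst}\cdot O\!\left(\ln(T)\sum_{i\in[K]}\tfrac{1}{\Delta_i}\right) = O\!\left(\sum_{i \in [K]} \frac{\ln^2(T)}{\Delta_i \, \fst}\right),
\]
which is exactly the claimed bound.

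There is no genuine obstacle here, as this is a corollary whose entire content is already packaged in \Cref{thm:bb1-regret} together with the textbook AAE regret. The only things to be careful about are the bookkeeping on the logarithm described above---making sure the inner horizon's $\ln$ factor is cleanly absorbed into $\ln T$ rather than leaving a stray $\ln\ln T$ or $\ln(1/\fst)$ term---and confirming that the invoked AAE guarantee is the stochastic-loss, instance-dependent one, so that it is compatible with the stochastic regime in which \Cref{thm:bb1-regret} is applied. I note that the same two-line template also yields the UCB corollary (\Cref{cor:ucbbb1}), since UCB has the identical $O(\ln T \sum_i 1/\Delta_i)$ instance-dependent regret.
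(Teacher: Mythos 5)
Your proposal is correct and follows exactly the paper's route: instantiate \Cref{thm:bb1-regret} with $\ALG = \AAE$, plug the inner horizon $\tfrac{T\fst}{3\ln T}$ into the known $O(\ln T\cdot \sum_i 1/\Delta_i)$ bound of \cite{even2002pac}, and absorb the inner logarithm into $\ln T$. The extra care you take in bounding $\ln\bigl(\tfrac{T\fst}{3\ln T}\bigr) \le \ln T$ is the only step the paper leaves implicit, and it is handled correctly.
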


\subsubsection{Regret of $\BBDivide$: Proof of \cref{thm:bb1-regret}}

We prove \cref{thm:bb1-regret} and give some applications to concrete algorithms. For convenience, we restate the regret bound of $\BBDivide$ below.
\Rbbdivide*

To analyze the regret of $\BBDivide$ applied to a generic algorithm $\ALG$, we will use the following lemma, which lower bounds the likelihood of seeing a sample from the true loss distribution in every block.

\begin{lemma}
\label{lemma:clean}
Fix an $\fst \in (0, \min_if_i]$, and divide the time horizon $T$ into blocks of size $B = \frac{3 \ln T}{\fst}$ and let $\Phi = \floor{T/B}$, as in \cref{algo:BB-divide}. Suppose then that for each block $\phi \in \{1,2,\dots,\Phi$\}, we play the same arm $i_\phi$ for every round in block $\phi$. 
Let $E$ be the ``\emph{clean event}'' that at least one feedback observation occurs in \emph{each} block $\phi$, i.e., that for all blocks $\phi$, $\exists t \in S_\phi: X_{i_t,t} = 1$. Then, $\Pr[E] \ge 1 - 1 / T^2$. 
\end{lemma}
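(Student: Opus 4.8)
The plan is to control, for a single block, the probability that no feedback observation occurs, and then take a union bound over all $\Phi$ blocks. The key observation is that within block $\phi$ the same arm $i_\phi$ is pulled on every one of the $B$ rounds, and the feedback indicators $\{X_{i_\phi,t}\}_{t\in S_\phi}$ are i.i.d.\ $\Bern(f_{i_\phi})$ random variables that are independent of the history (the same independence fact already invoked in the proof of \Cref{prop:relationship}). Hence the probability that \emph{no} observation is made in block $\phi$ is exactly $(1-f_{i_\phi})^{B}$.

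Next I would use the hypothesis $\fst \le \min_i f_i \le f_{i_\phi}$ together with the elementary inequality $1-x \le e^{-x}$ to bound this per-block failure probability uniformly across all blocks:
\[
(1-f_{i_\phi})^{B} \le (1-\fst)^{B} \le e^{-\fst B} = e^{-3\ln T} = T^{-3},
\]
where the last equality uses $B = 3\ln T/\fst$. Letting $\overline{E}_\phi$ denote the event that block $\phi$ contains no observation, a union bound over the $\Phi = \floor{T/B} \le T$ blocks then yields
\[
\Pr[\overline{E}] = \Pr\Big[\textstyle\bigcup_{\phi=1}^{\Phi} \overline{E}_\phi\Big] \le \sum_{\phi=1}^{\Phi}\Pr[\overline{E}_\phi] \le \Phi \cdot T^{-3} \le T\cdot T^{-3} = T^{-2},
\]
from which the claimed bound $\Pr[E] \ge 1 - 1/T^2$ follows immediately.

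I do not anticipate any genuine obstacle here: the only point requiring care is justifying that the $X_{i_\phi,t}$ are mutually independent and independent of which arm is selected, so that fixing the pulled arm across an entire block does not disturb the independence of the feedback realizations. This is guaranteed directly by the model, since each feedback coin $X_{i,t}\sim\Bern(f_i)$ is independent of the history $H_t$. Everything else is the standard ``$1-x \le e^{-x}$ plus union bound'' estimate, so the argument should be short and self-contained.
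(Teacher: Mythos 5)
Your proof is correct and follows exactly the same route as the paper's: bound the per-block failure probability by $(1-f_{i_\phi})^B \le (1-\fst)^B \le e^{-\fst B} = T^{-3}$ and union bound over the at most $T$ blocks. The extra remark about the independence of the feedback coins is a fine (and harmless) addition.
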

\begin{proof}

Let $E_\phi$ be the event that at least one feedback observation occurred in block $\phi$, i.e., $\exists t \in S_\phi: X_{i_t,t} = 1$. Since for any arm $i$, $\Pr[X_{i,t} = 1] = f_{i}$, then for arm $i_\phi$, we have that
\[\Pr[\neg E_\phi] = (1-f_{i_\phi})^B \leq (1-\fst)^B \leq \exp(-\fst B) = 1/T^3.
\]
Union bounding over all $\floor{T/B}$ blocks, we conclude that
\[ \Pr[\neg E] \leq \sum_{\phi \in [\Phi]} \Pr[\neg E_\phi] \leq 1/T^2. \qedhere
\]

\end{proof}

We are now ready to prove \cref{thm:bb1-regret}.
\begin{proof}[Proof of \cref{thm:bb1-regret}: Regret $\BBDivide$]

Throughout the proof we will use $\fst \in (0,\min_i f_i]$.\\ 

First, observe that Line 9 of Algorithm~\ref{algo:BB-divide} (i.e., the last $T - B \Phi$ steps of the time horizon) contributes $O(\ln T)$ regret, because $T - B\Phi < T - B \frac{T}{B} + B \leq B = \frac{3 \ln T}{f^{\star}}$. The rest of the proof thus analyzes the regret incurred in the first $B \Phi$ time steps.
    Now, we divide up these rounds into $\Phi$ blocks of size $B$, and let $E$ be the ``clean event'' that at least one feedback observation occurs in each block $\phi \in \{1, \dots, \Phi\}$. By Lemma \ref{lemma:clean}, we have that $\Pr[E] \ge 1 - \frac{1}{T^2}$. The event that $E$ does not occur contributes at most $O(1)$ to the expected regret, so we can condition on $E$ for the remainder of the analysis.\\ 

Next, fix an instance with stochastic feedback $\mathcal{I} = \{\mathcal{A},\mathcal{F}, \mathcal{L}\}$ over $T$ rounds. Now, we define corresponding instance with deterministic feedback $\mathcal{I'} = \{\mathcal{A}, (1,\dots,1), \mathcal{L}'\}$ over $\Phi = \floor{T/B}$ time steps, where $\mathcal{L}'$  denotes the process generating the following sequence of $\Phi$ losses $\ell_{i,1}',\dots,\ell_{i,\Phi}'$
for all $i \in \mathcal{A}$: For all $i \in \calA$ and $\phi \in \{1, \dots,\Phi\}$,
$\ell'_{i,\phi}\sim \mathrm{Unif}\left\{\ell_{i, s} : s \in S_\phi\right\}$, i.e., the loss is sampled uniformly from the loss functions of the original instance within block $\phi$.
    Now, we show that the (pseudo)regret of $\ALG$ on instance $\mathcal{I}'$ over $\Phi$ rounds is the same as that of $\BBDivide(\ALG,\fst)$ on instance $\mathcal{I}$ over $T$ rounds. By the definition of the regret, the regret of $\ALG$ on the instance $\calI'$ is equal to
    \begin{equation}
        \E \left[\sum_{\phi=1}^{\Phi} \ell'_{i_\phi, \phi} \right] - \min_{i} \E\left[\sum_{\phi=1}^{\Phi} \ell'_{i, \phi}\right], \label{eq:regALG}
    \end{equation}
    where the randomness of the first expectation is due to the randomness of the algorithm $\ALG$ and $\mathcal{L}'$.
    The regret of $\BBDivide(\ALG)$ on $\mathcal{I}$ is equal to: 
\begin{align*}
\E\left[\sum_{t=1}^T \ell_{i_t, t}\right] - \min_{i} \E\left[\sum_{t=1}^{T} \ell_{i, t} \right]
&= \E\left[\sum_{\phi=1}^{\Phi} \sum_{t \in S_\phi} \ell_{i_\phi, t}\right] -  \min_{i} \sum_{\phi=1}^{\Phi} \sum_{t\in S_\phi} \E[\ell_{i, t}] \\
&= B \cdot \left(\E\left[\sum_{\phi=1}^{\Phi} \underbrace{\frac{1}{B} \sum_{t\in S_\phi}\ell_{i_\phi, t}}_{(A)} \right] -  \min_{i} \sum_{\phi=1}^{\Phi} \underbrace{\frac{1}{B}  \sum_{t\in S_\phi}\E[\ell_{i, t}]}_{(B)}\right)
\end{align*}
where randomness in the expectation is due to the randomness of $\BBDivide(\ALG)$, the randomness of feedback observations, and the randomness of the loss functions. Notice that (A) is equal to the  $\mathbb{E}[\ell'_{i_\phi, \phi}]$ and (B) is equal to $\mathbb{E}[\ell'_{i, \phi}]$. Moreover, 
the observation at the end of the $\phi$th block, which is passed as the loss to $\ALG$ for its $\phi$th timestep, is also distributed according to $\text{Unif}\left\{\ell_{i_\phi, t} : t \in S_\phi\right\}$. 

We thus see that: 
\begin{align*}
\E\left[\sum_{t=1}^T \ell_{i_t, t}\right] - \min_{i} \E\left[\sum_{t=1}^{T} \ell_{i, t}\right] &= B \cdot \left(\E\left[\sum_{\phi=1}^{\Phi} \E[\ell'_{i_\phi, \phi}] \right] -  \min_{i} \sum_{\phi=1}^{\Phi} \E[\ell'_{i, \phi}] \right) \\
&= B \cdot \left(\E_{\mathcal{L}'}\left[\sum_{\phi=1}^{\Phi} \ell'_{i_\phi, \phi} \right] - \min_{i} \E_{\mathcal{L}'}\left[ \sum_{\phi = 1}^{\Phi} \ell'_{i, \phi} \right] \right).
\end{align*}

This expression corresponds exactly to $B$ times the regret of $\ALG$ on $\mathcal{I}'$ (see \eqref{eq:regALG}), and thus can be upper bounded by 
\begin{align*}
B \cdot \left(\mathbb{E}_{\mathcal{L}'}\left[\sum_{\phi=1}^{\Phi} \ell'_{i_\phi, \phi} \right] - \min_{i} \E_{\mathcal{L}'}\left[\sum_{\phi=1}^{\Phi} \ell'_{i, \phi} \right] \right)  &\le B \cdot R_{\ALG}(\Phi) \\
&\le B \cdot R_{\ALG}(T/B) \\
&\le \frac{3 \ln T}{f^{\star}} \cdot R_{\ALG}\left(\frac{T f^{\star}}{3 \ln T} \right).
\end{align*}

The $\frac{3 \ln T}{f^{\star}}$ error term from the last $T - B \Phi$ steps can be absorbed into this regret term, since $R_{\ALG}\left(\frac{T f^{\star}}{3 \ln T} \right) \ge 1$.

\end{proof}

\subsubsection{Monotonicity of $\BBDivide$: Proof of \cref{thm:feedbackmonotonicitybbdivide}}

Next, we analyze the monotonicity properties of $\BBDivide$. For convenience, we restate \cref{thm:feedbackmonotonicitybbdivide} below.

\Fbbdividemono*

The intuition for the $\APC$ statement is that since $\BBDivide$ effectively treats each block as one round of $\ALG$, equalizing the block sizes will naturally balance $\APC$. Once $\ALG$ decides to pull an arm $i$, $\BBDivide(\ALG)$ will pull it $B$ times regardless of its feedback probability. This result relies on $\fst$ being sufficiently small to ensure that there is an observation in every block. The $\FOC$ statement follows from an application of \cref{prop:relationship}. We formalize this below.

\begin{proof}[Proof of \cref*{thm:feedbackmonotonicitybbdivide}]

We first analyze $\APC$. Let $E$ be the ``clean event'' that at least one feedback observation occurs in each block $1 \le \phi \le \Phi$. By Lemma \ref{lemma:clean}, we know that $\mathbb{P}[E] \ge 1 - \frac{1}{T^2}$. Conditioning on the clean event $E$, we see that $\APC_i(\calI) = \APC_i(\tilde\calI)$ by the construction of the algorithm, since in every block where $\ALG$ selects $i$, $\BBDivide(\ALG)$ will pull $i$ exactly $B$ times.
The event that $E$ does not occur contributes at most $1/T$ to $\APC$. 

We next analyze $\FOC$. From the proof of the $\APC$ statement, we have that 
$\APC_i(\tcalI) \ge \APC_i(\calI)  - 1/T.$
Applying \Cref{prop:relationship}, which states that $\FOC_i = f_i \cdot \APC_i$, we have 
\[
\FOC_i(\tilde\calI) = \tilde f_i \cdot \APC_i(\tilde\calI)\geq \tilde f_i \cdot \APC_i(\calI) - \tilde f_i/T = \frac{\tilde f_i}{f_i}\cdot \FOC_i(\calI) - \tilde f_i/T \geq \FOC_i(\calI) - \tilde f_i/T,
\] 
where the last inequality is because $\tilde f_i \geq f_i$. 

For strict inequality, notice that it suffices to show that $\frac{\tilde{f}_i}{f_i} \cdot \FOC_i(\calI)  - \frac{\tilde{f}_i}{T} > \FOC_i(\calI)$. As long as $\ALG$ pulls $i$ at least once, this will hold for sufficiently large values of $T$. 
\end{proof}

\subsection{Proofs for Section \ref{subsec:bb2}: $\BBPull$}
\label{appendix:bb2}

In this section, we provide proofs for the regret and monotonicity of algorithms transformed by $\BBPull$. Before doing so explicitly, we first introduce a simulated version of $\BBPull$, as well as Lemmas \ref*{lem:joint-losses-identical} and \ref*{lem:simulated-indistinguishable},
which will help us compare transformed algorithms on similar instances. 

\subsubsection{Construction of a simulated version of $\BBPull(\ALG)$}
\label{subsubsec:simulated}

We first introduce \cref{algo:BB-pull-simulated}, a simulated version of $\BBPull(\ALG)$, which will be easier to analyze but behaves the same way as $\BBPull(\ALG)$. 
First, let us define the following random variables. (Recall that $\phi$ indexes losses for the time horizon of $\ALG$, $\Phi$ is the total number of times $\ALG$ is called by $\BBPull(\ALG)$, and $\Phi \leq T$ because $\ALG$ can be called at most $T$ times.)
\begin{itemize}
    \item \textit{Losses:} For each round $\phi \in [\Phi]$ of $\ALG$ and each arm $j \in [K]$, $\ell'_{j, \phi}$ is the placeholder for the loss passed to $\ALG$ if $\ALG$ were to observed the loss of arm $j$ at time $\Phi.$ More formally, 
    $\ell'_{j, \phi} := \ell_{j, t}$ the loss for arm $j$ at a time step $t$ that corresponds to the last time step in block $\phi$ of $\BBPull(\ALG)$. Since we are in the stochastic loss setting, $\ell'_{j,\phi}$ is a random variable drawn from the distribution of arm $j$ (with mean $\bar{\ell}_j$) independently across $\phi$ and $j$. We note that these losses are only observed up to timestep $\Phi$ (which is a random variable less than $T$) and only for the specific arms pulled by the algorithm.

    \item \textit{Feedback realizations:} For all $j \in [K]$ and $\phi \in [T]$, let $Q_{j,\phi} \sim \textrm{Geom}(f_j)$ for $\phi \in [T]$ be a random variable distributed according to the geometric distribution with parameter equal to the feedback probability of arm $j$. 
    This will represent the number of Bernoulli trials needed to observe a success.  
    (These random variables are also fully independent across values of $j$ and $\phi$.) 

    \item \textit{Algorithm randomness:} Let $b$ be randomness of $\ALG$ that will be used across time steps $1 \le \phi \le T$. Let $\ALG_b$ denote $\ALG$ initialized with randomness $b$. 
\end{itemize}

We are now ready to present the simulated version of $\BBPull(\ALG)$, described in Algorithm \ref{algo:BB-pull-simulated}. 

\begin{algorithm2e}[htbp]
\caption{Simulated version of $\BBPull(\ALG)$}
\label{algo:BB-pull-simulated}
\DontPrintSemicolon
\LinesNumbered
\KwIn{A sequence of positive integers $Q_{j, \phi}$ for $\phi \in [T]$ and $j \in [K]$.}
Initialize $t= 1$ and $\phi = 1$. \\
\While{$t \le T$}{
    Let $i^{\ALG}_\phi = \ALG(\phi)$ be the output of $\ALG$ at timestep $\phi$. \;
    \For {$\min(T - t, Q_{i^{\ALG}_\phi, \phi})$ iterations}{
    Pull $i_t \coloneqq i^{\ALG}_\phi$ and let 
    $t \gets t +1$.
    }
    \If{$t < T$}
    {Observe $\ell_{i_t, t}$ and return $\ell'_{i^{\ALG}_\phi, \phi} \coloneqq \ell_{i_t, t}$ to $\ALG$. \\
    Let $\phi \gets \phi + 1$.}}
\end{algorithm2e}

Note that the random variables $Q_{j_\phi, \phi}$ actually now capture the \textit{block size} of the transformed algorithm $B_\phi$ (which, for $\BBPull$, is a random variable). For clarity, we will use $Q$ rather than $B$ in the remaining analyses. 

We first argue that, given two instances $\calI, \tilde \calI$ which are identical except for $\tilde f_i \geq f_i$, the sequences of arms that \cref{algo:BB-pull-simulated} pulls are distributed identically across the instances. We formalize this in the following lemma.

\begin{lemma}
    \label{lem:joint-losses-identical}
     Let $Q_{j,\phi}$ and $ \widetilde Q_{j,\phi}$ for $j \in [K]$ and $\phi = 1, \dots, $ be an infinitely-long sequence of arbitrary positive integers. Let $\Phi^*$ be any positive integer and $T = \max\{\sum_{\phi\in [\Phi^*]}\sum_{j\in[K]}Q_{j,\phi}, \allowbreak \sum_{\phi\in [\Phi^*]}\sum_{j\in[K]} \widetilde{Q}_{j,\phi} \}$ be the time horizon.
     Let $\calI = \{\calA, 
     \calF, 
     \calL\}$ be a stochastic instance with time horizon $T$; let $\tilde f_i \geq f_i$ and $\widetilde \calI = \{\calA, \tilde\calF(i), \calL\}$. 
    Run \cref{algo:BB-pull-simulated} with parameters $\left\{Q_{j, \phi}\right\}_{j \in [K], \phi \in [T]}$ on $\calI$ and run \cref{algo:BB-pull-simulated} with parameters $\left\{\widetilde{Q}_{j, \phi}\right\}_{j \in [K], \phi \in [T]}$ on $\widetilde \calI$.
    Let 
   $i^\ALG_\phi$ and $\tilde{i}^\ALG_\phi$ denote the  arms pulled in the description of \cref{algo:BB-pull-simulated} for the two instances, respectively.
      Then, the following two vector valued random variables are identically distributed: $(i^\ALG_1, \dots, i^\ALG_{\Phi^*})$ and $(\tilde{i}^\ALG_1, \dots, \tilde{i}^\ALG_{\Phi^*})$.
\end{lemma}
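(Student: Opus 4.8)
The plan is to prove the stronger path-wise statement that, under an appropriate coupling of the two runs, the arm vectors $(i^\ALG_1, \dots, i^\ALG_{\Phi^*})$ and $(\tilde{i}^\ALG_1, \dots, \tilde{i}^\ALG_{\Phi^*})$ are \emph{equal} almost surely; identical distribution follows immediately. The coupling shares the two sources of randomness across the runs: the algorithm randomness $b$, and the loss draws $\ell'_{j,\phi}$ for every arm $j \in [K]$ and every block $\phi$. This is legitimate because $\calI$ and $\widetilde\calI$ differ only in the feedback probability of arm $i$ (i.e.\ $f_i$ versus $\tilde f_i$) and share the same loss process $\calL$; hence each $\ell'_{j,\phi}$ is a draw from arm $j$'s loss distribution (mean $\bell_j$) on both instances, so the joint law of $(b, \{\ell'_{j,\phi}\})$ coincides. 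The only quantities that genuinely differ between the two runs are the block sizes, namely $Q_{i^\ALG_\phi,\phi}$ on $\calI$ versus $\widetilde Q_{\tilde{i}^\ALG_\phi,\phi}$ on $\widetilde\calI$.

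First I would check that the horizon is long enough for both runs to complete at least $\Phi^*$ blocks, so that the arm vectors are well defined. In any run, the number of time steps consumed by the first $\Phi^*$ blocks is $\sum_{\phi \in [\Phi^*]} Q_{i^\ALG_\phi,\phi} \le \sum_{\phi \in [\Phi^*]}\sum_{j \in [K]} Q_{j,\phi} \le T$, since at most one arm's block-size counter is used per block; the analogous bound holds for $\widetilde Q$. By the choice of $T$ as the maximum of the two total sums, both runs reach block $\Phi^*$.

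The core of the argument is an induction on $\phi$ establishing $i^\ALG_\phi = \tilde{i}^\ALG_\phi$ for all $\phi \le \Phi^*$ under the coupling. For the base case, $\ALG$'s first call sees no history, so $i^\ALG_1$ and $\tilde{i}^\ALG_1$ are the same deterministic function of the shared randomness $b$. For the inductive step, suppose the arms agree through block $\phi-1$. The loss fed back to $\ALG$ at the end of each block $\phi' < \phi$ is, by construction of \cref{algo:BB-pull-simulated}, the placeholder $\ell'_{i^\ALG_{\phi'},\phi'}$ (respectively $\ell'_{\tilde{i}^\ALG_{\phi'},\phi'}$); since the selected arms agree and the loss draws are shared, these returned losses are identical. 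Hence the entire history presented to $\ALG$ before its $\phi$th call is identical on the two runs, and $\ALG_b$ outputs the same arm, giving $i^\ALG_\phi = \tilde{i}^\ALG_\phi$.

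The step I expect to require the most care is articulating precisely why the differing block sizes $Q$ versus $\widetilde Q$ are irrelevant to the arm sequence. The point is that within block $\phi$ the arm $i^\ALG_\phi$ is selected \emph{before} any pulling occurs, so the block size cannot influence which arm is chosen; and because losses are stochastic and i.i.d.\ across time steps, the loss returned at the end of the block is a fresh draw from the selected arm's distribution regardless of how many pulls $Q_{i^\ALG_\phi,\phi}$ the block contained. Thus the $Q$'s affect only the advancement of the round counter $t$, not the distribution of the information handed to $\ALG$, which is exactly what the coupling exploits. Having established path-wise equality under the coupling, the two arm vectors are equal in distribution, completing the proof.
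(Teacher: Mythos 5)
Your proposal is correct and takes essentially the same approach as the paper: both arguments fix the algorithm's randomness $b$, identify the loss placeholders $\ell'_{j,\phi}$ across the two runs (legitimate since $\calL$ is shared and the draws are i.i.d.\ regardless of block length), and observe that the arm sequence is then a deterministic function of these shared quantities because $\ALG$ is oblivious to the feedback probabilities and the $Q$'s, which only advance the counter $t$. Your explicit induction on $\phi$ and the check that $T$ suffices for both runs to reach block $\Phi^*$ are just unpacked versions of the paper's map $F_b$ and its remark on the choice of $T$.
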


The intuitive interpretation of \cref{lem:joint-losses-identical} is very natural: if we have two set of arms with identical loss distributions and run $\BBPull(\ALG)$ on them, we expect to see that the sequence of arms recommended by $\ALG$ is distributed identically across the two instances, even if we can't guarantee that the exact same arm is picked at every timestep on each instance. We provide a formal proof below. 

\begin{proof}[Proof of \cref{lem:joint-losses-identical}]
Let $\{\ell'_{j, \phi}\}_{j \in [K], \phi \in [\psi]} $ denote possible loss sequences observed on $\calI$ up to some $\psi \leq \Phi^*$ and $\{\tilde\ell'_{j, \phi}\}_{j \in [K], \phi \in [\psi]} $ denote possible loss sequences observed on $\widetilde\calI$ up to the same $\psi$. 
Let us fix the bit of randomness $b$ used for $\ALG$ on $\calI$ to be the same as the bit of randomness used for $\ALG$ on $\widetilde\calI$.
Because of the way we have set $T= \max\{\sum_{\phi\in [\Phi^*]}\sum_{j\in[K]}Q_{j,\phi}, \allowbreak \sum_{\phi\in [\Phi^*]}\sum_{j\in[K]} \widetilde{Q}_{j,\phi} \}$, we are guaranteed that blocks $\phi = 1, \dots, \psi$ will have been reached on both $\widetilde \calI$ and $\calI$.
Conditioned on $b$,
let $F_b: [0,1]^{K \times \psi} \to [K]^{\psi}$ be the mapping from all $\ell'_{j,\phi}$ for $\phi \leq \psi$,
to the sequence of arms it would have pulled correspondingly, that is, 
\begin{align*}
    F_b\left(\{\ell'_{j, \phi}\}_{j \in [K], \phi \in [\psi]}\right) = (i_1^\ALG, i_2^\ALG, \dots, i_\psi^\ALG).
\end{align*}
Note that $F_b$ does not depend on the feedback probabilities $f_i$ or the random variables $Q_{i,\phi}$, because $\ALG$ is fully oblivious to these quantities. For any $b$, $F_b$ is fully deterministic. 
Therefore, the distribution of $(i_1^\ALG, i_2^\ALG, \dots, i_\psi^\ALG)$ is fully specified by the distributions of $\{\ell'_{j, \phi}\}_{j \in [K], \phi \in [\psi]}$, and the distribution of $(\tilde i_1^\ALG, \tilde i_2^\ALG, \dots, \tilde i_\psi^\ALG)$ is fully specified by the distributions of $\{\tilde \ell'_{j, \phi}\}_{j \in [K], \phi \in [\psi]}$.

Since the loss sequences are distributed identically across instances, we have that \begin{align*}
\{\ell'_{j, \phi}\}_{j \in [K], \phi \in [\psi]} &\stackrel{d}{=}  \{\tilde\ell'_{j, \phi}\}_{j \in [K], \phi \in [\psi]}    
\\\implies F_b\left(\{\ell'_{j, \phi}\}_{j \in [K], \phi \in [\psi]}\right) &\stackrel{d}{=}  F_b\left(\{\tilde\ell'_{j, \phi}\}_{j \in [K], \phi \in [\psi]} \right)
\\\implies (i_1^\ALG, i_2^\ALG, \dots, i_\psi^\ALG) &\stackrel{d}{=} (\tilde i_1^\ALG, \tilde i_2^\ALG, \dots, \tilde i_\psi^\ALG),
\end{align*}
where $\stackrel{d}{=}$ denotes identically distributed relationship. Finally, because this holds conditionally over any arbitrary $b$, we can integrate over all possible random bits $b$ to establish the claim.
\end{proof}

To use \cref{algo:BB-pull-simulated} in our proofs, we need to argue that it makes decisions that are distributed identically to those of \cref{algo:BB-pull}. We formalize this below: 

\begin{lemma}
[Distribution of arms pulled by simulated algorithm]
\label{lem:simulated-indistinguishable}
Fix an instance $\calI$.
    Let $\{i_t^{\text{orig}}\}_{t \in [T]}$ be a sequence of random variables that represents the arms selected by \cref{algo:BB-pull} on $\calI$ over the time horizon $T$, and $\{i_t^\text{sim}\}_{t \in [T]}$ be a sequence of random variables that represents the arms selected by  \cref{algo:BB-pull-simulated} on an identical instance $\calI$. Then the sequence $\{i^\text{orig}_t\}_{t \in [T]}$ is distributed identically to $\{i^\text{sim}_t\}_{t \in [T]}$. 
\end{lemma}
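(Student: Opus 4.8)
The plan is to show that the timestep-level arm sequence is, in both algorithms, a deterministic readout of two ingredients---the sequence of arms chosen by the inner algorithm $\ALG$ and the per-block pull counts---and then to argue that these two ingredients have the same joint law. First I would observe that once the sequence $(i_1^\ALG, i_2^\ALG, \dots)$ of arms selected by $\ALG$ and the block lengths $(B_1, B_2, \dots)$ are fixed, the full sequence $\{i_t\}_{t \in [T]}$ is determined: block $\phi$ occupies a run of $B_\phi$ consecutive timesteps all equal to $i_\phi^\ALG$, laid out in order and truncated at $T$. In Algorithm \ref{algo:BB-pull-simulated} the block length for block $\phi$ is exactly $Q_{i_\phi^\ALG, \phi} \sim \mathrm{Geom}(f_{i_\phi^\ALG})$ (capped by $T - t$), so it suffices to show that the joint distribution of $\bigl((i_\phi^\ALG)_\phi, (B_\phi)_\phi\bigr)$ agrees across the two algorithms.

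The key distributional fact about Algorithm \ref{algo:BB-pull} is that its per-block pull count is geometric. Conditioning on the history through block $\phi-1$ and on $\ALG$ selecting arm $j$ at block $\phi$, each pull of $j$ generates an independent $\Bern(f_j)$ feedback indicator $X_{j,t}$, and the block terminates at the first success; hence $B_\phi \sim \mathrm{Geom}(f_j)$, matching the law of $Q_{j,\phi}$. Because the $X_{j,t}$ are independent across $t$ and, in the stochastic setting, independent of the losses $\ell_{j,t}$, the block length is independent of the history and of the loss eventually returned to $\ALG$. Moreover the returned loss---the realized loss of $j$ at the terminal timestep of the block---is, conditioned on the arm $j$ and on any value of $B_\phi$, distributed as a fresh i.i.d.\ draw from arm $j$'s loss distribution, which is precisely the law of $\ell'_{j,\phi}$ used by Algorithm \ref{algo:BB-pull-simulated}.

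With these facts I would then run an induction on blocks, coupling the internal randomness $b$ of $\ALG$ across the two runs. Assuming the arm-and-returned-loss history through block $\phi-1$ has identical law, $\ALG$ applies the same ($b$-coupled) selection rule, so $i_\phi^\ALG$ has matching conditional law; given $i_\phi^\ALG = j$, the two preceding facts give matching conditional laws for the block length ($\mathrm{Geom}(f_j)$ versus $Q_{j,\phi}$) and for the returned loss, with the same conditional-independence structure. This advances the induction; integrating over $b$ and over the unused variables $Q_{j',\phi}$ for arms $j' \neq i_\phi^\ALG$ (which are independent of everything the run touches) yields equality of the full joint law of $\bigl((i_\phi^\ALG)_\phi, (B_\phi)_\phi\bigr)$, and hence, by the first step, equality in distribution of $\{i^\text{orig}_t\}_{t \in [T]}$ and $\{i^\text{sim}_t\}_{t \in [T]}$.

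The main obstacle is the conditional-independence bookkeeping in the second step: one must verify that conditioning on the \emph{block length} (equivalently, on a run of indicators $X_{j,t}=0$ terminated by a single $X_{j,t}=1$) does not alter the law of the loss returned at the terminal step, and that block lengths stay independent across blocks even when the same arm recurs across different $\phi$. Both reduce to the independence of $X_{i,t}$ from $\ell_{i,t}$ and across $t$, together with the obliviousness of $\ALG$ to the raw feedback indicators and to the $Q$-variables. A secondary, purely bookkeeping issue is reconciling the truncation at $t = T$ (the $\min(T-t, \cdot)$ cap and the final, possibly incomplete block) consistently between the two pseudocodes.
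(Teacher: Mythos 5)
Your proposal is correct and follows essentially the same route as the paper's proof: couple the internal randomness of $\ALG$, observe that the number of pulls until feedback in Algorithm~\ref{algo:BB-pull} is $\mathrm{Geom}(f_j)$ conditioned on the past and independent of the returned loss (which is a fresh draw from arm $j$'s distribution, matching $\ell'_{j,\phi}$), and conclude that the joint law of the $\ALG$-level arm sequence and block lengths agrees, with truncation at $T$ handled at the end. The paper organizes this as a fix-the-randomness coupling over the whole trajectory rather than an explicit block-by-block induction, but the substance is identical.
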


The key difference between \cref{algo:BB-pull-simulated} and \cref{algo:BB-pull} is that the number of times  \cref{algo:BB-pull-simulated} pulls $i_\phi^\ALG$ is determined by the random variable $Q_{i_\phi,\phi}$, rather than by the first time feedback is observed in \cref{algo:BB-pull}. However, $Q_{i_\phi,\phi}$ is distributed identically to the number of times feedback will be observed, so the simulated version should overall produce the same distribution of outputs. We formalize this intuition below. 

\begin{proof}[Proof of \cref{lem:simulated-indistinguishable}]
This proof will proceed in three main steps. First, we argue that the sequence of arms selected by $\ALG$ for either \cref{algo:BB-pull-simulated} or \cref{algo:BB-pull} are identically distributed. Second, we relate the $X_{j,t}$ used by \cref{algo:BB-pull} to the $\phi$ timescale. Third, we show by induction that feedback observations are identically distributed on  \cref{algo:BB-pull-simulated} and \cref{algo:BB-pull}. Finally, we argue that the sequences of arms selected by \cref{algo:BB-pull-simulated} and \cref{algo:BB-pull} are identically distributed.

\paragraph{Step 1: Coupling arm pulls}$i_\phi^{\ALG,\text{orig}} = i_\phi^{\ALG, \text{sim}}$.

Fix a sequence of random variables $Q_{j, \phi} \sim \text{Geom}(f_j)$ for $ \phi \in [T]$ and $j \in [K]$ used to run \cref{algo:BB-pull-simulated}. Let $T^*  = \sum_{\phi \in [T]} \max_{j \in [K]} Q_{j,\phi}$; then fix a sequence of random variables  $X_{j,t} \sim \text{Bern}(f_j)$ for $t \in [T^*]$ and $j \in [K]$ that determine feedback observations in \cref{algo:BB-pull}.

We run \cref{algo:BB-pull} and \cref{algo:BB-pull-simulated} on identical copies of $\calI$ for $T^*$ rounds; we distinguish each copy by $\calI^{\text{orig}}$ for \cref{algo:BB-pull} and $\calI^{\text{sim}}$ for \cref{algo:BB-pull-simulated}. We set $T^*$ in this way to guarantee that timestep $\phi$ will be reached on both $\calI^{\text{orig}}$  and $\calI^{\text{sim}}$; we will handle truncation in the final step.

Recall that \cref{algo:BB-pull} and \cref{algo:BB-pull-simulated} both make calls to the same underlying $\ALG$. Let $b$ be the bit of randomness used for $\ALG$ in \cref{algo:BB-pull} and \cref{algo:BB-pull-simulated}.
Now, conditioning on $b$, let $F_b: [0,1]^{K \times \psi} \to [K]^{\psi}$ be, as defined before, the
mapping from the sequence of possible losses that $\ALG$ may have observed for any arm at any time $\phi \leq \psi$, to the sequence of arms it would have pulled corresponding to those losses, that is, 
\begin{align*}
    F_b\left(\{\ell_{j, \phi}\}_{j \in [K], \phi \in [\psi]}\right) = (i_1^\ALG, i_2^\ALG, \dots, i_\psi^\ALG).
\end{align*}
Note that $F_b$ does not depend on the feedback probabilities $f_i$ or the feedback observations $Q_{i,\phi}$ or $X_{i, t}$, because $\ALG$ is fully oblivious to these quantities. For any $b$, $F_b$ is fully deterministic. 
Furthermore, the simulated and real algorithms use $\ALG$ with the same bit of randomness, so $F_b^{\text{orig}} = F_b^{\text{sim}}$, and the arms selected by $\ALG$ for either \cref{algo:BB-pull} or \cref{algo:BB-pull-simulated} are fully specified by the distributions of the losses for each arm. Then, for any $\psi \leq T$, we have that
\begin{align*}
    \{\ell^{\text{orig}}_{j, \phi}\}_{j \in [K], \phi \in [\psi]} &\stackrel{d}{=}  \{\ell^{\text{sim}}_{j, \phi}\}_{j \in [K], \phi \in [\psi]}   &&\textit{because }\calI^{\text{orig}} = \calI^{\text{sim}} 
\\\implies F_b\left(\{\ell^{\text{orig}}_{j, \phi}\}_{j \in [K], \phi \in [\psi]}\right) &\stackrel{d}{=}  F_b\left(\{\ell^{\text{sim}}_{j, \phi}\}_{j \in [K], \phi \in [\psi]} \right) &&\textit{because }F_b^{\text{orig}} = F_b^{\text{sim}}
\\\implies \{i_\phi^{\ALG, \text{orig}}\}_{\phi \in [\psi]} &\stackrel{d}{=} \{i_\phi^{\ALG, \text{sim}}\}_{\phi \in [\psi]} &&\textit{by definition of $F_b$.}
\end{align*}
We end this step by creating a coupling between arms selected by $\ALG$ so that $i_\phi^{\ALG,\text{orig}} = i_\phi^{\ALG, \text{sim}}$ for all $\phi \in [T]$. For the rest of the analysis, we condition on this particular sequence.

\paragraph{Step 2: Coupling the block lengths.}

Define $Q'_\phi: [K]^{\phi} \times \{0, 1\}^{K \times T^*} \to \mathbb N$
to be the function that maps $\{i_{\phi'}^{\ALG, \text{orig}}\}_{\phi' \leq \phi}$, the sequence of arms selected by $\ALG$ up to $\phi$, and the sequence of Bernoullis $X_{j,t}$ used to run \cref{algo:BB-pull}, to the number of times $i_\phi^{\ALG, \text{orig}}$ needs to be pulled (on $t$ timescale) before feedback is observed.\footnote{We note that while \cref{algo:BB-pull} only takes $X_{j,t}$ variables into account for arm $j$ that was pulled at time $t$, these random variables can be defined for all arms at all time steps without changing the behavior of \cref{algo:BB-pull}.}
In some abuse of notation, we let $Q'_\phi := Q'_\phi\left(\{i_{\phi'}^{\ALG, \text{orig}}\}_{\phi' \leq \phi}, \{X_{j,t}\}_{j \in [K], t \geq 1}\right)$ be shorthand for the number of times $i_\phi^{\ALG,\text{orig}}$ must be pulled until a feedback observation. Now, $Q'_\phi$ is fully determined by the history of $\ALG$ arm pulls and the sequence of $X_{j,t}$.
(Note that $Q'_\phi$ needs to depend on the $X_{j,t}$'s as well as the \textit{history} of $\ALG$'s selections. This is because even if we know $i_\phi^\ALG$, we do not know which $t$ indices of the $X_{i_\phi^\ALG,t}$ sequence determine whether we make an observation or not. We can only relate $t$ to $\phi$ correctly if we know exactly which arms were pulled in previous $\phi' < \phi$, \textit{and} their corresponding feedback observations.)

Next, we will show that for all $\phi \in [T]$, conditioned on fixing $Q'_\psi$ and $Q_{i_\psi,\psi}$ such that $Q_{i_\psi,\psi} = Q'_\psi$  for all $\psi <\phi$, 
it holds that $Q_{i_\phi,\phi} \stackrel{d}{=} Q'_\phi$.

Recall that we have fixed a coupling between arms selected by $\ALG$ so that $i_\phi^{\ALG,\text{orig}} = i_\phi^{\ALG, \text{sim}}$ for all $\phi \in [T]$. For ease of presentation, we refer to these arms as $i_\phi$ simply.

First note that for any $\phi$, $Q_{i_\phi, \phi} \sim \text{Geom}(f_{i_\phi})$ by definition; furthermore, these are independent across all $\phi$. 
To complete our claim, it suffices to show that  
$Q'_\phi \sim \text{Geom}(f_{i_\phi})$ conditioned on fixing $Q'_\psi$ and $Q_{i_\psi,\psi}$ such that $Q_{i_\psi,\psi} = Q'_\psi$  for all $\psi <\phi$.

Recall that $Q'_\phi :=
Q'_\phi\left(\{i_{\phi'}\}_{\phi' \leq \phi},
\{X_{j,t}\}_{j \in [K], t \geq 1}\right)$ 
is the shorthand for the number of times $i_\phi$ must be pulled until a feedback observation. 
Let $t_{\phi}$ be the first time steps $t$ that belongs to block $\phi$. Note that $t_{\phi}$ is a deterministic function of the fixed variables $\{Q'_\psi\}_{\psi < \phi} = \{Q_{i_\psi, \psi}\}_{\psi < \phi}$.
Furthermore,
$X_{i_\phi,t}$s for $t\geq t_\phi$ are independent of 
$\{Q'_\psi\}_{\psi < \phi}$ 
and 
$Q'_\phi$ is a function of $i_\phi$ that only depends on 
$X_{i_\phi,t}$ for $t\geq t_\phi$.
Moreover, $X_{i_\phi,t}$ for $t\geq t_\phi$ are Bernoulli random variables that are independent of $t_\phi$.
Therefore, $Q'_\phi \sim \text{Geom}(f_{i_\phi})$ conditioned on the past. That is, for all $\phi \in [T]$, it holds that $Q_{i_\phi,\phi} \stackrel{d}{=} Q'_\phi$ conditioned on fixing $Q'_\psi$ and $Q_{i_\psi,\psi}$ such that $Q_{i_\psi,\psi} = Q'_\psi$  for all $\psi <\phi$.

We end this step by taking an adaptive coupling over the realizations of $Q'_\psi$ and $Q_j,\psi$, such that 
for all $\phi\in [T]$, $Q_{i_\phi,\phi} =Q'_\phi$. 

\paragraph{Step 3: Arms selected by \cref{algo:BB-pull} and \cref{algo:BB-pull-simulated} are identically distributed.}

Finally, we are ready to prove the main claim. Let us condition on the coupled sequence of arms pulled by $\ALG$ from Step 1, $\{i_\phi^\ALG\}_{\phi \in [T]}$, and the coupled feedback observations $\left(\{Q_{i_\phi,\phi}\}_{\phi \in [T]},\{Q'_{\phi}\}_{\phi \in [T]} \right)$ from Step 2.

For \cref{algo:BB-pull}, the random variables $X_{j,t}$ and the sequence $\{i_\phi^{\ALG,\text{orig}}\}_{\phi \in [T]}$ fully specifies the arms pulled by \ref{algo:BB-pull}, i.e. the sequence $\{i_t^{\text{orig}}\}_{t \in [T]}$. For \cref{algo:BB-pull-simulated}, the random variables $X_{j,\phi}$ and the sequence $\{i_\phi^{\ALG,\text{sim}}\}_{\phi \in [T]}$ fully specifies the arms pulled by \ref{algo:BB-pull-simulated}, i.e. the sequence $\{i_t^{\text{sim}}\}_{t \in [T]}$. 
Conditioned on this coupling $\left(\{Q_{j,\phi}\}_{\phi \in [T]}, \{X_{j,t}\}_{t \geq 1}\right)$, therefore, we have that 
\begin{align*}
    \{i_\phi^{\ALG, \text{orig}}\}_{\phi \in [\psi]} &\stackrel{d}{=} \{i_\phi^{\ALG, \text{sim}}\}_{\phi \in [\psi]}
    \\\implies \{i_t^{\text{orig}}\}_{t \in [T]} &\stackrel{d}{=} \{i_t^{\text{sim}}\}_{t \in [T]} &&\textit{from conditioning on coupling,}
\end{align*}
i.e. that the distribution of arms pulled by \cref{algo:BB-pull} and \cref{algo:BB-pull-simulated} are identically distributed, conditioned on the coupling. Truncating $\psi$ to $T$ for both algorithms also preserves the identical distribution. The main claim of the lemma follows by another application of the law of total probability. 

\end{proof}

\subsubsection{Regret of $\BBPull$: Proof of \cref{thm:bb2-regret}}

We prove \cref{thm:bb2-regret}, restated below. 
\Rbbpull*
The intuition is that in expectation, the number of times that an arm is pulled in $\BBPull(\ALG)$ before feedback is observed is at most $1 / \min_i f_i$. This means that we can upper bound the regret of $\BBPull(\ALG)$ as $1 / \min_i f_i$ times the regret of $\ALG$.

\begin{proof}[Proof of Theorem \ref{thm:bb2-regret} (Regret of $\BBPull$)]
Recall that the regret guarantees for $\BBPull$ apply only to stochastic losses.
To relate the regret of $\BBPull(\ALG)$ to the regret of $\ALG$, we consider the outputs of $\ALG$ while  $\BBPull(\ALG)$ is evaluated. Recall that $\Phi$ is the number of times that $\ALG$ is called.
Note that the simulated version of $\BBPull(\ALG)$, Algorithm~\ref{algo:BB-pull-simulated}, is run with the set of random variables $Q_{j,\phi}$ for $j\in [K]$ and $\phi \in [T]$, such that $Q_{j,\phi}\sim \mathrm{Geom}(f_j)$, independently. Here,  $Q_{i_\phi^\ALG, \phi}$ denotes the number of times of arm $i^\ALG_\phi$ is pulled until feedback is observed.
Recall that $\bell_i$ denotes the mean loss of arm $i$ and let $i^* = \arg\min_i \bell_i$ be the arm with optimal expected loss. The (pseudo-)regret of $\BBPull(\ALG)$ can be expressed as follows: 
\begin{align*}
\E\left[\sum_{t=1}^T \bell_{i_t}\right] - \min_{i} \sum_{t=1}^{T} \bell_{i} 
&= \E\left[ \sum_{\phi=1}^{\Phi} \sum_{i \in \calA} \mathds 1[i^{\ALG}_\phi = i] \cdot Q_{i,\phi} \cdot \left(\bell_{i} - \bell_{i^*}\right) \right]\\
& \leq \E\left[ \sum_{\phi=1}^{T} \sum_{i \in \calA} \mathds 1[i^{\ALG}_\phi = i] \cdot Q_{i,\phi} \cdot\left(\bell_{i} - \bell_{i^*}\right)\right] \\
&= \E\left[ \sum_{\phi=1}^{T} \sum_{i \in \calA} \mathds 1[i^{\ALG}_\phi = i] \cdot \E[Q_{i,\phi}] \cdot \left(\bell_{i} - \bell_{i^*}\right)\right] \\
&\le \frac{1}{\min_i f_i} \underbrace{\E\left[ \sum_{\phi=1}^{T} \sum_{i \in \calA} \mathds 1[i^{\ALG}_\phi = i] \cdot \left(\bell_{i} - \bell_{i^*}\right)\right]}_{(1)},
\end{align*}
where the second transition follows by noting that, as described above, Algorithm~\ref{algo:BB-pull-simulated}
is run with well-defined variables $Q_{i,\phi}\geq 0$ for all $\phi \le T$ and $\bell_i - \bell_{i^*} \geq 0$ for all $i\in \calA$, so that we can extend the summation to $\phi \in (\Phi, T]$. In the third transition, the outer expectation is over $\ALG$ and the inner expectation is over the feedback observations. And the last transition uses $\mathbb{E}[Q_{i,\phi} ] = \frac{1}{f_{i}} \le \frac{1}{\min_i f_i}$.

To relate (1) to the regret of $\ALG$, we observe that in $\BBPull(\ALG)$, the algorithm $\ALG$ also receives stochastic losses with mean $\bell_i$ when it pulls $i_\phi^{\ALG} = i$ that are identically distributed as in the original instance $\calI$. This means that (1) is exactly equal to the regret of $\ALG$ in an instance with stochastic losses over $T$ time steps. This completes the proof.
\end{proof}

\subsubsection{Monotonicity of $\BBPull$: Proof of \cref{thm:feedbackmonotonicitybbpull}}
\label{subsubsec:bbpull-mono}

Here, we formalize the coupling argument which will allow us to show positive feedback monotonicity in $\FOC$ and negative feedback monotonicity in $\APC$ for $\BBPull$ applied to an underlying algorithm $\ALG$. A very similar approach will be used to prove Theorems \ref{thm:feedbackmonotonicitybbda}, \ref{thm:feedbackmonotonicitybbpullaae}, and \ref{thm:feedbackmonotonicitybbdaaae} in the following sections, though those arguments will require a slightly more complex conditioning step. 

For reference, we restate the result below. 
\Monobbpull*

We are now ready to proceed with the main coupling argument. 
\begin{proof}[Proof of Theorem \ref{thm:feedbackmonotonicitybbpull} (Monotonicity of $\BBPull$).]
Fix an instance $\calI = \left\{\calA, \calF, \calL\right\}$ with stochastic losses. Let $\tilde{f_i} \ge f_i$, and let $\widetilde{\calI} = \left\{\calA, \calF(i), \calL\right\}$. We will denote the time horizon of the transformed algorithm on $\calI$ as $\Phi$, as before, and the time horizon of the transformed algorithm on $\widetilde \calI$ as $\widetilde \Phi$.
We will analyze $\BBPull(\ALG)$ by comparing the behavior of \cref{algo:BB-pull-simulated} on $\calI$ and on $\widetilde{\calI}$ in three steps as follows:
\begin{enumerate}
    \item We construct a probability coupling between the sequence of random variables $Q_{j,\phi}$ and $\widetilde{Q}_{j,\phi}$ for $j\in [K]$ and $\phi = 1, \dots, \infty$. This coupling ensures that 
    $Q_{i,\phi} \geq \widetilde{Q}_{i,\phi}$ for arm $i$ and $Q_{j,\phi}= \widetilde{Q}_{j,\phi}$ for all other arms $j\neq i$, for all $\phi$.\footnote{\label{footnote:infinite-seq}Constructing an infinitely long sequence is only for convenience in using \cref{lem:joint-losses-identical}; we only consume at most $T$ of these random variables in any algorithm for analysis.}

    \item We call \cref{algo:BB-pull-simulated} on $\calI$ and $\widetilde\calI$
    with $Q_{j,\phi}$ and  $\widetilde{Q}_{j,\phi}$ for $j\in [K]$ and $\phi = 1, \dots, \infty$, respectively. Using \cref{lem:joint-losses-identical}, we argue that for any $\Phi^*$, $(i^\ALG_1, \dots, i^\ALG_{\Phi^*})\stackrel{d}{=}(\tilde i^\ALG_1, \dots, \tilde i^\ALG_{\Phi^*})$; then, we couple the arm pulls on each instance so $(i^\ALG_1, \dots, i^\ALG_{\Phi^*}) = (\tilde i^\ALG_1, \dots, \tilde i^\ALG_{\Phi^*})$.
    \item By this step, random variables $Q_{j,\phi}$,   $\widetilde{Q}_{j,\phi}$,  $i^\ALG_\phi$, and $\tilde i^\ALG_\phi$ are fixed according to the above coupling. As a final step, we modify step 2 so that \cref{algo:BB-pull-simulated} terminates after $T$ rounds. In this case, $\ALG$ may be called a different number of times, $\Phi \leq \widetilde{\Phi}$, on instance $\calI$ and $\widetilde{\calI}$. We handle this by showing that this impacts the monotonicity in the claimed  direction.
\end{enumerate}

\paragraph{Step 1: Coupling realizations of feedback observations.}
Note that for $\tilde f_{i} > f_i$, the distribution of $\widetilde{Q}_{i, \phi}$ is stochastically dominated by $Q_{i, \phi}$. That is, as the feedback probability increases, we need fewer pulls to observe feedback when that arm is pulled.
Therefore, there is a joint probability distribution over $(Q_{j,\phi}, \widetilde{Q}_{j,\phi})$ such that for all $\phi$, with probability $1$ the following hold: $Q_{i,\phi} \geq \widetilde{Q}_{i,\phi}$ and for all $j\neq i$, $Q_{j,\phi}= \widetilde{Q}_{j,\phi}$.
This also gives us a coupling, that is a joint distribution, over $\left( \{Q_{j,\phi}\}_{j\in [K], \phi \in \{1, \dots, \infty\}}, \{ \widetilde{Q}_{j,\phi} \}_{j\in [K], \phi\in\{1, \dots, \infty\}}\right)$ that meets the aforementioned property. (See \cref{footnote:infinite-seq} about dealing with infinitely long sequences.)

\paragraph{Step 2: Coupling arms pulled by $\ALG$ across instances $\calI$ and $\widetilde{\calI}$.}
We next consider \Cref{algo:BB-pull-simulated} on two instances $\calI$ and $\widetilde{\calI}$ using the coupled sequence of random variables $Q_{j,\phi}$ and  $\widetilde{Q}_{j,\phi}$ for $j\in [K]$ and $\phi = 1, \dots, \infty$, respectively, as coupled in in Step 1. Conditioned on these sequences, we now apply \cref{lem:joint-losses-identical}. Note that the preconditions of this lemma are met for any $\Phi^*$, so we have that $(i^\ALG_1, \dots, i^\ALG_{\Phi^*})$ and $(\tilde i^\ALG_1, \dots, \tilde i^\ALG_{\Phi^*})$ are identically distributed. This allows us to consider a joint probability distribution over $(i^\ALG_1, \dots, i^\ALG_{\Phi^*}, \tilde i^\ALG_1, \dots, \tilde i^\ALG_{\Phi^*})$ such that 
$i^\ALG_\phi = \tilde{i}^\ALG_\phi$ for all $\phi \in [\Phi^*]$.

\begin{figure}\centering
    \includegraphics[width=0.99\textwidth]{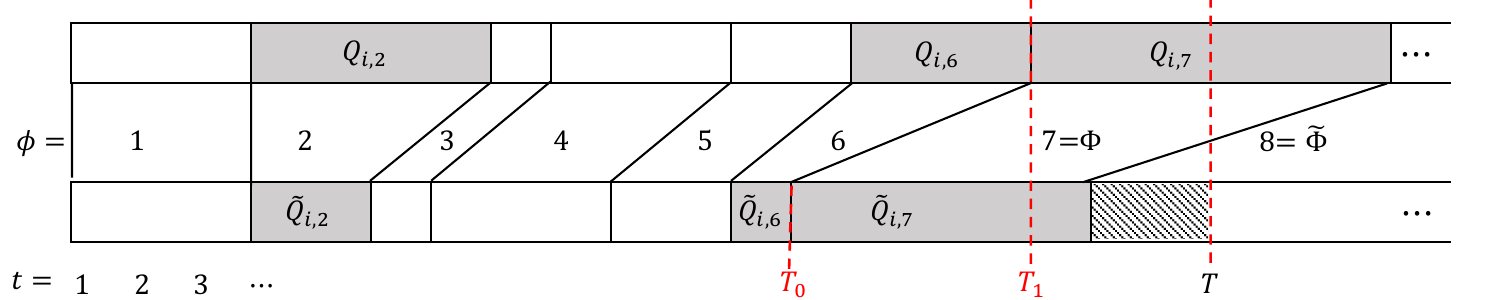}
    \caption{Timelines of $\BBPull(\ALG)$ on instances $\cal I$ (top row) and $\widetilde{\calI}$ (bottom row) are demonstrated. Each time step $t\in [T]$ maps to a block number in $\calI$ that is no more than its block number in $\widetilde{\calI}$. The total number of times $\ALG$ is called in instance $\calI$, $\Phi$, and the number of times it is called in $\widetilde{\calI}$, $\widetilde \Phi$, satisfy $\Phi\leq \widetilde{\Phi}$.}
    \label{fig:bbpull-time}
\end{figure}

\paragraph{Step 3: Handling different stopping times.}
We now have  random variables $Q_{j,\phi}$, $\widetilde{Q}_{j,\phi}$, $i^\ALG_\phi$, $\tilde i^\ALG_\phi$ are all fixed and for all $\phi = 1, \cdots, \infty$ satisfy 
$i^\ALG_\phi = \tilde{i}^\ALG_\phi$,
$Q_{i,\phi} \geq \widetilde{Q}_{i,\phi}$, and $Q_{j,\phi}= \widetilde{Q}_{j,\phi}$ for $j\neq i$.

We next consider the actual performance of \Cref{algo:BB-pull-simulated} on instances $\calI$ and $\widetilde{\calI}$ over $T$ timesteps. 
Note that this is exactly the same as history of arms played by \Cref{algo:BB-pull-simulated} on $\calI$ and $\widetilde{\calI}$, respectively, in Step 2 of the analysis, except that the algorithm now terminates at time $T$.
Therefore, the number of rounds $\ALG$ is called in each of these two instances may be different. Notice that   $\Phi$ and $\widetilde\Phi$ are deterministic variables, since the arms pulled and the number of rounds until an observation is made are all fixed.
It is not hard to see that $\Phi\leq \widetilde\Phi$. 
This is perhaps best seen by considering Figure~\ref{fig:bbpull-time}.
We note that the time horizon of $\BBPull(\ALG)$ for the two instances can be thought of as two sequence of blocks $[\Phi]$ and $[\widetilde \Phi]$.
For each $\phi \leq \min\{\Phi,\widetilde\Phi\}$, $i^\ALG_\phi =\tilde i^\ALG_\phi$. 
Therefore, the only case where $Q_{i^\ALG_\phi, \phi}\neq \widetilde{Q}_{i^\ALG_\phi, \phi}$ is when $i^\ALG_\phi = i$; these are shown by gray blocks in Figure~\ref{fig:bbpull-time}.
In this case  $Q_{i^\ALG_\phi, \phi} \geq \widetilde{Q}_{i^\ALG_\phi, \phi}$ by the coupling we designed above. 
In all other blocks, where $i^\ALG_\phi \neq i$, we have that $Q_{i^\ALG_\phi, \phi} =\widetilde{Q}_{i^\ALG_\phi, \phi}$.
Let $\phi(t)$ (resp. $\widetilde\phi(t)$) be the function that maps timesteps on the timescale indexed by $t$ to timesteps on $\ALG$'s timescale on $\calI$ (resp. $\widetilde \calI$). We can now see that every time step $t\in [T]$ maps to blocks $\phi(t)$ and $\widetilde{\phi}(t)$ in instances $\calI$ and $\widetilde{\calI}$, respectively, such that $\widetilde{\phi}(t) \geq \phi(t)$. This implies that $\Phi\leq \widetilde{\Phi}$, because $\phi(T) \leq \widetilde\phi(T)$.

\paragraph{Notation for Analyzing $\FOC$ and $\APC$.}
The remainder of the proof boils down to analyzing $\FOC$ and $\APC$ on $\calI$ and $\tilde{\calI}$. We use the coupling thus far with the property that $Q_{j,\phi}$, $\widetilde{Q}_{j,\phi}$, $i^\ALG_\phi$, $\tilde i^\ALG_\phi$ are all fixed and for all $\phi = 1, \cdots, \infty$ satisfy 
$i^\ALG_\phi = \tilde{i}^\ALG_\phi$,
$Q_{i,\phi} \geq \widetilde{Q}_{i,\phi}$, and $Q_{j,\phi}= \widetilde{Q}_{j,\phi}$ for $j\neq i$.
Figure~\ref{fig:bbpull-time} provides an intuitive proof of the desired claims.

To formalize these claims, we introduce the following additional notation.
Given a range $R\subseteq [T]$, let $\FOC_i^{R}(\calI)$  be the number of times feedback is observed on arm $i$  in timesteps in $R$ on $\widetilde \calI$, and let $\FOC_i^{R}(\widetilde \calI)$  be the number of times feedback is observed on arm $i$  in timesteps in $R$ on $\tilde \calI$. Similarly, let $\APC_i^{R}(\calI)$ be the number of times arm $i$ is pulled in timesteps in $R$ on $\calI$, and let $\APC_i^{R}(\widetilde \calI)$ be the number of times arm $i$ is pulled in timesteps in $R$ on $\widetilde \calI$. Since we have conditioned on $Q_{j,\phi}$, $\widetilde{Q}_{j,\phi}$, $i^\ALG_\phi$, $\tilde i^\ALG_\phi$, we see that at this point $\FOC_i^{R}(\calI)$, $\FOC_i^{R}(\widetilde \calI)$, $\APC_i^{R}(\calI)$, and $\APC_i^{R}(\widetilde \calI)$ are all deterministic.

Since we will analyze the last time block separately, we let $T_1 = \sum_{\phi \in [\Phi-1]} Q_{i^\ALG_\phi, \phi} \leq T$ be the time step referring to the penultimate block of $\BBPull(\ALG)$ on $\calI$. We let $T_0$ be the corresponding time step on instance $\widetilde{\calI}$ defined by $T_0 = \sum_{\phi \in [\Phi-1]} \widetilde{Q}_{i^\ALG_\phi, \phi}$ (note that the expression sums over $\phi \in [\Phi - 1]$, and not over $\phi \in [\tilde{\Phi} - 1]$). By definition, it holds that $T_0 \le T_1$.

\paragraph{Analyzing $\FOC$.} 

We first prove that $\FOC_i^{[T]}(\tilde \calI) - \FOC_i^{[T]}(\calI) \ge 0$. First, we observe that:
\[
 \FOC_i^{[T_1]}(\calI) = \sum_{\phi \in [\Phi -1]} \1[i^\ALG_\phi = i] = \FOC_i^{[T_0]}(\tilde \calI).
\]
It thus suffices to show that:
\[\FOC_i^{[T]}(\calI) - \FOC_i^{[T_1]}(\calI) \le \FOC_i^{[T]}(\tilde \calI) - \FOC_i^{[T_0]}(\tilde \calI).\]
For ease of exposition, we now consider two cases.
\begin{itemize}
\item Case 1: The $\Phi$th (last) block of $\calI$ pulls $j\neq i$, i.e., $i^\ALG_\Phi \neq i$.  In this case, we have that:
\[\FOC_i^{[T]}(\calI) - \FOC_i^{[T_1]}(\calI) = 0 \le \FOC_i^{[T]}(\calI) - \FOC_i^{[T_0]}(\tilde \calI).  \]
\item Case 2: $i$ was pulled in the $\Phi$th block of $\calI$, i.e., $i^\ALG_\Phi =i$. In this case, we have that:
\[\FOC_i^{[T]}(\calI) - \FOC_i^{[T_1]}(\calI) = \1\left[Q_{i, \Phi} \le T - T_1 \right] \le \1\left[\tilde{Q}_{i, \Phi} \le T - T_1 \right] \le \1\left[\tilde{Q}_{i, \Phi} \le T - T_0 \right] \le \FOC_i^{[T]}(\calI) - \FOC_i^{[T_0]}(\tilde \calI),  \]
as desired. 
\end{itemize}
These two cases prove that $\FOC_i^{[T]}(\tilde \calI) - \FOC_i^{[T]}(\calI) \ge 0$.

Taking an expectation over $Q_{j,\phi}$, $\widetilde{Q}_{j,\phi}$, $i^\ALG_\phi$, $\tilde i^\ALG_\phi$, we see that:
\[\FOC_i(\tilde \calI) - \FOC_i(\calI) = \E\left[\FOC_i^{[T]}(\widetilde{\calI}) - \FOC_i^{[T]}(\calI)\right] \ge 0.  \]

\paragraph{Analyzing $\APC$.}

We first prove that $\APC_i^{[T]}(\tilde \calI) - \APC_i^{[T]}(\calI) \le 0$. We claim that 
\begin{equation}
    T_1 - T_0 = \sum_{\phi \in [\Phi-1]} \mathds{1}(i^\ALG_\phi = i) \left( Q_{i, \phi} - \widetilde{Q}_{i, \phi} \right). \label{eq:bbpull-apc-T_1}
\end{equation}
This is due to the fact that, as discussed above, the only case where $Q_{i^\ALG_\phi, \phi}\neq \widetilde{Q}_{i^\ALG_\phi, \phi}$ is when $i^\ALG_\phi = i$ (these are shown by gray blocks in Figure~\ref{fig:bbpull-time}) in which case $Q_{i^\ALG_\phi, \phi} \geq \widetilde{Q}_{i^\ALG_\phi, \phi}$.
In all other cases, $Q_{i^\ALG_\phi, \phi} =\widetilde{Q}_{i^\ALG_\phi, \phi}$.
Equation~\eqref{eq:bbpull-apc-T_1} implies that 
\begin{equation}
\APC_i^{[T_1]}(\calI) - \APC_i^{[T_0]}(\widetilde{\calI}) = T_1 - T_0.    \label{eq:bbpull-apc-T1-T0}
\end{equation}
For ease of exposition, we now consider two cases.
\begin{itemize}
\item Case 1: $\Phi$th block of $\calI$ pulls $j\neq i$, i.e., $i^\ALG_\Phi \neq i$. In this case, we have that $\APC_i^{[T]}(\calI) = \APC_i^{[T_1]}(\calI)$. 
Moreover, within the last $T - T_0$ timesteps of $\widetilde{\calI}$ at least $T - T_1$ are dedicated to pulling arm $j\neq i$ in the $\Phi$th block of $\widetilde{\calI}$. Thus, 
\[
\APC_i^{[T]}(\widetilde{\calI}) \leq \APC^{[T_0]}_i(\widetilde{\calI}) + T- T_0 - (T - T_1)= \APC_i^{[T_0]}(\widetilde{\calI}) + T_1 - T_0 =  \APC_i^{[T_1]}(\calI) =\APC_i^{[T]}(\calI),
\]
where the second to last equality is by \cref{eq:bbpull-apc-T1-T0}.
\item Case 2: $i$ was pulled in the $\Phi$th block of $\calI$, i.e., $i^\ALG_\Phi =i$. In this case, we have that $\APC_i^{[T]}(\calI) = \APC_i^{[T_1]}(\calI) + T-T_1$.
Furthermore, 
\[
\APC_i^{[T]}(\widetilde{\calI}) \leq \APC^{[T_0]}_i(\widetilde{\calI}) + T- T_0 = \APC_i^{[T_1]}(\calI) + T - T_1 = \APC_i^{[T]}(\calI),
\]
where the second equation is by \cref{eq:bbpull-apc-T1-T0}.
\end{itemize}
These two cases prove that $\APC_i^{[T]}(\tilde \calI) - \APC_i^{[T]}(\calI) \le 0$.
Taking an expectation over $Q_{j,\phi}$, $\widetilde{Q}_{j,\phi}$, $i^\ALG_\phi$, $\tilde i^\ALG_\phi$, we see that:
\[\APC_i(\tilde \calI) - \APC_i(\calI) = \E\left[\APC_i^{[T]}(\widetilde{\calI}) - \APC_i^{[T]}(\calI)\right] \le 0.  \]
This completes the proof.
\end{proof}

\subsection{Proofs for Section \ref{subsec:bb3}: $\BBDivideAdjusted$}
\label{appendix:bb3}

To analyze $\BBDivideAdjusted$, we will combine the approaches of our analyses for $\BBPull$ and $\BBDivide$. For regret, we will analyze the the per-block regret; for monotonicity, we will make a coupling argument. For both we will analyze a simulated version of $\BBDivideAdjusted$, which we present in the following section.

We restate the algorithm below to clarify the dependence on the input $\fst \in (0, \min_if_i]$.
\BBDA*

\subsubsection{Constructing a simulated version of $\BBDivideAdjusted$}

As before, we construct a simulated version of  $\BBDivideAdjusted(\ALG)$. 
Again, we will define a sequence of random variables that determine how $\BBDivideAdjusted(\ALG)$ will proceed on $\calI$ and $\widetilde{\calI}$, and simulate a statistically indistinguishable version of $\BBDivideAdjusted(\ALG)$ in \Cref{algo:BB-da-simulated}. Again, we will index the time horizon with $\ALG$ with $\phi$.

\begin{itemize}
    \item \textit{Losses:}
For each round $\phi \in [\Phi]$ of $\ALG$ and each arm $j \in [K]$, $\ell'_{j, \phi}$ is the placeholder for the loss passed to $\ALG$ if $\ALG$ were to observe the loss of arm $j$ at time $\phi.$  Since we are in the stochastic loss setting, $\ell'_{j,\phi}$ is a random variable drawn from the distribution of arm $j$ (with mean $\bar{\ell}_j$) independently across $\phi$ and $j$, if at least one observation is realized in block $\phi$, and $\ell'_{j,\phi} =1$ otherwise. 
We note that these losses are only observed up to timestep $\Phi$ (which is a random variable less than $T$) and only for the specific arms pulled by the algorithm.

    \item \textit{Feedback probabilities:} for each arm $j \in [K]$ and $\phi \in [T]$, let $U_{j,\phi} \sim \text{Bern}\left(1 - (1-f_j)^{B_j}\right)$ denote the indicator variable for whether feedback will be observed in block $\phi$, where $B_j = \ceil{\frac{3\ln(T)}{\fst}(1 + f_j)}$, for $\fst \in (0, \min_if_i]$.

\end{itemize}

\begin{algorithm2e}
\caption{Simulated version of $\BBDivideAdjusted(\ALG, \fst)$}
\label{algo:BB-da-simulated}
\DontPrintSemicolon
\LinesNumbered
\KwIn{A sequence of integers in $\{0,1\}$, $U_{j,\phi}$ for $\phi \in [T]$ and $j \in [K]$; $\fst \in (0, \min_if_i]$}
Initialize $\phi = 1$. \;
For each arm $j \in [K]$, set $B_j = \ceil{\frac{3\ln(T)}{\fst}(1 + f_j)}$. \;
\While {$t \leq T$}{
    Let $i^{\ALG}_\phi = \ALG(\phi)$ be the output of $\ALG$ at timestep $\phi$. \;
    Let $S_\phi = \{t, t + 1, \dots, \min(t + B_{i^{\ALG}_\phi}, T)\}$. \;
    \For{$t' \in S_\phi$}{
    Pull $i_{\phi}^{\ALG}$, i.e. $i_{t'} = i_\phi^\ALG$, and let $t \gets t + 1$.}
    \If{$U_{i^{\ALG}_\phi, \phi} = 1$}{Observe 
    and return $\ell'_{i_\phi^\ALG, \phi} := \ell_{i_t,t}$ to $\ALG$.}
    \Else{Return $\ell'_{i_\phi^\ALG, \phi} = 1$ to $\ALG$.}
    Let $\phi \gets \phi + 1.$
    }
\end{algorithm2e}

\begin{lemma}
    \label{lem:joint-losses-identical-bbda}
     For each arm $j \in [K]$, set $B_j = \ceil{\frac{3\ln(T)}{\min_if_i}(1 + f_j)}$.
     Let $\Phi^*$ be any positive integer and $T = \Phi^* \cdot \max_jB_j$ be the time horizon.
     Let $\calI = \{\calA, 
     \calF, 
     \calL\}$ be a stochastic instance with time horizon $T$; let $\tilde f_i \geq f_i$ and $\widetilde \calI = \{\calA, \tilde\calF(i), \calL\}$. 
    Let $U_{j, \phi} = \widetilde U_{j,\phi}$ for $j \in [K]$ and $\phi \in [T]$.
    Run \cref{algo:BB-da-simulated} with parameters $\left\{U_{j, \phi}\right\}_{j \in [K], \phi \in [T]}$ on $\calI$ and run \cref{algo:BB-da-simulated} with parameters $\left\{\widetilde{U}_{j, \phi}\right\}_{j \in [K], \phi \in [T]}$ on $\widetilde \calI$.
    Let 
   $i^\ALG_\phi$ and $\tilde{i}^\ALG_\phi$ denote the  arms pulled in the description of \cref{algo:BB-da-simulated} for the two instances, respectively.
      Then, 
      the following two vector valued random variables are identically distributed: $(i^\ALG_1, \dots, i^\ALG_{\Phi^*})$ and $(\tilde{i}^\ALG_1, \dots, \tilde{i}^\ALG_{\Phi^*})$.
\end{lemma}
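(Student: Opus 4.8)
The plan is to mirror the proof of \Cref{lem:joint-losses-identical} (its $\BBPull$ analog), adapting it to the one new feature of $\BBDivideAdjusted$: the loss fed back to $\ALG$ at a block is a \emph{censored} draw --- a fresh sample from the pulled arm's distribution when the block's observation indicator is $1$, and the fixed value $1$ when it is $0$. As there, I would first fix the internal randomness $b$ of $\ALG$ to be common to the two runs and write $\ALG_b$ for the resulting deterministic policy. Since the hypothesis pins the block sizes $B_j$ (defined through the original $f_j$, hence identical on $\calI$ and $\widetilde\calI$) and the indicators $U_{j,\phi}=\widetilde U_{j,\phi}$ to be common across the two instances, the only instance-dependent randomness remaining is in the loss samples themselves.

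Next I would separate the clean draws from the censoring. Let $Z_{j,\phi}$ be the underlying i.i.d.\ sample from arm $j$'s loss distribution (mean $\bell_j$); the value returned to $\ALG$ at block $\phi$ if it selects arm $j$ is then the deterministic function $\ell'_{j,\phi}=Z_{j,\phi}\cdot\mathds{1}[U_{j,\phi}=1]+1\cdot\mathds{1}[U_{j,\phi}=0]$ of $Z_{j,\phi}$ and the fixed input $U_{j,\phi}$. Because $\calL$ is unchanged between $\calI$ and $\widetilde\calI$ (only $\calF$ differs), the full table $\{Z_{j,\phi}\}_{j\in[K],\phi}$ is identically distributed across the two instances --- in particular for the modified arm $i$, whose loss distribution is untouched. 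Applying the common fixed map entrywise then shows the effective loss tables $\{\ell'_{j,\phi}\}$ are identically distributed across instances.

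I would then define $F_b$ exactly as in \Cref{lem:joint-losses-identical}: the deterministic map sending a table of effective losses $\{\ell'_{j,\phi}\}_{j\in[K],\phi\le\psi}$ to the arm sequence $(i_1^\ALG,\dots,i_\psi^\ALG)$ that $\ALG_b$ produces as it reads off, block by block, the loss of whichever arm it selected. Since $\ALG$ is oblivious to the feedback probabilities, the block sizes, and the indicators, $F_b$ is a function of $b$ and the loss table alone, hence the \emph{same} map on both instances. Composing $F_b^{\calI}=F_b^{\widetilde\calI}$ with the equality in distribution of the effective loss tables gives $(i_1^\ALG,\dots,i_{\Phi^*}^\ALG)\stackrel{d}{=}(\tilde i_1^\ALG,\dots,\tilde i_{\Phi^*}^\ALG)$ conditionally on $b$, and integrating over $b$ yields the claim; the choice $T=\Phi^*\cdot\max_j B_j$ ensures block $\Phi^*$ is reached on both runs, so truncation never touches the first $\Phi^*$ blocks.

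The only genuine departure from the $\BBPull$ argument, and the step I would state most carefully, is the censoring: unlike $\BBPull$, where every returned loss is a clean sample so identical distribution is immediate, here the loss returned for the modified arm $i$ has failure probability $(1-f_i)^{B_i}$ that differs across instances. Fixing $U_i=\widetilde U_i$ as a common input is precisely what neutralizes this, collapsing the effective loss to a deterministic function of an untouched clean draw; the crux is the (routine but essential) observation that the lemma is entitled to treat the $U_{j,\phi}$ as arbitrary common inputs rather than as their random Bernoulli realizations, exactly as \Cref{lem:joint-losses-identical} treats the $Q_{j,\phi}$.
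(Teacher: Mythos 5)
Your proposal is correct and follows essentially the same route as the paper's proof: fix the algorithm's internal randomness $b$, observe that the loss actually returned to $\ALG$ in block $\phi$ is a deterministic function of a clean draw from the (unchanged) loss distribution and the common indicator $U_{j,\phi}=\widetilde U_{j,\phi}$, conclude the effective loss tables are identically distributed, and push this through the deterministic map $F_b$ before integrating over $b$. Your explicit decomposition $\ell'_{j,\phi}=Z_{j,\phi}\cdot\mathds{1}[U_{j,\phi}=1]+1\cdot\mathds{1}[U_{j,\phi}=0]$ is a slightly cleaner way of saying what the paper says in prose. One small inaccuracy worth correcting: the block sizes are \emph{not} identical across the two instances --- on $\widetilde\calI$ the modified arm $i$ gets block size $\ceil{\tfrac{3\ln T}{\fst}(1+\tilde f_i)}\ge B_i$, which is the entire mechanism behind $\BBDivideAdjusted$'s positive monotonicity --- but this does not affect the present lemma, since block sizes never enter the $F_b$ argument and only matter for ensuring block $\Phi^*$ is reached within $T$.
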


\begin{proof} 
Let $\{\ell'_{j, \phi}\}_{j \in [K], \phi \in [\psi]} $ denote possible loss sequences observed on $\calI$ up to some $\psi \leq \Phi^*$ and $\{\tilde\ell'_{j, \phi}\}_{j \in [K], \phi \in [\psi]} $ denote possible loss sequences observed on $\widetilde\calI$ up to the same $\psi$. 
Let us fix the bit of randomness $b$ used for $\ALG$ on $\calI$ to be the same as the bit of randomness used for $\ALG$ on $\widetilde\calI$.
Because we have set $T = \Phi^* \cdot \max_jB_j$, we are guaranteed that blocks $\phi = 1, \dots, \psi$ will have been reached on both $\tilde \calI$ and $\calI$.
Conditioned on $b$,
let $F_b: [0,1]^{K \times \psi} \to [K]^{\psi}$ be the mapping from 
all $\ell'_{j,\phi}$ for $\phi \leq \psi$,
to the sequence of arms $\ALG$ would have pulled corresponding to those losses, that is, 
\begin{align*}
    F_b\left(\{\ell'_{j, \phi}\}_{j \in [K], \phi \in [\psi]}\right) = (i_1^\ALG, i_2^\ALG, \dots, i_\psi^\ALG).
\end{align*}
Note that $F_b$ does not depend on the feedback probabilities $f_i$, because $\ALG$ is fully oblivious to these quantities. For any $b$, $F_b$ is fully deterministic. 
Therefore, the distribution of $(i_1^\ALG, i_2^\ALG, \dots, i_\psi^\ALG)$ is fully specified by the distributions of $\{\ell'_{j, \phi}\}_{j \in [K], \phi \in [\psi]}$, and the distribution of $(\tilde i_1^\ALG, \tilde i_2^\ALG, \dots, \tilde i_\psi^\ALG)$ is fully specified by the distributions of $\{\tilde \ell'_{j, \phi}\}_{j \in [K], \phi \in [\psi]}$.

In our specification of \cref{algo:BB-da-simulated}, the sequences of losses passed to $\ALG$ are determined not only by the underlying loss distributions for each arm selected $i_t$, but also by the random variables $U_{j, \phi}$ which determine whether $\ALG$ will observe $\ell_{i_t, t}$ (which is actually sampled from the distribution of the selected arm $i_t$), or a loss of $1$. Conditioning on $U_{j,\phi} = \widetilde U_{j,\phi}$ for all $j$ and $\phi$ gives us that the loss sequences are distributed identically across instances. Therefore, we have that
\begin{align*}
\{\ell'_{j, \phi}\}_{j \in [K], \phi \in [\psi]} &\stackrel{d}{=}  \{\tilde\ell'_{j, \phi}\}_{j \in [K], \phi \in [\psi]}    
\\\implies F_b\left(\{\ell'_{j, \phi}\}_{j \in [K], \phi \in [\psi]}\right) &\stackrel{d}{=}  F_b\left(\{\tilde\ell'_{j, \phi}\}_{j \in [K], \phi \in [\psi]} \right)
\\\implies (i_1^\ALG, i_2^\ALG, \dots, i_\psi^\ALG) &\stackrel{d}{=} (\tilde i_1^\ALG, \tilde i_2^\ALG, \dots, \tilde i_\psi^\ALG),
\end{align*}
where $\stackrel{d}{=}$ denotes identically distributed relationship. Finally, because this holds conditionally over any arbitrary $b$, we can integrate over all possible random bits $b$ to establish the claim.
\end{proof}

\begin{lemma}
\label{lem:sim-indistinguishable-bbda}
Fix an instance $\calI$.
    Let $\{i_t^{\text{orig}}\}_{t \in [T]}$ be a sequence of random variables that represents the arms selected by \cref{algo:BB-divide-adjusted} on $\calI$ over the time horizon $T$, and $\{i_t^\text{sim}\}_{t \in [T]}$ be a sequence of random variables that represents the arms selected by  \cref{algo:BB-da-simulated} on an identical instance $\calI$. Then the sequence $\{i^\text{orig}_t\}_{t \in [T]}$ is distributed identically to $\{i^\text{sim}_t\}_{t \in [T]}$. 
\end{lemma}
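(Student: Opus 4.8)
The plan is to mirror the three-step structure from the proof of \Cref{lem:simulated-indistinguishable} (the analogous statement for $\BBPull$), but to exploit the fact that $\BBDivideAdjusted$ is substantially simpler in one respect: its block sizes are \emph{deterministic}. Concretely, once $\ALG$ selects an arm $j = i_\phi^\ALG$ in block $\phi$, both \Cref{algo:BB-divide-adjusted} and \Cref{algo:BB-da-simulated} pull $j$ for exactly $B_j = \ceil{\frac{3\ln T}{\fst}(1+f_j)}$ rounds (up to truncation at $T$), with $B_j$ a fixed function of the selected arm and the known $f_j$. Thus the elaborate ``coupling the block lengths'' argument (Step 2 of \Cref{lem:simulated-indistinguishable}), which was needed there only because $\BBPull$'s block lengths were random geometric variables, is vacuous here: the length of each block is a deterministic function of the sequence of arms $\ALG$ selects. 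It therefore suffices to show that the sequence of \emph{losses passed to $\ALG$} is identically distributed under the two algorithms; the deterministic map $F_b$ (the function, used in \Cref{lem:joint-losses-identical} and \Cref{lem:simulated-indistinguishable}, that sends a loss sequence together with $\ALG$'s internal randomness bit $b$ to the induced sequence of arm selections $(i_1^\ALG, \dots, i_\psi^\ALG)$) then yields identically distributed arm selections, and hence identically distributed pull sequences $\{i_t\}_{t\in[T]}$.

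The heart of the argument is to match, block by block, the distribution of the loss $\ell'_{j,\phi}$ returned to $\ALG$. Fixing $\ALG$'s randomness bit $b$, I would argue by induction on $\phi$ that the joint law of $\bigl(i_1^\ALG, \ell'_{i_1^\ALG,1}, \dots, i_\phi^\ALG, \ell'_{i_\phi^\ALG,\phi}\bigr)$ agrees across the two algorithms. For the inductive step, condition on the history through block $\phi-1$; by the induction hypothesis and the determinism of $F_b$, the arm $j = i_\phi^\ALG$ selected in block $\phi$ is the same. In \Cref{algo:BB-divide-adjusted}, feedback is observed in block $\phi$ iff $X_{i_t,t} = 1$ for some $t$ among the $B_j$ pulls, an event of probability $1-(1-f_j)^{B_j}$; conditioned on observation, the returned loss is a uniform draw from $\{\ell_{i_t,t} : X_{i_t,t}=1\}$, and conditioned on no observation it equals $1$. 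In \Cref{algo:BB-da-simulated}, observation is governed by $U_{j,\phi}\sim\Bern\bigl(1-(1-f_j)^{B_j}\bigr)$, and the returned loss is a fresh draw from arm $j$'s distribution when $U_{j,\phi}=1$ and $1$ otherwise. The two observation probabilities coincide by construction, and---crucially---in the stochastic setting the per-round losses $\ell_{i_t,t}$ are i.i.d.\ from arm $j$'s distribution and \emph{independent} of the feedback realizations $X_{i_t,t}$; hence a uniform draw among the (nonempty) observed subset of them is itself distributed as a single fresh draw from arm $j$'s distribution. So the conditional law of $\ell'_{j,\phi}$ matches in both cases, and independence across blocks of the $X$'s (resp.\ $U$'s) and losses closes the induction.

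Finally I would handle the boundary: the terminal block may be truncated to fewer than $B_j$ rounds, so its observation probability need not equal $1-(1-f_j)^{B_j}$. This discrepancy is harmless, because after a truncated terminal block the round counter exceeds $T$ and $\ALG$ is never queried again within $[T]$; thus the loss returned from the terminal block does not influence any arm selection contributing to $\{i_t\}_{t\in[T]}$, and the two algorithms truncate their pulls at $T$ identically. Having matched the loss sequences for all non-terminal (full-size) blocks, applying $F_b$ gives $(i_1^\ALG,\dots,i_\psi^\ALG)\stackrel{d}{=}(\tilde i_1^\ALG,\dots,\tilde i_\psi^\ALG)$; since each block's length is a deterministic function of its selected arm, the full sequence $\{i_t\}_{t\in[T]}$ is a deterministic function of $\{i_\phi^\ALG\}$ and is therefore identically distributed as well. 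Integrating over $b$ via the law of total probability completes the proof. The main obstacle is the per-block loss-matching step---specifically, verifying that the original algorithm's uniform-random draw among observed losses has \emph{exactly} the same distribution as the simulated algorithm's single fresh draw, which rests on the independence of the feedback indicators from the loss values and on the i.i.d.\ stochastic structure; the truncation bookkeeping for the terminal block is the only other point requiring care.
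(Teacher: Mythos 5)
Your proposal is correct and follows essentially the same route as the paper's proof: fix $\ALG$'s randomness $b$, match the per-block observation probability $1-(1-f_j)^{B_j}$ against the Bernoulli $U_{j,\phi}$, use independence of the feedback indicators from the i.i.d.\ losses to show the uniform draw among observed losses equals a fresh draw from arm $j$'s distribution, push through the deterministic map $F_b$, and use the determinism of the block lengths to lift the identity in distribution from the $\phi$-indexed to the $t$-indexed sequence. Your explicit treatment of the truncated terminal block is a small point of extra care that the paper leaves implicit, but it does not change the argument.
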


The intuition for this lemma is similar to the proof of \cref{lem:simulated-indistinguishable}; here, we argue that the likelihood that no feedback is observed at any block $\phi$ is identically distributed for both \cref{algo:BB-divide-adjusted} and \cref{algo:BB-da-simulated}, and that taking one sample from the loss distribution (as \cref{algo:BB-da-simulated} does) is the same as taking a uniform sample out of several possible observations (as \cref{algo:BB-divide-adjusted} does).

\begin{proof}

We run \cref{algo:BB-divide-adjusted} and \cref{algo:BB-da-simulated} on identical copies of $\calI$; we distinguish each copy by $\calI^{\text{orig}}$ for \cref{algo:BB-divide-adjusted} and $\calI^{\text{sim}}$ for \cref{algo:BB-da-simulated}. 
In the first step, we introduce $F_b$ which formalizes that arm selected by $\ALG$ given the random variables $\ell'_{j,\phi}$ defined earlier. We use this in the second step to show that arms selected by $\ALG$ are distributed the same across the two algorithms. In the last step, we use the fact that the block sizes are of equal lengths across the two algorithms to show that arms pulled by \cref{algo:BB-divide-adjusted} and \cref{algo:BB-da-simulated} are distributed the same.

\paragraph{Step 1: Formalize $\ALG$ arm selection.}
Recall that  \cref{algo:BB-divide-adjusted} and \cref{algo:BB-da-simulated} both make calls to the same underlying $\ALG.$ Let $\ell'_{j, \phi}$ be, as defined earlier, the placeholder for losses passed to $\ALG$, if $\ALG$ were to observe the loss of arm $j$ at time $\phi$.
    Let $b$ be the bit of randomness used for $\ALG$ in  \cref{algo:BB-divide-adjusted} and \cref{algo:BB-da-simulated}.
Now, conditioning on $b$, let $F_b: [0,1]^{K \times \psi} \to [K]^{\psi}$ be the mapping from all $\ell'_{j, \phi}$s up to time $\phi \leq \psi$, to the sequence of arms $\ALG$ would have pulled corresponding to those losses, that is, 
\begin{align*}
    F_b\left(\{\ell_{j, \phi}\}_{j \in [K], \phi \in [\psi]}\right) = (i_1^\ALG, i_2^\ALG, \dots, i_\psi^\ALG).
\end{align*}
Note that $F_b$ does not depend on the feedback probabilities $f_i$ or the feedback observations $Q_{i,\phi}$ or $X_{i, t}$, because $\ALG$ is fully oblivious to these quantities. For any $b$, $F_b$ is fully deterministic. 
Furthermore, the simulated and real algorithms use $\ALG$ with the same bit of randomness, so $F_b^{\text{orig}} = F_b^{\text{sim}}$, and the arms selected by $\ALG$ for either \cref{algo:BB-divide-adjusted} and \cref{algo:BB-da-simulated} are fully specified by the distributions of the losses for each arm. 

\paragraph{Step 2:
Arms selected by $\ALG$ are distributed the same.
}
We first establish that $\{\ell'_{j,\phi}\}_{j \in [K], \phi \in [\psi]}$ are identically distributed.

Recall that $\ell'_{j,\phi}$ are placeholders for losses of all arms $j$ and round $\phi$ of $\ALG$ (although $\ALG$ only takes into account the random variables for arms it pulls).

    By our specification of \cref{algo:BB-da-simulated}, given $j$ and $\phi$, the event that $\ell'_{j, \phi}$ is drawn from  the distribution of arm $j$ is determined by $U_{j,\phi} \sim \text{Bern}(1 - (1-f_j)^{B_j})$ and has probability  probability $1 - (1-f_j)^{B_j}$. And, with probability $(1-f_j)^{B_j}$, $\ell'_{j,\phi} = 1$.
    
    For \cref{algo:BB-divide-adjusted}, note that $j$ will be pulled exactly $B_j$ times in each block. The likelihood that at least at one of these round a loss is generated from arm $j$ is exactly $1-(1-f_j)^{B_j}$.
    Note that in this case, $\ell'_{j, \phi}$ is drawn uniformly from the realized losses, which is equivalent to being drawn from the loss of arm $j$.
    And, with probability $(1-f_j)^{B_j}$, $\ell'_{j,\phi}$ is deterministically set to $1$.
    Note that 
    the realizations of $U_{j,\phi}$ and $X_{j,t}$ are all independent across $\phi, t$, and $K$, so we have that 
    \[
    \{\ell'^{\text{  orig}}_{j,\phi}\}_{j \in [K], \phi \in [\psi]} \stackrel{d}{=} \{\ell'^{\text{  sim}} _{j,\phi}\}_{j \in [K], \phi \in [\psi]}.
    \]
    Since $F_b$ is a deterministic map, we have that 
    \begin{align*}
        F_b\left(\{\ell'^{\text{  orig}}_{j, \phi}\}_{j \in [K], \phi \in [\psi]}\right) &\stackrel{d}{=}  F_b\left(\{\ell'^{\text{  sim}}_{j, \phi}\}_{j \in [K], \phi \in [\psi]} \right)\\
        \{i_\phi^{\ALG, \text{orig}}\}_{\phi \in [\psi]} &\stackrel{d}{=} \{i_\phi^{\ALG, \text{sim}}\}_{\phi \in [\psi]} 
    \end{align*}

\paragraph{Step 3: Arms selected by \cref{algo:BB-divide-adjusted} and \cref{algo:BB-da-simulated} are identically distributed.}
Note that by the specification of each algorithm, for every $i_\phi$ selected by $\ALG$, \cref{algo:BB-divide-adjusted} and \cref{algo:BB-da-simulated} will pull $i_\phi$ exactly $B_{i_\phi}$ times. Having steps $1$ and $2$ for $\psi > T$, gives us
\begin{align*}
     \{i_\phi^{\ALG, \text{orig}}\}_{\phi \in [\psi]} &\stackrel{d}{=} \{i_\phi^{\ALG, \text{sim}}\}_{\phi \in [\psi]}
  \\
  \{i_t^{\text{orig}}\}_{t \in [T]} &\stackrel{d}{=} \{i_t^{\text{sim}}\}_{t \in [T]}.
\end{align*}
Applying the law of total expectation over possible random bits $b$ proves the claim.

\end{proof}

\subsubsection{Regret of $\BBDivideAdjusted$: Proof of \cref{thm:bb3-regret}}

First, we prove Theorem \ref{thm:bb3-regret}. The intuition is that we can bound the size of any block by $\max B_i \le \frac{6 \ln T}{\fst}$. Since $B_i$ is sufficiently large, with high probability, there will be at least one observation in each block. Since the losses are stochastic, we can upper bound the regret of $\BBDivideAdjusted(\ALG)$ as $\max_j B_j \cdot R_\ALG(T)$ as desired. 

We restate the regret result of \cref{thm:bb3-regret} below.
\Regretbbda*

The argument requires \cref{lemma:clean-bbda}, which ensures that at least one observation from the true loss distribution is made in every block (note that this is very similar to the statement and proof of \cref{lemma:clean}, except that the block size $B$ is no longer fixed).
\begin{lemma}
    \label{lemma:clean-bbda}
    Fix an $\fst \in (0, \min_if_i]$, and let $\Phi \leq T$. Divide the time horizon $T$ into blocks of size $B_\phi \geq \frac{3 \ln T}{\fst}$ for $\phi \in \Phi$. Suppose then that for each block $\phi \in \{1,2,\dots,\Phi$\}, we play the same arm $i_\phi$ a total of $B_\phi$ times, i.e. for every round in block $\phi$, as in Algorithms \ref{algo:BB-divide-adjusted} and \ref{algo:BB-da-simulated}.
    Let $E$ be the ``\emph{clean event}'' that at least one feedback observation occurs in \emph{each} block $\phi$, i.e., that for all blocks $\phi$, $\exists t \in S_\phi: X_{i_t,t} = 1$. Then, $\Pr[E] \ge 1 - 1 / T^2$. 
    \end{lemma}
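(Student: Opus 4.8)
The plan is to mirror the proof of \cref{lemma:clean} almost verbatim, with the only adjustments being that the block sizes are now allowed to vary with $\phi$ and that the number of blocks is bounded crudely by $T$. The engine of the argument is unchanged: within each block we play a single fixed arm for $B_\phi$ consecutive rounds, and each round independently returns feedback with probability at least $f^\star$, so the probability that an entire block fails to produce any observation decays exponentially in $B_\phi$.

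Concretely, I would first define, for each block $\phi \in \{1, \dots, \Phi\}$, the event $E_\phi$ that at least one feedback observation occurs in block $\phi$, i.e.\ $\exists t \in S_\phi : X_{i_t, t} = 1$, so that $E = \bigcap_{\phi} E_\phi$. Since arm $i_\phi$ is pulled for all $B_\phi$ rounds of the block and $\Pr[X_{i_\phi, t} = 1] = f_{i_\phi} \ge f^\star$, the feedback indicators are independent across the rounds of the block, giving
\[
\Pr[\neg E_\phi] = (1 - f_{i_\phi})^{B_\phi} \le (1 - f^\star)^{B_\phi} \le \exp(-f^\star B_\phi).
\]
The one place where the variable block size enters is the next inequality: because the hypothesis guarantees $B_\phi \ge 3\ln T / f^\star$ for \emph{every} block, we get $\exp(-f^\star B_\phi) \le \exp(-3\ln T) = 1/T^3$ uniformly over $\phi$, exactly as in the fixed-$B$ case.

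Finally I would apply a union bound over the at most $\Phi \le T$ blocks:
\[
\Pr[\neg E] \le \sum_{\phi \in [\Phi]} \Pr[\neg E_\phi] \le \Phi \cdot \frac{1}{T^3} \le T \cdot \frac{1}{T^3} = \frac{1}{T^2},
\]
which yields $\Pr[E] \ge 1 - 1/T^2$ as claimed. There is no real obstacle here; the only subtlety relative to \cref{lemma:clean} is bookkeeping, namely checking that the per-block tail bound holds \emph{uniformly} (which follows from the worst-case lower bound $B_\phi \ge 3\ln T/f^\star$ on each block rather than from a single fixed $B$) and that the looser bound $\Phi \le T$ on the number of blocks still leaves enough room in the union bound to reach $1/T^2$. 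Both hold with the $1/T^3$ per-block slack, so the result goes through.
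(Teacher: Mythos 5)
Your proof is correct and follows essentially the same route as the paper's: a per-block failure bound of $(1-f^\star)^{B_\phi} \le \exp(-f^\star B_\phi) \le 1/T^3$ followed by a union bound over at most $T$ blocks. If anything, your version is slightly more careful than the paper's, which writes the exponent as a fixed $B$ rather than the block-dependent $B_\phi$.
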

    \begin{proof}
    
    Let $E_\phi$ be the event that at least one feedback observation occurred in block $\phi$, i.e., $\exists t \in S_\phi: X_{i_t,t} = 1$. Since for any arm $i$, $\Pr[X_{i,t} = 1] = f_{i}$, then for arm $i_\phi$, we have that
    \[\Pr[\neg E_\phi] = (1-f_{i_\phi})^B \leq (1-\fst)^B \leq \exp(-\fst B) \leq 1/T^3.
    \]
    Union bounding over all $\Phi \leq T$ blocks, we conclude that
    \[ \Pr[\neg E] \leq \sum_{\phi \in [\Phi]} \Pr[\neg E_\phi] \leq 1/T^2. \qedhere
    \]
    
    \end{proof}

\begin{proof}[Proof of \Cref{thm:bb3-regret} (Regret of $\BBDivideAdjusted$)]

This argument proceeds almost identically to the proof of \Cref{thm:bb1-regret}, the regret bound on $\BBDivide$, in the stochastic case. Recall that $\fst \in (0, \min_j f_j]$.
For notational convenience, let $B = \frac{3\cdot\ln(T)}{\fst}$.
First, note that it must be the case that the size of any block $B_i$ is bounded as follows, because $1 \leq 1 + f_i \leq 2$:
\[B \leq B_i \leq 2B.\]
Then, we will have at most $\floor{T/B}$ blocks, and each block will incur at most $2B$ regret.
We use \Cref{lemma:clean-bbda} to argue that we will see at least one feedback observation in each block with probability $1 - 1/T^2$; conditioned on this occurring, using the above bounds on the number of blocks and the size of each block, 
the (pseudo-)regret of $\BBDivideAdjusted(\ALG)$ can be expressed as

\begin{align*}
\E\left[\sum_{t=1}^T \bell_{i_t}\right] - \min_{i} \sum_{t=1}^{T} \bell_{i} 
&= \E\left[ \sum_{\phi=1}^{\Phi} \sum_{i \in \calA} \mathds 1[i^{\ALG}_\phi = i] \cdot B_{i_\phi} \cdot \left(\bell_{i} - \bell_{i^*}\right) \right]\\
& \leq \E\left[ \sum_{\phi=1}^{\floor{T/B}} \sum_{i \in \calA} \mathds 1[i^{\ALG}_\phi = i] \cdot B_{i_\phi} \cdot \left(\bell_{i} - \bell_{i^*}\right)\right] \\
&\leq \E\left[ \sum_{\phi=1}^{\floor{T/B}} \sum_{i \in \calA} \mathds 1[i^{\ALG}_\phi = i] \cdot \max_jB_j \cdot \left(\bell_{i} - \bell_{i^*}\right)\right] \\
&=  \frac{6 \ln(T)}{\fst} \cdot \underbrace{\E\left[ \sum_{\phi=1}^{\floor{T/B}} \sum_{i \in \calA} \mathds 1[i^{\ALG}_\phi = i] \cdot \left(\bell_{i} - \bell_{i^*}\right)\right]}_{(1)},
\end{align*}
where the second transition follows by noting that, as described above, $\Phi \leq \floor{T/B}$, and $B_j$ is well defined for all $j \in [K]$, regardless of timestep, so that we can extend the summation to $\phi \in (\Phi, \floor{T/B}]$. The last transition uses $\max_jB_j = \frac{6 \ln(T)}{\fst}$, a deterministic quantity.
To relate (1) to the regret of $\ALG$, we observe that in $\BBDivideAdjusted(\ALG)$, the algorithm $\ALG$ also receives stochastic losses with mean $\bell_i$ when it pulls $i_\phi^{\ALG} = i$ that are identically distributed as in the original instance $\calI$. This means that (1) is exactly equal to the regret of $\ALG$ in an instance with stochastic losses over $\frac{T\fst}{3\ln T}$ time steps. This completes the proof.

\end{proof}

\subsubsection{Monotonicity of $\BBDivideAdjusted$: Proof of \cref{thm:feedbackmonotonicitybbda}}

We now prove \cref{thm:feedbackmonotonicitybbda}. While the regret proof followed the regret proof for $\BBDivide$, the monotonicity proof will parallel the coupling argument we made for $\BBPull$, with two key differences. First, the size of each block is now deterministic rather than a random variable; this makes analyzing each block easier, but requires a slightly different approach to formalizing the realization of randomness because the randomness is now in the \textit{selection} of observations.
Second, higher feedback probabilities will correspond to \textit{larger} blocks by construction, which changes the direction of monotonicity in $\APC$ as desired. 

The analogous result for $\FOC$ follows directly from \cref{prop:relationship}. Intuitively, recall that in general, higher $f_i$ implies higher $\FOC_i$ for the same number of arm pulls, by definition; therefore, if $\APC_i$ is positive monotonic, $\FOC_i$ must be as well. 

For reference, we restate \cref{thm:feedbackmonotonicitybbda} below. 
\Monobbda*

\begin{proof}[Proof of Theorem \ref{thm:feedbackmonotonicitybbda} (Monotonicity of $\BBDivideAdjusted$)]

Again, we fix an instance $\calI = \left\{\calA, \calF, \calL\right\}$ with stochastic losses. Let $\tilde{f_i} \ge f_i$, and let $\tilde{\calI} = \left\{\calA, \calF(i), \calL\right\}$. The four-step argument proceeds as follows:

\begin{enumerate}
    \item We first condition on the event that $U_{j,\phi} = \widetilde U_{j,\phi} = 1$ for all $j \in [K], \phi \in [T]$. 
    \item We call \cref{algo:BB-da-simulated} on $\calI$ and $\tilde\calI$, passing in $U_{j,\phi}$ and $\widetilde U_{j,\phi}$, respectively. We use \cref{lem:joint-losses-identical-bbda} to argue that for any $\Phi^*$, $(i_1^{\ALG}, \dots, i_{\Phi^*}^\ALG) \stackrel{d}{=} (\tilde i_1^{\ALG}, \dots, \tilde i_{\Phi^*}^\ALG)$, then couple the arm pulls on each instance so $(i_1^{\ALG}, \dots, i_{\Phi^*}^\ALG) = (\tilde i_1^{\ALG}, \dots, \tilde i_{\Phi^*}^\ALG)$.
    \item By this step, $i_\phi^\ALG$ and $\tilde i_\phi^\ALG$ are fixed up to $\Phi^*$ by the above coupling. Now, we truncate the run \cref{algo:BB-da-simulated} to $T$ rounds on each instance. In this case, $\ALG$ may be called a different number of times, $\Phi \geq \widetilde{\Phi}$, on instance $\calI$ and $\widetilde\calI$. This impacts the monotonicity of $\APC$ in the claimed direction. 
    \item Finally, we handle the conditioning from Step 1, using \cref{lemma:clean} to argue that the event that an observation is not observed in at least one block $\phi$ contributes at most $1/T$ to $\APC_i(\calI)$. 
 \end{enumerate}

 \paragraph{Step 1: Condition on feedback observations.} First, let $E$ be the event that $U_{j,\phi} = \widetilde U_{j,\phi} = 1$ for all $j \in [K], \phi \in [T]$. By \cref{lemma:clean}, $\Pr[E] \geq 1 - 1/T^2$. Then, for any $\phi > T$, we let $U_{j,\phi}$ and $\tilde U_{j,\phi}$ take on arbitrary values in $\{0,1\}$.  We condition on $E$ for Steps 2-4. 

 \paragraph{Step 2: Run \cref{algo:BB-da-simulated} and couple arms pulled by $\ALG$ across $\calI$ and $\widetilde\calI$.}
We next consider \cref{algo:BB-da-simulated} on $\calI$ and $\tilde\calI$ using the sequences $U_{j,\phi}$ and $\widetilde U_{j,\phi}$ for $j \in [K]$ and $\phi = 1, \dots,\infty$, respectively. We can now apply \cref{lem:joint-losses-identical-bbda}, letting $\Phi^* = T$, so that $(i^\ALG_1, \dots, i^\ALG_{\Phi^*})$ and $(\tilde i^\ALG_1, \dots, \tilde i^\ALG_{\Phi^*})$ are identically distributed. This allows us to consider a joint probability distribution over \\ $(i^\ALG_1, \dots, i^\ALG_{\Phi^*}, \tilde i^\ALG_1, \dots, \tilde i^\ALG_{\Phi^*})$ such that 
$i^\ALG_\phi = \tilde{i}^\ALG_\phi$ for all $\phi \in [\Phi^*]$.

\paragraph{Step 3: Handle stopping times.} 
We condition on the coupling thus far with the property that $U_{j,\phi}$, $\widetilde{U}_{j,\phi}$, $i^\ALG_\phi$, $\tilde i^\ALG_\phi$ are all fixed and satisfy 
$i^\ALG_\phi = \tilde{i}^\ALG_\phi$ and $U_{j,\phi} = \tilde{U}_{j,\phi} = 1$. 

This step can be thought of as the inverse of Step 3 of the proof of \cref{thm:feedbackmonotonicitybbpull}. As in that step, $\Phi$ and $\tilde\Phi$ are deterministic. This time, however, we now have that $\Phi \geq \widetilde\Phi$; see \cref{fig:bbda-time} for an illustration.
Intuitively, on $\widetilde\calI$, the block sizes when $i$ is pulled are \textit{larger} than on $\calI$, so $\BBDivideAdjusted(\ALG)$ moves through the $\phi$-indexed timescale more slowly on $\widetilde\calI$.

Let $B_\phi := B_{i_\phi^\ALG}$ denote the size of block $\phi$ on $\calI$ and $\widetilde B_\phi:=B_{\widetilde i_\phi^\ALG} $ denote the size of block $\phi$ on $\tilde\calI$. For each $\phi \leq \min(\Phi,\widetilde\Phi)$, we know that $i_\phi^\ALG = \widetilde i_\phi^\ALG$. 
Therefore, the only case where $B_\phi \neq \widetilde B_\phi$ is when $i_\phi^{\ALG} = i$; these are illustrated by gray blocks in \cref{fig:bbda-time}, in which case $B_\phi \leq \widetilde B_\phi$, by definition. Let $\phi(t)$ (resp. $\widetilde\phi(t)$) be the function that maps timesteps on the timescale indexed by $t$ to timesteps on $\ALG$'s timescale on $\calI$ (resp. $\widetilde \calI$).
Every time step $t\in [T]$ maps to blocks $\phi(t)$ and $\widetilde{\phi}(t)$ in instances $\calI$ and $\widetilde{\calI}$, respectively, such that $\widetilde{\phi}(t) \leq \phi(t)$.
This implies that $\Phi\leq \widetilde{\Phi}$. 

\paragraph{Notation for Analyzing $\APC$.}
We are now ready to analyze $\APC$ on $\calI$ and $\widetilde{\calI}$. 
To formalize our analysis, we introduce the following additional notation (following the proof of \cref{thm:feedbackmonotonicitybbpull}).
Given a range $R\subseteq [T]$, let $\APC_i^{R}(\calI)$ be the number of times arm $i$ is pulled in timesteps in $R$ on $\calI$, and let $\APC_i^{R}(\widetilde \calI)$ be the number of times arm $i$ is pulled in timesteps in $R$ on $\widetilde \calI$. Since we have conditioned on $U_{j,\phi}$, $\widetilde{U}_{j,\phi}$, $i^\ALG_\phi$, $\tilde i^\ALG_\phi$, we see that at this point $\APC_i^{R}(\calI)$ and $\APC_i^{R}(\widetilde \calI)$ are both deterministic.

Since we will analyze the last time block separately, we let $T_1 = \sum_{\phi \in [\widetilde\Phi-1]}B_\phi \leq T$ be the time step referring to the end of the penultimate block of $\BBDivideAdjusted(\ALG)$ on $\widetilde\calI$. Let $T_0$ be the analogous time on $\calI$, so that $T_0 = \sum_{\phi \in [\widetilde\Phi-1]} \widetilde B_\phi \leq T_1$. 

\paragraph{Analyzing $\APC$.} We first prove that $\APC_i^{[T]}(\calI) \leq \APC_i^{[T]}(\widetilde\calI)$. Because the only case where $B_\phi \neq \widetilde B_\phi$ is when $i^\ALG_\phi = i$, we have that 
\begin{align*}
    T_1 - T_0 &= \sum_{\phi \in [\widetilde \Phi - 1]} \mathds 1 (i_\phi^\ALG = i)\cdot \left(\widetilde B_{i_\phi} - B_{i_\phi}\right).
    \\&= \APC_i^{[T_1]}(\widetilde\calI) - \APC_i^{[T_0]}(\calI).
\end{align*}
Now, consider two cases. 
\begin{itemize}
    \item Case 1: $i$ was pulled in the $\widetilde\Phi$th block of $\widetilde \calI$, i.e. $i_{\widetilde\Phi}^\ALG = i$. Then, 
    \begin{align*}
        \APC_i^{[T]}(\calI) &\leq \APC_i^{[T_0]}(\calI) + T - T_0
        \\&= \APC_i^{[T_1]}(\widetilde\calI) + T - T_1 
        \\&= \APC_i^{[T]}(\widetilde\calI).
    \end{align*}
    \item Case 2: Some other arm was pulled in the $\widetilde\Phi$th block of $\widetilde\calI$, i.e. $i_{\widetilde\Phi}^\ALG \neq i$. Then, we know that $\APC_i(\widetilde\calI) = \APC_i^{[T_1]}(\widetilde\calI)$. Moreover, within the last $T - T_0$ timesteps of $\calI$, at least $T - T_1$ of them are dedicated to pulling arm $j \neq i$ in the $\widetilde\Phi$th block of $\widetilde\calI$. Then, 
    \begin{align*}
        \APC_i^{[T]}(\calI) &\leq \APC_i^{[T_0]}(\calI) + T - T_0 - (T - T_1)
        \\&= \APC_i^{[T_0]}(\calI) + T_1 - T_0 
        \\&= \APC_i^{[T_1]}(\widetilde\calI) = \APC_i^{[T]}(\widetilde\calI).
    \end{align*}
\end{itemize}
Combining these two cases gives us that
\begin{align*}
    \APC_i^{[T]}(\calI) \leq \APC_i^{[T]}(\widetilde\calI).
\end{align*}

We can apply the law of total expectation over the sequences $U_{j,\phi}$, $\widetilde{U}_{j,\phi}$, $i^\ALG_\phi$, $\tilde i^\ALG_\phi$. Let $\APC_i(\calI \mid E)$ notate the metric $\APC_i$ on instance $\calI$ conditioned on the clean event $E$. We see that:
\begin{align*}
    \APC_i(\widetilde \calI \mid E) - \APC_i(\calI \mid E)  = \E\left[\APC_i^{[T]}(\widetilde\calI) - \APC_i^{[T]}(\calI) \mid E \right] \geq 0.
\end{align*}
This means that:
\begin{align*}
    \APC_i(\widetilde \calI \mid E) \geq \APC_i(\calI \mid E).
\end{align*}

\begin{figure}\centering
    \includegraphics[width=0.99\textwidth]{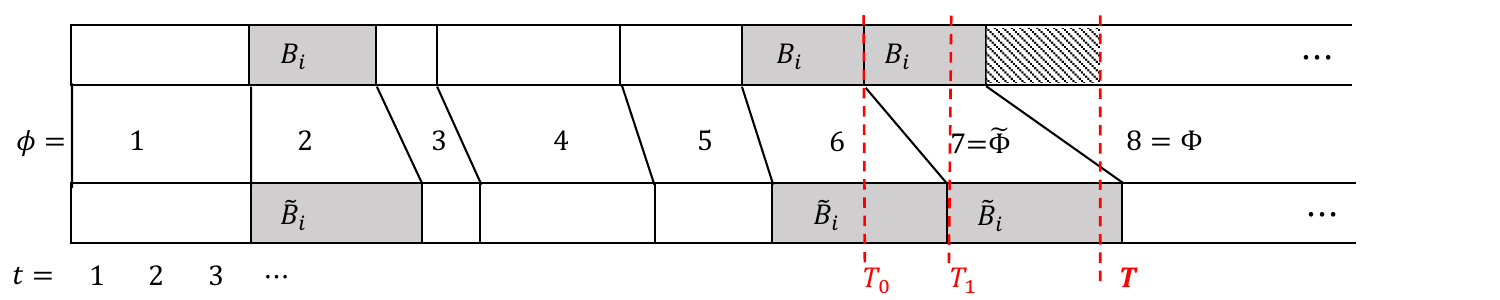}
    \caption{Timelines of $\BBDivideAdjusted(\ALG)$ on instances $\cal I$ (top row) and $\widetilde{\calI}$ (bottom row) are demonstrated. Each time step $t\in [T]$ maps to a block number in $\calI$ that is no \textit{less} than its block number in $\widetilde{\calI}$. The total number of times $\ALG$ is called in instance $\calI$, $\Phi$, and the number of times it is called in $\widetilde{\calI}$, $\widetilde \Phi$, satisfy $\Phi\geq \widetilde{\Phi}$. 
    Note that this is similar to \cref{fig:bbpull-time}, except that the direction of monotonicity has switched and that the size of $B_i$ and $\widetilde B_i$ is deterministic in each instance.    }
    \label{fig:bbda-time}
\end{figure}

\paragraph{Step 4: Handle conditioning on feedback observations.} Finally, recall that up to this point, we are still conditioning on $E$ from Step 1, i.e. that we see feedback in every block on each instance. By  \cref{lemma:clean}, $\Pr[\neg E] \leq 1/T^2$.
In the worst case, we pull $i$ for every $t \in [T]$ on $\calI$, which gives $\APC_i(\calI \mid \neg E ) \leq T$. To relate this to $\APC_i(\calI)$ overall, we can see that 
\begin{align*}
    \APC_i(\calI) &= \APC_i(\calI \mid E) \cdot \Pr[E] + \APC_i(\calI \mid \neg E) \cdot \Pr[\neg E]
    \\&\leq \APC_i(\calI \mid E) + T \cdot 1/T^2
    \\\implies \APC_i(\calI) - 1/T &\leq \APC_i(\calI \mid E).
\end{align*}
Combining this with the result from Step 3, we have that
\begin{align*}
    \APC_i(\widetilde \calI) &\geq 
    \APC_i(\widetilde \calI \mid E) 
    \geq \APC_i(\calI \mid E) \geq \APC_i(\calI) - 1/T.
\end{align*}

\textbf{Analyzing $\FOC$.}
Applying \cref{prop:relationship} gives us 
$
        \FOC_i(\tilde{\calI}) \cdot f_i \geq \FOC_i(\calI)\cdot \tilde{f_i} - \tilde{f_i}/T
$, and the result follows from dividing both sides by $f_i$.
\end{proof}
\section{Supplemental Materials for Sections \ref{subsec:regret} and \ref{subsec:mono}}

In this section, we analyze $\BBPull(\AAE)$ (\Cref{subsec:analysisbbpullaae}), $\BBPull(\UCB)$ (\Cref{appendix:UCB}), and $\BBDivideAdjusted(\AAE)$ (\Cref{appendix:bbdaaae}).

\subsection{Analysis of $\BBPull$ applied to AAE}\label{subsec:analysisbbpullaae}

We prove monotonicity properties and regret bounds for $\BBPull(\AAE)$ (Algorithm \ref{algo:BB-pull-se}), where $\AAE$ denotes the standard Active Arm Elimination algorithm.

\subsubsection{A simulated version of $\BBPull(\AAE)$}
\label{subsubsec:bbpull-aae-sim}

We consider the simulated version of $\BBPull(\AAE)$ given by Algorithm \ref{algo:BB-pull-simulated} applied to $\AAE$. For convenience, we explicitly state this algorithm below (Algorithm \ref{algo:BB-pull-simulated-aae}).

Let us define the same random variables as those used in Algorithm \ref{algo:BB-pull-simulated}, restated for convenience.  (Recall that $\phi$ indexes losses for the time horizon of $\ALG$, $\Phi$ is the total number of times $\ALG$ is called by $\BBPull(\ALG)$, and $\Phi \leq T$ because $\ALG$ can be called at most $T$ times.)

\begin{itemize}
    \item \textit{Losses:} For each round $\phi \in [\Phi]$ of $\ALG=\AAE$ and each arm $j \in [K]$, 
    let $\ell'_{j, \phi} := \ell_{j, t}$ be the loss for arm $j$ at a time step $t$ that corresponds to the last time step in block $\phi$ of $\BBPull(\AAE)$. Since we are in the stochastic loss setting, $\ell'_{j,\phi}$ is a random variable drawn from the distribution of arm $j$ (with mean $\bar{\ell}_j$) independently across $\phi$ and $j$. 
    \item \textit{Feedback realizations:} For all $j \in [K]$ and $\phi \in [T]$, let $Q_{j,\phi} \sim \textrm{Geom}(f_j)$ for $\phi \in [T]$ be a random variable distributed according to the geometric distribution with parameter equal to the feedback probability of arm $j$. 
    (These random variables are also fully independent across values of $j$ and $\phi$.) 
\end{itemize}

We are now ready to present Algorithm \ref{algo:BB-pull-simulated-aae}. For ease of analysis, we make the slight modification from Algorithm \ref{algo:BB-pull-se} that we convert the set $R_{i, s}$ which keeps track of time steps in the time horizon of $\BBPull(\AAE)$ to the set $U_{i,s}$ which keeps track of time steps in the time horizon of $\ALG=\AAE$. The behavior of the algorithm remains unchanged under this change.

\begin{algorithm2e}[htbp]
\caption{Simulated version of $\BBPull(\AAE)$ (\cref{algo:BB-pull-simulated} applied to $\AAE$)}
\label{algo:BB-pull-simulated-aae}
\DontPrintSemicolon
\LinesNumbered
Maintain active set $A$; start with $A := [K]$.
\\
Initialize phase $s=1$, $t =1$, and $\phi = 1$.
\\
\While{$t \le T$}{
    \For{arm $i \in A$}{
    Let $U_{i,s} = \emptyset$. \\
    \While{$|U_{i,s}| \le 8 \ln T \cdot 2^{2s}$ and $t \le T$}{
        Start phase $s$.\\
    \For{$\min(Q_{i, \phi}, T-t)$ iterations}{Pull $i_t = i$ and let $t \gets t + 1$.}
    Observe $\ell'_{i,\phi} := \ell_{i, t}$, append $U_{i,s} \cup \left\{\phi \right\}$, and let $\phi \gets \phi + 1$.
    }
    Calculate the mean $\mu_s(i) := -\frac{1}{|U_{i,s}|}\sum_{\phi' \in U_{i,s}} \ell'_{i, \phi'}$ of the negative of all observations.\\
    Set $\text{LCB}_s(i) = \mu_s(i) - 2^{-s}$ and 
    $\text{UCB}_s(i) = \mu_s(i) + 2^{-s}$.
    }
    For any arm $i \in A$ where $\exists j \in A$ such that $\text{LCB}_s(j) > \text{UCB}_s(i)$, remove $i$ from $A$.  \\
    Increment $s \gets s+1$.
}
\end{algorithm2e}

Since Algorithm \ref{algo:BB-pull-simulated-aae} is exactly \cref{algo:BB-pull-simulated} applied to $\AAE$, we can apply \cref{lem:simulated-indistinguishable} to see that the sequence of arms $\{i_t^{\text{orig}}\}_{t \in [T]}$ pulled by Algorithm \ref{algo:BB-pull-se} is distributed identically to the sequence of arms pulled by $\{i_t^{\text{sim}}\}_{t \in [T]}$ pulled by Algorithm \ref{algo:BB-pull-simulated-aae}.
It thus suffices to analyze Algorithm \ref{algo:BB-pull-simulated-aae} for the remainder of the analysis.

\subsubsection{Lemmas for the analysis of $\BBPull(\AAE)$}
\label{subsubsec:aae-lemmas}

We now show  intermediate results that build on the standard analysis of Active Arm Elimination \citep{even2002pac}. 

We use the following notation in these results.
\begin{enumerate}
    \item Let $S$ be a random variable denoting the maximum value of the variable $s$ reached in Algorithm \ref{algo:BB-pull-simulated-aae} on $\calI$. (That is, $S$ denotes the number of phases that Algorithm \ref{algo:BB-pull-simulated-aae} \textit{begins}.) Note that $S \le T$ with probability 1.
    \item  Let $E_\textrm{loss}$ be the ``clean'' event that at each phase $1 \le s \le S - 1$, for every arm $i \in [K]$, it holds that $\text{LCB}_s(i) \le \bell_i \le \text{UCB}_s(i)$. 
    \item Let the random variable $L_{i,s}$ be equal to the time step $t$ where phase $s$ begins for arm $i$ (i.e. the value of the variable $t$ at line 5 when $U_{i,s}$ is initialized) if that is reached, and otherwise let $L_{i,s}$ be equal to $T+1$.
    \item For each arm $i$, let $E_i^{F}$ be the event that at each phase $1 \le s \le T$, at least one of the following two conditions holds: (1) $L_{i,s} = T+1$, or (2):
    \[\sum_{\phi' \in U_s(i)} Q_{i,\phi'} \le \frac{16 \cdot 2^{2s} \ln T}{f_i}.\]
\end{enumerate} 

First, we show that the clean events occur with high probability.
\begin{lemma}[Correct confidence bounds]
\label{lemma:cf}
    Consider  Algorithm \ref{algo:BB-pull-simulated-aae} evaluated on any given instance $\calI = \left\{\calA, \calF, \calL \right\}$. Let the event $E_\textrm{loss}$ be defined as above.  Then, $\Pr[E_\textrm{loss}] \geq 1 - 2 T^{-3} K$. 
\end{lemma}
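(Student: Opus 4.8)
The plan is to prove the bound by a standard concentration-plus-union-bound argument, treating the empirical mean computed in each completed phase as an average of a \emph{fixed} number of i.i.d.\ samples and controlling the failure probability of each confidence interval separately. I would work directly with the simulated algorithm (Algorithm~\ref{algo:BB-pull-simulated-aae}), for which $E_\textrm{loss}$ is defined. Fix an arm $i\in[K]$ and a phase index $s$, and observe that whenever phase $s$ is reached for arm $i$, the estimate $\mu_s(i)$ is the average of the $|U_{i,s}|$ negated observations $-\ell'_{i,\phi'}$, each of which is an independent draw from arm $i$'s loss distribution (mean $\bell_i$, variance $1$; the negation is the conversion of losses to utilities, so $\mu_s(i)$ estimates $-\bell_i$). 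Because the inner while-loop stops exactly when $|U_{i,s}|$ first exceeds the deterministic threshold $8\ln T\cdot 2^{2s}$, the count is a fixed integer $n_s \ge 8\ln T\cdot 2^{2s}$; the time-horizon cutoff $t\le T$ can only truncate the final, $S$-th phase, which is excluded from $E_\textrm{loss}$. Hence for every completed phase $s\le S-1$, the quantity $\mu_s(i)$ is genuinely an empirical mean of $n_s$ i.i.d.\ samples with expectation $-\bell_i$.

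The core step is a tail bound. By a standard sub-Gaussian (variance-proxy $1$) Hoeffding inequality for the empirical mean of i.i.d.\ random variables,
\[
\Pr\big[\,|\mu_s(i) - (-\bell_i)| \ge 2^{-s}\,\big] \;\le\; 2\exp\!\left(-\tfrac{n_s\cdot(2^{-s})^2}{2}\right) \;\le\; 2\exp\!\left(-\tfrac{8\ln T\cdot 2^{2s}\cdot 2^{-2s}}{2}\right) \;=\; 2T^{-4},
\]
where the threshold $8\ln T\cdot 2^{2s}$ is chosen precisely so that the $2^{2s}$ cancels against the squared confidence width $2^{-s}$. On the complementary event, $-\bell_i$ lies in $[\text{LCB}_s(i),\text{UCB}_s(i)]=[\mu_s(i)-2^{-s},\,\mu_s(i)+2^{-s}]$, i.e.\ the confidence interval for arm $i$ is correct at phase $s$. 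Since $S\le T$ with probability $1$, the phases in the definition of $E_\textrm{loss}$ satisfy $1\le s\le S-1\le T$, so I would union-bound over all $i\in[K]$ and all $s\in\{1,\dots,T\}$ to obtain
\[
\Pr[\neg E_\textrm{loss}] \;\le\; \sum_{s=1}^{T}\sum_{i\in[K]} 2T^{-4} \;=\; 2KT^{-3},
\]
giving $\Pr[E_\textrm{loss}]\ge 1-2T^{-3}K$.

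The main obstacle is adaptivity: whether arm $i$ is still active at phase $s$—and hence whether $\mu_s(i)$ is even computed—depends on the confidence bounds formed in phases $1,\dots,s-1$, which are themselves functions of the data. I would resolve this by noting that the $n_s$ observations consumed in phase $s$ for arm $i$ carry fresh indices $\phi'$ and are therefore independent of every sample used to decide eliminations in earlier phases; consequently the tail bound above holds for $\mu_s(i)$ \emph{unconditionally} on the algorithm's history, and in particular whenever phase $s$ is reached with $i$ active. Equivalently, one can view the per-phase sample count as deterministic and bound the deviation of each fixed block of $n_s$ i.i.d.\ draws, so that the union bound over the $K\cdot T$ many (arm, phase) pairs requires no conditioning on the adaptive trajectory. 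This independence observation is the only delicate point; everything else is routine.
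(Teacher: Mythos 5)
Your proposal is correct and follows essentially the same route as the paper's proof: a sub-Gaussian Hoeffding/Chernoff bound giving a $2T^{-4}$ failure probability per (arm, phase) pair with the deterministic sample count $8\ln T\cdot 2^{2s}$ cancelling the squared width $2^{-2s}$, followed by a union bound over $K\cdot T$ pairs. The paper handles the possible truncation of an incomplete phase by conditioning on the phase being reached and artificially extending the sample sequence, whereas you observe that only the final phase $S$ can be truncated and it is excluded from $E_{\textrm{loss}}$; both resolutions are valid.
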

\begin{proof}
For each potential phase $1 \le s \le T$ and arm $i$, let $E^{i,s}_\textrm{loss}$ be the event that either $s \ge S$ or $\text{LCB}_s(i) \le \bell_i \le \text{UCB}_s(i)$. Condition on the event that $L_{i,s} \le T$. For ease of analysis, let us also assume that we draw additional loss values, $\ell'_{i, \phi}$ for $T \le \phi \le T + 8 \ln T \cdot 2^{2s}$ i.i.d. from the loss distribution of arm $i$. 

Run the algorithm for $T + 8 \ln T \cdot 2^2s$ time steps rather than $T$ time steps, which ensures that line 11 for $i$ and $s$ is reached and the confidence bounds $\text{LCB}_s(i)$ and $\UCB_s(i)$ are well-defined. We show that $\mathbb{P}[E^{i,s}_\textrm{loss} \mid L_{i,s} \le T] \geq 1 - 2T^{-4}$:
\begin{align*}
\mathbb{P}[E^{i,s}_\textrm{loss}] &\ge  \mathbb{P}[E^{i,s}_\textrm{loss} \mid L_{i,s} \le T] \cdot \mathbb{P}[L_{i,s} \le T]  + \mathbb{P}[L_{i,s} > T] \\ 
&\ge \mathbb{P}[\text{LCB}_s(i) \le \bell_i \le \text{UCB}_s(i)] \cdot \mathbb{P}[L_{i,s} \le T] + \mathbb{P}[L_{i,s} > T] \\ 
&\ge \mathbb{P}[\text{LCB}_s(i) \le \bell_i \le \text{UCB}_s(i)] \\
&= \mathbb{P}\left[\left|\bell_i -\frac{1}{|U_{i,s}|}\sum_{\phi' \in U_{i,s}} \ell'_{i, \phi} \right| \le 2^{-s} \right].
\end{align*}

Recall that we are working with stochastic losses, so $\bell_i -\frac{1}{|U_{i,s}|}\sum_{\phi' \in U_{i,s}} \ell'_{i, \phi'}$ is distributed as an average of $|U_{i,s}| = 8 \ln T \cdot 2^{2s}$ subgaussian random variables with variance $1$. 
Using a Chernoff bound, we have that: 
\[\mathbb{P}\left[\left|\bell_i -\frac{1}{|U_{i,s}|}\sum_{\phi' \in U_{i,s}} \ell'_{i, \phi} \right| > 2^{-s} \right] \le 2 e^{-\frac{8 \ln T \cdot 2^{2s}}{2^{2s+1}}} = 2 T^{-4}. \]

Finally, we apply a union bound to bound $\Pr[E_\textrm{loss}]$. There are $S \leq T$ potential phases and $K$ arms, so there are $KT$ events to union bound over. We see that:
\[
 \Pr[E_\textrm{loss}] \ge \sum_{s=1}^T \sum_{i=1}^K \mathbb{P}[E^{i,s}_\textrm{loss}] \ge  1 - 2 T^{-4} T K = 1 - 2 T^{-3} K.  \]
\end{proof}

\begin{lemma}
\label{lem:se-pulls-ub}
Consider Algorithm \ref{algo:BB-pull-simulated-aae} evaluated on any given instance $\calI = \left\{\calA, \calF, \calL \right\}$  with time horizon $T$. Suppose that the event $E_\textrm{loss}$ holds. Then, the optimal arm $\ist = \arg\min_j \bell_j$ is never removed from $A$. Moreover, at every phase $1 \le s \le S-1$, if $i \in A$ at the end of phase $s$ (i.e. after 13 in Algorithm \ref{algo:BB-pull-simulated-aae}), then
\[\bell_i - \min_j \bell_j \le 4 \cdot 2^{-s}. \]
\end{lemma}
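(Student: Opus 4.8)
The plan is to give the standard Active Arm Elimination guarantee, now adapted to the simulated/blocked algorithm of \Cref{algo:BB-pull-simulated-aae}. The crucial observation is that, once we condition on the clean event $E_\textrm{loss}$, the whole argument is driven only by the elimination rule and the confidence bounds; the block structure and the feedback variables $Q_{j,\phi}$ play no role whatsoever. Writing $u_i := -\bell_i$ for the \emph{utility} of arm $i$ (recall $\mu_s(i)$ is the empirical mean utility, so that maximizing $u$ coincides with minimizing loss and $\ist = \argmax_j u_j = \argmin_j \bell_j$), the event $E_\textrm{loss}$ guarantees $\text{LCB}_s(i) \le u_i \le \text{UCB}_s(i)$ for every arm $i$ and every phase $1 \le s \le S-1$. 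I will first show the optimal arm is never eliminated, and then exploit its survival to bound the suboptimality gap of any surviving arm.

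For the first claim, suppose toward contradiction that $\ist$ is removed at some phase $s \le S-1$. By the elimination rule this requires an arm $j \in A$ with $\text{LCB}_s(j) > \text{UCB}_s(\ist)$. Applying $E_\textrm{loss}$ to both endpoints gives $u_j \ge \text{LCB}_s(j) > \text{UCB}_s(\ist) \ge u_\ist$, i.e. $u_j > u_\ist$, which contradicts the optimality $u_\ist = \max_k u_k$. Hence $\ist$ remains in $A$ at every phase, and in particular $\ist \in A$ whenever the elimination step of phase $s$ is executed.

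For the gap bound, fix $s \le S-1$ and an arm $i$ that survives phase $s$ (i.e. $i$ remains in $A$ after the elimination step of phase $s$). Since $i$ was not eliminated and $\ist \in A$ by the first claim, the elimination rule must have failed for the pair $(\ist, i)$, so $\text{LCB}_s(\ist) \le \text{UCB}_s(i)$. I then unfold both confidence bounds using $E_\textrm{loss}$: from $u_i \ge \text{LCB}_s(i) = \mu_s(i) - 2^{-s}$ we obtain $\text{UCB}_s(i) = \mu_s(i) + 2^{-s} \le u_i + 2 \cdot 2^{-s}$, and symmetrically from $u_\ist \le \text{UCB}_s(\ist) = \mu_s(\ist) + 2^{-s}$ we obtain $\text{LCB}_s(\ist) = \mu_s(\ist) - 2^{-s} \ge u_\ist - 2 \cdot 2^{-s}$. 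Chaining these through $\text{LCB}_s(\ist) \le \text{UCB}_s(i)$ yields $u_\ist - u_i \le 4 \cdot 2^{-s}$, which is exactly $\bell_i - \min_j \bell_j \le 4 \cdot 2^{-s}$.

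There is no serious obstacle here: this is the textbook AAE analysis, and the only points requiring care are bookkeeping rather than ideas. Specifically, I must keep the loss/utility sign convention straight so that ``optimal'' consistently means minimal loss / maximal utility and the confidence intervals stay correctly oriented, and I must invoke the first claim ($\ist \in A$) before deducing the surviving-arm inequality in the second claim. It is also worth noting that the range $1 \le s \le S-1$ in the statement matches exactly the range over which $E_\textrm{loss}$ provides valid bounds, so no separate treatment of the final (possibly incomplete) phase $S$ is needed.
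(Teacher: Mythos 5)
Your proof is correct and follows essentially the same route as the paper's: establish that $\ist$ survives using the validity of the confidence intervals under $E_\textrm{loss}$, then chain $\text{LCB}_s(\ist) \le \text{UCB}_s(i)$ with the two-sided bounds to get the $4 \cdot 2^{-s}$ gap. The only cosmetic differences are that you phrase the first claim as a contradiction where the paper gives a direct inequality chain, and you make the loss-to-utility sign convention explicit (which the paper handles implicitly via $\mu_s(i)$ being a mean of negated losses).
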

\begin{proof}
Let us condition on $E_{\text{loss}}$, which means that $\text{LCB}_s(i) \le \bell_{i} \le \text{UCB}_s(i)$ for every arm $i$ and every phase $1 \le s \le S-1$.
For the optimal arm $\ist = \arg\min_j \bell_j$ it holds for every phase $s$ that: 
\[\text{UCB}_s(\ist) \ge -\bell_{\ist} = -\min_j \bell_j = \max_j (-\bell_j) \ge \max_j \text{LCB}_s(j), \]
so the optimal arm will never be removed from $A$, as desired.

If arm $i$ is in the active arm set $A$ at the end of phase $s$ (i.e. after line 13), then 
\[\text{UCB}_s(i) \ge \text{LCB}_s(\ist) \ge -\bell_{\ist} - 2 \cdot 2^{-s}.\]
This means that 
\[-\bell_i \ge \text{LCB}_s(i) \ge \text{UCB}_s(i) - 2 \cdot 2^{-s} \ge -\bell_{\ist} - 4 \cdot 2^{-s} = - \min_j \bell_j - 4 \cdot 2^{-s}.   \]
Rearranging, we obtain that:
\[\bell_i - \min_j \bell_j \le 4 \cdot 2^{-s}. \]
as desired. 
.
\end{proof}

\begin{lemma}
\label{lemma:ef}
Consider  Algorithm \ref{algo:BB-pull-simulated-aae} evaluated on any given instance $\calI = \left\{\calA, \calF, \calL \right\}$. For each arm $i$, let $E_i^F$ be defined as above. Then, $\Pr[E_i^{F}] \geq 1 - T^{-4}$.
\end{lemma}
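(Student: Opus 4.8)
The lemma I want to prove bounds the probability of the event $E_i^F$, which concerns the total number of arm pulls accumulated for arm $i$ across the phases in the simulated $\BBPull(\AAE)$ algorithm. Let me parse the event carefully. For each arm $i$, $E_i^F$ says that at every phase $s$, either phase $s$ is never reached for arm $i$ (i.e., $L_{i,s} = T+1$), or the sum $\sum_{\phi' \in U_s(i)} Q_{i,\phi'}$ is at most $16 \cdot 2^{2s} \ln T / f_i$. Here $U_s(i)$ (written $U_{i,s}$ in the algorithm) collects the $\phi$-indices at which an observation of arm $i$ is recorded during phase $s$, and $Q_{i,\phi'} \sim \mathrm{Geom}(f_i)$ is the block length (number of pulls until feedback). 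So the quantity $\sum_{\phi' \in U_s(i)} Q_{i,\phi'}$ is exactly the number of times arm $i$ is pulled during phase $s$ (on the $t$-timescale), and we are claiming this concentrates.

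**The plan.** The core observation is that $|U_{i,s}| \le 8 \ln T \cdot 2^{2s}$ by construction (the inner while loop in Algorithm~\ref{algo:BB-pull-simulated-aae} stops once $|U_{i,s}|$ exceeds this threshold). So I want to bound a sum of at most $8 \ln T \cdot 2^{2s}$ i.i.d.\ geometric random variables, each with mean $1/f_i$, and show it does not exceed $16 \cdot 2^{2s} \ln T / f_i$ — i.e., twice its expectation. This is a standard concentration statement for sums of independent geometric random variables. First I would fix a phase $s$ and condition on $L_{i,s} \le T$ (otherwise the first disjunct of $E_i^F$ holds trivially). Setting $n_s := 8 \ln T \cdot 2^{2s}$, I want $\Pr[\sum_{\phi'=1}^{n_s} Q_{i,\phi'} > 16 \cdot 2^{2s} \ln T / f_i] = \Pr[\sum_{\phi'=1}^{n_s} Q_{i,\phi'} > 2 n_s / f_i]$ to be small. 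Since $\E[\sum Q_{i,\phi'}] = n_s / f_i$, this is a deviation to twice the mean, and a Chernoff/MGF bound for geometrics gives exponential decay in $n_s$, hence in $2^{2s} \ln T$.

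**Carrying out the tail bound and union.** The key step is the moment-generating-function bound for a geometric random variable: for $Q \sim \mathrm{Geom}(f)$ and suitable $\theta > 0$, $\E[e^{\theta Q}] = \frac{f e^{\theta}}{1 - (1-f)e^{\theta}}$, which is finite for $\theta$ small enough. Applying Markov to $e^{\theta \sum Q_{i,\phi'}}$ and optimizing $\theta$ yields a bound of the form $\Pr[\sum Q_{i,\phi'} > 2n_s/f_i] \le e^{-c\, n_s}$ for an absolute constant $c > 0$. With $n_s = 8 \ln T \cdot 2^{2s} \ge 8 \ln T$, this is at most $T^{-8c \cdot 2^{2s}}$, which I can make smaller than $T^{-5}$ for all $s \ge 1$ once the constant works out (and for $T$ large). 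Then I union bound over the at most $T$ possible phases for arm $i$, giving $\Pr[\neg E_i^F] \le T \cdot T^{-5} = T^{-4}$, i.e.\ $\Pr[E_i^F] \ge 1 - T^{-4}$, as claimed. I should be careful that the $Q_{i,\phi'}$ indexed by $\phi' \in U_{i,s}$ are genuinely i.i.d.\ $\mathrm{Geom}(f_i)$ and independent of which indices land in $U_{i,s}$; this follows because the $Q_{j,\phi}$ are drawn fully independently across $j$ and $\phi$ in the simulated construction, and the membership $\phi' \in U_{i,s}$ is determined by the loss observations and the algorithm's elimination decisions, which are independent of the block-length variables $Q_{i,\phi'}$ themselves.

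**Main obstacle.** The conceptual content is routine — it is a Chernoff bound on a sum of geometrics — so the real care is twofold. First, I must verify the independence structure precisely: that conditioning on the event $\{L_{i,s} \le T\}$ and on the set $U_{i,s}$ having a fixed size does not distort the distribution of the relevant $Q_{i,\phi'}$'s, so that I may treat them as $n_s$ fresh i.i.d.\ geometrics. The cleanest route is to bound the sum of the \emph{first} $n_s$ geometric variables $\{Q_{i,\phi}\}$ (drawing extra ones if fewer are used, exactly as in the proof of Lemma~\ref{lemma:cf}), since $|U_{i,s}| \le n_s$ means the actual sum is stochastically dominated by the sum of $n_s$ geometrics. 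Second, I need to confirm the constant $c$ in the exponent is large enough that $n_s \cdot c \ge 5 \ln T$ for every $s \ge 1$, which holds since $n_s \ge 8 \ln T$ provided $c \ge 5/8$; tuning the threshold factor (the algorithm uses $16$ against an expectation of $8\cdot 2^{2s}\ln T/f_i$, a factor of $2$) against the geometric MGF is where I would spend the arithmetic to nail down the constant and confirm $T^{-4}$ follows after the union bound over phases.
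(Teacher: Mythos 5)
Your proposal is correct and follows essentially the same route as the paper: for each phase $s$, bound the probability that the sum of at most $8\ln T\cdot 2^{2s}$ i.i.d.\ $\mathrm{Geom}(f_i)$ block lengths exceeds twice its mean $16\cdot 2^{2s}\ln T/f_i$, then union bound over the at most $T$ phases to get $1-T^{-4}$. The only difference is in execution: instead of optimizing the geometric moment generating function, the paper rewrites $\{\sum_{\phi'\in U_s(i)} Q_{i,\phi'} > m\}$ with $m = 16\cdot 2^{2s}\ln T/f_i$ as the event that fewer than $8\cdot 2^{2s}\ln T$ successes occur in $m$ Bernoulli$(f_i)$ trials and applies a multiplicative Chernoff bound, which yields the exponent $2^{2s+1}\ln T\ge 8\ln T$ directly and sidesteps the constant-chasing you flag (your stated sufficient condition $c\ge 5/8$ is in fact too strong for the true geometric large-deviation rate at twice the mean, roughly $1-\ln 2\approx 0.31$, but using $2^{2s}\ge 4$ for $s\ge 1$ the weaker requirement $c\ge 5/32$ suffices, so your argument still closes).
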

\begin{proof}
For each arm $i$ and each phase $1 \le s \le T$, let $E_{i,s}^F$ be the event that 
\[\sum_{\phi' \in U_s(i)} Q_{i,\phi'} \le \frac{8 \cdot 2^{2s} \ln T}{f_i}.\] 
We lower bound the probability $\mathbb{P}\left[E_{i,s}^F\right]$. We analyze  $\mathbb{P}\left[\sum_{\phi' \in U_s(i)} Q_{i,\phi'}  \le \frac{16 \cdot 2^{2s} \ln T}{f_i}\right]$ as follows. Let $m =\frac{16 \cdot 2^{2s} \ln T}{f_i}$. By definition, the probability that $\sum_{\phi' \in U_s(i)} Q_{i,\phi'}  > m$ is equal to the probability that fewer than $8 \cdot 2^{2s}\cdot \ln T$ successes are observed after $m$ i.i.d. Bernouilli trials with parameter $f_i$. This probability can be analyzed with a Chernoff bound. In particular, let $Z_j \sim \text{Bern}(f_i)$ for $1 \le j \le m$ be a sequence of $m$ i.i.d. random variables. Using the multiplicative Chernoff bound, we see that:
\begin{align*}
  \mathbb{P}[\sum_{\phi' \in U_s(i)} Q_{i,\phi'}  > \frac{16 \cdot 2^{2s} \ln T}{f_i}] &= \Pr\left[\sum_{j=1}^{m} Z_j < 8 \cdot 2^{2s} \cdot \ln T
  \right]  
  \\&\leq \Pr\left[
  \sum_{j=1}^m Z_j < m \cdot f_i \cdot 0.5
    \right]
    \\ 
    &\leq \exp\left(-m \cdot f_i \cdot \frac{1}{8}\right) \\
    &= \exp \left(- \frac{1}{f_i} \cdot 16 \cdot 2^{2s} \cdot \ln T \cdot f_i \cdot \frac{1}{8}\right) 
    \\&= T^{-2^{2s + 1}}
    \\&\leq T^{-5}.
\end{align*}

This implies that $\mathbb{P}[E_{i,s}^F] \ge 1 - T^{-5}$.
Union bounding over the $T$ values of $s$, we obtain that $\Pr[E_i^{F}] \geq 1 - T^{-4}$. 

\end{proof}

\subsubsection{Regret of $\BBPull(\AAE)$: Proof of \cref{thm:bbpullregretaaeucb}}
\label{appendix:regretAE}

Here, we prove the regret bound for $\BBPull(\AAE)$.
For convenience, we restate \cref{thm:bbpullregretaaeucb} below. 
\Regretbbpullaaeucb*

We will prove the statement of \cref{thm:bbpullregretaaeucb} only for $\BBPull(\AAE)$. 
In the proof of the regret bounds, we will use the following lemma.
\begin{lemma}
\label{lemma:regretboundslemma}
Let $\Delta_1, \Delta_2, \ldots, \Delta_K \ge 0$ be a sequence of nonnegative numbers. Let $N_1, \ldots, N_K \ge 0$ be a sequence of nonnegative numbers such that for some $C > 0$, it holds that $N_i \le \frac{C \cdot \ln T}{\Delta_i^2 f_i}$ for all $1 \le i \le K$.  Then the following two bounds hold:
\[\sum_{1 \le i \le K \mid \Delta_i > 0} \Delta_i \cdot N_i \le \sum_{1 \le i \le K \mid \Delta_i > 0} \frac{C \ln T}{\Delta_i f_i} \]
and 
\[\sum_{1 \le i \le K \mid \Delta_i > 0} \Delta_i \cdot N_i \le \sqrt{C T\ln(T)\sum_{j=1}^K \frac{1}{f_j}}. \]
\end{lemma}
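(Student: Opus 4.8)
The first inequality is immediate and needs nothing beyond the per-arm hypothesis. For each arm $i$ with $\Delta_i > 0$, multiplying the bound $N_i \le \frac{C \ln T}{\Delta_i^2 f_i}$ through by $\Delta_i > 0$ gives $\Delta_i N_i \le \frac{C \ln T}{\Delta_i f_i}$; summing this over all $i$ with $\Delta_i > 0$ yields the claim. I would dispatch this in one or two lines.

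For the second (instance-independent) inequality, the plan is to convert the per-arm hypothesis into a square-root-shaped bound on $\Delta_i N_i$ and then apply Cauchy--Schwarz across arms. Rewriting the hypothesis as $\Delta_i^2 N_i \le C \ln T / f_i$ and factoring $\Delta_i N_i = \sqrt{\Delta_i^2 N_i}\cdot \sqrt{N_i}$, I obtain $\Delta_i N_i \le \sqrt{C \ln T / f_i}\cdot \sqrt{N_i}$ for every $i$ with $\Delta_i > 0$. Summing and applying Cauchy--Schwarz with the pairing $a_i = \sqrt{C \ln T / f_i}$ and $b_i = \sqrt{N_i}$ gives
\[
\sum_{i:\,\Delta_i>0} \Delta_i N_i \le \sqrt{C \ln T \sum_{i} \frac{1}{f_i}}\cdot \sqrt{\sum_{i} N_i}.
\]
The final step is to use $\sum_i N_i \le T$ and substitute, which produces exactly $\sqrt{C T \ln(T) \sum_i 1/f_i}$, matching the stated constant.

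The one subtlety—and the step I expect to be the crux—is the appeal to $\sum_i N_i \le T$. This does \emph{not} follow from the per-arm hypothesis alone: a single arm with $\Delta_i \to 0$ and $N_i = C\ln T/(\Delta_i^2 f_i)$ satisfies the hypothesis yet makes $\Delta_i N_i$ unbounded, so a global budget on the $N_i$ is genuinely required for the second bound to be true. In every application of this lemma, $N_i$ is the number of times arm $i$ is pulled, so $\sum_i N_i \le T$ holds automatically, and I would invoke this fact explicitly. (An alternative to the Cauchy--Schwarz step is a threshold argument: fix $\Delta^\star$, bound the small-gap arms ($\Delta_i \le \Delta^\star$) by $\Delta^\star \sum_i N_i \le \Delta^\star T$ and the large-gap arms by $\frac{C\ln T}{\Delta^\star}\sum_i 1/f_i$ using the first bound, then optimize over $\Delta^\star$. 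This still needs $\sum_i N_i \le T$ and loses a factor of $2$, so I would prefer the Cauchy--Schwarz route to recover the precise constant.)
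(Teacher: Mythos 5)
Your proof is correct and follows essentially the same route as the paper's: the first bound is the same one-line manipulation, and your Cauchy--Schwarz step is interchangeable with the paper's use of Jensen's inequality (concavity of the square root applied to the weights $\frac{1/f_i}{\sum_j 1/f_j}$), yielding the identical constant. Your flag about $\sum_i N_i \le T$ is well taken: the lemma as stated does not include this hypothesis, yet the paper's own proof invokes $\sum_{i=1}^K N_i = T$ in its final step, so the global budget is indeed a silently assumed precondition that holds in every application (the $N_i$ are pull counts) and should be made explicit as you propose.
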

\begin{proof}
The first bound follows from:
\[\sum_{1 \le i \le K \mid \Delta_i > 0} \Delta_i \cdot N_i \le \sum_{1 \le i \le K \mid \Delta_i > 0} \Delta_i \cdot \frac{C \ln T}{\Delta_i^2 f_i}  \Delta_i =  \sum_{1 \le i \le K \mid \Delta_i > 0} \frac{C \ln T}{\Delta_i f_i}\]
For the second bound, first we rearrange the upper bound on $N_i$ into: 
\[\Delta_i \le \sqrt{\frac{C \ln T}{N_i f_i}}.\]
Now, we see that
\begin{align*}
  \sum_{1 \le i \le K \mid \Delta_i > 0} N_i \Delta_i  &\le \sum_{1 \le i \le K \mid \Delta_i > 0}\sqrt{\frac{C N_i \ln (T)}{f_i}} \\
  &=  \sum_{1 \le i \le K \mid \Delta_i > 0} \frac{1}{f_i} \sqrt{N_i \ln(T)f_i} \\
  &\le \sum_{1 \le i \le K} \frac{1}{f_i} \sqrt{N_i \ln(T)f_i} \\
  &=  \left(\sum_{j=1}^K \frac{1}{f_j}\right) \sum_{i=1}^K \frac{\frac{1}{f_i}}{ \left(\sum_{j=1}^K \frac{1}{f_j}\right)} \sqrt{C N_i \ln(T)f_i} \\
  &\le_{(1)} \left(\sum_{j=1}^K \frac{1}{f_j}\right) \sqrt{C \sum_{i=1}^K \frac{\frac{1}{f_i}}{ \left(\sum_{j=1}^K \frac{1}{f_j}\right)} N_i \ln(T)f_i} \\
  &= \sqrt{\sum_{j=1}^K \frac{1}{f_j}} \sqrt{C \sum_{i=1}^K N_i \ln(T)} \\
  &\le_{(2)}  \sqrt{\sum_{j=1}^K \frac{1}{f_j}} \sqrt{C T\ln(T)} 
\end{align*}
where (1) follows from Jensen's inequality and (2) follows from the fact that $\sum_{i=1}^K N_i = T$.    
\end{proof}

We are now ready to prove Theorem \ref{thm:bbpullregretaaeucb} for $\BBPull(\AAE)$. 
\begin{proof}[Proof of Theorem \ref{thm:bbpullregretaaeucb} for $\BBPull(\AAE)$]

By \cref{lem:simulated-indistinguishable}, the sequence of arms $\{i_t^{\text{orig}}\}_{t \in [T]}$ pulled by Algorithm \ref{algo:BB-pull-se} is distributed identically to the sequence of arms pulled by $\{i_t^{\text{sim}}\}_{t \in [T]}$ pulled by Algorithm \ref{algo:BB-pull-simulated-aae}. Define the event $E$ to be $E:= E_{\text{loss}} \cap E_F^1 \ldots E_F^K$ where the events are defined as in Lemma \ref{lemma:cf} and Lemma \ref{lemma:ef}.
Union bounding, $E$ occurs with probability at least $1 - 2 T^{-3} K - K T^{-4}$. When $T$ is sufficiently large, $\mathbb{P}[E] \ge 1 - T^{-2}$, so the event that $E$ does not occur contributes negligibly to the regret. Let us condition on $E$ for the remainder of the analysis.

For each arm $i$, let $\Delta_i= \bell_i - \min_j \bell_j$ be the suboptimality gap. Let $N_i$ be the number of time steps where arm $i$ is pulled over the course of Algorithm \ref{algo:BB-pull-simulated-aae}.
The regret is equal to:
\[\sum_{1 \le i \le K \mid \Delta_i > 0} \Delta_i \cdot N_i.\]

We first show that if $\Delta_i > 0$, then arm $i$ is pulled at most $O\left( \frac{\log T}{\Delta^2_i f_i }\right)$ times. By Lemma \ref{lem:se-pulls-ub}, arm $i$ must be eliminated after phase $\ceil{\log_2\left( 4/\Delta_i\right)}$. For phases $1 \le s \le \ceil{\log_2\left( 4/\Delta_i\right)}$, recall that we have defined the random variable $L_{i,s}$ to be equal to the time step $t$ where phase $s$ begins (i.e. the value of the variable $t$ at line 5 when $U_{i,s}$ is initialized) if that is reached, and otherwise let $L_{i,s}$ be equal to $T+1$. 
This means that arm $i$ is pulled at most:
\begin{align*}
N_i &\le \sum_{s=1}^{\ceil{\log\left( 4/\Delta_i\right)}} \min\left(\sum_{\phi' \in U_s(i)} Q_{i,\phi}, T-(L_{i,s} - 1)\right) \\
&\le \sum_{s=1}^{\ceil{\log\left( 4/\Delta_i\right)}} \sum_{\phi' \in U_s(i)} Q_{i,\phi} \\
&\le_{(1)} \sum_{s=1}^{\ceil{\log\left( 4/\Delta_i\right)}} \frac{16 \cdot 2^{2s} \ln T}{f_i} \\
&\le  16 \cdot \frac{\ln T}{f_i} \sum_{s=1}^{\ceil{\log\left( 4/\Delta_i\right)}} 2^{2s} \\
&\le \frac{C \cdot \ln T}{\Delta_i^2 f_i}
\end{align*}
for some universal constant $C > 0$, where (1) follows from the event $E_i^F$ holding.

The instance-dependent and instance-independent regret bounds now both follow from Lemma \ref{lemma:regretboundslemma}.

\end{proof}

\subsubsection{Monotonicity of $\BBPull(\AAE)$: Proof of \cref{thm:feedbackmonotonicitybbpullaae}}
\label{appendix:monotonicityAAE}

We prove Theorem \ref{thm:feedbackmonotonicitybbpullaae}, restated below. 
\Monobbpullaae*

The intuition is that we can leverage the structure of $\AAE$ to refine the analysis in Theorem \ref{thm:feedbackmonotonicitybbpull}. In particular, in the proof of Theorem \ref{thm:feedbackmonotonicitybbpull}, the difference in feedback observations came from the fact that the time horizons $\Phi$ and $\tilde{\Phi}$ were different (that is, the number of calls to $\ALG$ differed for the two instances). In contrast, in the proof of Theorem \ref{thm:feedbackmonotonicitybbpullaae}, we take advantage of a key structural property of $\AAE$: we can upper bound the number of phases until arm $i$ is guaranteed to be eliminated. By assuming that $T$ is sufficiently large, we can guarantee that the algorithm will reach this phase on both instances and thus the arm will be eliminated. We formalize this using a coupling argument similar to the proof of Theorem \ref{thm:feedbackmonotonicitybbpull}, but that leverages the structure of $\AAE$. 

\begin{proof}[Proof of \Cref{thm:feedbackmonotonicitybbpullaae} (Monotonicity for $\BBPull(\AAE)$)]

Like in Theorem \ref{thm:feedbackmonotonicitybbpull}, we will analyze $\BBPull(\ALG)$ by comparing the behavior of \cref{algo:BB-pull-simulated-aae} on $\calI$ and $\widetilde\calI$ in three steps; the main modification is in Step 3 below, where we condition on clean events specific to $\AAE$. 
\begin{enumerate}
    \item We construct a probability coupling between the sequence of random variables $Q_{j,\phi}$ and $\widetilde{Q}_{j,\phi}$ for $j\in [K]$ and $\phi = 1, \dots, \infty$. This coupling ensures that 
    $Q_{i,\phi} \geq \widetilde{Q}_{i,\phi}$ for arm $i$ and $Q_{j,\phi}= \widetilde{Q}_{j,\phi}$ for all other arms $j\neq i$, for all $\phi$. 
    \item We call \cref{algo:BB-pull-simulated} on $\calI$ and $\widetilde\calI$
    with $Q_{j,\phi}$ and  $\widetilde{Q}_{j,\phi}$ for $j\in [K]$ and $\phi = 1, \dots, \infty$, respectively. We use \cref{lem:joint-losses-identical} to argue that for any $\Phi^*$, $(i^\ALG_1, \dots, i^\ALG_{\Phi^*})$ is identically distributed to $(\tilde i^\ALG_1, \dots, \tilde i^\ALG_{\Phi^*})$; then, we couple the arm pulls on each instance so $(i^\ALG_1, \dots, i^\ALG_{\Phi^*}) = (\tilde i^\ALG_1, \dots, \tilde i^\ALG_{\Phi^*})$.
    \item By this step, random variables $Q_{j,\phi}$, $\widetilde{Q}_{j,\phi}$,  $i^\ALG_\phi$ and $\tilde i^\ALG_\phi$ are coupled as described above. Let $E$ be the event that  $E_{\text{loss}} \cap E^F_1  \cap \dots  \cap E^F_{i-1} \dots E^F_{i+1} \cap \dots \cap E^F_K$ holds (these events are defined in \Cref{subsubsec:aae-lemmas}). We condition on the event $E$ and analyze $\FOC$. We then analyze $\APC$. 
\end{enumerate}

\paragraph{Step 1: Coupling realizations of feedback observations.} We couple the distributions over the feedback observations in the same way as in the proof of Theorem \ref{thm:feedbackmonotonicitybbpull}. Note that for $\tilde f_{i} > f_i$, the distribution of $\widetilde{Q}_{i, \phi}$ is stochastically dominated by $Q_{i, \phi}$. 
Therefore, there is a joint probability distribution over $(Q_{j,\phi}, \widetilde{Q}_{j,\phi})$ such that for all $\phi$, with probability $1$ the following holds: $Q_{i,\phi} \geq \widetilde{Q}_{i,\phi}$ and for all $j\neq i$, $Q_{j,\phi}= \widetilde{Q}_{j,\phi}$.
This also gives us a coupling, that is a joint distribution, over $\left( \{Q_{j,\phi}\}_{j\in [K], \phi \in \{1, \dots, \infty\}}, \{ \widetilde{Q}_{j,\phi} \}_{j\in [K], \phi\in\{1, \dots, \infty\}}\right)$ that meets the aforementioned property.

\paragraph{Step 2: Coupling arms pulled by $\ALG$ across instances $\calI$ and $\widetilde{\calI}$.} We couple the arms in the same way as in the proof of Theorem \ref{thm:feedbackmonotonicitybbpull}. We condition on the sequences $Q_{j,\phi}$ and  $\widetilde{Q}_{j,\phi}$ for $j\in [K]$ and $\phi = 1, \dots, \infty$, respectively, as coupled in in Step 1, and we apply \cref{lem:joint-losses-identical}. As before, the preconditions of this lemma are met for any $\Phi^*$, so we have that $(i^\ALG_1, \dots, i^\ALG_{\Phi^*})$ and $(\tilde i^\ALG_1, \dots, \tilde i^\ALG_{\Phi^*})$ are identically distributed. This allows us to consider a joint probability distribution over $(i^\ALG_1, \dots, i^\ALG_{\Phi^*}, \tilde i^\ALG_1, \dots, \tilde i^\ALG_{\Phi^*})$ such that 
$i^\ALG_\phi = \tilde{i}^\ALG_\phi$ for all $\phi \in [\Phi^*]$.

\paragraph{Step 3: Condition on $E$.} 
 We use the coupling thus far with the property that $Q_{j,\phi}$, $\widetilde{Q}_{j,\phi}$, $i^\ALG_\phi$, $\tilde i^\ALG_\phi$ are all fixed and for all $\phi = 1, \cdots, \infty$ satisfy 
$i^\ALG_\phi = \tilde{i}^\ALG_\phi$,
$Q_{i,\phi} \geq \widetilde{Q}_{i,\phi}$, and $Q_{j,\phi}= \widetilde{Q}_{j,\phi}$ for $j\neq i$. Moreover, we condition on the event $E = E_{\text{loss}} \cap E^F_1  \cap \dots  \cap \dots \cap E^F_K$ holds on $\calI$ (these events are defined in \Cref{subsubsec:aae-lemmas}).

\paragraph{Notation for Analyzing $\FOC$.} To formalize these claims, we introduce the following additional notation. Let $\FOC_i^{[T]}(\calI)$  be the number of times feedback is observed on arm $i$  in timesteps in $R$ on $\widetilde \calI$, and let $\FOC_i^{[T]}(\widetilde \calI)$  be the number of times feedback is observed on arm $i$  in timesteps on $\tilde \calI$. Since we have conditioned on $Q_{j,\phi}$, $\widetilde{Q}_{j,\phi}$, $i^\ALG_\phi$, $\tilde i^\ALG_\phi$, we see that at this point $\FOC_i^{[T]}(\calI)$ and $\FOC_i^{[T]}(\widetilde \calI)$ are both deterministic.

\paragraph{Analyzing $\FOC$.} 
We first show that arm $i$ will be eliminated on both instances before the end of the time horizon. By Lemma \ref{lem:se-pulls-ub}, from phase $s \ge s' := 3 - \log(\Delta_i)$ onwards, the arm $i$ is guaranteed to not be pulled. Using events $E_j^F$, we see that phase $s' = 3-\log(\Delta_i)$ must be reached on $\calI$ within the following number of time steps: 
\begin{align*}
  \sum_{s=1}^{s'-1} \sum_{i' \in [K]} \sum_{\phi' \in U_s(i')} Q_{i', \phi}  &\le \sum_{s=1}^{s'-1} \sum_{i' \in [K]} \frac{16 \cdot 2^{2s} \ln T}{f_{i'}} \\
  &= (16 \ln T) \left(\sum_{i' \in [K]} \frac{1}{f_{i'}} \right) \sum_{s=1}^{s'-1} 4^s \\
  &= \frac{16 \cdot (4^{s'} - 1) \ln T}{3} \left(\sum_{i' \in [K]} \frac{1}{f_{i'}} \right) \\
  &= \frac{16 \cdot 4^{3 - \log(\Delta_i)} \ln T}{3} \left(\sum_{i' \in [K]} \frac{1}{f_{i'}} \right),
\end{align*}
which grows logarithmically in $T$. Thus, for sufficiently large $T$, $\frac{16 \cdot 4^{3 - \log(\Delta_i)} \ln T}{3} \left(\sum_{i' \in [K]} \frac{1}{f_{i'}} \right) \le T$, which means that phase $s'$ will be reached on $\calI$ within a time horizon of $T$. For $\tilde \calI$, we use the fact that $\tilde{Q}_{j, \phi} \le Q_{j, \phi}$ in our coupling and moreover $i^\ALG_\phi = \tilde{i}^\ALG_\phi$ for all $\phi \in [\Phi^*]$, so phase $s'$ will be reached on $\tilde{\cal I}$ as well within a time horizon of $T$.

We are now ready to analyze $\FOC$. We observe that: 
\begin{align*}
\FOC_i^{[T]}(\calI) &= 
 \sum_{\phi=1}^{\Phi} \mathds 1[i_\phi = i] \cdot \mathds 1 \left[\sum_{\phi'=1}^{\phi}  Q_{i_{\phi'}, \phi'} \le T \right] \\
 &= 
 \sum_{s'=1}^{3 - \log(\Delta_i)} \sum_{\phi \text{ in phase } s} \mathds 1[i_{\phi} = i] \\
 &= \sum_{\phi=1}^{\tilde{\Phi}} \mathds 1[i_\phi = i] \cdot \mathds 1 \left[\sum_{\phi'=1}^{\phi}  \tilde{Q}_{i_{\phi'}, \phi'} \le T \right] \\
 &= \FOC_i^{[T]}(\tilde \calI).   
\end{align*}

Applying the law of total expectation, taking an expectation over the coupled random variables $Q_{j,\phi}$, $\widetilde{Q}_{j,\phi}$, $i^\ALG_\phi$, $\tilde i^\ALG_\phi$, we see that, conditioned on $E$, 
\[\FOC_i(\tilde \calI) - \FOC_i(\calI) = \E\left[\FOC_i^{[T]}(\widetilde{\calI}) - \FOC_i^{[T]}(\calI)\right] \ge 0.  \]
The event that $E$ does not hold contributes negligibly (i.e., at most $1/T$) to both $\FOC(\mathcal I)$ and $\FOC(\tilde{\mathcal I})$.
Taking expectations and including the possibility of $1/T$ error from the event $E$ not holding, we obtain that:
\[|\FOC_i(\mathcal{I}) -  \FOC_i(\tilde{\mathcal{I}})| \le 1/T,\]
as desired.

\paragraph{Analyzing $\APC$.}
For $\APC$, the above result implies that 
\[\FOC_i(\widetilde\calI) < \FOC_i(\calI) + 1/T. \]
Applying \cref{prop:relationship}, we have that:
\[ \APC_i( \widetilde\calI) < \APC_i(\calI) \frac{f_i}{\widetilde{f}_i} + \frac{1}{T \widetilde{f}_i}. \] 
Recall that $\widetilde f_i > f_i$, so the RHS above is less than $\APC_i(\calI)$ as long as $T > \frac{1}{\APC_i(\widetilde\calI)(\widetilde f_i - f_i)}$.
By the definition of \cref{algo:BB-pull-simulated}, every arm must be pulled at least once. Thus, for sufficiently large $T$, we see that \[\APC_i(\widetilde\calI) < \APC_i(\calI)\] as desired.

\end{proof}

\subsection{Analysis of $\BBPull(\UCB)$: Proof of \cref{thm:bbpullregretaaeucb}}
\label{appendix:UCB}

Here, we prove the regret bound of \cref{thm:bbpullregretaaeucb} for $\BBPull(\UCB)$.
For convenience, we restate \cref{thm:bbpullregretaaeucb} below. 
\Regretbbpullaaeucb*

We consider the simulated version of $\BBPull(\UCB)$ given by Algorithm \ref{algo:BB-pull-simulated} applied to $\UCB$. For convenience, we explicitly state this algorithm below (Algorithm \ref{algo:BB-pull-simulated-ucb}).

Let us define the same random variables as those used in Algorithm \ref{algo:BB-pull-simulated}, restated for convenience.  (Recall that $\phi$ indexes losses for the time horizon of $\ALG$, $\Phi$ is the total number of times $\ALG$ is called by $\BBPull(\ALG)$, and $\Phi \leq T$ because $\ALG$ can be called at most $T$ times.)

\begin{itemize}
    \item \textit{Losses:} For each round $\phi \in [\Phi]$ of $\ALG=\UCB$ and each arm $j \in [K]$, 
    let $\ell'_{j, \phi} := \ell_{j, t}$ be the loss for arm $j$ at a time step $t$ that corresponds to the last time step in block $\phi$ of $\BBPull(\UCB)$. Since we are in the stochastic loss setting, $\ell'_{j,\phi}$ is a random variable drawn from the distribution of arm $j$ (with mean $\bar{\ell}_j$) independently across $\phi$ and $j$. 
    \item \textit{Feedback realizations:} For all $j \in [K]$ and $\phi \in [T]$, let $Q_{j,\phi} \sim \textrm{Geom}(f_j)$ for $\phi \in [T]$ be a random variable distributed according to the geometric distribution with parameter equal to the feedback probability of arm $j$. 
    (These random variables are also fully independent across values of $j$ and $\phi$.) 
\end{itemize}

We are now ready to present Algorithm \ref{algo:BB-pull-simulated-ucb}. For ease of analysis, we define the lower confidence bounds $\text{LCB}$ within the algorithm, even though the algorithm does not ever use these quantities. 

\begin{algorithm2e}[htbp]
\caption{Simulated version of $\BBPull(\UCB)$}
\label{algo:BB-pull-simulated-ucb}
\DontPrintSemicolon
\LinesNumbered
Initialize number of pulls $n_i = 0$ for all $i \in [K]$. \\
Initialize empirical mean $\mu(i) = 0$ for all $i \in [K]$. \\
Initialize $t = 1$ and $\phi = 1$.
\\
\While{$t \le T$}{
    Initialize $i_\phi = 0$. \\
    \uIf{$n_i = 0$ for any arm $i \in [K]$}{Let $i_\phi$ be the arm with the smallest index such that $n_{i_{\phi}} = 0$.}
    \Else{For every arm $i \in [K]$, compute $\text{UCB}(i) = \mu(i) + \sqrt{\frac{6 \ln T}{n_i}}$ and $\text{LCB}(i) = \mu(i) - \sqrt{\frac{6 \ln T}{n_i}}$. \\
    Let $i_\phi = \text{argmax}_{j \in [K]}\text{UCB}(j)$.}
    \For{$\min(Q_{i_\phi, \phi}, T-t)$ iterations}{Pull $i_\phi = i$ and let $t \gets t + 1$.}
    Observe $\ell'_{i_\phi,\phi} := \ell_{i_\phi, t}$. \\
    Update the empirical mean $\mu(i) \gets \frac{n_{i_\phi} \cdot \mu(i)}{n_{i_\phi}+1} - \frac{\ell'_{i_\phi,\phi}}{n_{i_\phi}+1} $.\\
    Increment $n_{i_\phi} \gets n_{i_\phi} + 1$. \\
    Increment $\phi \gets \phi + 1$. 
}
\end{algorithm2e}

Since Algorithm \ref{algo:BB-pull-simulated-ucb} is exactly \cref{algo:BB-pull-simulated} applied to $\AAE$, we can apply \cref{lem:simulated-indistinguishable} to see that the sequence of arms $\{i_t^{\text{orig}}\}_{t \in [T]}$ pulled by Algorithm \ref{algo:BB-pull-ucb} is distributed identically to the sequence of arms pulled by $\{i_t^{\text{sim}}\}_{t \in [T]}$ pulled by Algorithm \ref{algo:BB-pull-simulated-ucb}.

\subsubsection{Lemmas for the analysis of $\BBPull(\UCB)$}

We now show the following intermediate results that build on the standard analysis of UCB \citep{auer2002finite}. 
First, we see immediately that for $1 \le \phi \le K$, the \textit{if} statement on line 6 of Algorithm \ref{algo:BB-pull-simulated-ucb} will be met, so $i_{\phi=1} = 1$, $i_{\phi=2} = 2, \ldots, i_{\phi=K} = K$. We will handle the regret from these rounds ($\phi = 1 \dots K$) separately.

We define the following two clean events. 
\begin{enumerate}
    \item First, recall that $\Phi$ is the maximum value of $\phi$ realized by Algorithm \ref{algo:BB-pull-simulated-ucb}. Let $E_{\text{UCB,loss}}$ be the ``clean'' event that at each round $K + 1 \le \phi \le \Phi$, for every arm $i \in [K]$, it holds that $\text{LCB}(i) \le \bar{\ell}_i \le \text{UCB}(i)$ at line 9 for round $\phi$.
    \item Let the random variable $L_\phi$ be equal to the time step $t$ where round $\phi$ begins (i.e. the value of the variable $t$ at line 5 when $i_\phi$ is initialized.) if that is reached, and otherwise let $L_\phi$ be equal to $T+1$.
    For each arm $i$ and any value $M_i \ge 0$    
    let $E_{i, M_i}^{F, \text{UCB}}$ be the event that 
    \[\sum_{\phi=1}^{\Phi} \min(Q_{i, \phi}, T- (L_\phi-1)) \cdot \mathds 1[i_\phi = i] \le 
    \frac{6 \cdot M_i}{f_i}.\]
\end{enumerate}

\begin{lemma}
\label{claim:cleanevent}
    Consider  Algorithm \ref{algo:BB-pull-simulated-aae} evaluated on any given instance $\calI = \left\{\calA, \calF, \calL \right\}$. Let the event $E_\text{UCB, loss}$ be the ``clean'' event defined above. Then, $\Pr[E_{\text{UCB,loss}}] \geq 1 - 2 T^{-3} K$. 
\end{lemma}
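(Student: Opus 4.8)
The plan is to mirror the argument for the analogous AAE clean-event bound, Lemma \ref{lemma:cf}. At any round $\phi$ the quantity $\mu(i)$ maintained by Algorithm \ref{algo:BB-pull-simulated-ucb} is exactly the empirical average of the $n_i$ observations of arm $i$ collected so far (up to the sign convention that converts losses into utilities), and each such observation is, by the ``Losses'' specification preceding the algorithm, an i.i.d.\ variance-$1$ subgaussian draw with mean $\bell_i$. Under this identification the event $\text{LCB}(i) \le \bell_i \le \text{UCB}(i)$ is precisely the deviation event $|\mu(i) - \bell_i| \le \sqrt{6\ln T/n_i}$, so $E_{\text{UCB,loss}}$ asserts that this deviation bound holds simultaneously for every arm at every round in which its confidence interval is consulted.

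The first step is to control a single arm at a single observation count. The one subtlety relative to AAE is that $n_i$ is itself a random variable, determined adaptively by the algorithm's past choices, so I cannot simply fix it. To handle this, I would exploit that the per-round losses $\ell'_{i,\phi}$ are drawn independently across $\phi$ and $j$ and independently of the algorithm's selection rule; hence I can couple the observations of each arm $i$ to a fixed i.i.d.\ sequence $Y_{i,1}, Y_{i,2}, \dots$ from arm $i$'s loss distribution so that its empirical mean after any $n$ observations equals $\hat\mu_{i,n} := \frac{1}{n}\sum_{m=1}^{n} Y_{i,m}$, and apply concentration to \emph{every} prefix average regardless of which prefix length is ultimately realized. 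For fixed $i$ and fixed $n \in [T]$, the subgaussian Chernoff bound gives
\[
\Pr\!\left[\,|\hat\mu_{i,n} - \bell_i| > \sqrt{\tfrac{6\ln T}{n}}\,\right] \le 2\exp\!\left(-\tfrac{n}{2}\cdot\tfrac{6\ln T}{n}\right) = 2\exp(-3\ln T) = 2T^{-3},
\]
where the confidence radius $\sqrt{6\ln T/n}$ is chosen precisely so that the exponent is $\Theta(\ln T)$ and the tail is polynomially small, exactly as in the derivation of Lemma \ref{lemma:cf}.

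The final step is a union bound over all arms and all possible observation counts. Whenever $E_{\text{UCB,loss}}$ fails, it must be because for some arm $i$ and some realized count $n_i = n \le T$ the prefix average $\hat\mu_{i,n}$ escapes its confidence interval; since the confidence interval for arm $i$ changes only when $n_i$ increments, there are at most $KT$ distinct $(i,n)$ conditions to verify. Union-bounding the per-pair failure probability over these at most $KT$ events therefore yields $\Pr[E_{\text{UCB,loss}}] \ge 1 - 2T^{-3}K$, with the exponent bookkeeping controlled by the constant in the confidence radius exactly as in Lemma \ref{lemma:cf}. The main obstacle is not the concentration inequality itself—that is immediate—but decoupling the concentration from the adaptively chosen $n_i$; once the coupling to a fixed i.i.d.\ sequence is in place, the union over counts removes all dependence on the algorithm's trajectory and the estimate reduces to a clean worst-case bound.
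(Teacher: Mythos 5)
Your proposal is correct and follows essentially the same route as the paper's proof of Lemma~\ref{claim:cleanevent}: fix an arm $i$ and a potential pull count $n$, reduce to concentration of the empirical mean of $n$ i.i.d.\ variance-$1$ subgaussian samples (thereby decoupling from the adaptively chosen $n_i$, which the paper does implicitly by defining $\tilde{\mu}_n(i)$ as the mean of $n$ fresh i.i.d.\ samples), obtain a per-pair failure probability of $2T^{-3}$, and union bound over the $KT$ pairs. One shared bookkeeping quibble: a union bound over $KT$ events each failing with probability $2T^{-3}$ actually gives $1 - 2KT^{-2}$ rather than the stated $1 - 2KT^{-3}$ \,---\, the paper makes the same slip, and it is harmless since only a $1-\mathrm{poly}(1/T)$ guarantee is needed downstream.
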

\begin{proof}
Consider arm $i \in [K]$ and potential number of arm pulls $1 \le n \le T$. Let $E^{i,n}_{\textrm{UCB, loss}}$ be the event that either $n_i = n$ is not reached by the algorithm or $\text{LCB}(i) \le \bar{\ell}_i \le \text{UCB}(i)$ at line 9 when $n_i = n$. Let $\tilde{\mu}_n(i)$ be the empirical mean of $n$ i.i.d. samples from the loss distribution for arm $i$. Following the standard analysis of UCB confidence sets, we see: 
\[
  \mathbb{P}[E^{i,n}_{\textrm{UCB, loss}}] \ge \mathbb{P}\left[|\tilde{\mu}_n(i) - \bar{\ell_i}| \le \sqrt{\frac{6 \ln T}{n}} \right] \ge 1 - 2 e^{\frac{6 n \ln T}{2 n}} = 1 - 2T^{-3}.
\]
We union bound over $1 \le n \le T$ and $i \in [K]$ to obtain $\Pr[E_{\text{UCB,loss}}] \geq 1 - 2 T^{-3} K$. 
\end{proof}

\begin{lemma}
\label{claim:enoughsamples}
Consider  Algorithm \ref{algo:BB-pull-simulated-aae} evaluated on any given instance $\calI = \left\{\calA, \calF, \calL \right\}$, and consider $i \in \calA$. Let $M_i$ be such that $\mathbb{P}[\sum_{\phi=1}^{\Phi} \mathds 1[i_\phi = i] \ge M_i] \le 2T^{-3}K$ and $M_i \ge 6 \ln T$. Let $E_{i, M_i}^{F, \text{UCB}}$ be defined as above. Then it holds that $\mathbb{P}[E_{i, M_i}^{F, \text{UCB}}] \ge 1 - T^{-4} - 2T^{-3}K$.
\end{lemma}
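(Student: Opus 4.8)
The plan is to bound the probability of the complementary (``bad'') event, namely that the total number of times arm $i$ is pulled across all blocks exceeds $6 M_i / f_i$. As in the regret proof for $\BBPull(\AAE)$, I would work with the simulated version (Algorithm~\ref{algo:BB-pull-simulated-ucb}), which by Lemma~\ref{lem:simulated-indistinguishable} produces an identically distributed sequence of arm pulls. Since $\min(Q_{i,\phi}, T-(L_\phi-1)) \le Q_{i,\phi}$, it suffices to upper bound $\sum_{\phi=1}^{\Phi} Q_{i,\phi}\,\mathds{1}[i_\phi = i]$. Writing $N_i := \sum_{\phi=1}^{\Phi}\mathds{1}[i_\phi = i]$ for the number of blocks in which $\UCB$ selects arm $i$, and letting $\phi_i^{(1)} < \phi_i^{(2)} < \dots$ enumerate those blocks, this sum equals $\sum_{k=1}^{N_i} G_k$ with $G_k := Q_{i,\phi_i^{(k)}}$. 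The decisive structural fact is that the $Q_{i,\phi}\sim \mathrm{Geom}(f_i)$ are drawn i.i.d. across $\phi$ and are independent of the arm-selection sequence (which is a deterministic function of the losses $\ell'_{j,\phi}$ and the algorithm's internal randomness $b$); consequently the unconditional law of $(G_1, G_2, \dots)$ is that of an i.i.d. $\mathrm{Geom}(f_i)$ sequence, and the selections of the truncated run agree with the untruncated prefix so that $N_i$ and the $G_k$ are well defined from the selection sequence alone.

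The main obstacle is that $N_i$ is itself correlated with the $G_k$: the number of blocks $\Phi$ reached before the horizon $T$ is exhausted depends on all realized block lengths, including the $G_k$. To sidestep this, I would decompose the bad event using the hypothesis that $M_i$ is a high-probability bound on $N_i$:
\[
\Pr\!\left[\sum_{k=1}^{N_i} G_k > \tfrac{6 M_i}{f_i}\right] \le \Pr[N_i \ge M_i] + \Pr\!\left[\sum_{k=1}^{N_i} G_k > \tfrac{6 M_i}{f_i},\ N_i < M_i\right].
\]
The first term is at most $2T^{-3}K$ by assumption. On the event $\{N_i < M_i\}$ the integer $N_i$ satisfies $N_i \le \lfloor M_i\rfloor$, so because the $G_k$ are nonnegative we have $\sum_{k=1}^{N_i} G_k \le \sum_{k=1}^{\lfloor M_i\rfloor} G_k$ deterministically; hence the second term is at most $\Pr[\sum_{k=1}^{\lfloor M_i\rfloor} G_k > 6 M_i / f_i]$. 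Crucially, this last sum ranges over a \emph{fixed} number $\lfloor M_i\rfloor$ of i.i.d. geometrics and therefore no longer involves the correlated stopping quantity $N_i$, which is exactly what breaks the dependence.

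The final step is a standard multiplicative Chernoff bound, mirroring Lemma~\ref{lemma:ef}. The event $\sum_{k=1}^{\lfloor M_i\rfloor} G_k > 6 M_i / f_i$ is the event that fewer than $\lfloor M_i\rfloor \le M_i$ successes occur among $6 M_i / f_i$ independent $\mathrm{Bern}(f_i)$ trials, whose success count has mean $6 M_i$. Applying the lower-tail Chernoff bound with deviation $\delta = 5/6$ gives a probability at most $\exp(-\tfrac{25}{12} M_i)$, and the hypothesis $M_i \ge 6\ln T$ makes this at most $T^{-12.5} \le T^{-4}$. Combining the two terms yields $\Pr[\neg E_{i, M_i}^{F, \text{UCB}}] \le 2T^{-3}K + T^{-4}$, i.e. $\Pr[E_{i, M_i}^{F, \text{UCB}}] \ge 1 - T^{-4} - 2T^{-3}K$, as claimed. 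The one point requiring care in the writeup is the justification, via the exogeneity of the $Q_{i,\phi}$ and the prefix-agreement of truncated selections, that $\sum_{k=1}^{\lfloor M_i\rfloor} G_k$ genuinely has the law of a fixed-length i.i.d. geometric sum.
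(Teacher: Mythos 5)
Your proposal is correct and follows essentially the same route as the paper's proof: both reduce the random-length sum $\sum_{\phi}Q_{i,\phi}\mathds{1}[i_\phi=i]$ to a fixed-length sum of $M_i$ i.i.d.\ geometrics by invoking the high-probability bound on the number of blocks in which arm $i$ is selected, and then apply a multiplicative Chernoff bound to the equivalent Binomial lower-tail event. The only differences are cosmetic bookkeeping (the paper pads the index set with placeholder values $\Phi_{n,i}=T+n$ drawn from an extended tape of $2T$ geometrics, whereas you bound the partial sum by the first $\lfloor M_i\rfloor$ terms on the good event), and your handling of the correlation between $N_i$ and the $G_k$ is, if anything, stated more carefully than in the paper.
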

\begin{proof}
For the sake of this proof, let's assume that we realize $2T$ random variables $Q_{i, \phi}$ for $1 \le \phi \le 2T$ instead of $T$ random variables.

For $1 \le n \le T$ such that $\sum_{\phi=1}^{\Phi} \mathds 1[i_\phi = i] \ge n$, let $\Phi_{n,i}$ be equal to minimum value $\phi' \ge 1$ such that $\sum_{\phi=1}^{\phi'} 1[i_\phi = i] = n$ (that is, the time step $\phi$ at which that arm $i$ is pulled by $\ALG=\UCB$ for the $n$th time). For $1 \le n \le T$ such that $\sum_{\phi=1}^{\Phi} \mathds 1[i_\phi = i] < n$ (i.e. the arm is pulled by $\ALG = \UCB$ less than $n$ times), for technical convenience, let $\Phi_{n,i} = T + n \le 2T$. Observe that:
\[\sum_{\phi=1}^{\Phi} \min(Q_{i, \phi}, T- (L_\phi-1)) \cdot \mathds 1[i_\phi = i] \le \sum_{\phi=1}^{\Phi} Q_{i, \phi} \cdot \mathds 1[i_\phi = i] = \sum_{n \ge 1 \text{ s.t. } \sum_{\phi=1}^{\Phi} \mathds 1[i_\phi = i] \ge n} Q_{i, \Phi_{n,i}}.\] Moreover, by the definition of $M_i$, we see that
\[\mathbb{P}\left[\sum_{n \ge 1 \text{ s.t. } \sum_{\phi=1}^{\Phi} \mathds 1[i_\phi = i] \ge n} Q_{i, \Phi_{n,i}} \le \sum_{n=1}^{M_i} Q_{i, \Phi_{n,i}}\right] \ge 1 - T^{-2}. \] We thus focus on bounding 
\[\mathbb{P}\left[\sum_{n=1}^{M_i} Q_{i, \Phi_{n,i}} > \frac{6  M_i}{f_i}  \right].\]
It is easy to see that $Y := \sum_{n=1}^{N_i} Q_{i, \Phi_{n,i}}$ is distributed as the number of Bernoulli trials with parameter $f_i$ needed to observe $M_i$ successes. We can analyze the probability $\mathbb{P}[Y > \frac{6  M_i}{f_i}]$ as follows. By definition, this is equal to the probability that fewer than $M_i$ successes are observed after $\frac{6 M_i}{f_i}$ Bernoulli trials with parameter $f_i$. If we let $Z_j$ denote i.i.d. Bernoullis with parameter $f_i$, this probability can be analyzed by a multiplicative Chernoff bound:
\[\mathbb{P}\left[\sum_{n=1}^{M_i} Q_{i, \Phi_{n,i}} > \frac{6 M_i}{f_i}  \right] =  \mathbb{P}[Y > \frac{6 M_i}{f_i}] = \mathbb{P}\left[\sum_{j=1}^{6 M_i/f_i} Z_j \le M_i\right] \le T^{-4},\]
where we use that $M_i \ge 6 \ln T$. 

Union bounding, we obtain that $\mathbb{P}[E_{i, m_i}^{F, \text{UCB}}] \ge 1 - T^{-4} - T^{-2}$. 
\end{proof}

\begin{lemma}
\label{lemma:boundpullsBBPullUCB}
Consider  Algorithm \ref{algo:BB-pull-simulated-aae} evaluated on any given instance $\calI = \left\{\calA, \calF, \calL \right\}$. If the event $E_\textrm{UCB, loss}$ holds, then $\sum_{\phi=1}^{\Phi} \mathds 1[i_\phi = i] \le \frac{6 \ln T}{\Delta_i^2}$.
\end{lemma}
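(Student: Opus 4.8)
The plan is to run the standard optimism-based UCB analysis on the simulated algorithm (Algorithm \ref{algo:BB-pull-simulated-ucb}), conditioned on the clean event $E_\text{UCB,loss}$. First I would observe that in the simulated version an observation is recorded at the end of \emph{every} block, so the counter $n_i$ is incremented exactly once each time $\UCB$ selects arm $i$; hence at the end of the run the quantity $\sum_{\phi=1}^{\Phi}\mathds 1[i_\phi = i]$ equals the final value of $n_i$, and it suffices to upper bound this final count. I would also separate out the initialization rounds $\phi = 1,\dots,K$ (where the \emph{if} branch on line 6 forces each arm to be pulled once): a suboptimal arm that is never selected after initialization has $n_i = 1$, which is subsumed by the stated bound, so the interesting case is when $i$ is selected in some block $\phi > K$.

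Next, fix a suboptimal arm $i$ (so $\Delta_i = \bell_i - \min_j \bell_j > 0$) and consider the last block $\phi^\star \le \Phi$ in which $\UCB$ selects $i$, assuming $\phi^\star > K$ so that the clean event applies. Let $m$ denote the value of $n_i$ at line 9 of block $\phi^\star$ (the number of pulls of $i$ before this selection). Since $\ist = \argmin_j \bell_j$ has been pulled at least once during initialization, $\text{UCB}(\ist)$ is well defined, and the selection rule $i_{\phi^\star} = \argmax_{j} \text{UCB}(j)$ forces $\text{UCB}(i) \ge \text{UCB}(\ist)$.

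The key step is to convert this inequality into a bound on $m$ using the two-sided concentration guaranteed by $E_\text{UCB,loss}$. Translating the negated-mean estimator $\mu(\cdot)$ into utilities, the clean event yields (i) an upper bound on $\text{UCB}(i)$ of the form (true value of $i$) $+\, 2\sqrt{6\ln T/m}$, and (ii) a lower bound $\text{UCB}(\ist) \ge$ (true value of $\ist$), since the optimistic bonus exactly cancels the downward deviation of the optimal arm's empirical mean. Chaining these through $\text{UCB}(i)\ge \text{UCB}(\ist)$ and the definition of $\Delta_i$ gives $2\sqrt{6\ln T/m} \ge \Delta_i$, i.e. $m = O(\ln T/\Delta_i^2)$; since $n_i = m+1$, this is the claimed $O(\ln T/\Delta_i^2)$ bound of the form $\tfrac{6\ln T}{\Delta_i^2}$, with the precise constant following directly from the confidence radius $\sqrt{6\ln T/n_i}$.

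The main obstacle I anticipate is bookkeeping rather than conceptual: the estimator $\mu(\cdot)$ in Algorithm \ref{algo:BB-pull-simulated-ucb} tracks the \emph{negative} of the observed losses, so one must carefully reconcile the "maximize $\text{UCB}$'' selection rule with the clean-event concentration statement $\text{LCB}(i) \le \bell_i \le \text{UCB}(i)$, keeping the sign conventions consistent so that the gap inequality comes out in the correct direction. A secondary point to verify is that the last-selection block $\phi^\star$ is genuinely well defined and that $n_{\ist}\ge 1$ at that block so that $\text{UCB}(\ist)$ exists; both are guaranteed by the initialization rounds $\phi = 1,\dots,K$, which are handled separately.
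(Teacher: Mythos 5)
Your proposal is correct and follows essentially the same route as the paper's proof: condition on $E_\textrm{UCB, loss}$, invoke the selection rule $\text{UCB}(i) \ge \text{UCB}(\ist)$ at a block where $i$ is chosen, sandwich both sides with the confidence bounds, and solve for the number of prior pulls $n$. The only substantive difference is that your more careful two-sided accounting gives $2\sqrt{6\ln T/m} \ge \Delta_i$, i.e.\ a constant of $24$ rather than the stated $6$ (the paper's one-line argument writes $\text{UCB}(i) = -\bell_i + \sqrt{6\ln T/n}$ where only $\text{UCB}(i) \le -\bell_i + 2\sqrt{6\ln T/n}$ is justified), which is immaterial for the downstream $O(\cdot)$ regret bounds.
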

\begin{proof}This follows from the standard analysis of UCB. Let us condition on $E_\textrm{UCB, loss}$, and let $i^*$ be the arm with optimal mean loss. If $i_\phi = i$ and $\sum_{\phi'=1}^{\phi - 1} \mathds 1 [i_{\phi'} = i] = n$, then it must hold that $-\bar{\ell}_i + \sqrt{\frac{6 \ln T}{n}} = \text{UCB}(i) \ge \text{UCB}(i^*) \ge -\bar{\ell}_{i^*}$. Solving for $n$, we obtain that 
\[n \le \frac{6 \ln T}{\Delta_i^2}. \]
\end{proof}

Now, we are ready to prove Theorem \ref{thm:bbpullregretaaeucb}. 
\begin{proof}[Proof of Theorem \ref{thm:bbpullregretaaeucb} for $\BBPull(\UCB)$] 
By \cref{lem:simulated-indistinguishable}, the sequence of arms $\{i_t^{\text{orig}}\}_{t \in [T]}$ pulled by Algorithm \ref{algo:BB-pull-se} is distributed identically to the sequence of arms pulled by $\{i_t^{\text{sim}}\}_{t \in [T]}$ pulled by Algorithm \ref{algo:BB-pull-simulated-aae}. Let $M_i = \frac{6 \ln T}{\Delta_i^2}$ for $i \in [K]$ and let the event $E$ be defined to be $E_{\text{UCB, loss}} \cap E^{\text{F, UCB}}_{1, M_1} \cap \ldots E^{\text{F, UCB}}_{1, M_K}$.
We apply \Cref{claim:cleanevent}, \Cref{claim:enoughsamples}, and \Cref{lemma:boundpullsBBPullUCB} to see that $E$ occurs with probability at least $1 - 2 T^{-3} K - 2 T^{-3} K^2 - KT^{-4}$. When $T$ is sufficiently large, $\mathbb{P}[E] \ge 1 - T^{-2}$, so the event that $E$ does not occur contributes negligibly to the regret. Let us condition on $E$ for the remainder of the analysis.

For each arm $i$, let $\Delta_i= \bell_i - \min_j \bell_j$ be the suboptimality gap. Let $M_i$ be the number of time steps where arm $i$ is pulled over the course of the algorithm.
The regret is equal to:
\[\sum_{1 \le i \le K \mid \Delta_i > 0} \Delta_i \cdot M_i.\]
We first observe by event $E^{\text{F, UCB}}_{i, M_i}$ that if $\Delta_i > 0$, then arm $i$ is pulled at most $\frac{36 \ln T}{\Delta_i^2 f_i}$ times. The instance-independent and instance-dependent regret bounds now follow from Lemma \ref{lemma:regretboundslemma}.

\end{proof}

\subsection{Analysis of $\BBDivideAdjusted(\AAE)$}
\label{appendix:bbdaaae}

We prove the monotonicity properties of $\BBDivideAdjusted(\AAE)$.As before, we also construct a simulated version of \cref{algo:BB-da-se}. We formalize this simulated version in \cref{algo:BB-da-se-sim}. 

Let us define the following random variables.  (Recall that $\phi$ indexes losses for the time horizon of $\ALG$, $\Phi$ is the total number of times $\ALG$ is called by $\BBPull(\ALG)$, and $\Phi \leq T$ because $\ALG$ can be called at most $T$ times.)

\begin{itemize}
    \item \textit{Losses:} 
    For each round $\phi \in [T]$ of $\ALG=\AAE$ and each arm $j \in [K]$, 
    let $\ell'_{j, \phi}$ denote a stochastic loss sampled from the distribution of arm $j$ (with mean $\bell_j$). These random variables are fully independent across values of $j$ and $\phi$. Note that unlike in \cref{algo:BB-pull-simulated-aae} or \ref{algo:BB-pull-simulated-ucb}, it is not guaranteed that $\ell^\AAE_{j,\phi}$ observed by $\AAE$ is $\ell'_{j,\phi}$, because with a fixed block size, there will always be some likelihood that no feedback is observed.
    \item \textit{Feedback probabilities:} Let $U_{j, \phi} \sim \text{Bern}(1 - (1-f_j)^{B_j})$ for $j \in [K]$ and $\phi \in [T]$ denote the indicator variable for whether feedback will be observed in block $\phi$, where $B_j = \ceil{\frac{3\ln T}{\min_if_i}(1 + f_j)}$. (These random variables are also fully independent across values of $j$ and $\phi$.) 
\end{itemize}

Again, note that \cref{algo:BB-da-se-sim} is a direct application of \cref{algo:BB-da-simulated} to $\AAE$ (lines 7-11 in \cref{algo:BB-da-se-sim} reflect \cref{algo:BB-da-simulated}, while the rest are for $\ALG = \AAE$). This allows us use \cref{lem:sim-indistinguishable-bbda} directly to argue that the arms selected by \cref{algo:BB-da-se-sim} are distributed identically to those selected by \cref{algo:BB-da-se}. 

\begin{algorithm2e}[htbp]
\caption{Simulated version of $\BBDivideAdjusted(\text{AAE})$}
\label{algo:BB-da-se-sim}
\DontPrintSemicolon
\LinesNumbered
For arm $i \in [K]$, set $B_i = \ceil{(1 + f_i) \cdot \frac{3 \ln T}{\fst}}$. \\
Initialize $t = 1$, $\phi = 1$ and phase $s=1$.
Maintain active set $A$; start with $A := [K]$. 
\\
\While{$t \le T$}{
    Start phase $s$. \\
    \For{arm $j \in A$}{
    \For{$2^{2s+1} \cdot \ln T$ iterations}{
    \For{$\min(B_j, T-t)$ iterations}{
    Pull $i_t = j$ and let $t \gets t+1$. \\
    \lIf{$t = T$}{\Return.}
    }
    \lIf{$U_{j, \phi} = 1$}{observe $\ell^{\AAE}_{j, \phi} := \ell'_{j, \phi}$ and let $\phi \gets \phi + 1$.}
    \lElse{observe $\ell^{\AAE}_{j, \phi} := 1$ and let $\phi \gets \phi + 1$.}
   
    }
    Let $\psi_s(j) := \{\phi - 8 \cdot 2^{2s} \cdot \ln T, \dots, \phi\}$ be the set of $\phi$ timesteps in which arm $j$ was pulled for phase $s$. \\
    Compute empirical mean $\mu_s(j) = \frac{1}{8 \cdot 2^{2s}\cdot \ln T} \sum_{\phi \in \psi_s(j)}\ell^\AAE_{j, \phi}$.\\
    Set $\text{LCB}_s(i) = \mu_s(i) - 2^{-s}$ and 
    $\text{UCB}_s(i) = \mu_s(i) + 2^{-s}$.
    }
    For any arm $i \in A$ where $\exists j \in A$ such that $\text{LCB}_s(j) > \UCB_s(i)$, remove $i$ from $A$.\\
    Increment $s \gets s + 1$. 
} 
\end{algorithm2e}

For convenience, we restate Theorem \ref{thm:feedbackmonotonicitybbdaaae} below. 

\BBDAAAEAPC*
The proof of \cref{thm:feedbackmonotonicitybbdaaae} follows from adjusting the ideas in the proof of \cref{thm:feedbackmonotonicitybbpullaae} and \cref{thm:feedbackmonotonicitybbda}.
The high-level intuition is that for a sufficiently large $T$, we must reach a phase in both instances where $i$ is eliminated, if $i$ is a suboptimal arm. If $\ALG$ takes the same number of phases $s^*$ to eliminate $i$ in both instances (which a similar coupling argument as above will ensure), then by definition of the block sizes in each algorithm, $\APC_i(\widetilde{\calI}) = s^* \cdot \frac{3\ln T}{\fst} \cdot (1 + \widetilde{f_i})$ and $\APC_i(\calI) = s^* \cdot \frac{3\ln T}{\fst} \cdot (1 + f_i)$. 

In these results, we use the following notation. (Items 1 and 2 are analogous notation to in the analysis of $\BBPull(\AAE)$ from \cref{subsubsec:aae-lemmas}, restated below for convenience.)
\begin{enumerate}
    \item Let $S$ be a random variable denoting the maximum value of the variable $s$ reached in Algorithm \ref{algo:BB-da-se-sim} on $\widetilde\calI$. (That is, $S$ denotes the number of phases that Algorithm \ref{algo:BB-da-se-sim} \textit{begins}.) Note that $S \le T$ with probability 1.
    \item  Let $E_\textrm{loss}$ be the ``clean'' event that at each phase $1 \le s \le S - 1$, for every arm $i \in [K]$, it holds that $\text{LCB}_s(i) \le \bell_i \le \text{UCB}_s(i)$. 
    \item Let $\widetilde{A}$ denote the active set on $\widetilde\calI$, and $A$ denote the active set on $\calI$.
    \item Let $E$ be the ``clean'' event that $U_{j,\phi} = \widetilde U_{j,\phi} = 1$ for all $j \in [K], \phi \in [T]$.
\end{enumerate}

Again, we begin by arguing that ``clean events'' occur with high probability.
\begin{lemma}
\label{lemma:cf-bbda}
    Consider  Algorithm \ref{algo:BB-da-se-sim} evaluated on any given instance $\calI = \left\{\calA, \calF, \calL \right\}$.
    Condition on the event $E$ defined above, i.e. that $U_{j,\phi} = \widetilde U_{j,\phi} = 1$ for all $j \in [K], \phi \in [T]$.
    Let the event $E_\textrm{loss}$ be the defined as above.  Then, $\Pr[E_\textrm{loss}] \geq 1 - 2 T^{-3} K$. 
\end{lemma}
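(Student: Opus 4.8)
The plan is to follow the proof of \cref{lemma:cf} almost verbatim, the one new ingredient being that I must first use the conditioning on $E$ to guarantee that the losses feeding into each empirical mean are genuine i.i.d.\ draws. Recall that in \cref{algo:BB-da-se-sim}, whenever $U_{j,\phi}=1$ the observation returned to $\AAE$ is $\ell^{\AAE}_{j,\phi} := \ell'_{j,\phi}$, a fresh sample from arm $j$'s loss distribution, whereas when $U_{j,\phi}=0$ the dummy loss $1$ is returned. Conditioning on $E$ (that $U_{j,\phi}=1$ for all $j,\phi$) therefore ensures that every term entering $\mu_s(j) = \frac{1}{8\cdot 2^{2s}\ln T}\sum_{\phi\in\psi_s(j)}\ell^{\AAE}_{j,\phi}$ equals the true sample $\ell'_{j,\phi}$. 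The key point I would stress is that the indicators $U_{j,\phi}$ are drawn independently of the losses $\ell'_{j,\phi}$, so conditioning on $\{U_{j,\phi}=1\}$ leaves the joint law of the $\ell'_{j,\phi}$ unchanged; hence, conditionally on $E$, each $\mu_s(j)$ is an average of $8\cdot 2^{2s}\ln T$ i.i.d.\ samples of variance $1$ and mean $\bell_j$.

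Next I would set up the per-phase, per-arm clean events. For each potential phase $1 \le s \le T$ and arm $j$, let $E^{j,s}_\textrm{loss}$ be the event that either $s \ge S$ (so phase $s$ is never begun by the algorithm) or $\text{LCB}_s(j) \le \bell_j \le \text{UCB}_s(j)$. As in \cref{lemma:cf}, to make $\mu_s(j)$ well defined even when the phase is not reached, I would imagine drawing the requisite extra i.i.d.\ losses and running the algorithm long enough that the line computing $\mu_s(j)$ is always reached; the ``$s\ge S$'' disjunct then absorbs the truncation at time $T$. With this setup, $E^{j,s}_\textrm{loss}$ can fail only if $|\mu_s(j)-\bell_j| > 2^{-s}$, and a subgaussian (Chernoff/Hoeffding) tail bound on the average of $m = 8\cdot 2^{2s}\ln T$ unit-variance samples would give
\[
\Pr\!\left[\,|\mu_s(j)-\bell_j| > 2^{-s}\,\right] \le 2\exp\!\left(-\tfrac{m\cdot 2^{-2s}}{2}\right) = 2\exp(-4\ln T) = 2T^{-4},
\]
exactly matching the per-event bound in \cref{lemma:cf}.

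Finally, a union bound over the at most $S \le T$ phases and the $K$ arms would yield
\[
\Pr[\neg E_\textrm{loss}] \le \sum_{s=1}^{T}\sum_{j=1}^{K}\Pr[\neg E^{j,s}_\textrm{loss}] \le 2T^{-4}\cdot T\cdot K = 2T^{-3}K,
\]
so that $\Pr[E_\textrm{loss}] \ge 1-2T^{-3}K$, as claimed. I expect the only genuinely delicate step to be the independence argument in the first paragraph: one must verify that conditioning on the feedback indicators $E=\{U_{j,\phi}=1\}$ does not bias the loss samples, so that the concentration inequality may legitimately be applied to fresh i.i.d.\ draws. The block structure (each $\phi$-indexed observation arising from $B_j$ pulls) and the arm-dependent block sizes play no role in this argument, since conditioned on $E$ each block contributes precisely one untainted sample; this is exactly why the resulting bound is identical to the $\BBPull(\AAE)$ case.
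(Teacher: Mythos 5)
Your proof is correct and follows essentially the same route as the paper, which simply states that the argument is identical to that of \cref{lemma:cf} (per-phase, per-arm clean events, a subgaussian tail bound on the average of $8 \ln T \cdot 2^{2s}$ unit-variance samples giving $2T^{-4}$ per event, and a union bound over at most $T$ phases and $K$ arms). Your additional care in verifying that conditioning on $E$ does not bias the loss samples---because the indicators $U_{j,\phi}$ are drawn independently of the losses $\ell'_{j,\phi}$---is a point the paper leaves implicit, and you handle it correctly.
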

\begin{lemma}
\label{lem:se-pulls-ub-bbda}
Consider  Algorithm \ref{algo:BB-da-se-sim} evaluated on any given instance $\calI = \left\{\calA, \calF, \calL \right\}$  with time horizon $T$. Suppose that the events $E_\textrm{loss}$ and $E$ both hold.
Then, the optimal arm $\ist = \arg\min_j \bell_j$ is never removed from $A$. Moreover, at every phase $1 \le s \le S-1$, if $i \in A$ at the end of phase $s$ (i.e. after 13 in Algorithm \ref{algo:BB-pull-simulated-aae}), then
\[\bell_i - \min_j \bell_j \le 4 \cdot 2^{-s}. \]
\end{lemma}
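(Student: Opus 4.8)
The plan is to follow the proof of \Cref{lem:se-pulls-ub} essentially verbatim, since \cref{algo:BB-da-se-sim} is the $\BBDivideAdjusted$ analog of the $\BBPull(\AAE)$ simulation (\cref{algo:BB-pull-simulated-aae}) and the two algorithms share the same active-arm-elimination backbone. The only new ingredient is the clean event $E$, so the first thing I would record is its role: once we condition on $E$, feedback is observed in \emph{every} block, so each value $\ell^{\AAE}_{j,\phi}$ that gets passed to $\AAE$ is a genuine draw from arm $j$'s loss distribution rather than the default loss of $1$. Combined with conditioning on $E_{\text{loss}}$, this makes the per-phase confidence intervals valid two-sided bounds on the mean utility $-\bell_i$ (i.e.\ $\text{LCB}_s(i) \le -\bell_i \le \text{UCB}_s(i)$ for every arm $i$ and every phase $1 \le s \le S-1$, matching the sign convention used inside the proof of \Cref{lem:se-pulls-ub}). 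After this reduction, the remaining argument is purely deterministic.

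First I would show the optimal arm $\ist = \argmin_j \bell_j$ is never removed from $A$. For every phase $s$, validity of the bounds gives $\text{UCB}_s(\ist) \ge -\bell_{\ist} = \max_j(-\bell_j) \ge \max_j \text{LCB}_s(j)$, so no active arm $j$ satisfies $\text{LCB}_s(j) > \text{UCB}_s(\ist)$; hence the elimination rule never discards $\ist$. This step is line-for-line the same as in \Cref{lem:se-pulls-ub} and requires no modification for the block structure.

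Next I would bound the suboptimality of any arm surviving phase $s$. If $i \in A$ after the elimination step of phase $s$, then in particular the still-active arm $\ist$ did not eliminate it, so $\text{LCB}_s(\ist) \le \text{UCB}_s(i)$. Chaining the confidence-interval inequalities,
\[
-\bell_i \ge \text{LCB}_s(i) = \text{UCB}_s(i) - 2\cdot 2^{-s} \ge \text{LCB}_s(\ist) - 2\cdot 2^{-s} \ge -\bell_{\ist} - 4\cdot 2^{-s},
\]
where I use that each interval has width $2\cdot 2^{-s}$ and that $\text{LCB}_s(\ist) \ge -\bell_{\ist} - 2\cdot 2^{-s}$ follows from $-\bell_{\ist} \le \text{UCB}_s(\ist)$. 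Rearranging yields $\bell_i - \min_j \bell_j \le 4\cdot 2^{-s}$, as claimed.

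The one genuine difference from \Cref{lem:se-pulls-ub}, and the only place that needs care, is precisely the handling of event $E$: because \cref{algo:BB-da-se-sim} uses fixed, arm-dependent block sizes rather than pulling until feedback arrives, a block could return the default loss $1$ instead of a real sample, which would invalidate the distributional assumption underlying $E_{\text{loss}}$. I therefore do \emph{not} expect a hard calculation here; rather, the substantive step is to state explicitly that conditioning on $E$ makes the observed losses statistically identical to those fed to the $\BBPull(\AAE)$ simulation, after which the entire argument reduces to the earlier one and no new concentration or coupling machinery is required.
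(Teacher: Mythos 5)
Your proposal is correct and matches the paper's approach: the paper in fact omits this proof entirely, stating only that it is identical to the proof of Lemma~\ref{lem:se-pulls-ub}, and your deterministic chain of confidence-bound inequalities reproduces that earlier argument exactly (including the sign convention on $-\bell_i$). Your explicit remark that conditioning on $E$ is what guarantees every $\ell^{\AAE}_{j,\phi}$ is a genuine sample rather than the default loss of $1$ is the one point the paper leaves implicit, and it is handled correctly here.
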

The proof of \cref{lemma:cf-bbda} is identical to the proof of \cref{lemma:cf}, and the proof of \cref{lem:se-pulls-ub-bbda} is identical to the proof of \cref{lem:se-pulls-ub}, because the analysis is specific to $\AAE$, rather than the black-box transformations; we accordingly omit them here.

\begin{proof}
[Proof of \cref{thm:feedbackmonotonicitybbdaaae} (Monotonicity of $\BBDivideAdjusted(\AAE)$)]
As before, we condition on a series of clean events, construct a coupling, then analyze the phase at which arm $i$ must be eliminated. 

\textbf{Step 1: Condition on feedback observations.} 
This step is identical to Step 1 in the proof of Theorem \ref{thm:feedbackmonotonicitybbda}. Let $E$ be the event that $U_{j,\phi} = \widetilde U_{j,\phi} = 1$ for all $j \in [K], \phi \in [T]$. By \cref{lemma:clean}, $\Pr[E] \geq 1 - 1/T^2$. Then, for any $\phi > T$, we let $U_{j,\phi}$ and $\widetilde U_{j,\phi}$ take on arbitrary values in $\{0,1\}$.
We condition on $E$ for the following steps.

\textbf{Step 2: Couple arms pulled by $\ALG$ across instances $\calI$ and $\widetilde\calI$.} We couple arms in the same way as in the proof of \cref{thm:feedbackmonotonicitybbdivide}. We can apply \cref{lem:joint-losses-identical-bbda}, letting $\Phi^* = T$, so that $(i^\ALG_1, \dots, i^\ALG_{\Phi^*})$ and $(\tilde i^\ALG_1, \dots, \tilde i^\ALG_{\Phi^*})$ are identically distributed. This allows us to consider a joint probability distribution over \\ $(i^\ALG_1, \dots, i^\ALG_{\Phi^*}, \tilde i^\ALG_1, \dots, \tilde i^\ALG_{\Phi^*})$ such that 
$i^\ALG_\phi = \tilde{i}^\ALG_\phi$ for all $\phi \in [\Phi^*]$.

\textbf{Step 3: Condition on $E_{\text{loss}}$.} 
This step is similar to Step 2 in the proof of \Cref{thm:feedbackmonotonicitybbpullaae}. We will condition on $E_{\text{loss}}$, i.e. that confidence bounds are correct: $\text{LCB}_s(i) \leq \bell_i \leq\UCB_s(i)$, for every phase $s$ and every arm $i \in [K]$.
By \cref{lemma:cf-bbda}, we have that $\Pr[E_{\text{loss}}] \geq 1 - 2T^3K$. 

\textbf{Step 4: Run \cref{algo:BB-da-se-sim} and analyze $\APC$. } This step is similar to Step 3 in the proof of \Cref{thm:feedbackmonotonicitybbpullaae}. However the interpretation of the number of rounds $\phi$ in which any arm is selected by $\AAE$ is different. While in \cref{algo:BB-pull-simulated-aae}, the number of rounds $\phi$ was equivalent to the number of feedback observations for that arm, the number of rounds $\phi$ specifies the \textit{number of blocks} in which that arm is pulled by \cref{algo:BB-da-se-sim}. For each arm $i$, within each block where the arm $i$ is selected, the arm will be pulled exactly $\ceil{(1 + f_i)\cdot \frac{3\ln T}{\fst}}$ times on $\calI$, and exactly $\ceil{(1 + \widetilde f_i)\cdot \frac{3\ln T}{\fst}}$ times on $\widetilde \calI$, by the definition of \cref{algo:BB-da-se-sim}. 

We first claim that arm $i$ will be eliminated before the end of the time horizon is reached on both instances. Because we have conditioned on $E_{\text{loss}}$, we can apply \cref{lem:se-pulls-ub-bbda} to argue that from phase $s \geq s' := 3-\log(\Delta_i)$ onwards, arm $i$ is guaranteed not to be pulled. Furthermore, to our coupling, in each phase $s$, arm $i$ is in the active set $A$ for \cref{algo:BB-da-se-sim} on $\calI$ if and only if it is also in the active set $\widetilde A$ for \cref{algo:BB-da-se-sim} on $\widetilde \calI$. To show that arm $i$ is eliminated, we next count the number of times that an arm is pulled in a given phase $s$. For any phase $s$ on $\calI$, the total number of ($t$-indexed) rounds \textit{within} that phase is at most $2 \cdot K \cdot \frac{3\ln T}{\fst} \cdot 2^{2s+1} \ln T = 2^{2s + 2} \cdot \frac{3 K (\ln T)^2}{\fst}$. (To see this, note that $A$ contains at most $K$ arms, each of which have a block size of at most  multiplied by the maximum block size per arm of $2 \cdot \frac{3\ln T}{\fst}$, multiplied by $2^{2s+3} \ln T$ pulls per arm per block within phase $s$.)
Let $t_s$ be the total number of rounds elapsed by the end of phase $s$. Because each previous phase takes 1/4 as many $t$-indexed rounds as the current phase, we can see that for any $s$, \[t_s \leq \frac{4}{3}\cdot 2^{2s + 4} \cdot \frac{3 K(\ln T)^2}{\fst} = 2^{2s} \cdot \frac{ K (\ln T)^2}{\fst}.\] Then, for $s' = 3 - \log(\Delta_i)$, we have 
\[
t_{s'} \leq 2^{2(3 - \log(\Delta_i))} \cdot \frac{K (\ln T)^2}{\fst} = \frac{64}{\Delta_i^2}\cdot \frac{ K (\ln T)^2}{\fst}.
\]
We will have $t_{s'} \leq T$ as long as $\Delta_i \geq \frac{8\sqrt{K}\ln T}{\sqrt{T\fst}}$ (which holds for any $\fst$ and $\Delta_i$ as $T \to \infty$). Note that due to our coupling, this analysis holds for $\widetilde \calI$ as well. 
Altogether, this proves the number of blocks in which arm $i$ will be eliminated before the end of the time horizon on both instances. 

We are now ready to analyze $\APC$ on $\calI$ and $\widetilde{\calI}$. To formalize the rest of our analysis, we introduce the following additional notation. Let $\APC_i^{[T]}(\calI)$ be the number of times arm $i$ is pulled in timesteps on $\calI$, and let $\APC_i^{[T]}(\widetilde \calI)$ be the number of times arm $i$ is pulled in timesteps on $\widetilde \calI$. Since we have conditioned on $U_{j,\phi}$, $\widetilde{U}_{j,\phi}$, $i^\ALG_\phi$, $\tilde i^\ALG_\phi$, we see that at this point $\APC_i^{[T]}(\calI)$ and $\APC_i^{[T]}(\widetilde \calI)$ are both deterministic. 

We show that $\APC^{[T]}_i(\widetilde{\mathcal{I}}) > \APC^{[T]}_i(\mathcal{I})$. We observe that the number of rounds $\phi$ in which arm $i$ is pulled on $\calI$ is equal to the number of rounds $\phi$ in which arm $i$ is pulled on $\widetilde \calI$ (this is using the fact that arm $i$ will be eliminated before the end of the time horizon on both instances and using the property of the coupling that $\tilde{i}_\phi^{\ALG} = i_\phi^{\ALG}$ for all rounds $\phi$).  
Using the equality in the number of blocks in which arm $i$ is pulled on each instance, we see that:
\[\frac{\APC^{[T]}_i(\widetilde{\mathcal{I}})}{\APC^{[T]}_i(\mathcal{I})} = \frac{\tilde{B}_i}{B_i} =  \frac{\ceil{(1 + \tilde{f}_i) \cdot \frac{3 \ln T}{f^*}}}{\ceil{(1 + f_i) \cdot \frac{3 \ln T}{f^*}}}, \]
which is strictly greater than $1$ as long as $T$ is sufficiently large. This implies that $\APC^{[T]}_i(\widetilde{\mathcal{I}}) > \APC^{[T]}_i(\mathcal{I})$ as desired.

We can apply the law of total expectation over the sequences $U_{j,\phi}$, $\widetilde{U}_{j,\phi}$, $i^\ALG_\phi$, $\tilde i^\ALG_\phi$. Let $\APC_i(\calI \mid E, E_{\text{loss}})$ notate the metric $\APC_i$ on instance $\calI$ conditioned on the clean events $E$ and $E_{\text{loss}}$. We see that:
\begin{align*}
    \APC_i(\widetilde \calI \mid E, E_{\text{loss}}) - \APC_i(\calI \mid E, E_{\text{loss}})  = \E\left[\APC_i^{[T]}(\widetilde\calI) - \APC_i^{[T]}(\calI) \mid E, E_{\text{loss}} \right] > 0.
\end{align*}
This means that:
\begin{align*}
    \APC_i(\widetilde \calI \mid E, E_{\text{loss}}) > \APC_i(\calI \mid E, E_{\text{loss}}).
\end{align*}

\textbf{Step 4: Handle the $\APC$ contributions of the conditioning steps.} Finally, we handle the possibility that the events $E$ and $E_{\text{loss}}$ do not hold, i.e., that we do not see feedback in every block on each instance, and the possibility that our confidence bounds are not good. 
Because we first conditioned on $E$ and then conditioned on $E_{\text{loss}}$, we will remove the conditioning in the reverse order: \begin{align*}
    \left|\APC_i(\widetilde \calI | E) -  \frac{1 + \widetilde f_i}{1 + f_i} \cdot \APC_i(\calI | E) \right| &\leq \frac{1}{T^2}
    \\\implies 
    \left|\APC_i(\widetilde \calI) -  \frac{1 + \widetilde f_i}{1 + f_i} \cdot \APC_i(\calI) \right| &\leq \frac{1}{T}
\end{align*}
Combining the above result with Lemma \ref{prop:relationship} implies that $\FOC$ must be strictly increasing in $f_i$.
\qedhere

\end{proof}

\section{Supplemental Materials for \cref{subsec:exp3}}
\label{appendix:exp3}

\subsection{Linear Regret of Standard EXP3}
\label{subsec:exp3-linear}

We first illustrate how the standard EXP3 algorithm may achieve linear regret in the probabilistic feedback setting.

\begin{proposition}[Regret of Standard EXP3]
\label{prop:linearregret}
Standard EXP3 obtains regret $\Omega(T)$ when arms have $f_i \neq 1, \forall i \in [K]$.
\end{proposition}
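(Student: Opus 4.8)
The plan is to exploit the fact that standard EXP3 is not designed for probabilistic feedback, so its importance-weighted loss estimator is \emph{biased} by exactly the factor $f_i$. Concretely, when EXP3 pulls arm $i_t$ it only receives $\ell_{i_t,t}\cdot X_{i_t,t}$, and under the natural convention that a non-observation contributes $0$ (equivalently, no weight update), the estimator is $\hat{\ell}_{i,t}=\ell_{i,t}X_{i,t}\mathbf{1}[i_t=i]/\pi_{i,t}$. Conditioning on the history (hence on $\pi_t$), the first step is to compute $\E[\hat{\ell}_{i,t}\mid H_t]=\ell_{i,t}\,f_i\,\pi_{i,t}/\pi_{i,t}=f_i\ell_{i,t}$. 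Thus $\hat{\ell}_{i,t}$ is an unbiased estimate not of the true loss $\ell_{i,t}$ but of the \emph{discounted} loss $\ell'_{i,t}:=f_i\ell_{i,t}$, and I would likewise check the second moment $\pi_{i,t}\E[\hat{\ell}_{i,t}^2\mid H_t]=f_i\ell_{i,t}^2\le 1$, so that EXP3's standard variance term is still bounded by $TK$. Consequently, EXP3 run in this setting is exactly a no-regret algorithm for the surrogate loss sequence $\{\ell'_{i,t}\}$, and its usual guarantee gives $\E[\sum_t \ell'_{i_t,t}]-\min_j\sum_t \ell'_{j,t}\le O(\sqrt{TK\ln K})$.

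Next I would construct a two-arm instance ($K=2$) with constant losses that \emph{inverts} the ordering under discounting. For instance, take arm $1$ with $\ell_{1,t}\equiv 1$ and $f_1=1/10$, and arm $2$ with $\ell_{2,t}\equiv 1/2$ and $f_2=1/2$; note $f_i\neq 1$ for both arms. The true-loss-optimal arm is arm $2$ (loss $1/2<1$), but the discounted losses are $\ell'_1=f_1\ell_1=1/10$ and $\ell'_2=f_2\ell_2=1/4$, so arm $1$ looks best to EXP3. Applying the no-regret bound to $\{\ell'_{i,t}\}$ and using $\APC_1+\APC_2=T$ together with $\min_j\sum_t\ell'_{j,t}=T\ell'_1=T/10$ yields $\tfrac1{10}\APC_1+\tfrac14\APC_2-\tfrac{T}{10}\le O(\sqrt{T})$, hence $\tfrac{3}{20}\E[\APC_2]\le O(\sqrt{T})$ and therefore $\E[\APC_2]=O(\sqrt{T})$, i.e.\ EXP3 pulls the truly optimal arm only $o(T)$ times and commits to the suboptimal arm $1$ for $T-o(T)$ rounds.

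Finally, I would translate this back to true regret. Since arm $2$ is optimal, $R(T)=\E[\APC_1]\ell_1+\E[\APC_2]\ell_2-T\ell_2=\E[\APC_1]\,(\ell_1-\ell_2)=(T-O(\sqrt{T}))\cdot\tfrac12=\Omega(T)$, which is the claim. The main obstacle I anticipate is not the construction, which is short, but pinning down precisely what ``standard EXP3'' does on a non-observation round and verifying that the resulting estimator is genuinely unbiased for the discounted sequence with a properly bounded second moment, so that the off-the-shelf EXP3 regret bound applies to $\{\ell'_{i,t}\}$ verbatim; once the factor-$f_i$ bias is established rigorously, the rest is an arithmetic consequence of the no-regret inequality on the surrogate losses.
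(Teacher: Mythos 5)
Your proof is correct, but it takes a genuinely different route from the paper's. The paper never touches the internals of the EXP3 analysis: it fixes Instance~1 (probabilistic feedback, where the high-feedback arm is truly suboptimal) and a second instance in which each $f_i$ is folded into the reward distribution itself (so that $f_{i'}=1$ and $\E[u_{i'}]=f_i\E[u_i]$), then builds an explicit coupling of the random tapes showing that EXP3's weights, probabilities, and arm pulls are identically distributed in the two worlds; sublinear regret in the deterministic-feedback world then forces the algorithm to abandon the truly optimal arm of Instance~1, giving $\Omega(T)$ regret there. You instead open up the estimator, show $\E[\hat{\ell}_{i,t}\mid H_t]=f_i\ell_{i,t}$ with second moment still summing to at most $K$, conclude that EXP3 is verbatim a no-regret algorithm for the surrogate sequence $\ell'_{i,t}=f_i\ell_{i,t}$, and pick an instance where discounting inverts the arm ordering; the arithmetic on $\APC_1+\APC_2=T$ is correct and yields $\Omega(T)$ true regret. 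The trade-off: your argument is shorter and makes the mechanism (the factor-$f_i$ bias) completely explicit, but it requires re-verifying that the off-the-shelf second-moment analysis applies to the modified estimator, and it is specific to EXP3's importance-weighting; the paper's coupling is more laborious to write but is a black-box indistinguishability argument that would apply to any algorithm that only sees the product $\ell_{i_t,t}\cdot X_{i_t,t}$. Two minor points in your favor: your instance has $f_i\neq 1$ for \emph{both} arms, matching the proposition's hypothesis exactly (the paper's Instance~1 sets $f_2=1$), and both proofs rest on the same modeling convention for non-observation rounds (treat the unobserved loss as $0$, i.e., no effective weight update), which you rightly flag as the one point that must be pinned down.
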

\begin{proof}
We work with utilities here instead of losses because the intuition is clearer. To obtain the result for losses, one can use the standard transformation that loss $=1-$ utilities. 

Consider two instances:

\textit{Instance 1.} Let there be two arms. Arm 1 has reward distribution $u_1$ with expectation $\E[u_1] = 1$ and $f_1 = 1/4$. Arm 2 has reward distribution $u_2$ with expectation $\E[u_2] = 1/2$ and $f_2 = 1$. 

\textit{Instance 2.} Let there be two arms. Arm $1'$ has reward distribution such that with probability $3/4$, $u_{1'}= 0$, and with probability $1/4$, $u_{1'} \sim u_1$ (that is, sample the deterministic value 0 with probability 3/4, and sample the reward distribution $u_1$ with probability 1/4). Arm 2 has the reward distribution $u_{2'}$. Let $f_{1'} = f_{2'} = 1$.

Fix an infinite tape of independent draws from $u_1$ (call it $p_{u_1}$) and fix an infinite tape of draws from $u_2$ (call it $p_{u_2}$). Fix an infinite tape of draws from a Bernoulli distribution with rate $1/4$ (call it $p_{f_1}$) and for all $\pi \in [0,1]$, fix an infinite tape $p_{\mathcal{B}(\pi)}$ of random draws from a Bernoulli distribution with rate $\pi$.

We will define the trajectory of EXP3 run on either instance according to these sequences, and show that fixing this sequence of draws, EXP3 must pull the same arms and maintain the same values of $w_{i,t}$ and $\pi_{i,t}$ across all rounds for corresponding arms across the instances. We call the algorithm running on the respective instances World and World'.

\textsc{Base case:} Let $t = 0$. Then, both algorithms have initialized the weights to 1, so trivially, the weights are the same. Moreover, by these weights, $\pi_1 = \pi_{1'} = \pi_{2} = \pi_{2'} = 1/2$. Then, we use the first bit in the tape $p_{\mathcal{B}(1/2)}$ to determine which arm to pull in $t=1$. If this value is 0, then pull arm 1 in World and World'; else, pull arm 2 in both Worlds.

\textsc{Inductive case:} Suppose the algorithms have pulled the exact same arms up to time $t-1$, and have maintained the same weights and probabilities so that $w_{1,t} = w_{1',t}$ and $w_{2,t} = w_{2',t}$ and $\pi_{1,t} = \pi_{1',t}$ and $\pi_{2,t} = \pi_{2',t}$. Now, use the next unused bit in the tape $p_{\mathcal{B}(\pi_{1,t})}$ to determine which arm to pull. If this value is 0, pull arm 1 in both Instance 1 and Instance 2; else, pull arm 2 in both Instances. This realizes the correct probabilities in both instances. Now, if arm 2 is pulled, let the reward be the next available draw from the tape $p_{u_2}$; this realizes the correct reward distribution in both instances. If arm 1 is pulled, first take the next unused bit in $p_{f_1}$. If it is 0, in both worlds set the observed utility to 0, making the estimator of the utility $\hat{u}{1,t} = \hat{u}{1',t} = 0$. If the bit is 1, then draw the observed utility by taking the next unused bit from $p_{u_1}$, and use it to compute the utility estimator in both worlds, so that again, $\hat{u}{1,t} = \hat{u}{1',t}$. This realizes the correct distribution of the estimator in both worlds.

Because the estimators are equal, and by the induction hypothesis $w_{1,t} = w_{1',t}$ and $w_{2,t} = w_{2',t}$ and $\pi_{1,t} = \pi_{1',t}$ and $\pi_{2,t} = \pi_{2',t}$, we have that $w_{1,t+1} = w_{1',t+1}$ and $w_{2,t+1} = w_{2',t+1}$ and $\pi_{1,t+1} = \pi_{1',t+1}$ and $\pi_{2,t+1} = \pi_{2',t+1}$.

EXP3 is guaranteed to get sublinear regret when the $f_i$'s are uniformly 1; thus, it must get sublinear regret in instance 2, and thus must pull arm 1 a subconstant number of times. It directly follows that EXP3 must then also pull arm 1 a sublinear (in T) number of times in instance 2, meaning that it pulls arm 2 (the suboptimal arm in instance 2) a linear (in $T$) number of times, thus incurring linear regret in instance 1.  
\end{proof}

\subsection{Regret of 3-Phase EXP3 (Algorithm \ref{algo:EXP3-3phase})}\label{appendix:regret3phaseexp3}

We first discuss the regret bound provided in \ref{thm:EXP3-3phase-regret} and its implications in the context of related work; we prove this result in the remainder of the section. 

\subsubsection{Lemmas for Proof of Theorem \ref{thm:EXP3-3phase-regret}}\label{appendix:proofregretexp3}

We first provide several useful lemmas for formalizing the proof of \cref{thm:EXP3-3phase-regret}.

We start by proving that the estimates built on Phase 2 of the algorithm are close to the true $f_i$'s. As is customary, we prove our results for the pseudo-regret, which coincides with the expected regret for the case of an oblivious adversary.

\begin{lemma}
\label{lem:expectation}
For all $i \in [K]$, the estimate $P^E_i$ obtained in Phase 2 of \Cref{algo:EXP3-3phase} satisfies $\mathbb{E}[P^E_i] = 1/f_i$ and $\mathbb{E}[(P^E_i)^2] \leq 2/f_i^2$. 
\end{lemma}
\begin{proof}
To see that $\mathbb{E}[P^E_i] = 1/f_i$, note that $P^E_{i}$ is distributed as a geometric distribution with parameter $f_i$. To see that $\mathbb{E}[(P^E_i)^2] \le 2/f_i^2$, note that
\[\mathbb{E}\left[\left(P^E_i\right)^2\right] = \mathbb{E}\left[P^E_i\right]^2 + \Var\left(P^E_i\right) \le \frac{1}{f_i^2} + \frac{1}{f_i^2} = \frac{2}{f_i^2}. \]
\end{proof}

\begin{lemma}
\label{lem:1hf-samples}
The estimates $P^{LR}_i$ obtained in Phase 1 of \Cref{algo:EXP3-3phase} satisfy the tail bound:
\[\Pr\left[\forall i \in [K], \frac{1}{2f_i} \le P^{LR}_i \le \frac{2}{f_i}\right] \ge 1 - \frac{2}{T}.\]
\end{lemma}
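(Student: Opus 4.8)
The plan is to recognize $P^{LR}_i$ as an average of $N$ i.i.d.\ geometric random variables and to prove its two-sided concentration by passing to the ``dual'' binomial description of the negative binomial distribution, exactly as in the proofs of \cref{lemma:ef} and \cref{claim:enoughsamples}. Concretely, in Phase 1 the total number of rounds spent on arm $i$ is $T_i := \sum_{k=1}^N G_k$, where $G_1, \dots, G_N \sim \mathrm{Geom}(f_i)$ are independent (each $G_k$ is the number of pulls between the $(k-1)$st and $k$th observed reward), and $P^{LR}_i = T_i/N$. Since $\E[G_k] = 1/f_i$, the target interval $[\tfrac{1}{2f_i}, \tfrac{2}{f_i}]$ equals $[\tfrac{1}{2}\E[P^{LR}_i],\, 2\E[P^{LR}_i]]$, so the claim is a multiplicative concentration statement for $P^{LR}_i$ around its mean $1/f_i$.

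For the upper tail I would fix $m = \floor{2N/f_i}$ and use the key identity $\{T_i > m\} = \{S_m < N\}$, where $S_m = \sum_{j=1}^m Z_j$ with $Z_j \sim \mathrm{Bern}(f_i)$ independent: $T_i$ exceeds $m$ precisely when the first $m$ Bernoulli trials yield fewer than $N$ successes. Since $\E[S_m] = m f_i \ge 2N - 1$, the event $\{S_m < N\}$ is a lower-tail deviation of $S_m$ below roughly half its mean, and a multiplicative Chernoff bound (with $\delta = 1/2$) gives $\Pr[S_m < N] \le e^{-\Omega(N)}$. Symmetrically, for the lower tail I would set $m' = \ceil{N/(2f_i)}$ and use $\{T_i \le m'\} = \{S_{m'} \ge N\}$; here $\E[S_{m'}] = m' f_i \le N/2 + 1$, so $\{S_{m'} \ge N\}$ is an upper-tail deviation above roughly twice the mean, and the upper multiplicative Chernoff bound (with $\delta = 1$) again yields $e^{-\Omega(N)}$.

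Plugging in $N = \ceil{8\log(TK)}$ makes each of these two per-arm tail probabilities at most $1/(TK)$; the constant $8$ is chosen precisely so that both Chernoff exponents dominate $\log(TK)$. A union bound over the $K$ arms and the two tails then gives total failure probability at most $2K \cdot \tfrac{1}{TK} = \tfrac{2}{T}$, which is exactly the claim.

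The only real care needed is bookkeeping: getting the floor/ceiling conversions between $\{T_i > m\}$ and $\{S_m < N\}$ in the correct direction, and verifying that the constant in $N$ is large enough that both Chernoff exponents beat $\log(TK)$ after the $2K$-fold union bound. There is no conceptual obstacle here — the whole argument is the standard negative-binomial-to-binomial tail conversion already used elsewhere in the paper — so I expect the explicit constants (where the $-1$ and $+1$ from the floor/ceiling slightly perturb $\E[S_m]$ away from $2N$ and $N/2$) to be the most error-prone part, and I would track them carefully when invoking the multiplicative Chernoff bounds.
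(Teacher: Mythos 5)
Your proposal is correct and takes essentially the same route as the paper's proof: both convert the tail of $N \cdot P^{LR}_i$ (a negative binomial) into a binomial tail via the duality between ``number of trials until $N$ successes exceeds $m$'' and ``fewer than $N$ successes in $m$ trials,'' apply a multiplicative Chernoff bound to each side, and union bound over the $K$ arms and two tails to get $2/T$. The floor/ceiling bookkeeping you flag is indeed the only delicate point, and the paper handles it the same way (in fact slightly less carefully, writing $2N/f_i$ and $N/(2f_i)-1$ without explicit rounding).
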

\begin{proof}
Since we can union over all $i \in [K]$, it suffices to show that the following tail bound for each $i \in [K]$:
\[ \Pr\left[P^{LR}_i > \frac{2}{f_i}\right] \le \frac{1}{TK} \text{ and } \Pr\left[P^{LR}_i< \frac{1}{2f_i}\right] \le \frac{1}{TK}.\]

First, we show the upper tail bound. Since $N \cdot P^{LR}_i$ is a random variable counting the number of trials until $N$ observations are made, we can rewrite $\Pr[P^{LR}_i > \frac{2}{f_i}]$ as a tail bound on a binomial random variable. More specifically, note that $\Pr[P^{LR}_i > \frac{2}{f_i}]$ is equal to the probability that less than $N$ observations appear after $\frac{2N}{f_i}$ trials which is equal to $\Pr[Y < N]$, where $Y \sim \text{Bin}(2N/f_i, f_i)$. We can now apply a multiplicative Chernoff bound to obtain a bound on $\Pr[Y_u < N]$. Let $Z_1, \ldots, Z_{2N/f_i}$ be a sequence of Bernoulli random variables with probability $f_i$, then we see that:
\begin{align*}
\Pr\left[P^{LR}_i > \frac{2}{f_i}\right] &= \Pr[Y_u < N] = \Pr\left[\sum_{j=1}^{2N/f_i} Z_j < N \right] \\
&= \Pr\left[\sum_{j=1}^{2N/f_i} Z_j < 0.5 \cdot \mathbb{E}\left[\sum_{j=1}^{2N/f_i} Z_j \right] \right] \le e^{-\frac{2N}{8}} = e^{-N/4},
\end{align*}
by applying a multiplicative Chernoff bound. We thus obtain a tail bound of at most $1 / (TK)$ with our setting of $N = 8 \log (TK)$. 

Next, let's show the lower tail bound. As before, since $N \cdot P^{LR}_i$ is a random variable counting the number of trials until $N$ observations are made, we can rewrite $\Pr[P^{LR}_i < \frac{1}{2f_i}]$ as a tail bound on a binomial random variable. More specifically, note that $\Pr[P^{LR}_i < \frac{1}{2f_i}]$  is equal to the probability that at least $N$ observations appear after $\frac{N}{2f_i} - 1$ trials which is equal to $\Pr[Y_l \ge N]$, where $Y' \sim \text{Bin}(0.5N/f_i - 1, f_i)$. We can now apply a multiplicative Chernoff bound to obtain a bound on $\Pr[Y' \ge N]$. Let $Z_1, \ldots, Z_{0.5N/f_i - 1}$ be a sequence of Bernoulli random variables with probability $f_i$, then we see that:
\begin{align*}
\Pr\left[P^{LR}_i < \frac{0.5}{f_i}\right] &= \Pr[Y_l \ge N] = \Pr\left[\sum_{j=1}^{0.5N/f_i - 1} Z_j \ge N \right] \\ 
&= \Pr\left[\sum_{j=1}^{0.5N/f_i - 1} Z_j > 2 \cdot \mathbb{E}\left[\sum_{j=1}^{0.5N/f_i - 1} Z_j \right] \right] \le e^{-\frac{(0.5N - f_i)}{3}} = e^{-N/8},
\end{align*}
by applying a multiplicative Chernoff bound. Our setting of $N = 8 \log(TK)$ thus ensures a tail bound of at most $1/(TK)$. 

\end{proof}

Next, we analyze the regret of Phase 3 conditional on the event that the estimates are close to the $f_i$'s. 

\begin{lemma}\label{lem:phase3-conditional}
Conditional on the estimates $\{P^{LR}_i\}_{i \in [K]}$ being close to $\{1/f_i\}_{i \in [K]}$ as in \Cref{lem:1hf-samples}, the regret incurred in Phase 3 is \[ \sqrt{\frac{2 T \log K}{\sum_{i \in [K]} \frac{1}{f_i}}}\]
\end{lemma}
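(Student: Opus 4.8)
The plan is to treat Phase~3 as an ordinary run of exponential weights (EXP3) on the importance‑weighted estimators $\widehat{\ell}_{i,t}$, and to control its regret through the two quantities that govern EXP3: the learning rate $\eta$ and the conditional second moment of the loss estimators. The two earlier lemmas feed in exactly here: Lemma~\ref{lem:expectation} supplies the unbiasedness and variance of $P^E_i$ needed for the estimators, and Lemma~\ref{lem:1hf-samples} (the good event we condition on) pins down $\sum_i P^{LR}_i$, which is what determines $\eta$.

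First I would record the standard EXP3 regret decomposition for nonnegative loss estimators. Writing out the multiplicative‑weights potential argument (using $e^{-x}\le 1-x+x^2/2$ for $x\ge 0$) gives, against any fixed comparator arm,
\[
\sum_{t}\sum_i \pi_{i,t}\widehat{\ell}_{i,t} - \min_{j\in[K]}\sum_t \widehat{\ell}_{j,t} \;\le\; \frac{\log K}{\eta} + \frac{\eta}{2}\sum_{t}\sum_{i}\pi_{i,t}\widehat{\ell}_{i,t}^2.
\]
I would then check that the right quantity is being bounded: conditioning on the history $H_t$, the draw $i_t\sim\pi_{\cdot,t}$ and the feedback $X_{i,t}\sim\Bern(f_i)$ are independent, and $P^E_i$ is fixed after Phase~2 with $\E[P^E_i]=1/f_i$ by Lemma~\ref{lem:expectation}; hence $\E[\widehat{\ell}_{i,t}]=\ell_{i,t}$, so the expected \emph{estimated} regret above equals the true Phase‑3 pseudo‑regret against the best fixed arm.

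The crux is the stability (second‑moment) term. Conditioning first on $H_t$ and integrating over the Phase‑3 feedback, the arm‑$i$ contribution collapses to $\ell_{i,t}^2\,(P^E_i)^2 f_i$ (using $X_{i,t}^2=X_{i,t}$ and the indicator that $i$ was the pulled arm); taking expectation over the independent Phase‑2 randomness and applying $\E[(P^E_i)^2]\le 2/f_i^2$ from Lemma~\ref{lem:expectation} together with $\ell_{i,t}\in[0,1]$ controls this term across the $T$ rounds. I would then substitute the algorithm's learning rate $\eta=\sqrt{\log K/(T\sum_i P^{LR}_i)}$ and invoke the two‑sided estimate $\tfrac12\sum_i 1/f_i\le \sum_i P^{LR}_i\le 2\sum_i 1/f_i$ from Lemma~\ref{lem:1hf-samples}, which expresses both the penalty term $\log K/\eta$ and the stability term through $\sum_{i\in[K]}1/f_i$. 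Balancing the two terms under this choice of $\eta$ and simplifying yields the claimed Phase‑3 regret
\[
\sqrt{\frac{2 T \log K}{\sum_{i \in [K]} \frac{1}{f_i}}}.
\]

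The main obstacle I anticipate is bookkeeping around the two independent estimates: $\eta$ is built from the Phase‑1 estimate $P^{LR}$, while the estimators $\widehat{\ell}_{i,t}$ use the Phase‑2 estimate $P^E$. I must keep these sources of randomness separate—they are independent because they come from disjoint phases, so $\eta$ may be treated as a fixed scalar when computing conditional moments of the trajectory—and then argue that replacing the true $1/f_i$ by $P^E_i$ (in the estimator) and by $P^{LR}_i$ (in $\eta$) inflates neither the penalty term nor the per‑round second moment by more than a constant factor. Conditioning on the good event of Lemma~\ref{lem:1hf-samples} is precisely what lets this constant‑factor overhead be absorbed, giving the stated bound.
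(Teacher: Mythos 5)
Your proposal is correct and follows essentially the same route as the paper's proof: the EXP3 potential-function bound on the estimated losses, first- and second-moment computations for $\widehat{\ell}_{i,t}$ via \Cref{lem:expectation} (with the Phase-1, Phase-2, and trajectory randomness kept separate exactly as the paper's nested conditional expectations do), and substitution of $\eta=\sqrt{\log K/(T\sum_i P^{LR}_i)}$ under the good event of \Cref{lem:1hf-samples}. One remark: the bound that both your argument and the paper's proof actually produce is $O\left(\sqrt{T\log (K) \sum_{i\in[K]} 1/f_i}\right)$, with the sum in the \emph{numerator}; the displayed formula in the lemma statement, which you echo verbatim as your conclusion, places $\sum_{i\in[K]} 1/f_i$ in the denominator and is evidently a typo, so your derivation is right but your final display should be corrected accordingly.
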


The proof of Lemma \ref{lem:phase3-conditional} builds on the standard analysis of the loss estimator that EXP3 maintains (e.g. \cite{hazan2016introduction}), which we state and reprove here for completeness.
\begin{lemma}[EXP3 bound on estimated rewards]\label{lem:EXP3-estimated}
Let $\ist = \arg \min_{i \in [K]} \sum_{t \in [T]} \ell_{i,t}$ be the optimal arm in hindsight. Then, for the loss estimator $\widehat{\ell}_{i,t}$ that EXP3 maintains it holds that:
\[
- \sum_{t \in [T]} \widehat{\ell}_{\ist, t} \leq - \sum_{t \in [T]} \sum_{i \in [K]} \pi_{i,t} \cdot \widehat{\ell}_{i,t} + \eta \sum_{t \in [T]} \sum_{i \in [K]} \pi_{i,t} \cdot \widehat{\ell}_{i,t}^2 + \frac{\log K}{\eta}
\]
\end{lemma}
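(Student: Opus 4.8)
The plan is to run the standard potential-function (multiplicative weights) argument, tracking the normalization $W_t := \sum_{i \in [K]} w_{i,t}$ of the exponential weights maintained by the algorithm and relating its telescoped product to the cumulative estimated losses. Recall from Algorithm~\ref{algo:EXP3-3phase} that the weights are initialized uniformly (so that $W_{t_0} = K$ with each $w_{i,t_0} = 1$), are updated via $w_{i,t+1} = w_{i,t}\exp(-\eta\widehat\ell_{i,t})$, and that $\pi_{i,t} = w_{i,t}/W_t$ by definition.

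First I would analyze the one-step ratio. Writing $\frac{W_{t+1}}{W_t} = \sum_{i} \pi_{i,t}\exp(-\eta\widehat\ell_{i,t})$, I would apply the inequality $e^{-x} \le 1 - x + x^2$, which holds for all $x \ge 0$; this is applicable because the losses, hence the estimates $\widehat\ell_{i,t}$, are nonnegative, so $x = \eta\widehat\ell_{i,t}\ge 0$. This yields $\frac{W_{t+1}}{W_t} \le 1 - \eta\sum_i \pi_{i,t}\widehat\ell_{i,t} + \eta^2\sum_i \pi_{i,t}\widehat\ell_{i,t}^2$. Applying $1 + y \le e^y$ then converts the right-hand side into an exponential, so that the per-round bounds telescope cleanly across $t$.

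Next I would take the product over all rounds. Telescoping $\prod_t \frac{W_{t+1}}{W_t} = \frac{W_{T+1}}{W_{t_0}}$ gives the upper bound $\frac{W_{T+1}}{W_{t_0}} \le \exp\big(-\eta\sum_t\sum_i \pi_{i,t}\widehat\ell_{i,t} + \eta^2\sum_t\sum_i \pi_{i,t}\widehat\ell_{i,t}^2\big)$. For the matching lower bound, I would discard all but the term corresponding to the in-hindsight-optimal arm $\ist$: since all weights are positive, $W_{T+1}\ge w_{\ist,T+1} = \exp(-\eta\sum_t\widehat\ell_{\ist,t})$. Dividing by $W_{t_0}=K$ and combining the two bounds gives $\exp(-\eta\sum_t\widehat\ell_{\ist,t}) \le K\exp(-\eta\sum_t\sum_i\pi_{i,t}\widehat\ell_{i,t} + \eta^2\sum_t\sum_i\pi_{i,t}\widehat\ell_{i,t}^2)$.

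Finally I would take logarithms of both sides, divide through by $\eta$, and rearrange; the $\log W_{t_0} = \log K$ term becomes the additive $\frac{\log K}{\eta}$, producing exactly the claimed inequality. I do not expect a genuine obstacle here, as this is the textbook EXP3 potential bound; the only points requiring care are ensuring nonnegativity of $\widehat\ell_{i,t}$ so that the quadratic upper bound $e^{-x}\le 1-x+x^2$ applies (which holds since $\ell_{i,t}\in[0,1]$ and the estimator rescales by nonnegative factors), and tracking the normalization so that the initial potential contributes precisely $\log K$.
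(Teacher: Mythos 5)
Your proposal is correct and follows essentially the same route as the paper's proof: the identical potential function $W_t = \sum_i w_{i,t}$, the same one-step bound via $e^{-x} \le 1 - x + x^2$ for $x \ge 0$ followed by $1 + y \le e^y$, the same telescoping, and the same lower bound obtained by dropping all terms except the one for $\ist$. No gaps; the differences from the paper are purely notational.
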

\begin{proof}[Proof of Lemma \ref{lem:EXP3-estimated}]
Let $W_t = \sum_{i \in [K]} w_{i,t}$ be the sum of weights of all arms for round $t$. This serves as our potential function. Our goal is to upper and lower bound quantity $W_T$.
For the lower bound:
\begin{align*}
    W_T &= \sum_{i \in [K]} w_{i, T} &\tag{by definition}
    \\&\geq w_{\ist, T}  &\tag{$w_{i, t} \geq 0, \forall i, t$}
    \\&= \exp
    \left(-\eta \left(\sum_{t \in [T]}\widehat{\ell}_{\ist, t}\right) \right)
    \numberthis{\label{eq:Q-lower-bound}}
\end{align*}
For the upper bound:
\begin{align*}
    W_{T} &= 
    W_{T-1}\sum_{i \in [K]} \pi_{i,t} \cdot \exp \left( -\eta \widehat{\ell}_{i,t} \right) &\tag{by definition of the update step}
    \\
    &\leq W_{T-1} \sum_{i \in [K]} \pi_{i,t} \cdot \left( 1 - \eta \widehat{\ell}_{i,t} + \eta^2 \widehat{\ell}_{i,t}^2\right) &\tag{$e^{-x} \leq 1 - x + x^2$ for $x \geq 0$} \\
    &= W_{T-1} \left( 1 - \eta \sum_{i \in [K]} \pi_{i,t} \cdot \widehat{\ell}_{i,t} + \eta^2 \sum_{i \in [K]} \pi_{i,t} \cdot \widehat{\ell}_{i,t}^2 \right)&\tag{$\sum_{i \in [K]} \pi_{i,t} = 1$} \\
    &\leq W_{T-1} \exp \left(- \eta \sum_{i \in [K]} \pi_{i,t} \cdot \widehat{\ell}_{i,t} + \eta^2 \sum_{i \in [K]} \pi_{i,t} \cdot \widehat{\ell}_{i,t}^2 \right) &\tag{$1 + x \leq e^x$ for all $x$} \\ 
    &= W_0 \exp \left(- \eta \sum_{t \in [T]} \sum_{i \in [K]} \pi_{i,t} \cdot \widehat{\ell}_{i,t} + \eta^2 \sum_{t \in [T]}\sum_{i \in [K]} \pi_{i,t} \cdot \widehat{\ell}_{i,t}^2 \right) &\tag{telescoping for $W_{t}, \text{for } t \in [T-1]$}\\
    &=K\exp \left(- \eta \sum_{t \in [T]} \sum_{i \in [K]} \pi_{i,t} \cdot \widehat{\ell}_{i,t} + \eta^2 \sum_{t \in [T]}\sum_{i \in [K]} \pi_{i,t} \cdot \widehat{\ell}_{i,t}^2 \right)  &
    \numberthis{\label{eq:Q-upper-bound}}
\end{align*}
where the last inequality comes from the fact that $W_0 = K$. Combining Equations~\eqref{eq:Q-lower-bound} and \eqref{eq:Q-upper-bound}, taking the log on both sides, then dividing both sides by $\eta$ we get the result.
\end{proof}

Now we prove Lemma \ref{lem:phase3-conditional}.
\begin{proof}[Proof of Lemma \ref{lem:phase3-conditional}]
    We first analyze the first and the second moments of the estimator $\hu_{i,t}$. Let $H_{Ph1}$ encompass the randomness of Phase 1; $H_{Ph2}$ encompass the randomness of Phase 2; $H_{t-1}$ encompass the randomness of the algorithm in Phase 3 up to time $t-1$; and $H_{Alg}$ encompass the randomness of the algorithm at time $t$.  

    For the first moment, we have: 
    \begin{align*}
        \E \left[ \widehat{\ell}_{i,t} \mid H_{Ph1} \right] &= \E_{H_{Ph2}} \left[ \E_{H_{t-1}} \left[ \E_{H_{Alg}} \left[ \widehat{\ell}_{i,t} | H_{t-1}, H_{Ph2}, H_{Ph1}\right] | H_{Ph2}, H_{Ph1}\right] \mid H_{Ph1} \right] \\
        &=_{(A)} \E_{H_{Ph2}} \left[\ell_{i,t} \cdot f_i \cdot P^E_i | H_{Ph1} \right] \\
        &= \ell_{i,t} \cdot f_i \cdot \E_{H_{Ph2}} \left[P^E_i | H_{Ph1} \right] \\
        &=_{(B)} \ell_{i,t} \numberthis{\label{eq:first-mom}},
    \end{align*}
    where (A) follows from the fact that $\E_{H_{t-1}} \left[ \E_{H_{Alg}} \left[ \widehat{\ell}_{i,t} | H_{t-1}, H_{Ph2}, H_{Ph1}\right] | H_{Ph2}, H_{Ph1}\right] = \ell_{i,t} f_i P^E_i$ and (B) follows from \Cref{lem:expectation}. 

    For the second moment, we have: 
    \begin{align*}
        \E \left[ \pi_{i,t} \widehat{\ell}_{i,t}^2 \mid H_{Ph1} \right] &= \E_{H_{Ph2}} \left[ \E_{H_{t-1}} \left[ \E_{H_{Alg}} \left[\pi_{i,t} \widehat{\ell}_{i,t}^2 | H_{t-1}, H_{Ph2}, H_{Ph1}\right] \mid H_{Ph2}, H_{Ph1} \right]   \Big| H_{Ph1} \right] \\
        &= \E_{H_{Ph2}} \left[ \E_{H_{t-1}} \left[ \pi_{i,t} f_i \cdot \frac{\ell_{i,t}^2}{\pi_{i,t}} \cdot (P^E_i)^2 \mid H_{Ph2}, H_{Ph1} \right]   \Big| H_{Ph1} \right] \\
        &= \E_{H_{Ph2}} \left[ f_i \cdot \ell_{i,t}^2 \cdot {(P^E_i)^2}  \Big| H_{Ph1} \right] \\
        &= f_i \cdot \ell_{i,t}^2 \cdot \E_{H_{Ph2}} \left[ (P^E_i)^2  \Big | H_{Ph1} \right] \\
         &= f_i \cdot \ell_{i,t}^2\cdot \E_{H_{Ph2}} \left[ (P^E_i)^2 \right] \\
        &\leq \frac{2 \ell_{i,t}^2}{f_i} \numberthis{\label{eq:sec-mom}},
    \end{align*}
    where the last inequality is due to the fact that $\E [ (P^E_i)^2] \leq 2/f_i^2$  (see \Cref{lem:expectation}). 

    Taking expectations on both sides of \Cref{lem:EXP3-estimated} and substituting \Cref{eq:first-mom} and \Cref{eq:sec-mom} we get: 
    \begin{align*}
       \mathbb{E}\left[\sum_{t \in [T]} \ell_{\ist,t} - \sum_{t \in [T]} \ell_{i_t, t} \Big| H_{Ph1}\right] &\leq \mathbb{E}\left[\eta \sum_{t \in [T]} \sum_{i \in [K]} \frac{2 \ell_{i,t}^2}{f_i} + \frac{\log K}{\eta} \Big| H_{Ph1}\right] \\
        &\leq_{(A)} \mathbb{E}\left[2 \eta T \sum_{i \in [K]} \frac{1}{f_i} + \frac{\log K}{\eta} \Big| H_{Ph1}\right] \\
        &=  \mathbb{E}\left[2\sqrt{\frac{\log K}{T \sum_{i \in [K]} P^{LR}_i}} T \sum_{i \in [K]} \frac{1}{f_i}  + \frac{\log K}{\sqrt{\frac{\log K}{T \sum_{i \in [K]} P^{LR}_i}}} \Big| H_{Ph1}\right] \\
        &\le_{(B)} 2 \sqrt{\frac{\log K}{T \sum_{i \in [K]} 0.5 (1/f_i)}} T \sum_{i \in [K]} \frac{1}{f_i}  + \sqrt{ T \left(\sum_{i \in [K]} \frac{2}{f_i}\right)\log(K)}\\
        &\le 2\sqrt{2 T \left(\sum_{i \in [K]} \frac{1}{f_i}\right)\log(K)}  + \sqrt{2 T \left(\sum_{i \in [K]} \frac{1}{f_i}\right)\log(K)}\\
        &= 4 \sqrt{2}\sqrt{T \log(K) \sum_{i \in [K]}\frac{1}{f_i}},
    \end{align*}
where (A) follows from the fact that $\ell_{i,t} \le 1$ and (B) follows from the fact that we conditioned on \Cref{lem:1hf-samples}.
\end{proof}

\subsubsection{Proof of \cref{thm:EXP3-3phase-regret}}

We are now ready to prove \Cref{thm:EXP3-3phase-regret}.
\begin{proof}[Proof of \Cref{thm:EXP3-3phase-regret}]
The regret \Cref{algo:EXP3-3phase} can be decomposed to the regret of the three phases of the algorithm:
\begin{align*}
    R(T) &= R_{\Phase 1}(T) + R_{\Phase 2}(T) + R_{\Phase 3}(T) \\ 
         &\leq \E \left[(N+1) \sum_{i \in [K]} P_i \right] + R_{\Phase 3}(T) &\tag{expected number of rounds to obtain feedback} \\
         &= \sum_{i \in [K]} \frac{8 \log(TK)}{f_i} +  R_{\Phase 3}(T).
\end{align*}

We next analyze term $R_{\Phase 3}(T)$. From \Cref{lem:1hf-samples}, with probability at least $1-\delta$ the estimates $1/P^{LR}_i$ are close to $f_i$. But with probability at most $\delta$, the estimates are far away and the regret that we pick up in these rounds is at most $1$. Putting everything together, we have: 
\begin{align*}
    R_{\Phase 3}(T) &\leq 4 \sqrt{2} (1 - \delta) \sqrt{T \log(K) \sum_{i \in [K]}\frac{1}{f_i}} + \delta T \\
    &\leq 4\sqrt{2} \sqrt{T \log(K) \sum_{i \in [K]}\frac{1}{f_i}}  + 1 &\tag{\Cref{lem:phase3-conditional}}
\end{align*}

This proves a regret bound of:
\[\calO\left(\sqrt{T \log(K) \sum_{i \in [K]}\frac{1}{f_i}}  + \sum_{i \in [K]} \frac{\log (T)}{f_i} + \sum_{i \in [K]} \frac{\log (K)}{f_i}\right).\]
In general $T \geq K$, so in order to derive the bound in the theorem statement, we need to argue that $\sum_{i \in [K]} \log T / f_i$ is order smaller than $\sqrt{T \sum_{i \in [K]} 1/ f_i}$. Note that this is the case when $T \geq \sqrt{\sum_{i \in [K]} 1/f_i}$, which is true for large enough time horizons.
\end{proof}

\end{document}